\def\figref#1{figure~\ref{#1}}
\def\secref#1{section~\ref{#1}}
\def\eqref#1{equation~\ref{#1}}
\def\1{\bm{1}}
\def\ve{{\vect{e}}}
\def\vq{{\vect{q}}}
\def\vr{{\vect{r}}}
\def\vu{{\vect{u}}}
\def\vv{{\vect{v}}}
\def\vw{{\vect{w}}}
\def\vz{{\vect{z}}}
\def\mO{{\bm{O}}}
\def\mQ{{\bm{Q}}}
\def\mSigma{{\bm{\Sigma}}}
\DeclareMathAlphabet{\mathsfit}{\encodingdefault}{\sfdefault}{m}{sl}
\SetMathAlphabet{\mathsfit}{bold}{\encodingdefault}{\sfdefault}{bx}{n}
\newcommand{\R}{\mathbb{R}}
\DeclareMathOperator*{\argmin}{arg\,min}
\newcommand{\ie}{\textit{i.e., }}
\newcommand{\eg}{\textit{e.g., }}
\newcommand\nth{\textsuperscript{th} }
\newcommand{\vect}[1]{\mathbf{#1}}
\newcommand{\doubleN}{\mathbb{N}}
\newcommand{\naturals}{\doubleN^{+}}
\renewcommand{\vw}{\vect{w}}           
\newcommand{\0}{\mat{0}}
\newcommand{\teacher}{\vw^{\star}}
\newcommand{\mat}[1]{\mathbf{#1}}
\newcommand{\X}{\mat{X}}
\newcommand{\M}{\mat{M}}
\renewcommand{\mO}{\mat{O}}
\newcommand{\U}{\mat{U}}
\newcommand{\Q}{\mat{Q}}
\newcommand{\V}{\mat{V}}
\newcommand{\I}{\mat{I}}
\newcommand{\A}{\mat{A}}
\newcommand{\x}{\vect{x}}
\newcommand{\y}{\vect{y}}
\newcommand{\w}{\vw}
\newcommand{\rank}{\operatorname{rank}}
\def\mSigma{{\mat{\Sigma}}}
\def\mQ{{\mat{Q}}}
\def\reals{\mathbb{R}}
\newcommand{\eqmargin}{\hspace{1.em}}
\newcommand{\algmargin}{\hspace{-.5em}}
\newcommand{\hfrac}[2]{{#1}/{#2}}
\newcommand{\norm}[1]{\left\Vert{#1}\right\Vert}
\newcommand{\cnt}[1]{\left[{#1}\right]}
\newcommand{\explain}[1]{\left[\substack{#1}\right]}
\newcommand{\expectation}{\mathop{\mathbb{E}}}
\newcommand{\prn}[1]{\left({#1}\right)}
\newcommand{\bigprn}[1]{\big({#1}\big)}
\newcommand{\Bigprn}[1]{\Big({#1}\Big)}
\newcommand{\biggprn}[1]{\bigg({#1}\bigg)}
\newcommand{\smallnorm}[1]{\Vert{#1}\Vert}
\newcommand{\tnorm}[1]{\smallnorm{#1}}
\newcommand{\bignorm}[1]{\big\Vert{#1}\big\Vert}
\newcommand{\tsum}{{\sum}}
\newcommand{\ball}[1]{\mathcal{B}^{d}}
\newcommand{\loss}{\mathcal{L}}
\renewcommand\dim{p}
\newcommand{\bigO}{\mathcal{O}}
\newtheorem{theorem}{Theorem}
\newtheorem{assumption}[theorem]{Assumption}
\newtheorem{lemma}[theorem]{Lemma}
\newtheorem{corollary}[theorem]{Corollary}
\newtheorem{property}[theorem]{Property}
\newtheorem{definition}[theorem]{Definition}
\newtheorem{remark}[theorem]{Remark}
\newtheorem{proposition}[theorem]{Proposition}
\newcommand{\remove}[1]{REMOVE!}
\newenvironment{proof-sketch}{\noindent{\bf Proof sketch.}}{}
\def\figref#1{Figure~\ref{#1}}
\def\secref#1{Section~\ref{#1}}
\def\eqref#1{Eq.~(\ref{#1})}
\def\lemref#1{Lemma~\ref{#1}}
\def\thmref#1{Theorem~\ref{#1}}
\def\corref#1{Cor.~\ref{#1}}
\def\defref#1{Def.~\ref{#1}}
\def\propref#1{Prop.~\ref{#1}}
\def\rmkref#1{Remark~\ref{#1}}
\def\appref#1{Appendix~\ref{#1}}
\definecolor{residuals}{RGB}{200,27,80}
\definecolor{itay}{RGB}{210,30,220}
\definecolor{gon}{RGB}{40,180,220}
\definecolor{mnb}{RGB}{220,180,20}
\definecolor{red2}{RGB}{220,25,25}
\definecolor{daniel}{RGB}{30,200,30}
\definecolor{todo}{RGB}{50,50,200}
\newcommand{\deleted}[1]{TODO}%
\newcommand{\unnotice}[1]{TODO}
\newenvironment{recall}[1][\proofname]{\par
\normalfont \topsep6\p@\@plus6\p@\relax
\trivlist
\item\relax
{\bfseries
Recall #1}%
{\bfseries\@addpunct{.}}\hspace\labelsep\ignorespaces
}
\long\def\supptitle#1{

   \gdef\@runningheadingerrortitle{0}


   \ifnum\statePaper=0
    {
     \gdef\@runningtitle{Manuscript under review by AISTATS \@conferenceyear}
    }
   \fi


   \ifnum\statePaper=1
   {
   \ifx\undefined\@runningtitle
    {
    \gdef\@runningtitle{#1}
    }
   \fi
   }
   \fi

   \ifnum\@runningheadingerrortitle=0
         {
         \global\setbox\titrun=\vbox{\small\bfseries\@runningtitle}%
         \ifdim\wd\titrun>\textwidth%
            {\gdef\@runningheadingerrortitle{2}
             \gdef\@messagetitle{Running heading title too long}
            }%
         \else\ifdim\ht\titrun>10pt
              {\gdef\@runningheadingerrortitle{3}
              \gdef\@messagetitle{Running heading title breaks the line}
              }%
              \fi
          \fi
         }
    \fi

   \ifnum\@runningheadingerrortitle>0
     {
        \fancyhead[CE]{\small\bfseries\@messagetitle}
        \ifnum\@runningheadingerrortitle>1
           \typeout{}%
           \typeout{}%
           \typeout{*******************************************************}%
           \typeout{Running heading title exceeds size limitations for running head.}%
           \typeout{Please supply a shorter form for the running head}
           \typeout{with \string\runningtitle{...}\space just after \string\begin{document}}%
           \typeout{*******************************************************}%
           \typeout{}%
           \typeout{}%
        \fi
     }
  \else
     {
          \fancyhead[CE]{\small\bfseries\@runningtitle}
     }
  \fi

  \hsize\textwidth
  \linewidth\hsize \toptitlebar {\centering
  {\Large\bfseries #1 \par}}
 \bottomtitlebar
}
\newcommand*\samethanks[1][\value{footnote}]{\footnotemark[#1]}
\title{The Joint Effect of Task Similarity and Overparameterization on Catastrophic Forgetting ---
An Analytical Model}
\author{Daniel Goldfarb\thanks{ Equal contribution.
Correspondence to
$<$goldfarb.d@northeastern.edu$>$
or
$<$itay@evron.me$>$.
}~~\textsuperscript{1}, 
Itay Evron\samethanks~~\textsuperscript{2}, 
Nir Weinberger \textsuperscript{2}, 
Daniel Soudry \textsuperscript{2}, 
Paul Hand \textsuperscript{1}
\\
\textsuperscript{1} Northeastern University
\textsuperscript{2} Department of Electrical and Computer Engineering, Technion
}
\newcommand{\fix}{\marginpar{FIX}}
\newcommand{\new}{\marginpar{NEW}}
\begin{document}

\maketitle

\begin{abstract}
In continual learning, catastrophic forgetting is affected by multiple aspects of the tasks. 
Previous works have analyzed separately how forgetting is affected by either task similarity or overparameterization. In contrast, our paper examines how task similarity and overparameterization \emph{jointly} affect forgetting in an analyzable model. 
Specifically, we focus on two-task continual linear regression, where the second task is a random orthogonal transformation of an arbitrary first task (an abstraction of random permutation tasks). We derive an exact analytical expression for the expected forgetting --- and uncover a nuanced pattern. In highly overparameterized models, intermediate task similarity causes the most forgetting.  However, near the interpolation threshold, forgetting decreases monotonically with the expected task similarity. We validate our findings with linear regression on synthetic data, and with neural networks on established permutation task benchmarks.
\end{abstract}

\section{Introduction}
Modern neural networks achieve state-of-the-art performance in a wide range of applications, but when trained on multiple tasks in sequence, they typically suffer from a drop in performance on earlier tasks, known as the \textit{catastrophic forgetting problem} \citep{goodfellow2013empirical}. 
\linebreak
Continual learning research is mostly dedicated to designing neural architectures and optimization methods that better suit learning sequentially (\eg \citet{zenke2017continual,de2021continual}). 
Despite these efforts, it is still unclear in which regimes forgetting is most pronounced, even for elementary models.

A number of works have explored the relationship between task similarity and catastrophic forgetting \citep{ramasesh2020anatomy, bennani2020generalisation, doan2021theoretical, lee2021continual, evron2022catastrophic, lin2023theory}. 
While earlier works struggled to have a consensus on whether similar or different tasks are most prone to forgetting, recent works suggested that continual learning is the easiest when tasks have either high similarity or low similarity, and that it is most difficult for tasks that have an intermediate level of similarity \citep{evron2022catastrophic, lin2023theory}. 
The main experimental evidence of this claim so far focused on the similarity of learned feature representations of a neural network \citep{ramasesh2020anatomy}. 
Theoretically, \citet{lin2023theory} quantified task similarity using  Euclidean distances between underlying ``teacher'' models. 
However, this notion of similarity cannot capture the task similarity in standard benchmarks (e.g., permuted MNIST), where typically the input features are changing between tasks (and not the teacher).
Others
\citep{doan2021theoretical,evron2022catastrophic} interpreted task similarity as the principal angles between data matrices.

In practice, neural networks are typically extremely overparameterized. 
Several works have studied, empirically and analytically, the beneficial effect of overparameterization in continual learning, particularly on the commonly used permutation and rotation benchmarks \citep{goldfarb2023analysis, mirzadeh2022wide}.
In this paper, we propose a more refined analysis, which is able to show the \emph{combined} effect of overparameterization and task similarity for a general data model. We show that task similarity alone cannot explain the difficulty of a continual learning problem, rather it depends also on the model's overparameterization level.

Our main result is an exact analytical expression for the worst-case forgetting under a two-task linear {regression} model {trained using the simplest (S)GD scheme}. 
Data for each task is related by a random orthogonal transformation over a randomly chosen subspace, and the \textbf{D}imensionality \textbf{o}f the \textbf{T}ransformed \textbf{S}ubspace (DOTS) controls the task similarity --- the higher the DOTS, the lower the similarity between tasks.
This similarity notion provides a natural knob that controls how similar tasks are after a random transformation. 
This notion also closely characterizes popular permutation benchmarks, for which it was first suggested by the seminal work of \citet{kirkpatrick2017overcoming}.

\figref{fig:extreme-behaviors} informally illustrates the essence of our result. When the model is suitably overparameterized, the relationship between task similarity and expected forgetting is non-monotonic, where the most forgetting occurs for intermediately similar tasks. 
However, if the model is critically parameterized, then the behavior is monotonic and the continual learning problem is most difficult for the highest dissimilarity level. This behavior illustrates how hard it is to estimate the difficulty of continual learning in different regimes.

\vspace{-3mm}
    
\begin{figure}[ht!]
    \subfigure[
    A highly overparameterized regime.] {
        \includegraphics[width=.49\columnwidth]{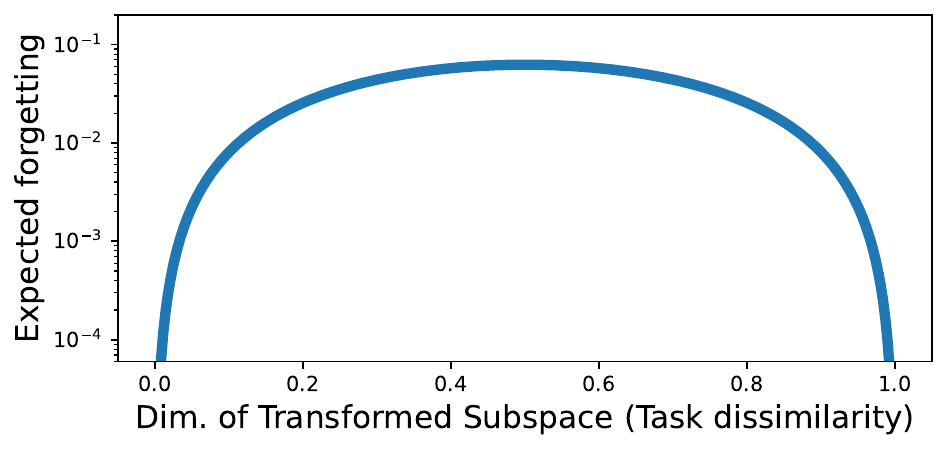}
        \label{fig:overparameterized}
    }
    \hfill
    \subfigure[
    Near the interpolation threshold.]
    {
        \includegraphics[width=.49\columnwidth]{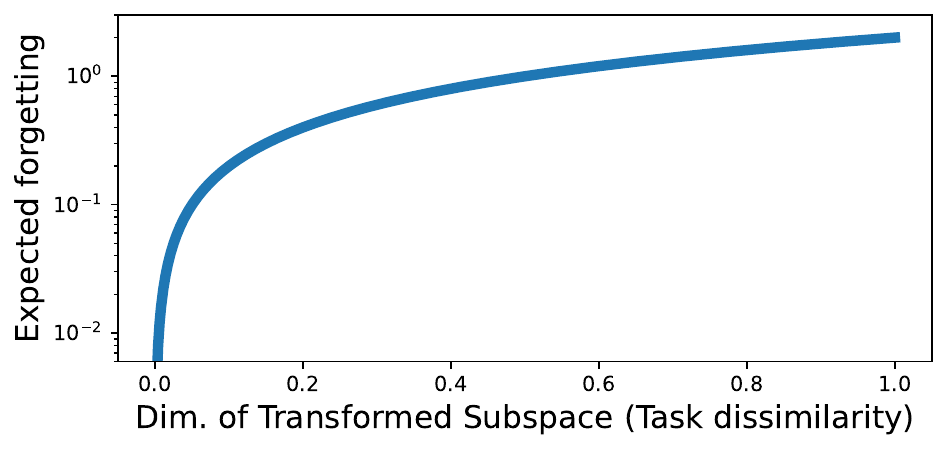}
        \label{fig:interpolation}
    }
    \vspace{-4mm}
    \caption{
    Informal illustration of our theoretical result. 
    Formal details are shared in \secref{sec:analysis}.
    \label{fig:extreme-behaviors}
    }
\end{figure}

The contributions of this paper are:
\vspace{-1mm}
\begin{itemize}[leftmargin=5mm]\itemsep.8pt
    \item We present a linear regression data model motivated by empirically-studied permutation tasks that exhibits a joint effect of task similarity and overparameterization on catastrophic forgetting.
    
    \item We derive an exact non-asymptotic expression for the worst-case expected forgetting under our model. 
    We reveal a non-monotonic behavior in task similarity when the model is suitably overparameterized, and a monotonic behavior when it is critically overparameterized.
    {We demonstrate that contrary to common belief, overparameterization alone cannot always prevent forgetting.}
    
    \item We replicate this theoretically-observed interaction of task similarity and overparameterization using a fully connected neural network in a permuted image setting.
\end{itemize}

\section{Analysis}
\label{sec:analysis}

We study a linear regression model trained continually on a sequence of two tasks under varying overparameterization and task-similarity levels.

\subsection{Data Model and Assumptions}
\label{sec:data-model}
Formally, we consider two regression tasks given by two $p$-dimensional data matrices ${\X_1,\X_2\in\reals^{n\times p}}$
and a single label vector $\y\in\reals^{n}$.
The first data matrix $\X_1$ can be \emph{arbitrary}. 
For ease of notation, we often denote $\X\triangleq\X_{1}$.

The second task's data matrix $\X_2$ is given by rotating the first task's $p$-dimensional data in a \emph{random} $m$-dimensional subspace for some $m\in\cnt{p}$.
Specifically, we set 
$\X_{2}=\X_{1}\mO=\X\mO$,
where $\mO\in\reals^{p\times p}$ is a random orthogonal operator defined as,
\begin{align}
\label{eq:data-model}
\mO	=\Q_{p}\left[\begin{array}{cc}
\Q_{m}\\
 & \I_{p-m}
\end{array}\right]\Q_{p}^{\top}\,,
\end{align}
and $\Q_{m}\sim{O}\left(m\right),\Q_{p}\sim{O}\left(p\right)$
are orthogonal operators, sampled ``uniformly'', \ie using the Haar measure of the orthogonal group. 
Our definition of the operator $\mO$ results in a more mathematically-tractable orthogonal transformation version of popular permutation task datasets (like the ones we use in \secref{sec:NN-experiments}), where the labels stay fixed while features permute.
This definition also reduces (when $m=p$) to the random rotations studied in \citet{goldfarb2023analysis}.

\pagebreak

To facilitate our analysis, we assume that the first task is realizable by a linear model (implying that the second one is realizable as well).
A similar assumption has been made in a previous theoretical paper
\citep{evron2022catastrophic} that analyzed, as we do, arbitrary data matrices (rather than assuming random isotropic data).
This assumption is especially reasonable in overparameterized regimes, such as in wide neural networks under the NTK regime.
\begin{assumption}[Realizability]
\label{asm:realizability}
    There exists a solution $\teacher\in\reals^{p}$ such that 
    $
    \X_1 \teacher = \X\teacher = \y
    $.
\end{assumption}
\vspace{-.1cm}
Note that this assumption implies that the second task is also realizable (since ${\X_2(\mO^\top\teacher)=\X\mO\mO^\top\teacher=\X\teacher=\y}$).
When $p\ge 2\rank(\X)$, it is readily seen that the tasks are also \emph{jointly}-realizable w.h.p.

\vspace{-.1cm}

\subsection{The Analyzed Learning Scheme and its  Learning Dynamics}

We analyze the most natural continual learning scheme. That is,
given two tasks $\prn{\X_1, \y}, \prn{\X_2,\y}$, 
we learn them sequentially with a gradient algorithm, without explicitly trying to prevent forgetting.

\vspace{-0.15in}

{\centering
\begin{minipage}{1\linewidth}
\begin{algorithm}[H]
   \caption{Continual learning of two tasks
    \label{proc:continual}}
\begin{algorithmic}
   \STATE {
   \algmargin Initialize} 
   $\w_0 = \0_{\dim}$
   \STATE {
   \algmargin Start from $\w_0$ and obtain $\w_1$ by minimizing 
   $\loss_1 (\w)\!\triangleq\!\norm{\X_1 \w \!-\! \y}^2$ with (S)GD (to convergence)
   } 
   \STATE {
   \algmargin Start from $\w_1$ and obtain $\w_2$ by minimizing 
   $\loss_2 (\w)\!\triangleq\! \norm{\X_2 \w \!-\! \y}^2$ with (S)GD (to convergence)
   } 
   \STATE {
   \algmargin 
   Output $\w_2$
   } 
\end{algorithmic}
\end{algorithm}
\end{minipage}
\par
}

The learning scheme above is known to mathematically converge\footnote{
In the realizable case, minimizing the squared loss of a linear model using (stochastic) gradient descent,
is known to 
converge to the projection of the initialization onto the solution space (see Sec.~2.1 in \citet{gunasekar2018characterizing}). 
This happens regardless of the batch size (as long as the learning rate is small enough).
Finally, the projections are given mathematically using the pseudoinverses 
(\eg see Sec.~1.3 in \citet{needell2014paved}).
} 
to the following iterates,
\begin{align}
    \w_1 \quad &
    = \quad 
    \Big(
     \argmin_{\w\in\reals^\dim}  \label{opt-prob}
     \norm{\w-\w_0}^2
     \text{ s.t. } \y \!=\! \X_1\w\,
     \Big)
     \quad = \quad
     \X_{1}^{+}\y~,
     \\[-0.1cm]
    \w_2 \quad &
    = \quad 
    \Big(
     \argmin_{\w\in\reals^\dim}  \label{opt-prob2}
     \norm{\w-\w_1}^2
     \text{ s.t. } \y \!=\! \X_2\w\,
     \Big)
     \quad = \quad
     \X_{2}^{+}\y+\left(\I_{\dim}-\X_{2}^{+}\X_{2}\right)\w_{1}~,
\end{align}
where $\X^{+}$
is the pseudoinverse of $\X$
{(we use $\X_{1}^{+},\X_{2}^{+}$ for analysis only; we do not compute them)}.

We now define our main quantity of interest.
\begin{definition}[Forgetting]
    \label{def:forgetting}
    The forgetting after learning the two tasks (parameterized by $\X,\teacher,\mO$),
    is defined as the degradation in the loss of the first task.
    More formally,
    $$
    F(\mO;\X, \teacher) 
    \triangleq
    \loss_1 (\w_2)
    -
    \loss_1 (\w_1)
    =
    \norm{\X_1 \w_2 - \y}^2 
    -
    \underbrace{\norm{\X_1 \w_1 - \y}^2 }_{=0}
    =
    \norm{\X \w_2 - \y}^2~. 
    $$
\end{definition}
\vspace{-0.25cm}
Our forgetting definition is natural and relates to definitions in previous papers (\eg \citet{doan2021theoretical}).
Notice that under our Assumption~\ref{asm:realizability},
the forgetting is the training loss of the first task (exactly as in \citet{evron2022catastrophic}).
Alternatively, one can study the degradation in the \emph{generalization} loss instead, but this often requires additional data-distribution assumptions 
(\eg random isotropic data as in 
\citet{goldfarb2023analysis,lin2023theory}), while our analysis is valid for any arbitrary data matrix.
Our analysis thus gives a better insight into the problem's expected \emph{worst-case} error.

To analyze the forgetting, we utilize our data model from \secref{sec:data-model} to show that,
\begin{align}
\label{eq:forgetting-inequality}
\begin{split}
    F(\mO;\X, \teacher)&=
    \left\Vert \X\w_{2}-\y\right\Vert ^{2}
    =\left\Vert 
    \X\left(\X_{2}^{+}\y
    +
    \left(\I_{p}-\X_{2}^{+}\X_{2}\right)\w_{1}\right)-\y\right\Vert ^{2}
    \\
    \explain{
    \X_2 = \X \mO
    }
    &= 
    \bignorm{
    \X\!\left(\X\mO\right)^{+}\!
    \y+
    \X\w_{1}
    -\X\!\left(\X\mO\right)^{+}
    \!
    \left(\X\mO\right)\w_{1}\!-\!\y} ^{2}
    \\
    \explain{
    \substack{
    \y=\X\w_{1},
    \\
    \w_{1}=\X^{+}\y
    =\X^{+}\X\teacher
    }
    }
    &=
    \bignorm{
    \X\!\left(\X\mO\right)^{+}\!
    \X\left(\I\!-\!\mO\right)\w_{1}}^{2}
    %
    =\bignorm{ \X\!\left(\X\mO\right)^{+}\!
    \X\left(\I\!-\!\mO\right)\X^{+}\X\teacher}^{2}
    \\
    \explain{
    \text{pseudoinverse properties}
    }
    &=\bignorm{
    \prn{\X\X^{+}\X}
    \prn{\mO^{\top}\X^{+}}
    \X\left(\mathbf{I}-\mO\right)
    \X^{+}\X\teacher}^{2}
    \\
    \explain{    \forall\X,\vv:\norm{\X\vv}_2\le\norm{\X}_2\norm{\vv}_2
    }
    &\le
    \bignorm{\X}^2_2
    \bignorm{
    \X^{+}\X
    {\mO^{\top}\X^{+}}
    \X\left(\mathbf{I}-\mO\right)
    \X^{+}\X\teacher}^{2}\,,
\end{split}
\end{align}
where $\norm{\X}_2 = \sigma_{\max}(\X)$.
{We used two pseudoinverse properties
(any matrix $\X$
and orthogonal matrix $\mO$
hold
$\X\!=\!\X\X^{+}\X$ and $\prn{\X\mO}^{+}\!\!=\!\mO^{\top}\X^{+}$).
} 
Our upper bound is \emph{sharp}, 
\ie it saturates when all nonzero singular values of $\X$ are identical. 
This allows for \emph{exact} worst-case forgetting~analysis.

\pagebreak

\subsection{Key result: Interplay between Task-Similarity and Overparameterization}

We now present our main theorem and illustrate it in 
\figref{fig:analytical}
on random synthetic data.

\begin{theorem}
\label{thm:main}
    Let ${\dim\ge 4}, d\in \left\{1,\dots,p\right\}, m\ge 2$.
    Define $\mathcal{X}_{p,d} \triangleq 
    \left\{\, \X\in\reals^{n\times p} \mid n\ge \rank(\X)=d \,\right\}$.
    \linebreak
    Define 
    the \textbf{D}imensionality \textbf{o}f \textbf{T}ransformed \textbf{S}ubspace 
    $\alpha\triangleq \frac{m}{p}$ as our proxy for task dissimilarity 
    and $\beta\triangleq 1-\frac{d}{p}$ as our proxy for overparameterization.
    Then, for any solution ${\teacher\in\reals^{p}}$ (Assumption~\ref{asm:realizability}),
    the (normalized) worst-case expected forgetting {per \defref{def:forgetting} 
    (obtained by Scheme~\ref{proc:continual}) } is
\begin{align*}
    &\max_{\substack{
    \X\in\mathcal{X}_{p,d}
    }}
    \!
    \frac{\mathbb{E}_{\mO} F(\mO;\X, \teacher)}{\tnorm{\X}^2_2 \tnorm{\X^{+}\X\teacher}^2}
    \!=\!
    \alpha
    \bigg(\!
    2+
    \beta\left(\alpha^{3}-6\alpha^{2}+11\alpha-8\right)
    +
    \beta^{2}\left(-5\alpha^{3}+22\alpha^{2}-30\alpha+12\right)+
    \\
    &
    \hspace{4.55cm}    
    \beta^{3}\left(5\alpha^{3}-18\alpha^{2}+20\alpha-6\right)
    \!
    \bigg)
    +
    \mathcal{O}\left(\frac{1}{p}\right),
\end{align*}
where $\X^{+}\X\teacher$ projects $\teacher$ onto the column space of $\X$.
Notice that $\tnorm{\X}^2_2 \tnorm{\X^{+}\X\teacher}^2$ is a necessary scaling factor, since the forgetting
$\norm{\X\w_2\!-\!\y}^2
=
\norm{\X\w_2\!-\!\X\teacher}^2$ 
naturally scales with $\tnorm{\X}^2_2$ and $\tnorm{\X^{+}\X\teacher}^2$.
The exact expression (without the $\bigO$ notation) appears
in \eqref{eq:full-expression} in \appref{app:analysis}.
\end{theorem}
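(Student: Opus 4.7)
The plan is to start from the sharp upper bound derived in Eq.~(\ref{eq:forgetting-inequality}), reduce the worst-case problem to a single expectation over $\mO$, and evaluate that expectation via the two symmetries built into $\mO$.

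\textbf{Reduction to a canonical projection.} The upper bound in Eq.~(\ref{eq:forgetting-inequality}) saturates when all nonzero singular values of $\X$ coincide, so a worst-case $\X$ may be written $\X = \sigma U_{d}V_{d}^{\top}$ with $V_{d}^{\top}V_{d} = \I_{d}$. Setting $P := \X^{+}\X = V_{d}V_{d}^{\top}$ (a rank-$d$ projection) and $\vv := P\teacher$, a short computation using the pseudoinverse identities $(\X\mO)^{+}=\mO^{\top}\X^{+}$ and $\X^{+}\X = P$ reduces the normalised quantity to
\begin{equation*}
\max_{\X\in\mathcal{X}_{p,d}}\frac{\E_{\mO} F(\mO;\X,\teacher)}{\tnorm{\X}_{2}^{2}\,\tnorm{\X^{+}\X\teacher}^{2}} = \frac{\E_{\mO}\tnorm{P\mO^{\top}P(\I - \mO)\vv}^{2}}{\tnorm{\vv}^{2}}.
\end{equation*}
Because the outer $\Q_{p}$ in $\mO = \Q_{p}M\Q_{p}^{\top}$, with $M = \diag(\Q_{m},\I_{p-m})$, is Haar-distributed, the law of $\mO$ is invariant under conjugation $\mO \mapsto R\mO R^{\top}$ for every $R \in O(p)$. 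This lets me replace $P$ by the canonical rank-$d$ projection onto the first $d$ coordinates at no loss, and conjugations in the stabiliser of $P$ act transitively on unit vectors of $\mathrm{range}(P)$, so the expectation depends only on $\tnorm{\vv}$. I may therefore average $\vv$ uniformly on the sphere in $\mathrm{range}(P)$ using $\E[\vv\vv^{\top}] = P/d$, turning the ratio into a pure function of $p, d, m$; the original maximum over $\X$ has disappeared.

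\textbf{Moment computation.} Expanding the squared norm,
\begin{equation*}
\tnorm{P\mO^{\top}P(\I-\mO)\vv}^{2} = \tnorm{P\mO^{\top}\vv}^{2} - 2\,\iprod{P\mO^{\top}\vv}{P\mO^{\top}P\mO\vv} + \tnorm{P\mO^{\top}P\mO\vv}^{2},
\end{equation*}
and after the sphere average each term becomes a trace of a polynomial in $\mO, \mO^{\top}, P$. I would evaluate these traces in two stages: first integrate the inner $\Q_{m}$ (Haar on $O(m)$), which touches only the $m\times m$ block of $M$ and produces the spectral moments $\E\tr(\Q_{m}^{k})$; then integrate the outer $\Q_{p}$ (Haar on $O(p)$) against the deterministic $P$'s, which by orthogonal invariance reduces every contraction to $\delta$-patterns on the indices and hence to polynomials in $p, m, d$. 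Re-parametrising by $\alpha = m/p$ and $\beta = 1 - d/p$ and keeping leading-order contributions yields the claimed degree-$4$-in-$\alpha$, degree-$3$-in-$\beta$ polynomial, with the inverse powers of $p$ from the Weingarten coefficients absorbed into the $\mathcal{O}(1/p)$ remainder. As a sanity check, at $\beta = 0$ ($P = \I$) the expression collapses to $\E\tnorm{\mO^{\top}\vv - \vv}^{2} = 2 - 2\vv^{\top}\E[\mO^{\top}]\vv = 2\alpha$ (using $\E[\mO] = (1-\alpha)\I$), matching the polynomial at $\beta = 0$.

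\textbf{Main obstacle.} The hard part is the quartic contribution $\E\tnorm{P\mO^{\top}P\mO\vv}^{2}$: four factors of $\mO$ and $\mO^{\top}$ interlaced with the rank-$d$ projection produce many index patterns under Haar integration over $\Q_{p}$. Carefully sorting which contractions give leading $O(1)$ versus subleading $O(1/p)$ contributions, and tracking how the spectral moments of $\Q_{m}$ compose with the Weingarten weights of $\Q_{p}$, is precisely what pins down the cubic-in-$\beta$, quartic-in-$\alpha$ coefficients of the stated answer; everything else is straightforward orthogonal-group bookkeeping.
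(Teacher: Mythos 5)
Your proposal is correct and follows essentially the same architecture as the paper's proof: saturate the bound of Eq.~(\ref{eq:forgetting-inequality}) at matrices with equal nonzero singular values, use the conjugation-invariance of the law of $\mO$ to reduce to the canonical rank-$d$ projection, and then compute fourth moments over the orthogonal groups $O(p)$ and $O(m)$. Two differences are worth flagging. First, your observation that the stabiliser of $P$ acts transitively on unit vectors of $\range(P)$, so the expectation depends only on $\tnorm{\vv}$, is a cleaner route to a fact the paper establishes by brute force: the paper expands in coordinates, kills the cross terms via \lemref{lem:zero_fifth} (a parity argument on monomials), and only then observes that the direction of $\X^{+}\X\teacher$ drops out, reducing everything to the two canonical expectations of Lemmas~\ref{lem:at-most-three-vectors} and~\ref{lem:at-most-four-vectors}. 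Your symmetry argument buys conceptual clarity; the paper's coordinate expansion buys the explicit two-term decomposition that its exact computation is organised around. Second, you propose Weingarten calculus and keep only leading order, whereas the paper computes the moments \emph{exactly} via the recursive monomial formula of \citet{gorin2002integrals} (yielding \eqref{eq:full-expression}) and only afterwards expands in $1/p$; since the theorem also asserts the exact expression, your route would need a separate argument that the discarded Weingarten terms are uniformly $\mathcal{O}(1/p)$ when $m,d$ scale with $p$ — true, but it should be said. The one real gap is that the quartic moment computation, which you correctly identify as the obstacle and which occupies the bulk of the paper's appendices, is only described and not carried out; your $\beta=0$ sanity check via $\E[\mO]=(1-\alpha)\I$ is correct but only certifies the quadratic part.
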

The full proof is given
in \appref{app:analysis}.
Below we outline an informal sketch of the proof.

\paragraph{Proof sketch.}
{
In our proof, we show that the expected forgetting is controlled by two important terms, namely,
$\mathbb{E}_{\mO}\left(\ve_{i}^{\top}\mO^{\top}\mSigma^{+}\mSigma\left(\mathbf{I}-\mO\right)\ve_{i}\right)^{2}$
and 
$\mathbb{E}_{\mO}\left(\ve_{i}^{\top}\mO^{\top}\mSigma^{+}\mSigma\left(\mathbf{I}-\mO\right)\ve_{j}\right)^{2}$ for $i\neq j$,
where $\ve_i$ is the $i$\nth standard unit vector.
Each of these expectations is essentially a polynomial of the entries of our random 
$\Q_{p},\Q_{m}$ from \eqref{eq:data-model}.
To compute these expectations
(in Lemmas~\ref{lem:at-most-three-vectors}~and~\ref{lem:at-most-four-vectors}), 
we employ \emph{exact} formulas for the integrals of monomials over the orthogonal groups in $p$ and $m$ dimensions
\citep{gorin2002integrals}.
A more detailed proof outline is given in \appref{app:analysis}.
}%

\begin{figure}[ht!]
\begin{center}
        \includegraphics[width=.99\columnwidth]{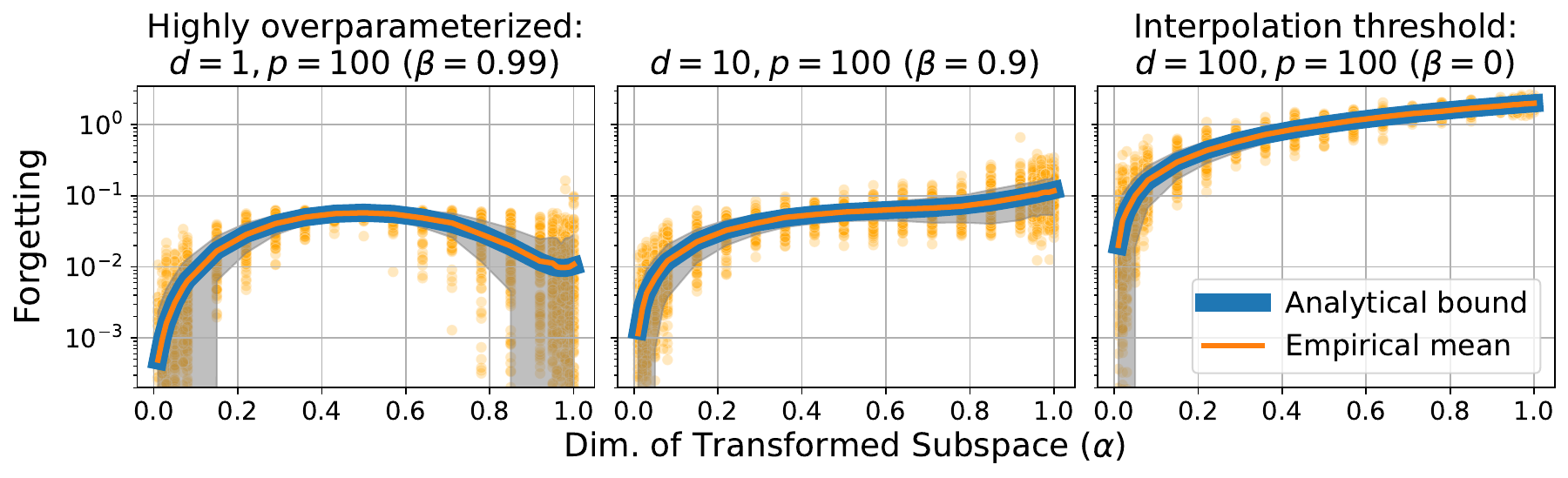}
    \vspace{-2mm}
    \caption{Empirically illustrating the worst-case forgetting under different overparameterization levels.
    Points indicate the forgetting under 1000 sampled random transformations applied on a (single) random data matrix $\X$. 
    Their mean is shown in the thin orange line, with the standard deviation represented by a gray band.
    The thick blue line depicts the analytical expression of \thmref{thm:main}.
    \linebreak
    {
    Here, we restrict the nonzero singular values of $\X$ to be identical, 
    saturating
    the inequality in \eqref{eq:forgetting-inequality}.
    Indeed, the analytical bound matches the empirical mean, thus exemplifying the tightness of our analysis.
    For completeness, in \appref{app:synthetic-figures}, we repeat this experiment with $p=10$ and $p=1000$.
    }
    \vspace{-2mm}
    }
    \label{fig:analytical}
\end{center}
\end{figure}


\subsubsection{Interesting extremal cases}
To help interpret our result, we exemplify it in several interesting regimes, taking either the task similarity proxy $\alpha$ or the overparameterization proxy $\beta$ to their extremes.

\paragraph{Highly overparameterized regime ($\beta=1-\frac{d}{p}\to1$).}
Plugging in $\beta=1$ into \thmref{thm:main}, we get
\begin{align}
\max_{\substack{
    \X\in\mathcal{X}_{p,d}
    }}
    \frac{\mathbb{E}_{\mO} F(\mO;\X, \teacher)}{\tnorm{\X}^2_2 \tnorm{\X^{+}\X\teacher}^2}
=\alpha^{2}\left(1-\alpha\right)^{2}
=
\prn{\tfrac{m}{p}}^{2}
\left(1-\tfrac{m}{p}\right)^{2}.
\label{eq:overparameterized}
\end{align}

\vspace{-1mm}

This behavior is illustrated in \figref{fig:overparameterized} and in the top part of \figref{fig:levelsets}. 
The non-monotonic nature of this behavior and its peak at $\alpha\!=\!0.5$ corresponding to \emph{intermediate} similarity, seem to agree with Figure 3(a) in \citet{evron2022catastrophic} 
(especially when no repetitions are made, \ie their $k\!=\!2$ curve).

\paragraph{At the interpolation threshold ($d=p\Longrightarrow\beta=1-\frac{d}{p}=0)$.}
In this extreme, the theorem asserts,
$$
\max_{\substack{
    \X\in\mathcal{X}_{p,d}
    }}
    \frac{\mathbb{E}_{\mO} F(\mO;\X, \teacher)}{\tnorm{\X}^2_2 \tnorm{\X^{+}\X\teacher}^2}
=
2\alpha
=
\tfrac{2m}{p}\,.
$$
This behavior is illustrated in \figref{fig:interpolation}
and in the rightmost plot of \figref{fig:analytical}. 
Notably, we get a monotonic decrease in forgetting as tasks become more similar.
This seems to contradict the conclusions of \citet{evron2022catastrophic}, according to which intermediate task similarity should be the worst.
We settle this alleged disagreement in our discussion in \secref{sec:discussion}.

\paragraph{Minimal task similarity ($m=p\Longrightarrow\alpha=\frac{m}{p} = 1$).}
Plugging in $\alpha=1$ into \thmref{thm:main}, we get
$$
\max_{\substack{
    \X\in\mathcal{X}_{p,d}
    }}
    \frac{\mathbb{E}_{\mO} F(\mO;\X, \teacher)}{\tnorm{\X}^2_2 \tnorm{\X^{+}\X\teacher}^2}
=
2-2\beta-\beta^{2}+\beta^{3}
=
\tfrac{d}{p}+2\left(\tfrac{d}{p}\right)^{2}-\left(\tfrac{d}{p}\right)^{3}\,.
$$

This minimal task similarity regime matches the (noiseless) model of \citet{goldfarb2023analysis}, where
a (generalization) risk bound with a scaling of $\sqrt{\frac{n}{p}}$ was proven under a particular data model.

\subsubsection{The entire interplay between task similarity and overparameterization}

The figure below illustrates our main result (\thmref{thm:main}).
While high task similarity consistently reduces forgetting, this effect becomes more evident with sufficient overparameterization.
Noticeably, non-monotonic effects of task similarity (our DOTS), only occur when the model turns highly overparameterized.
Importantly, we observe once more that even with extremely high overparameterization levels ($\beta\!=\!1\!-\!\frac{d}{p}\!\to\! 1$), forgetting does not vanish entirely. Instead, this outcome is contingent on task similarity, as captured by our DOTS measure.
For instance, \eqref{eq:overparameterized}
shows that when $\alpha=\frac{m}{p}=0.5$, the worst-case forgetting becomes $\prn{0.5}^4=0.0625$ when $\beta\!=\!1\!-\!\frac{d}{p}\!\to\! 1$.

\begin{figure}[h!]
    \centering    
    \includegraphics[width=.99\columnwidth]{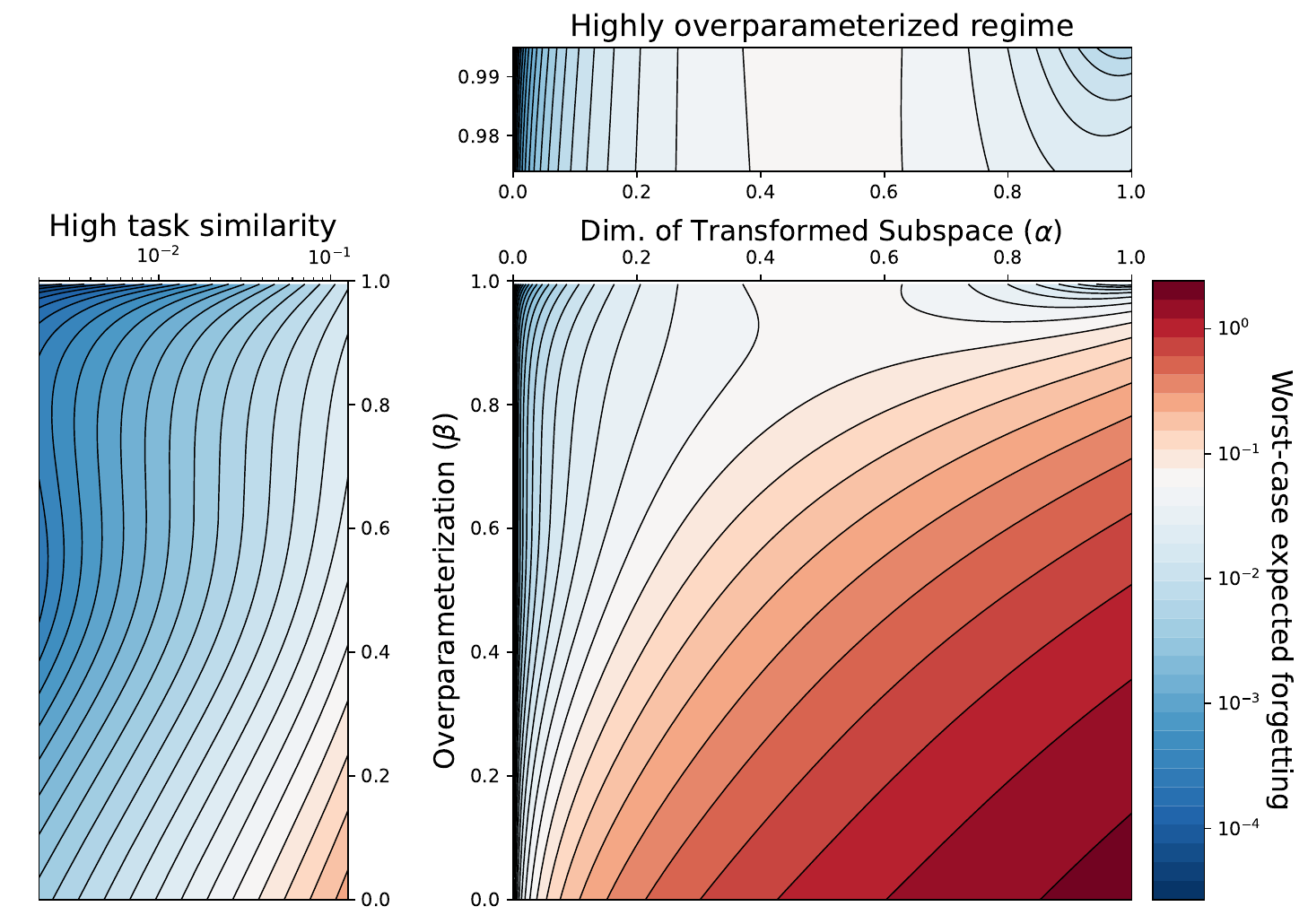}
    \vspace{-3mm}
    \caption{
    Levelsets depicting our main result from \thmref{thm:main}.
    The entire space (combinations of $\alpha,\beta$) appears on the lower-right subplot.
    We zoom into more interesting regimes, 
    \ie high task similarity and high overparameterization, 
    on the lower-left and upper-right subplots (respectively).
    \label{fig:levelsets}
    }
\end{figure}

\pagebreak

\subsection{Empirical Forgetting under an Average-Case Data Model}

We aim to simulate the (linear) model from Section \ref{sec:analysis} and show that the average-case behavior matches the joint effect that task similarity and overparameterization have on the worst-case forgetting as analyzed in our \thmref{thm:main}.
We choose the data model of \citet{goldfarb2023analysis} for its clear analogies to learning with neural networks.
Under this model, $n$ samples of effective dimensionality $d$ are sampled independently. 
The latent dimensionality $p$ of the samples controls the overparameterization level. 
These parameters are precisely defined in Section 2.1 therein. We also utilize the same hyperparameters and noise levels as defined in their numerical simulations in Section 2.3.
We compute the statistical risk and training error (MSE) of $\w_1$, $\w_2$, and the null estimator on task 1 as a function of $\alpha \in [0, 1]$ for $p \in [500, 1000, 3000]$ averaged over 100 runs.

\vspace{0.2em}

\begin{figure}[ht!]
\begin{center}   
    \includegraphics[width=0.99\columnwidth]{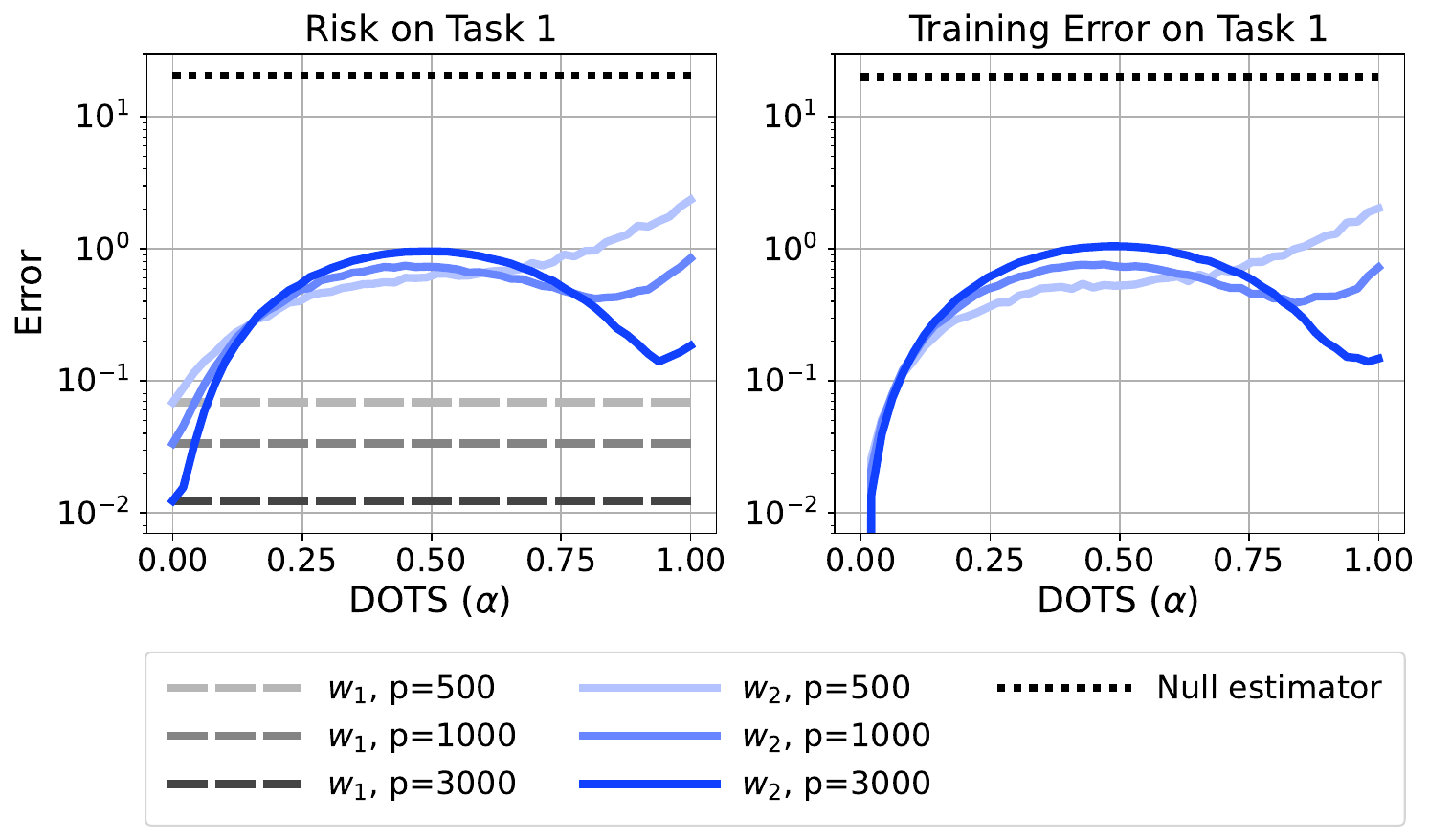}
    
    \vspace{-0.5em}
    
    \caption{Results of the numerical simulation. Risk and training error on task 1 are plotted as a function of $\alpha$ for various levels of $p$. The solid (blue) curves denote performance on task 1 of an estimator that is trained on task 1 and then on task 2.
    The dashed dark lines denote performance on task 1 of an estimator trained on task 1 only. 
    The dotted black line denotes the performance of the null estimator. 
    Training error curves for $w_1$ are omitted as these values are 0 for $p > d$.}
    \label{sim-risk}
\end{center}
\end{figure}

The results of the experiment are shown in \figref{sim-risk}. The dotted black line denotes performance of the null estimator (the parameter vector $\0_p$) providing a level for which any non-trivial estimator should beat. 
The dashed horizontal lines denote the risk of the single task estimator on task 1, providing a hypothetical best-case bound for $\w_2$. 
The forgetting of the model is then defined by the difference between the performance of $\w_1$ (grey curves) and the performance of $\w_2$ (blue curves). The grey curves are omitted for the training error plot as the single-task training error is 0 for $p>d$. Thus, forgetting in training error is controlled solely by the performance of $\w_2$.

Comparing \figref{fig:analytical} (for the worst-case forgetting) and \figref{sim-risk} here (for the average-case data model) reveals that the interplay between task similarity and overparameterization in both cases agrees with our analytical result in \thmref{thm:main}.
Here, for large $p$ ($3000$), we observe a $\cap$-shaped curve for the forgetting of $\w_2$, where the highest amount of forgetting occurs in the intermediate similarity regime. 
The model under small $p$ (500) has the greatest forgetting when tasks are most dissimilar. 
Forgetting risk at the extreme of $\alpha = 1$ reduces to the model of \citet{goldfarb2023analysis}, whose main result explains why overparameterization is beneficial in this setting. Additionally, it is not surprising that higher levels of overparameterization benefit the single task risk setting of $\w_1$, as this reduces to the model of \citet{hastie2022surprises} where the double descent behavior was observed.

\section{Neural Network Experiments}
\label{sec:NN-experiments}

In this section, we examine whether our analytical results apply to a continual learning benchmark using neural networks. The permuted MNIST \citep{lecun1998mnist} benchmark is a popular continual learning problem that has been used to measure the performance of many state-of-the-art continual learning algorithms \citep{kirkpatrick2017overcoming, zenke2017continual, li2019learn}. One performs a number of uniformly chosen permutations on the $28 \times 28$ images of the MNIST dataset. This results in equally difficult tasks (for an MLP): each task is a different pixel-shuffled version of MNIST. One then trains in sequence on these tasks and measures catastrophic forgetting. We consider a variant of the permuted MNIST benchmark, first suggested by \citet{kirkpatrick2017overcoming}, where instead of permuting the entire image, we only permute a square grid of pixels in the center of the image. Define the width/length of the permuted square to be the ``permutation size" ($PS$). When $PS=0$, each task is identical and thus extremely similar. When $PS=28$, each task is fully permuted and thus extremely different. Any value of $PS \in (0, 28)$ can be deemed to have some level of intermediate similarity. \figref{perm-mnist} shows examples of high, intermediate, and low similarities in this setup. We are interested in the relationship between $PS$ and catastrophic forgetting. Based on prior work \citep{ramasesh2020anatomy, evron2022catastrophic}, it seems we should expect the most forgetting to occur in the regime of intermediate similarity in a two-task scenario.

\vspace{-.2cm}
    
\begin{figure}[h]
\begin{center}
    \includegraphics[width=0.95\columnwidth]{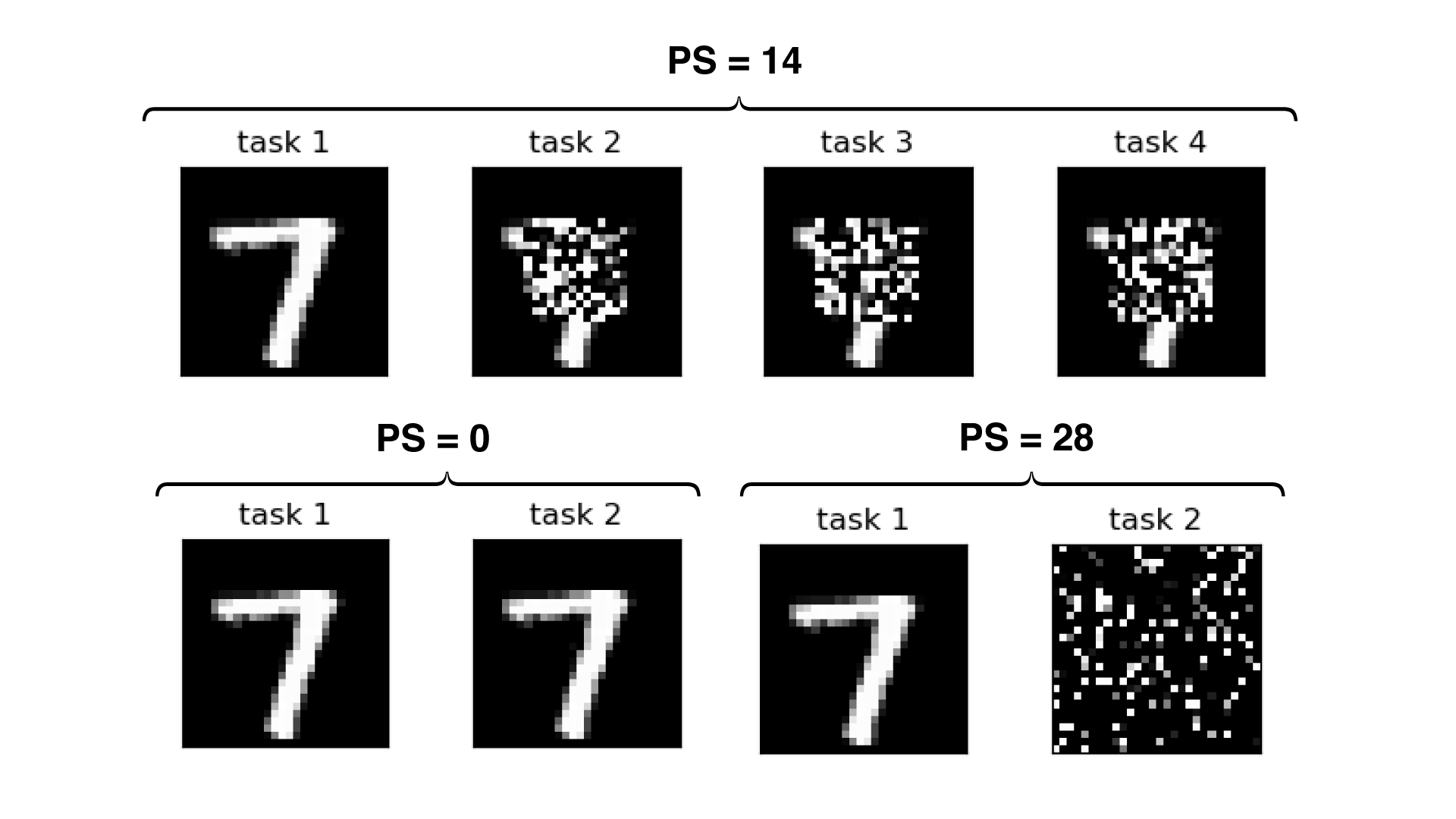}
    \vspace{-.7cm}
    \caption{Three versions of permuted MNIST for $PS=0$ (high similarity), $PS=14$ (intermediate similarity), $PS=28$ (low similarity).}
    \label{perm-mnist}
\end{center}
\end{figure}

We use vanilla SGD to train a 2-layer MLP of width 400 on sequences of up to 4 tasks of MNIST and EMNIST 
(a more difficult 26-class version of MNIST for handwritten English letters; see \citet{cohen2017emnist}). 
After each task is trained, we report the test error on all seen tasks. 
We compare forgetting across varying permutation sizes. 
See Appendix \ref{nn-details} for complete implementational details. 
The results are shown in \figref{mnist400}. 
The leftmost plots best illustrate the relationship between $PS$ and forgetting. The remaining plots are intended to ensure that each new task is being sufficiently fit. We observe a $\cap$-shape behavior where the most forgetting occurs around $PS \in [16, 24]$, agreeing with our hypothesis that forgetting is most severe for intermediately similar tasks.

\begin{figure}[h]
\begin{center}
    \includegraphics[width=0.99\columnwidth]{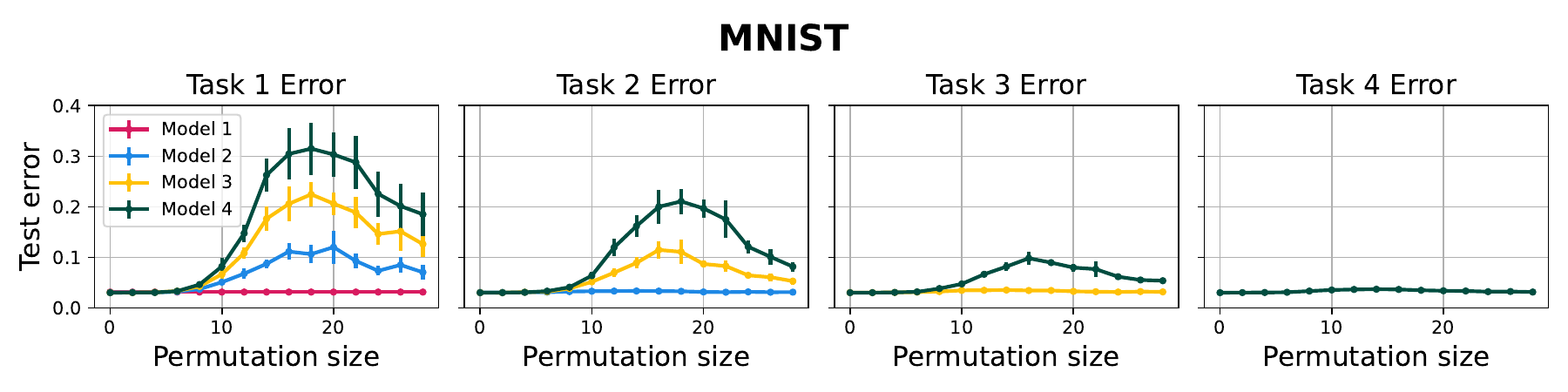} \\
    \vspace{.1cm}
    \includegraphics[width=0.99\columnwidth]{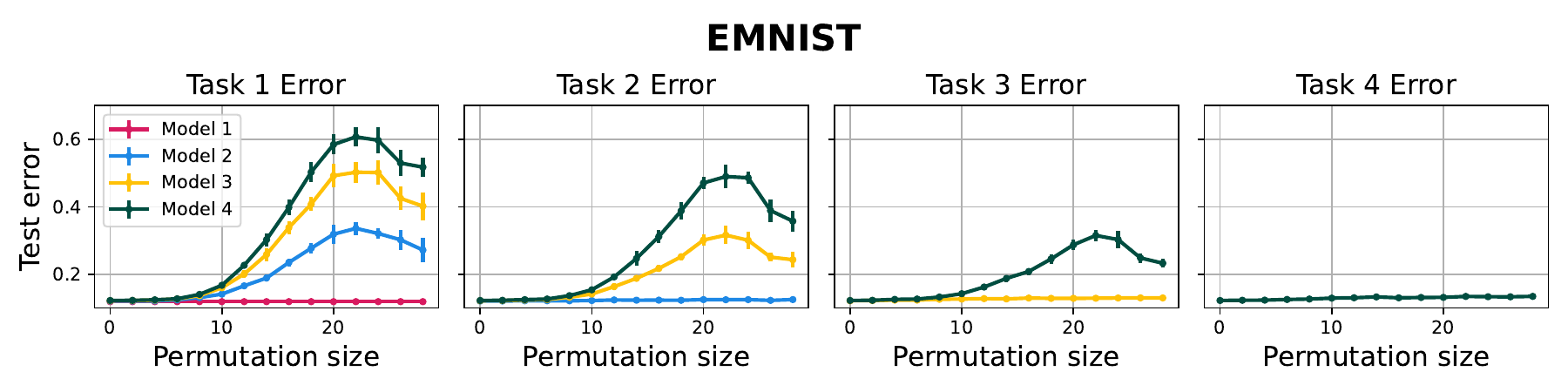}
    \caption{Results of the permutation experiment for varying levels of permutation size. The architecture is a 2-layer MLP of width 400. Model 1 corresponds to the net that is trained on task 1, Model 2 corresponds to the net that is trained on task 1 then task 2, and so on. Plotted curves have been averaged over 10 runs and error bars denote standard deviation of test error over the runs.}
    \label{mnist400}
\end{center}
\end{figure}

Now consider the experiment in \figref{mnist400} but using a 2-layer MLP with a lower overparameterization level. For each dataset, we find an overparameterization level that is significantly lower than before but still saturates to the training data and has comparable single-task test error as the highly overparameterized version. For MNIST we choose width 20 and for EMNIST we choose width 40. The results of this experiment are shown in \figref{mnist20}. We can observe that the $\cap$-shape behavior is now less pronounced and it appears that the continual learning problem has relatively equal difficulty for intermediately similar tasks as for extremely dissimilar tasks. This general behavior agrees with our previous results on the relationship of overparameterization and forgetting in Section \ref{sec:analysis}, both in the theoretical results and numerical simulations. See Appendix \ref{sec:ntk-features} for evidence of the connection between the notion of similarity in permuted MNIST and the NTK feature regime.

\begin{figure}[h]
\begin{center}
    \includegraphics[width=0.99\columnwidth]{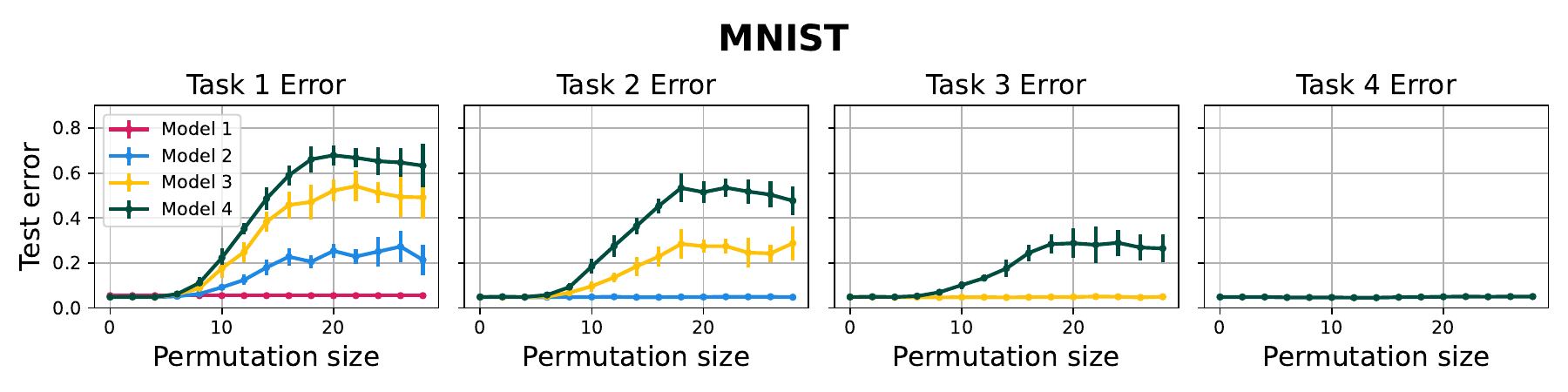} \\
    \includegraphics[width=0.99\columnwidth]{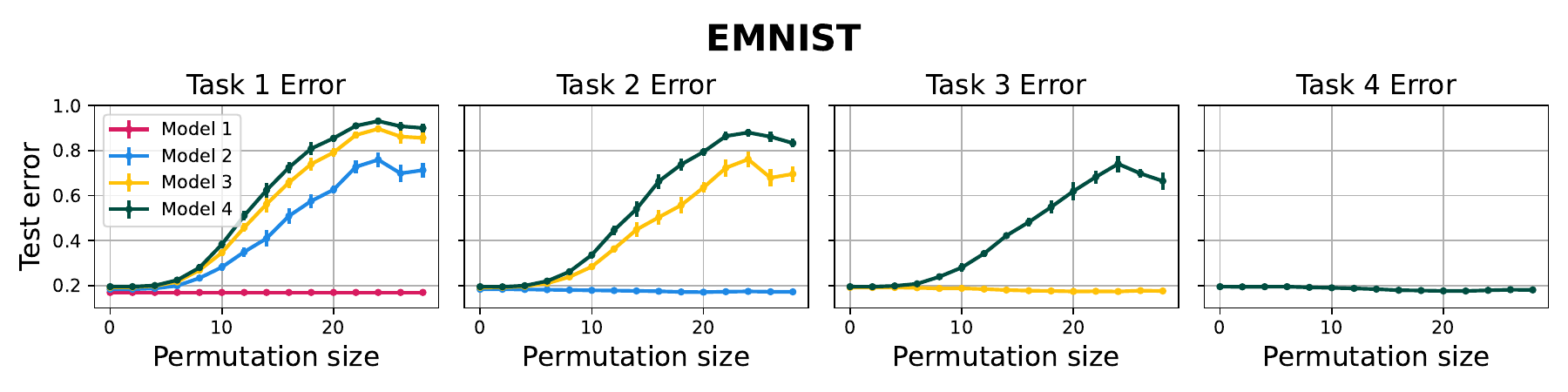}
    \caption{Replication of the experiment in \figref{mnist400} using less overparameterized 2-layer MLP models (width 20 for MNIST and width 40 for EMNIST).}
    \label{mnist20}
\end{center}
\end{figure}

\newpage

\section{Discussion}
\label{sec:discussion}

While several related works study continual learning theoretically \citep{kim2022theoretical, heckel2022provable}, only some of them study the relationship between task similarity and catastrophic forgetting.  \citet{bennani2020generalisation} prove generalization bounds which suggest that forgetting is more severe when tasks are dissimilar. 
\citet{lee2021continual} analyze a student-teacher setup, showing that intermediate tasks forget the most. 
\citet{li2023fixed} show regimes for which dissimilar tasks may be difficult and where performance on intermediately similar tasks can benefit from regularization.
Experimentally, \citet{ramasesh2020anatomy} provide evidence that intermediate task similarity is most difficult by studying learned feature representations during training on a number of modern neural networks.

\paragraph{Geometric interpretation and Comparison to \citet{evron2022catastrophic}.}
Previous studies utilize principal angles between the solution subspaces of the two data matrices ($\X_1,\X_2$) to quantify task similarity \citep{doan2021theoretical,evron2022catastrophic}.
\citet{evron2022catastrophic} show analytically that intermediate task similarity 
(an angle of $45^{\circ}$)
is most difficult in two-task linear regression.
Their analysis applies to \emph{any} two {arbitrary} tasks, and thus seemingly contradicts the behavior observed, \eg in our \figref{fig:interpolation}, 
where maximal dissimilarity is most difficult.
The key to settling this apparent disagreement is the \emph{randomness} of our transformations (\eqref{eq:data-model}).
Their analysis focuses on any two \emph{deterministic} tasks, while our second task is given by a \emph{random} transformation of the first, as done in many popular continual learning benchmarks (\eg permutation and rotation tasks).

To gain a geometric intuition, consider two tasks of rank $d\!=\!1$ ($\x_1, \x_2$).\footnote{
For simplicity, we discuss the principal angles between $\x_1,\x_2$ instead of between their nullspaces 
(\ie the solution spaces).
In two-task scenarios, these are essentially equivalent (see Claim~19 in \citet{evron2022catastrophic}).
}
Consider also a \emph{maximal} task dissimilarity (DOTS of $\alpha\!=\!\frac{m}{p}\!=\!1$).
Then,
$\x_2 \!=\! \mO\x_1$ is a completely random rotation of $\x_1$ in $p$ dimensions.
It is known that 
$\mathbb{E}
\big|
\big\langle
\frac{\x_1}{\tnorm{\x_1}},
\frac{\x_2}{\tnorm{\x_2}}
\big\rangle
\big|
\!\approx\!
\frac{1}{\sqrt{p}}
$
(Remark 3.2.5 in \citet{vershynin2018high}).
Near the interpolation threshold, \eg when $p\!=\! 2$ (recall that $d\!=\!1$), we get  
${\mathbb{E}
\big|
\big\langle
\frac{\x_1}{\tnorm{\x_1}},
\frac{\x_2}{\tnorm{\x_2}}
\big\rangle
\big|
\!\approx\!
\frac{1}{\sqrt{2}}}
\Longrightarrow 
{\mathbb{E}\angle(\x_1,\x_2)
\approx
45^{\circ}}$, 
corresponding to the \emph{intermediate} task dissimilarity in \citet{evron2022catastrophic}, where forgetting is \emph{maximal}.
Conversely, given high overparameterization levels ($p\to\infty$), 
we get 
${\mathbb{E}
\big|
\big\langle
\frac{\x_1}{\tnorm{\x_1}},
\frac{\x_2}{\tnorm{\x_2}}
\big\rangle
\big|
\!\approx\!
\frac{1}{\sqrt{p}}
\to
0}
\Longrightarrow 
{\mathbb{E}
\angle(\x_1,\x_2)\to 90^{\circ}}$, corresponding to the \emph{maximal} task dissimilarity in \citet{evron2022catastrophic}, where forgetting is \emph{minimal}.

\vspace{-0.5em}

\paragraph{Comparison to \citet{lin2023theory}.}
\citet{lin2023theory} prove generalization bounds on forgetting which suggest that forgetting may not change monotonically with task similarity.  However, their data model is not suitable for high overparameterization. For example, in the limit of high overparameterization, their model is performing as well as a null predictor. In contrast, we focus on the training error, and do not assume a specific data model for the first task, which allows us to generalize even in the highly overparameterized regime.

\vspace{-0.5em}

\paragraph{A starting point for analysis.}
Our work focuses on linear models and data permutation tasks. 
Exploring linear models using (stochastic) gradient descent is the most natural starting point for theoretical analysis, as an initial step towards understanding more complex systems. Moreover, recent work shows connections between extremely overparameterized neural networks and linear models via the neural tangent kernel (NTK) (\citet{jacot2018neural}; but also see \citet{wenger2023disconnect}). 
We choose to study data permutation tasks for their well-defined mathematical relationship and generation of equally difficult tasks (for a fully connected model). However there is criticism that permutation tasks are relatively easy to solve in practice and only provide a best-case for real-world problems \citep{farquhar2018towards, pfulb2019comprehensive}. Despite these critiques, we believe that the data permutation setting is the most amenable for initial theoretical results.

\vspace{-0.5em}

\subsection{Limitations and Future work}

Our analysis in \secref{sec:analysis}
has centered around a continual \emph{linear} regression model.
An immediate next step is to explore the extension of our analysis and empirical findings to more intricate non-linear models (\eg MLPs, CNNs, and transformers) or to other notions of task similarity.
Another avenue of investigation involves extending the analysis to continual \emph{classification} models,
possibly using the weak regularization approach suggested by 
\citet{evron23continualClassification}.

Our analysis has also primarily examined settings with $T=2$ tasks.
Extending these analytical results to $T\ge 3$ tasks poses an immediate challenge.
The complexity of our analysis, which already required intricate techniques and proofs, suggests that tackling the extension may be considerably difficult.
Moreover, the convergence analysis presented in a previous paper \citep{evron2022catastrophic} 
for learning $T\ge 3$ tasks cyclically has proven to be notably more challenging than that for $T=2$ tasks (and was further improved in a follow-up paper \citep{swartworth2023nearly}).

Finally, since our models are linear, our proxy for overparameterization, 
\ie $\beta=1-\frac{d}{p}$, directly controls the overlap between the task subspaces (see also the geometric interpretation above).
Clearly, this proxy is different than the width of deep networks.
On the other hand, there are still relations between these two proxies through the theory of the NTK regime \citep{jacot2018neural}.
A further examination of these relations, both theoretically and empirically (perhaps in the spirit of our \appref{sec:ntk-features} and {\citet{wenger2023disconnect}}), could benefit the continual learning literature.

\newpage

\subsubsection*{Acknowledgments}
We would like to thank the anonymous reviewers for their insightful feedback.
\linebreak
DG is partially supported by NSF DMS-2053448. PH acknowledges support from NSF DMS-2053448, DMS-2022205, and DMS-1848087.
The research of NW was partially supported by the Israel Science Foundation (ISF), grant no.~1782/22.
The research of DS was Funded by the European Union (ERC, A-B-C-Deep, 101039436). Views and opinions expressed are however those of the author only and do not necessarily reflect those of the European Union or the European Research Council Executive Agency (ERCEA). Neither the European Union nor the granting authority can be held responsible for them. DS also acknowledges the support of the Schmidt Career Advancement Chair in AI.

\bibliography{99_references}
\bibliographystyle{iclr2024_conference}

\newpage

\appendix

\newpage

\section{Neural Network Implementation Details for \secref{sec:NN-experiments}} \label{nn-details}

\begin{table}
\begin{center}
\caption{Hyperparameters for the neural network experiments}
\begin{tabular}{ |c|c|c|c| } 
\hline
Hyperparameter & Value \\
\hline
learning rate & 0.01 \\ 
batch size & 64 \\
epochs / task & 100 \\ 
momentum & 0 \\
dropout & 0 \\
\hline
\end{tabular}
\label{hyperparameters}
\end{center}
\end{table}

Table \ref{hyperparameters} reports the hyperparameters used in the neural network experiments. All architectures used ReLU activation functions for the hidden layers and softmax for the output layers. Weights were initialized as $Unif(\frac{-1}{\sqrt{i}}, \frac{1}{\sqrt{i}})$ where $i$ is the input dimension of the given layer. The experiment in Figure \ref{mnist400} used intermediate width 400 and the experiment in Figure \ref{mnist20} used intermediate width 20 for MNIST and 40 for EMNIST.

\section{NTK Feature Similarity Experiments} \label{sec:ntk-features}

\newcommand{\innerproduct}[2]{\langle #1, #2 \rangle}

Let $\vect{a}, \vect{b} \in \R^p$ be two sets of NTK features and define correlation as $\frac{|\innerproduct{\vect{a}}{\vect{b}}|}{\|\vect{a}\|\|\vect{b}\|}$. 
Then the average correlation between datasets $\mat{A}, \mat{B} \in \R^{n \times p}$ is $\sum_{i=1}^n \frac{|\innerproduct{\vect{a}_i}{\vect{b}_i}|}{\|\vect{a}_i\|\|\vect{b}_i\|}/n$. 
Figure \ref{ntk-features} plots the average correlation between original MNIST and permuted MNIST as a function of permutation size.
We see that the average correlation is monotonic in permutation size --- extremely high for low permutation size and extremely low for high permutation size. This provides evidence of the connection between the centered permuted MNIST problem and task similarity in the NTK regime.

\begin{figure}[ht!]
\begin{center}
    \includegraphics[scale=0.7]{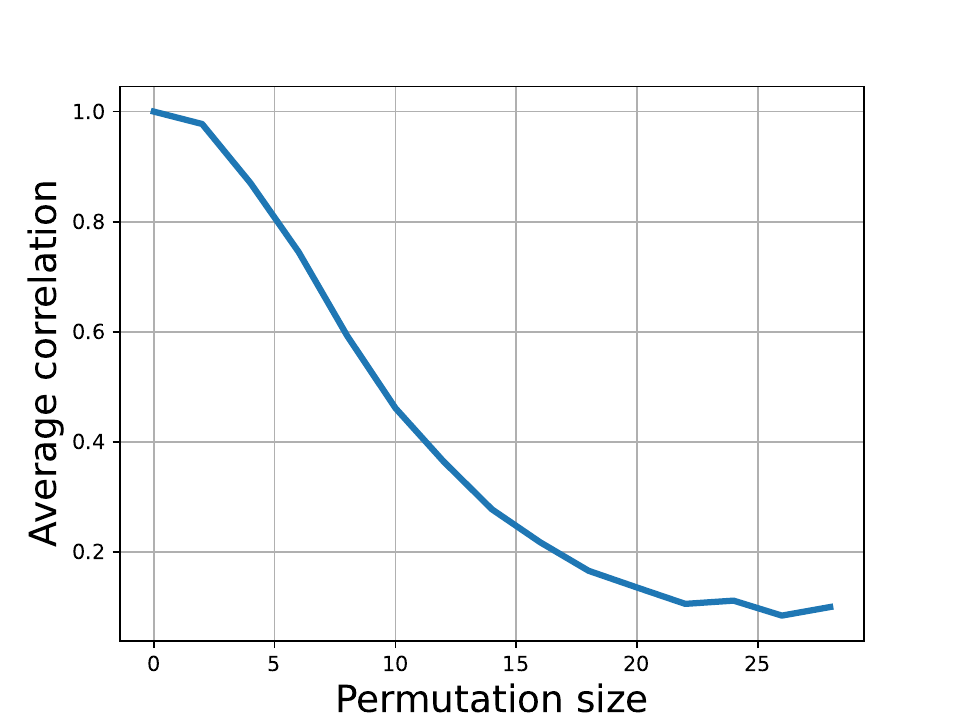}
    \caption{Results of the permuted MNIST NTK feature similarity experiment.}
    \label{ntk-features}
\end{center}
\end{figure}

\newpage

\section{Supplementary Materials for our Analytic Results (\secref{sec:analysis})}
\label{app:analysis}

Throughout the appendix, we denote the singular-value decomposition of a given $\X\in\reals^{n\times p}$ by $\X=\U\mSigma\V^{\top}$ for $\mSigma\in\reals^{n \times p}$. 
The number of nonzero entries on the diagonal of $\mSigma$ is $d=\rank\prn{\X}$.

\begin{recall}[\thmref{thm:main}]
\\
    Let ${\dim\ge 4}, d\in \left\{1,\dots,p\right\}, m\ge 2$.
    Define $\mathcal{X}_{p,d} \triangleq 
    \left\{\, \X\in\reals^{n\times p} \mid n\ge \rank(\X)=d \,\right\}$.
    \linebreak
    Define 
    the \textbf{D}imensionality \textbf{o}f \textbf{T}ransformed \textbf{S}ubspace 
    $\alpha\triangleq \frac{m}{p}$ as our proxy for task dissimilarity 
    and $\beta\triangleq 1-\frac{d}{p}$ as our proxy for overparameterization.
    Then, for any solution ${\teacher\in\reals^{p}}$ (Assumption~\ref{asm:realizability}),
    the (normalized) worst-case expected forgetting {per \defref{def:forgetting} 
    (obtained by Scheme~\ref{proc:continual}) } is
\begin{align*}
    &\max_{\substack{
    \X\in\mathcal{X}_{p,d}
    }}
    \!
    \frac{\mathbb{E}_{\mO} F(\mO;\X, \teacher)}{\tnorm{\X}^2 \tnorm{\X^{+}\X\teacher}^2}
    %
    \!=\!
    \alpha
    \bigg(
    2+
    \beta\left(\alpha^{3}+11\alpha-6\alpha^{2}-8\right)
    +
    \beta^{2}\left(-5\alpha^{3}+22\alpha^{2}-30\alpha+12\right)+
    \\
    &
    \hspace{4.5cm}    
    \beta^{3}\left(5\alpha^{3}-18\alpha^{2}+20\alpha-6\right)
    \bigg)
    +
    \mathcal{O}\left(\frac{1}{p}\right),
\end{align*}
where $\X^{+}\X\teacher$ projects $\teacher$ onto the column space of $\X$.
Notice that $\tnorm{\X}^2 \tnorm{\X^{+}\X\teacher}^2$ is a necessary scaling factor, since the forgetting
$\norm{\X\w_2\!-\!\y}^2
=
\norm{\X\w_2\!-\!\X\teacher}^2
$ naturally scales with $\tnorm{\X}^2$ and $\tnorm{\X^{+}\X\teacher}^2$.
The exact expression (without the $\bigO$ notation) appears
in \eqref{eq:full-expression} in \appref{app:analysis}.
\end{recall}

\bigskip

\paragraph{Proof outline.}

We start our proof (below) by showing that the expected forgetting is sharply upper bounded as,
\begin{align*}
    &
    \tfrac{1}{\norm{\X}^2}
    \mathbb{E}_{\mO} F(\mO;\X, \teacher)
    \le
    \sum_{i=1}^{d}
    \mathbb{E}_{\mO}
    \prn{ \ve_{i}^{\top}\mO^{\top}\mSigma^{+}\mSigma\left(\I-\mO\right)
    \mSigma^{+}\mSigma \V^\top\teacher
    }^{2}
    \\
    &=
    \biggprn{    \mathbb{E}_{\mO}\left(\ve_{1}^{\top}\mO^{\top}\mSigma^{+}\mSigma\left(\mathbf{I}-\mO\right)\ve_{1}\right)^{2}
    +
    \left(d-1\right)    \mathbb{E}_{\mO}\left(\ve_{1}^{\top}\mO^{\top}\mSigma^{+}\mSigma\left(\mathbf{I}-\mO\right)\ve_{2}\right)^{2}
    }
    \underbrace{\sum_{i=1}^{d}
    {\prn{\V^\top\teacher}_{i}}^{2}
    }_{=\norm{\X^{+}\X\teacher}^2}
    \,,
\end{align*}
where $\ve_1, \ve_2$ are the two first standard unit vectors in $\mathbb{R}^{\dim}$.
This directly implies that,
\begin{align*}
\frac{\mathbb{E}_{\mO} F(\mO;\X, \teacher)}{\norm{\X}^2 \norm{\X^{+}\X\teacher}^2}
\le
\underbrace{
\mathbb{E}_{\mO}\left(\ve_{1}^{\top}\mO^{\top}\mSigma^{+}\mSigma\left(\mathbf{I}-\mO\right)\ve_{1}\right)^{2}
}_{\text{solved in \lemref{lem:at-most-three-vectors}}}
+
\left(d-1\right)
\underbrace{
\mathbb{E}_{\mO}\left(\ve_{1}^{\top}\mO^{\top}\mSigma^{+}\mSigma\left(\mathbf{I}-\mO\right)\ve_{2}\right)^{2}
}_{\text{solved in \lemref{lem:at-most-four-vectors}}}\,.
\end{align*}
Each of these two expectations is essentially a polynomial of the entries of our random orthogonal
$\Q_{p}\in\reals^{\dim\times\dim},
\Q_{m}\in\reals^{m\times m}$ which form the random operator
$\mO	=\Q_{p}\left[\begin{array}{cc}
\Q_{m}\\
 & \I_{p-m}
\end{array}\right]\Q_{p}^{\top}$
as explained in \eqref{eq:data-model}.

We compute these two expectations
(in Lemmas~\ref{lem:at-most-three-vectors}~and~\ref{lem:at-most-four-vectors}) by employing \emph{exact} formulas for the integrals of monomials over the orthogonal groups in $p$ and $m$ dimensions
\citep{gorin2002integrals}.
Our derivations often get complicated, and so we split them into three appendices:
\begin{enumerate}[leftmargin=6mm]
    \item
    \appref{app:analysis} (below):
    Derivations of more complicated expressions involving the operator $\mO$.
    
    \item 
    \appref{app:monomials}:
    Derivations and properties of monomials of general random orthogonal matrices, mostly using results from \citet{gorin2002integrals}.

     \item \appref{app:auxiliary}:
     Derivations of auxiliary expressions (mostly involving $\mQ_m,\mQ_p$) that are used as building blocks in multiple derivations.
\end{enumerate}

\pagebreak

\begin{proof}[Proof for \thmref{thm:main}]
Starting from \eqref{eq:forgetting-inequality}, we show that for any given ${\X,\teacher}$
it holds that 
\begin{align}
\begin{split}
    \tfrac{1}{{\norm{\X}}^2}
    \mathbb{E}_{\mO} F(\mO;\X, \teacher)
    &=
    \tfrac{1}{{\norm{\X}}^2}
    \mathbb{E}_{\mO}
    \bignorm{ \X\X^{+}\X\mO^{\top}\X^{+}\X\left(\I-\mO\right)
    \X^{+}\X
    \teacher}^{2}
    \\
    &
    \le
    \mathbb{E}_{\mO}
    \bignorm{\X^{+}\X\mO^{\top}\X^{+}\X\left(\I-\mO\right)
    \X^{+}\X
    \teacher}^{2}\,,
\end{split}
\end{align}
where, importantly, the inequality saturates when all the nonzero singular values of $\X$ are identical.
Plugging in the SVD, 
\ie of $\X=\U\mSigma\V^\top$, we get
\begin{align*}
\tfrac{1}{{\norm{\X}}^2}
\mathbb{E}_{\mO} F(\mO;\X, \teacher)
    &\le
    \mathbb{E}_{\mO}
    \bignorm{
    \V \mSigma^{+}\mSigma \V^\top
    \mO^{\top}
    \V \mSigma^{+}\mSigma \V^\top
    \left(\I-\mO\right)
    \V \mSigma^{+}\mSigma \V^\top
    \teacher}^{2}
    \\
    &=
    \mathbb{E}_{\mO}
    \bignorm{\mSigma^{+}\mSigma \V^\top
    \mO^{\top}
    \V \mSigma^{+}\mSigma \V^\top
    \left(\I-\mO\right)
    \V \mSigma^{+}\mSigma \V^\top
    \teacher}^{2}\,,
\end{align*}
where the last equality stems from spectral norm properties (recall that $\V$ is an orthogonal matrix).

Following our definition of $\mO =\mathbf{Q}_{p}\left[\begin{smallmatrix}
\mathbf{Q}_{m}\\
 & \I_{p-m}
\end{smallmatrix}\right]\mathbf{Q}_{p}^{\top}$ 
(\eqref{eq:data-model}) and since $\Q_p$ is sampled \emph{uniformly} from the orthogonal group $O(p)$, we notice that $\mO$ and $\V^\top\mO\V$ are identically distributed.
Hence, we can rewrite the above expectation as
\begin{align}
\tfrac{1}{{\norm{\X}}^2}
\mathbb{E}_{\mO} F(\mO;\X, \teacher)
    &\le
    \mathbb{E}_{\mO}
    \bignorm{\mSigma^{+}\mSigma
    \mO^{\top}
    \mSigma^{+}\mSigma
    \left(\I-\mO\right)
    \underbrace{
    \mSigma^{+}\mSigma 
    \V^\top\teacher}_{\triangleq \vv}
    }^{2}
    \nonumber
    \\
    \explain{
    \mSigma^{+}\mSigma=
    \left(\mSigma^{+}\mSigma\right)^2}
    & =
    \mathbb{E}\left\Vert \sum_{i=1}^{d}\ve_{i}\ve_{i}^{\top}\mO^{\top}\mSigma^{+}\mSigma\left(\I-\mO\right)\mSigma^{+}\mSigma\vv\right\Vert ^{2}
    \nonumber
    \\
    \explain{\text{Pythagorean theorem}}
    & =
    \sum_{i=1}^{d}
    \mathbb{E}
    \prn{ \ve_{i}^{\top}\mO^{\top}\mSigma^{+}\mSigma\left(\I-\mO\right)\mSigma^{+}\mSigma\vv}^{2}
    \underbrace{\norm{\ve_{i}}^2}_{=1}\,.
    \label{eq:separate-directions}
\end{align}

\bigskip

\begin{remark}[Ease of notation]
\label{rmk:simplify}
    For simplicity, in the equation above and from now on, we omit the explicit subscript notation whenever it is clear from the context of our derivations what the expectation pertains to.
    For instance, instead of writing 
    $\mathbb{E}_\mO \left[F(\mO;\X, \teacher)\right]$ we can simply write
    $\mathbb{E} \left[F(\mO;\X, \teacher)\right]$.

    As another example, we often analyze (for $i\neq j$) expressions of the following spirit,
    \begin{align*}        \mathbb{E}_{\mO}\left(\ve_{i}^{\top}\mO\ve_{j}\right)^{2}
    &=
    \mathbb{E}_{\mathbf{Q}_{p}\sim O\left(p\right),\mathbf{Q}_{m}\sim O\left(m\right)}\left(\ve_{i}^{\top}\mathbf{Q}_{p}\left[\begin{array}{cc}
\mathbf{Q}_{m}\\
 & \mathbf{I}_{p-m}
\end{array}\right]\mathbf{Q}_{p}^{\top}\ve_{j}\right)^{2}
\\
&
=
\mathbb{E}_{\substack{\mathbf{u},\mathbf{v}\sim\mathcal{S}^{p-1}\left(p\right):\mathbf{u\perp v}\\
\mathbf{Q}_{m}\sim O\left(m\right)
}
}\left(\mathbf{u}^{\top}\left[\begin{array}{cc}
\mathbf{Q}_{m}\\
 & \mathbf{I}_{p-m}
\end{array}\right]\mathbf{v}\right)^{2}\,,
\end{align*}
where the last step follows from the angle-preserving property of orthogonal operators ($\Q_p$).
For the sake of simplicity, we take the liberty to also write the expectations above as,
\begin{align*}    \mathbb{E}\left(\ve_{i}^{\top}\mO\ve_{j}\right)^{2}
&=
\mathbb{E}\left(\ve_{i}^{\top}\mathbf{Q}_{p}\left[\begin{array}{cc}
\mathbf{Q}_{m}\\
 & \mathbf{I}_{p-m}
\end{array}\right]\mathbf{Q}_{p}^{\top}\ve_{j}\right)^{2}
=
\mathbb{E}_{\mathbf{u\perp v}}
\left(\mathbf{u}^{\top}\left[\begin{array}{cc}
\mathbf{Q}_{m}\\
 & \mathbf{I}_{p-m}
\end{array}\right]\mathbf{v}\right)^{2}\,.
\end{align*}

Finally, when the dimensions are clear from the context, we interchangeably write matrices in the two following forms:
\hfill
$
\left[\begin{array}{cc}
\mathbf{0}_{m}\\
 & \mathbf{I}_{p-m}
\end{array}\right]
=
\left[\begin{array}{cc}
\mathbf{0}\\
 & \mathbf{I}_{p-m}
\end{array}\right]
\text{, and }
\left[\begin{array}{cc}
\Q_{m}\\
 & \mathbf{0}_{p-m}
\end{array}\right]
=
\left[\begin{array}{cc}
\Q_{m}\\
 & \mathbf{0}
\end{array}\right]\,.
$
\end{remark}

\pagebreak

\paragraph{Back to the proof.}
Focusing on just one term from the above \eqref{eq:separate-directions},
we have,
\begin{align*}
&\mathbb{E}\left(\ve_{i}^{\top}\mO^{\top}\mSigma^{+}\mSigma\left(\I-\mO\right)\mSigma^{+}\mSigma\vv\right)^{2}
=
\mathbb{E}\left(\ve_{i}^{\top}\mO^{\top}\mSigma^{+}\mSigma\left(\I-\mO\right)\sum_{i=1}^{d}\ve_{j}\ve_{j}^{\top}\vv\right)^{2}
 \\
 &
 =
 \mathbb{E}\left(\sum_{j=1}^{d}\left(\ve_{i}^{\top}\mO^{\top}\mSigma^{+}\mSigma\left(\I-\mO\right)\ve_{j}\right)\left(\ve_{j}^{\top}\vv\right)\right)^{2}
 \\
 &=
 \sum_{j=1}^{d}\sum_{k=1}^{d}v_{j}v_{k}\mathbb{E}\left(\ve_{i}^{\top}\mO^{\top}\mSigma^{+}\mSigma\left(\I-\mO\right)\ve_{j}\right)\left(\ve_{i}^{\top}\mO^{\top}\mSigma^{+}\mSigma\left(\I-\mO\right)\ve_{k}\right)
 \\
 &= \sum_{j=1}^{d}
 v_{j}^{2}\mathbb{E}
 \left(\ve_{i}^{\top}\mO^{\top}\mSigma^{+}\mSigma\left(\I-\mO\right)\ve_{j}\right)^{2}
 +
 \\
 &
 \hspace{1cm}
 +
 \sum_{j\neq k=1}^{d}v_{j}v_{k}
 \underbrace{\mathbb{E}\left(\ve_{i}^{\top}\mO^{\top}\mSigma^{+}\mSigma\left(\I-\mO\right)\ve_{j}\right)\left(\ve_{i}^{\top}\mO^{\top}\mSigma^{+}\mSigma\left(\I-\mO\right)\ve_{k}\right)}_{
    =0,\text{ by \lemref{lem:zero_fifth}}
 }
 \\
 &=
 \sum_{j=1}^{d}v_{j}^{2}\mathbb{E}\left(\ve_{i}^{\top}\mO^{\top}\mSigma^{+}\mSigma\left(\I-\mO\right)\ve_{j}\right)^{2}
 \\
 &=
 \underbrace{v_{i}^{2}\mathbb{E}\left(\ve_{i}^{\top}\mO^{\top}\mSigma^{+}\mSigma\left(\I-\mO\right)\ve_{i}\right)^{2}}_{j=i}
 +
 \underbrace{\sum_{j\in\left[d\right]\setminus\left\{ i\right\} }v_{j}^{2}\mathbb{E}\left(\ve_{i}^{\top}\mO^{\top}\mSigma^{+}\mSigma\left(\I-\mO\right)\ve_{j}\right)^{2}}_{j\neq i}
 \\
 &=
 v_{i}^{2} \mathbb{E}\left(\ve_{1}^{\top}\mO^{\top}\mSigma^{+}\mSigma\left(\I-\mO\right)\ve_{1}\right)^{2}+\mathbb{E}\left(\ve_{1}^{\top}\mO^{\top}\mSigma^{+}\mSigma\left(\I-\mO\right)\ve_{2}\right)^{2}\sum_{j\in\left[d\right]\setminus\left\{ i\right\} }v_{j}^{2}\,.
\end{align*}
We are free to use $\ve_1,\ve_2$ instead of $\ve_i,\ve_j$ (for $i,j\in \cnt{d}$)
due to the exchangeability of different rows of $\mQ_p$.
For instance, notice that
\begin{align*}
\mathbb{E}\left(\mathbf{e}_{i}^{\top}\mathbf{O}^{\top}\bm{\mathbf{\Sigma}}^{+}\bm{\mathbf{\Sigma}}\left(\mathbf{I}\!-\!\mathbf{O}\right)\mathbf{e}_{j}\right)^{2}
&=\mathbb{E}\!\left(\sum_{k=1}^{d}\mathbf{e}_{i}^{\top}\mathbf{Q}_{p}
\!
\left[\begin{smallmatrix}
\mathbf{Q}_{m}^{\top}\\
 & \mathbf{I}_{p-m}
\end{smallmatrix}\right]
\!
\mathbf{Q}_{p}^{\top}\mathbf{e}_{k}\mathbf{e}_{k}^{\top}\mathbf{Q}_{p}\left(\mathbf{I}\!-\!\left[\begin{smallmatrix}
\mathbf{Q}_{m}\\
 & \mathbf{I}_{p-m}
\end{smallmatrix}\right]\right)\mathbf{Q}_{p}^{\top}\mathbf{e}_{j}\right)^{\!2}\!.
\end{align*}
That is, the expression is a function of $\mQ_p^\top\ve_1,\dots,\mQ_p^\top\ve_d$ which are the first $d$ rows of the random $\mQ_p$ and are entirely exchangeable (see \propref{prop:invariance}).

\newpage

Going back to \eqref{eq:separate-directions},
\begin{align*}
    &
    \tfrac{1}{\norm{\X}^2}
    \mathbb{E}_{\mO} F(\mO;\X, \teacher)
    \le
    \sum_{i=1}^{d}
    \mathbb{E}
    \prn{ \ve_{i}^{\top}\mO^{\top}\mSigma^{+}\mSigma\left(\I-\mO\right)\mSigma^{+}\mSigma\vv}^{2}
    \\
    &=
    \sum_{i=1}^{d}\left(v_{i}^{2}\mathbb{E}\left(\ve_{1}^{\top}\mO^{\top}\mSigma^{+}\mSigma\left(\mathbf{I}-\mO\right)\ve_{1}\right)^{2}+\mathbb{E}\left(\ve_{1}^{\top}\mO^{\top}\mSigma^{+}\mSigma\left(\mathbf{I}-\mO\right)\ve_{2}\right)^{2}\sum_{j\in\left[d\right]\setminus\left\{ i\right\} }v_{j}^{2}\right)
    \\
    &=
    \mathbb{E}\left(\ve_{1}^{\top}\mO^{\top}\mSigma^{+}\mSigma\left(\mathbf{I}-\mO\right)\ve_{1}\right)^{2}\sum_{i=1}^{d}v_{i}^{2}
    +
    \mathbb{E}\left(\ve_{1}^{\top}\mO^{\top}\mSigma^{+}\mSigma\left(\mathbf{I}-\mO\right)\ve_{2}\right)^{2}\sum_{i=1}^{d}\sum_{j\in\left[d\right]\setminus\left\{ i\right\} }v_{j}^{2}
    \\
    &=
    \mathbb{E}\left(\ve_{1}^{\top}\mO^{\top}\mSigma^{+}\mSigma\left(\mathbf{I}-\mO\right)\ve_{1}\right)^{2}
    \sum_{i=1}^{d}v_{i}^{2}
    +\mathbb{E}\left(\ve_{1}^{\top}\mO^{\top}\mSigma^{+}\mSigma\left(\mathbf{I}-\mO\right)\ve_{2}\right)^{2}\sum_{i=1}^{d}\left(\left(\sum_{j\in\left[d\right]}v_{j}^{2}\right)-v_{i}^{2}\right)
    \\
    &=
    \mathbb{E}\left(\ve_{1}^{\top}\mO^{\top}\mSigma^{+}\mSigma\left(\mathbf{I}-\mO\right)\ve_{1}\right)^{2}
    \sum_{i=1}^{d}v_{i}^{2}
    +\mathbb{E}\left(\ve_{1}^{\top}\mO^{\top}\mSigma^{+}\mSigma\left(\mathbf{I}-\mO\right)\ve_{2}\right)^{2}
    \left(d\sum_{j=1}^{d}v_{j}^{2}
    -
    \sum_{i=1}^{d}v_{i}^{2}\right)
    \\
    &=
    \mathbb{E}\left(\ve_{1}^{\top}\mO^{\top}\mSigma^{+}\mSigma\left(\mathbf{I}-\mO\right)\ve_{1}\right)^{2}
    \sum_{i=1}^{d}v_{i}^{2}
    +\mathbb{E}\left(\ve_{1}^{\top}\mO^{\top}\mSigma^{+}\mSigma\left(\mathbf{I}-\mO\right)\ve_{2}\right)^{2}
    \left(d-1\right)
    \sum_{i=1}^{d}v_{i}^{2}
    \\
    &=
    \biggprn{    \mathbb{E}\left(\ve_{1}^{\top}\mO^{\top}\mSigma^{+}\mSigma\left(\mathbf{I}-\mO\right)\ve_{1}\right)^{2}
    +
    \left(d-1\right)    \mathbb{E}\left(\ve_{1}^{\top}\mO^{\top}\mSigma^{+}\mSigma\left(\mathbf{I}-\mO\right)\ve_{2}\right)^{2}
    }
    \sum_{i=1}^{d}v_{i}^{2}
    \,.
\end{align*}
Interestingly, in this worst-case scenario, 
the direction of $\teacher$ 
that lies in the column span of $\X$,
\ie $\X^{+}\X\teacher=\V\underbrace{\mSigma^{+}\mSigma\V^{\top} \teacher}_{=\vv}$
does not play a role, but only its scale
$\norm{\V\mSigma^{+}\mSigma\V^{\top} \teacher}
=
\norm{\V\vv}
=
\norm{\vv}
=\sum_{i=1}^{d}v_{i}^{2}$
(since $\vv$ is only nonzero in its first $d$ entries).
%
%
Normalizing by this scale, we get,
\begin{align*}
\frac{\mathbb{E}_{\mO} F(\mO;\X, \teacher)}{\norm{\X}^2 \norm{\X^{+}\X\teacher}^2}
\le
\underbrace{
\mathbb{E}\left(\ve_{1}^{\top}\mO^{\top}\mSigma^{+}\mSigma\left(\mathbf{I}-\mO\right)\ve_{1}\right)^{2}
}_{\text{solved in \lemref{lem:at-most-three-vectors}}}
+
\left(d-1\right)
\underbrace{
\mathbb{E}\left(\ve_{1}^{\top}\mO^{\top}\mSigma^{+}\mSigma\left(\mathbf{I}-\mO\right)\ve_{2}\right)^{2}
}_{\text{solved in \lemref{lem:at-most-four-vectors}}}\,,
\end{align*}
where we remind the reader that the inequality saturates when all the nonzero singular values of $\X$ are identical.

\newpage

By plugging in the two lemmata and after some tedious algebraic steps, we get the final bound of,
\begin{align}
\label{eq:full-expression}
\begin{split}
&
\max_{\substack{
\X\in\mathcal{X}_{p,d}
}}
\frac{\mathbb{E}_{\mO} F(\mO;\X, \teacher)}{\tnorm{\X}^2 \tnorm{\X^{+}\X\teacher}^2}
\\
&
=\tfrac{m^{4}p^{2}\left(2+p+p^{2}\right)-2m^{3}p\left(24+10p+13p^{2}+p^{4}\right)+m^{2}p\left(240+230p-15p^{2}+50p^{3}-2p^{4}+p^{5}\right)}{\left(p-3\right)\left(p-2\right)\left(p-1\right)p\left(p+1\right)\left(p+2\right)\left(p+4\right)\left(p+6\right)}+
\\
&
=\tfrac{m\left(p^{6}-28p^{5}+47p^{4}-324p^{3}-60p^{2}-240p-576\right)+2p\left(288+120p-90p^{2}+71p^{3}-6p^{4}+p^{5}\right)}{\left(p-3\right)\left(p-2\right)\left(p-1\right)p\left(p+1\right)\left(p+2\right)\left(p+4\right)\left(p+6\right)}+
\\&
\eqmargin
d\cdot
\tfrac{m^{4}p\left(-7p-2-6p^{2}\right)+m^{3}\left(16p^{4}+18p^{3}+74p^{2}+92p+48\right)+m^{2}\left(-11p^{5}-223p^{3}-267p^{2}-302p-240\right)}{\left(p-3\right)\left(p-2\right)\left(p-1\right)p\left(p+1\right)\left(p+2\right)\left(p+4\right)\left(p+6\right)}
+
\\&
\eqmargin
d\cdot
\tfrac{m\left(2p^{6}-3p^{5}+168p^{4}-33p^{3}+728p^{2}+1124p+192\right)+\left(74p^{4}-240p-576-490p^{3}-252p^{2}-24p^{5}\right)}{\left(p-3\right)\left(p-2\right)\left(p-1\right)p\left(p+1\right)\left(p+2\right)\left(p+4\right)\left(p+6\right)}
+
\\&
\eqmargin
2d^{2}\cdot\tfrac{m^{4}p\left(6+5p\right)-2m^{3}\left(8p^{3}+13p^{2}+9p+18\right)+m^{2}\left(15p^{4}+24p^{3}+106p^{2}+162p+36\right)}{\left(p-3\right)\left(p-2\right)\left(p-1\right)p\left(p+1\right)\left(p+2\right)\left(p+4\right)\left(p+6\right)}
+
\\&
\eqmargin
2d^{2}\cdot\tfrac{m\left(-3p^{5}+3p^{4}-129p^{3}-191p^{2}-198p-144\right)+p\left(288+246p-15p^{2}+26p^{3}-p^{4}\right)}{\left(p-3\right)\left(p-2\right)\left(p-1\right)p\left(p+1\right)\left(p+2\right)\left(p+4\right)\left(p+6\right)}
+
\\&
\eqmargin
d^{3}\cdot\tfrac{m^{4}\left(-5p-6\right)+m^{3}\left(18p^{2}+34p-12\right)+m^{2}\left(-20p^{3}-46p^{2}-39p-42\right)}{\left(p-3\right)\left(p-2\right)\left(p-1\right)p\left(p+1\right)\left(p+2\right)\left(p+4\right)\left(p+6\right)}
+
\\&
\eqmargin
d^{3}\cdot\tfrac{m\left(6p^{4}+10p^{3}+96p^{2}+154p+60\right)+\left(2p^{4}-30p^{3}-32p^{2}-144p-144\right)}{\left(p-3\right)\left(p-2\right)\left(p-1\right)p\left(p+1\right)\left(p+2\right)\left(p+4\right)\left(p+6\right)}\,.
\end{split}
\end{align}

The above is the exact expression for the worst-case forgetting.
To reach the $\bigO$ notation, we assume that $p\gg 1$, and so we are left with the most significant elements of each product. 
That is, we show that,
\begin{align*}
&\approx\tfrac{m^{4}p^{2}\left(p^{2}\right)-2m^{3}p\left(p^{4}\right)+m^{2}p\left(p^{5}\right)+m\left(p^{6}\right)+2p\left(p^{5}\right)}{p^{8}}\!+\!
d\tfrac{m^{4}p\left(-6p^{2}\right)+m^{3}\left(16p^{4}\right)+m^{2}\left(-11p^{5}\right)+m\left(2p^{6}\right)-24p^{5}}{p^{8}}
+
\\
&\eqmargin
2d^{2}\tfrac{m^{4}p\left(5p\right)-2m^{3}\left(8p^{3}\right)+m^{2}\left(15p^{4}\right)+m\left(-3p^{5}\right)+p\left(-p^{4}\right)}{p^{8}}+
\\
&\eqmargin
d^{3}\tfrac{m^{4}\left(-5p\right)+m^{3}\left(18p^{2}\right)+m^{2}\left(-20p^{3}\right)+m\left(6p^{4}\right)+\left(2p^{4}\right)}{p^{8}}
\\&
=\tfrac{m^{4}-2m^{3}p+m^{2}p^{2}+mp^{2}+2p^{2}}{p^{4}}+d\tfrac{-6m^{4}+16m^{3}p-11m^{2}p^{2}+2mp^{3}-24p^{2}}{p^{5}}
+
\\
&\eqmargin
2d^{2}\tfrac{5m^{4}-16m^{3}p+15m^{2}p^{2}-3mp^{3}-p^{3}}{p^{6}}+d^{3}\tfrac{-5m^{4}+18m^{3}-20m^{2}p^{2}+6mp^{3}+2p^{3}}{p^{7}}
\\&
=\tfrac{m^{4}-2m^{3}p+m^{2}p^{2}+mp^{2}+2p^{2}}{p^{4}}+\tfrac{d}{p}\tfrac{-6m^{4}+16m^{3}p-11m^{2}p^{2}+2mp^{3}-24p^{2}}{p^{4}}+
\\
&\eqmargin
2\left(\tfrac{d}{p}\right)^{2}\tfrac{5m^{4}-16m^{3}p+15m^{2}p^{2}-3mp^{3}-p^{3}}{p^{4}}+\left(\tfrac{d}{p}\right)^{3}\tfrac{-5m^{4}+18m^{3}-20m^{2}p^{2}+6mp^{3}+2p^{3}}{p^{4}}
\\&
=\tfrac{m^{4}-2m^{3}p+\left(m^{2}+m+2\right)p^{2}}{p^{4}}+\left(\tfrac{d}{p}\right)\tfrac{-6m^{4}+16m^{3}p-11m^{2}p^{2}+2mp^{3}}{p^{4}}+
\\
&\eqmargin
2\left(\tfrac{d}{p}\right)^{2}\tfrac{5m^{4}-16m^{3}p+15m^{2}p^{2}-\left(3m+1\right)p^{3}}{p^{4}}+\left(\tfrac{d}{p}\right)^{3}\tfrac{-5m^{4}+18m^{3}p-20m^{2}p^{2}+\left(6m+2\right)p^{3}}{p^{4}}
\\&
=\left(\tfrac{m}{p}\right)^{4}\!-\!2\left(\tfrac{m}{p}\right)^{3}+\left(\tfrac{m}{p}\right)^{2}
\!+\!
\left(\tfrac{m}{p}\right)\tfrac{1}{p}
\!+\!
\tfrac{2}{p^{2}}
\!+\!
\left(\tfrac{d}{p}\right)
\!
\left(\!
-6\left(\tfrac{m}{p}\right)^{4}
\!+\!
16\left(\tfrac{m}{p}\right)^{3}\!-\!11\left(\tfrac{m}{p}\right)^{2}\!+\!2\left(\tfrac{m}{p}\right)\right)\!+
\\
&\eqmargin
2\left(\tfrac{d}{p}\right)^{2}\left(5\left(\tfrac{m}{p}\right)^{4}-16\left(\tfrac{m}{p}\right)^{3}+15\left(\tfrac{m}{p}\right)^{2}-3\left(\tfrac{m}{p}\right)-\tfrac{1}{p}\right)+
\\
&\eqmargin
\left(\tfrac{d}{p}\right)^{3}\left(-5\left(\tfrac{m}{p}\right)^{4}+18\left(\tfrac{m}{p}\right)^{3}-20\left(\tfrac{m}{p}\right)^{2}+6\left(\tfrac{m}{p}\right)+\tfrac{2}{p}\right)
\\&
\triangleq\alpha^{4}-2\alpha^{3}+\alpha^{2}+\left(1-\beta\right)\left(-6\alpha^{4}+16\alpha^{3}-11\alpha^{2}+2\alpha\right)+
\\
&\eqmargin
2\left(1-\beta\right)^{2}\left(5\alpha^{4}-16\alpha^{3}+15\alpha^{2}-3\alpha\right)+\left(1-\beta\right)^{3}\left(-5\alpha^{4}+18\alpha^{3}-20\alpha^{2}+6\alpha\right)+\mathcal{O}\left(\tfrac{1}{p}\right)
\\&
=\alpha
\Big(2+\beta^{3}\left(5\alpha^{3}-18\alpha^{2}+20\alpha-6\right)+\beta^{2}\left(-5\alpha^{3}+22\alpha^{2}-30\alpha+12\right)+
\\
&\eqmargin
\beta\left(\alpha^{3}-6\alpha^{2}+11\alpha-8\right)\Big)+
\mathcal{O}\left(\tfrac{1}{p}\right)\,.
\end{align*}
\end{proof}

\newpage

\begin{lemma}
\label{lem:zero_fifth}
Let $i\in{\cnt{d}}$ and let $j,k\in\cnt{d}$ such that $j\neq k$. 
It holds that
$$
\mathbb{E}\left(\ve_{i}^{\top}\mO^{\top}\mSigma^{+}\mSigma\left(\I-\mO\right)\ve_{j}\right)\left(\ve_{i}^{\top}\mO^{\top}\mSigma^{+}\mSigma\left(\I-\mO\right)\ve_{k}\right)
=
0\,.
$$
\end{lemma}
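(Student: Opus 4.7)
The plan is to exploit a sign-flip symmetry of the distribution of $\mO$ to show the integrand is odd, and hence integrates to zero. The key observation is that because $\mQ_p$ is sampled from the Haar measure on $O(p)$, for \emph{any} fixed $\mR\in O(p)$ we have $\mR\mQ_p \stackrel{d}{=}\mQ_p$, and therefore, conditionally on the inner block $B\triangleq\bigl[\begin{smallmatrix}\mQ_m & \\ & \I_{p-m}\end{smallmatrix}\bigr]$,
\[
\mR\mO\mR^\top \;=\; (\mR\mQ_p)\,B\,(\mR\mQ_p)^\top \;\stackrel{d}{=}\; \mQ_p B \mQ_p^\top \;=\; \mO.
\]
So $\mO$ is invariant in distribution under conjugation by any fixed orthogonal $\mR$.

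The second ingredient is that $\mSigma^+\mSigma$ is the diagonal projection onto the first $d$ coordinates; in particular, it commutes with every diagonal matrix and is unchanged under conjugation by any diagonal sign matrix $\mD=\operatorname{diag}(\pm 1,\ldots,\pm 1)$. I would specialise the invariance above to such a $\mD\in O(p)$ and push the $\mD$'s through using $\mD\mSigma^+\mSigma=\mSigma^+\mSigma\,\mD$ and $\mD^2=\I$. A short calculation gives
\[
\ve_i^\top(\mD\mO\mD)^\top\mSigma^+\mSigma\,(\I-\mD\mO\mD)\,\ve_l \;=\; D_{ii}D_{ll}\,\ve_i^\top\mO^\top\mSigma^+\mSigma\,(\I-\mO)\,\ve_l
\]
for any $l$. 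Multiplying the $l=j$ and $l=k$ factors, the product appearing in the lemma transforms by the scalar $D_{ii}^2\,D_{jj}D_{kk}=D_{jj}D_{kk}$.

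Now I would choose $\mD$ with $D_{jj}=-1$, $D_{kk}=+1$, and all other diagonal entries equal to $+1$; this is allowed because $j\ne k$. With this choice $D_{jj}D_{kk}=-1$. Combining the distributional invariance $\mathbb{E} f(\mO)=\mathbb{E} f(\mD\mO\mD)$ with the transformation rule above yields $\mathbb{E} f(\mO)=-\mathbb{E} f(\mO)$, hence $\mathbb{E} f(\mO)=0$, which is the claim.

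I do not anticipate a serious obstacle here: once the Haar invariance of $\mO$ under conjugation is stated, the only ``bookkeeping'' step is verifying that $\mSigma^+\mSigma$ commutes with diagonal matrices and that the signs track correctly. The essential hypothesis $j\ne k$ enters precisely to make $D_{jj}D_{kk}=-1$ achievable via a real diagonal sign matrix, which is what produces the sign flip of the integrand.
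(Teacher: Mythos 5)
Your proof is correct, and it takes a genuinely different route from the paper's. The paper expands the product $\left(\ve_{i}^{\top}\mO^{\top}\mSigma^{+}\mSigma\left(\I-\mO\right)\ve_{j}\right)\left(\ve_{i}^{\top}\mO^{\top}\mSigma^{+}\mSigma\left(\I-\mO\right)\ve_{k}\right)$ into four expectations, rewrites each as a sum of products of bilinear forms $\vq_a^\top \A \vq_b$ in the rows of $\mQ_p$, and then invokes \corref{cor:odd-column-or-row}: since $j\neq k$, at least one of $j,k$ appears an odd number of times in every summand, so each monomial expectation vanishes by \propref{prop:odd}. Your argument instead applies the relevant symmetry once, globally: conjugation of $\mO$ by a fixed diagonal sign matrix $\mD$ preserves its law (by left-invariance of the Haar measure on $O(p)$ and independence from $\mQ_m$), commutes with the diagonal projector $\mSigma^{+}\mSigma$, and rescales the integrand by $D_{ii}^2 D_{jj}D_{kk}=D_{jj}D_{kk}$; choosing $D_{jj}=-1$, $D_{kk}=+1$ (possible exactly because $j\neq k$) gives $\mathbb{E}f(\mO)=-\mathbb{E}f(\mO)=0$. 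All the steps check out, including the case $i\in\{j,k\}$, which is handled automatically by the $D_{ii}^2$ factor. What each approach buys: yours is shorter, self-contained, and makes the role of the hypothesis $j\neq k$ completely transparent, with no case analysis or monomial bookkeeping; the paper's version is of a piece with the rest of its appendix, which needs the monomial machinery of \citet{gorin2002integrals} anyway for the nonvanishing expectations, so the odd-occurrence corollary comes essentially for free there. At bottom the two proofs rest on the same sign-flip symmetry of the Haar measure --- the paper applies it monomial by monomial through \propref{prop:odd}, while you apply it directly to the full expression.
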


\begin{proof}
The expectation can be decomposed as,
\begin{align*}
&
\mathbb{E}\left(\ve_{i}^{\top}\mO^{\top}\mSigma^{+}\mSigma\left(\I-\mO\right)\ve_{j}\right)\left(\ve_{i}^{\top}\mO^{\top}\mSigma^{+}\mSigma\left(\I-\mO\right)\ve_{k}\right)
\\
&=
\mathbb{E}
\left[
\left(\ve_{i}^{\top}\mO^{\top}\mSigma^{+}\mSigma
\ve_{j}\right)\left(\ve_{i}^{\top}\mO^{\top}\mSigma^{+}\mSigma\ve_{k}\right)
\right]
-
\mathbb{E}
\left[
\left(\ve_{i}^{\top}\mO^{\top}\mSigma^{+}\mSigma
\ve_{j}\right)\left(\ve_{i}^{\top}\mO^{\top}\mSigma^{+}\mSigma\mO\ve_{k}\right)
\right]
\\
&
\eqmargin
-
\mathbb{E}
\left[
\left(\ve_{i}^{\top}\mO^{\top}\mSigma^{+}\mSigma
\mO
\ve_{j}\right)\left(\ve_{i}^{\top}\mO^{\top}\mSigma^{+}\mSigma\ve_{k}\right)
\right]
+
\mathbb{E}
\left[
\left(\ve_{i}^{\top}\mO^{\top}\mSigma^{+}\mSigma
\mO\ve_{j}\right)
\left(\ve_{i}^{\top}\mO^{\top}\mSigma^{+}\mSigma\mO\ve_{k}\right)
\right]
\\
&=
\mathbb{E}
\left[
\left(\ve_{i}^{\top}\mO^{\top}\ve_{j}\right)
\left(\ve_{i}^{\top}\mO^{\top}\ve_{k}\right)
\right]
-
\mathbb{E}
\left[
\left(\ve_{i}^{\top}\mO^{\top}\ve_{j}\right)
\left(\ve_{i}^{\top}\mO^{\top}\mSigma^{+}\mSigma\mO\ve_{k}\right)
\right]
\\
&
\eqmargin
-
\mathbb{E}
\left[
\left(\ve_{i}^{\top}\mO^{\top}\mSigma^{+}\mSigma
\mO\ve_{j}\right)
\left(\ve_{i}^{\top}\mO^{\top}\ve_{k}\right)
\right]
+
\mathbb{E}
\left[
\left(\ve_{i}^{\top}\mO^{\top}\mSigma^{+}\mSigma
\mO\ve_{j}\right)
\left(\ve_{i}^{\top}\mO^{\top}\mSigma^{+}\mSigma\mO\ve_{k}\right)
\right]\,,
\end{align*}
where the last step holds because $j,k\in\cnt{d}$ and therefore
$\mSigma^{+}\mSigma\ve_{j}=\ve_{j},\,
\mSigma^{+}\mSigma\ve_{k}=\ve_{k}$.

\medskip

Following the definition of $\mO$ (\eqref{eq:data-model}), the first expectation becomes
\begin{align*}
&\mathbb{E}
\left[
\left(\ve_{i}^{\top}\mO^{\top}\ve_{j}\right)
\left(\ve_{i}^{\top}\mO^{\top}\ve_{k}\right)
\right]
=
\mathbb{E}_{\Q_p,\Q_m}
\bigg[
\ve_{j}^{\top}
\Q_{p}\left[\begin{smallmatrix}
\Q_{m}\\
 & \I_{p-m}
\end{smallmatrix}\right]
\Q_{p}^{\top}
\ve_{i}
\cdot
\ve_{k}^{\top}
\Q_{p}
\underbrace{
\left[\begin{smallmatrix}
\Q_{m}\\
 & \I_{p-m}
\end{smallmatrix}\right]
}_{\triangleq \A}
\Q_{p}^{\top}
\ve_{i}
\bigg]\,.
\end{align*}
Since $j\neq k$, we must have either $j\notin \left\{i,k\right\}$ or 
$k\notin \left\{i,j\right\}$ (or both).
Denote the relevant rows of $\Q_p$ by 
${\vq_i \triangleq\Q_{p}^{\top}\ve_{i}}$,
${\vq_j \triangleq\Q_{p}^{\top}\ve_{j}}$,
${\vq_k \triangleq\Q_{p}^{\top}\ve_{k}}$ 
and notice that they are independent of $\Q_m$ (or $\A$).
%
Without loss of generality, 
$j\notin \left\{i,k\right\}$.
The above expectation becomes
$\mathbb{E}_{\Q_p , \Q_m}
\big[
\vq_{j}^{\top}
\A
\vq_{i}
\cdot
\vq_{k}^{\top}
\A
\vq_{i}
\big]$,
where $\vq_j$ appears only once (an odd number).
By \corref{cor:odd-column-or-row}, the expectation vanishes.

Quite similarly, the second expectation becomes
\begin{align*}
\mathbb{E}
\left[
\left(\ve_{i}^{\top}\mO^{\top}
\ve_{j}\right)\left(\ve_{i}^{\top}\mO^{\top}\mSigma^{+}\mSigma\mO\ve_{k}\right)
\right]
&=
\sum_{t=1}^{d}
\mathbb{E}
\left[
\ve_{i}^{\top}\mO^{\top}
\ve_{j}\cdot
\ve_{i}^{\top}\mO^{\top}
\ve_{t}\cdot
\ve_{t}^{\top}\mO\ve_{k}
\right]
\\
&=
\sum_{t=1}^{d}
\mathbb{E}
\left[
\vq_{j}^{\top}\A\vq_{i}
\cdot
\vq_{t}^{\top}\A\vq_{i}
\cdot
\vq_{t}^{\top}\A\vq_{k}
\right]\,.
\end{align*}
Notice that both $i,t$ appear an \emph{even} number of times in (each of) the above expectation(s).
\linebreak
Since $j\neq k$,
at least one out of $j,k$ appears an odd number of times 
(either one, three, or five)
in each of the above expectations.
Again, by \corref{cor:odd-column-or-row}, this expectation vanishes.
Clearly, the same holds for the third expectation.

The fourth expectation is,
\begin{align*}
\mathbb{E}
\left[
\left(\ve_{i}^{\top}\mO^{\top}\mSigma^{+}\mSigma
\mO\ve_{j}\right)
\left(\ve_{i}^{\top}\mO^{\top}\mSigma^{+}\mSigma\mO\ve_{k}\right)
\right]
&
=
\sum_{\ell,t=1}^{d}
\mathbb{E}
\left[
\left(\ve_{i}^{\top}\mO^{\top}
\ve_{\ell}\ve_{\ell}^{\top}
\mO\ve_{j}\right)
\left(\ve_{i}^{\top}\mO^{\top}
\ve_{t}\ve_{t}^{\top}
\mO\ve_{k}\right)
\right]
\\
&=
\sum_{\ell,t=1}^{d}
\mathbb{E}
\left[
\vq_{\ell}^{\top}\A\vq_{i}
\cdot
\vq_{\ell}^{\top}\A\vq_{j}
\cdot
\vq_{t}^{\top}\A\vq_{i}
\cdot
\vq_{t}^{\top}\A\vq_{k}
\right]\,.
\end{align*}
Notice that $i,\ell, t$ appear an \emph{even} number of times in (each of) the above expectation(s).
\linebreak
Since $j\neq k$,
at least one out of $j,k$ appears an odd number of times 
(either one, three, five, or seven)
in each of the above expectations.
Again, by \corref{cor:odd-column-or-row}, this expectation vanishes.

\end{proof}

\newpage

\subsection{Deriving $\mathbb{E}\left(\ve_{i}^{\top}\mO^{\top}\mSigma^{+}\mSigma\left(\mathbf{I}-\mO\right)\ve_{i}\right)^{2}$}

\begin{lemma}
\label{lem:at-most-three-vectors}
Let $p\ge 4, m\in\{2,\dots,p\}, {d\in\cnt{p}}$,
and let $\mO$ be a random transformation sampled as described in \eqref{eq:data-model}.
Then, $\forall i\in\cnt{d}$, it holds that
\begin{align*}
 &
\mathbb{E}\left(\ve_{i}^{\top}\mO^{\top}\mSigma^{+}\mSigma\left(\mathbf{I}-\mO\right)\ve_{i}\right)^{2}
\\
&
=\tfrac{m^{4}\left(p^{2}+2p\right)-2m^{3}p^{3}+m^{2}\left(p^{4}-4p^{3}+20p^{2}-24\right)+m\left(3p^{4}-10p^{3}+63p^{2}+6p-72\right)+2\left(p+1\right)\left(p^{3}-11p^{2}+38p-24\right)}{\left(p-1\right)p\left(p+1\right)\left(p+2\right)\left(p+4\right)\left(p+6\right)}+
\\&
\eqmargin\tfrac{d\left(p+1\right)\left(-2m^{4}+6m^{3}p-2m^{2}\left(p-1\right)\left(2p-5\right)-12m\left(p^{2}-3p+3\right)-8\left(p^{2}-8p+6\right)\right)}{\left(p-1\right)p\left(p+1\right)\left(p+2\right)\left(p+4\right)\left(p+6\right)}+
\\&
\eqmargin\tfrac{d^{2}\left(4mp\left(-m^{2}+m+4\right)+4\left(m+1\right)\left(m+2\right)p^{2}+\left(m-6\right)\left(m-1\right)m\left(m+1\right)-8\left(2p+3\right)\right)}{\left(p-1\right)p\left(p+1\right)\left(p+2\right)\left(p+4\right)\left(p+6\right)}    \end{align*}
\end{lemma}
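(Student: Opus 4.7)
Since $\mSigma^{+}\mSigma$ is the orthogonal projector onto $\mathrm{span}(\ve_{1},\ldots,\ve_{d})$ and $i\in[d]$, we have $\mSigma^{+}\mSigma\ve_{i}=\ve_{i}$, and using $\mSigma^{+}\mSigma=\sum_{s=1}^{d}\ve_{s}\ve_{s}^{\top}$ together with idempotence gives $\ve_{i}^{\top}\mO^{\top}\mSigma^{+}\mSigma\mO\ve_{i}=\sum_{s=1}^{d}(\ve_{s}^{\top}\mO\ve_{i})^{2}$. Consequently the scalar inside the square collapses to
\begin{align*}
\ve_{i}^{\top}\mO^{\top}\mSigma^{+}\mSigma(\I-\mO)\ve_{i}\;=\;\ve_{i}^{\top}\mO\ve_{i}\,-\,\sum_{s=1}^{d}(\ve_{s}^{\top}\mO\ve_{i})^{2}.
\end{align*}
Squaring and taking expectation, then splitting the ensuing double sum by the cases $s\in\{i\}$ versus $s\in[d]\setminus\{i\}$ (and symmetrically for $s'$), reduces the target to a weighted combination --- with multiplicities $1$, $d-1$, and $(d-1)(d-2)$ --- of a handful of scalar moments: $\mathbb{E}[(\ve_{i}^{\top}\mO\ve_{i})^{k}]$ for $k\in\{2,3,4\}$; $\mathbb{E}[(\ve_{s}^{\top}\mO\ve_{i})^{4}]$ for $s\neq i$; and the mixed moments $\mathbb{E}[(\ve_{i}^{\top}\mO\ve_{i})^{a}(\ve_{s}^{\top}\mO\ve_{i})^{b}]$ and $\mathbb{E}[(\ve_{s}^{\top}\mO\ve_{i})^{2}(\ve_{s'}^{\top}\mO\ve_{i})^{2}]$ for distinct $s,s'\neq i$. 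Only the three rows $\{\vq_{i},\vq_{s},\vq_{s'}\}$ of $\Q_{p}$ ever appear, matching the lemma's title.

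\textbf{Computing the individual moments.} Each matrix entry is a bilinear form $\ve_{a}^{\top}\mO\ve_{b}=\vq_{a}^{\top}\A\vq_{b}$ with $\A=\left[\begin{smallmatrix}\Q_{m} & \\ & \I_{p-m}\end{smallmatrix}\right]$, $\vq_{r}=\Q_{p}^{\top}\ve_{r}$, and $\Q_{p},\Q_{m}$ independent Haar (see \rmkref{rmk:simplify}). The plan is to evaluate by iterated expectation: integrate over $\Q_{m}$ first, with $\Q_{p}$ held fixed. Partitioning each row into its upper and lower blocks $\vq_{r}=(\vq_{r}^{\leq m},\vq_{r}^{>m})$, the monomial integrals of \citet{gorin2002integrals} over $O(m)$ --- collected in \appref{app:monomials} --- express $\mathbb{E}_{\Q_{m}}\prod_{k}\vq_{a_{k}}^{\top}\A\vq_{b_{k}}$ as a symmetric polynomial in the block inner products $\langle\vq_{r}^{\leq m},\vq_{r'}^{\leq m}\rangle$ and $\langle\vq_{r}^{>m},\vq_{r'}^{>m}\rangle$ whose coefficients depend only on $m$. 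A second application of the same formulas on $O(p)$ --- treating those block inner products as polynomials in the entries of $\Q_{p}$ --- then evaluates the outer expectation in closed form in $p$ and $m$. The frequently-arising building blocks (\eg $\mathbb{E}_{\Q_{m}}(\vq_{i}^{\top}\A\vq_{i})^{2}$ and $\mathbb{E}_{\Q_{m}}(\vq_{i}^{\top}\A\vq_{s})^{4}$) will be factored out as auxiliary results in \appref{app:auxiliary}, so that the same tables also feed \lemref{lem:at-most-four-vectors}.

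\textbf{Main obstacle.} The difficulty is combinatorial rather than conceptual: each of the seven distinct moments unfolds into its own pattern of two-level Gorin contractions, every unfolding must be paired with the correct counting factor, and all the pieces must then be added together and rearranged over a common denominator. The target rational function has the fixed sextic denominator $(p-1)p(p+1)(p+2)(p+4)(p+6)$ and a numerator that is cubic in $d$, so the final verification is guided: the coefficients of $d^{0},d^{1},d^{2},d^{3}$ can be matched against the stated expression separately, each reducing to a polynomial identity in $m,p$ that can be checked by routine (if lengthy) expansion.
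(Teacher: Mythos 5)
Your proposal is correct and follows essentially the same route as the paper: reduce the scalar to $\ve_{i}^{\top}\mO\ve_{i}-\sum_{s=1}^{d}(\ve_{s}^{\top}\mO\ve_{i})^{2}$, expand the square with the multiplicities $1$, $d-1$, $2(d-1)$, $(d-1)(d-2)$ coming from exchangeability of rows, and evaluate each of the resulting moments by iterated expectation over $\Q_{m}$ then $\Q_{p}$ via the Gorin monomial integrals (the paper organizes this as three terms, computed in Propositions~\ref{prop:(e1Oe1)3}--\ref{prop:(e2Oe3*e2Oe1)^2}, but the underlying decomposition and moment list are identical to yours). The only cosmetic difference is that the paper also invokes the odd-monomial vanishing (\corref{cor:odd-Q_m}) explicitly to prune cross terms before integrating, which your plan implicitly subsumes.
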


\begin{proof}
    We decompose the expectation as,
    \begin{align*}        &\mathbb{E}\left(\ve_{i}^{\top}\mO^{\top}\bm{\mathbf{\Sigma}}^{+}\bm{\mathbf{\Sigma}}\left(\mathbf{I}-\mO\right)\ve_{i}\right)^{2}
    \\
    &=
    \mathbb{E}\left(\ve_{i}^{\top}\mO\ve_{i}\right)^{2}-2\mathbb{E}\left[\left(\ve_{i}^{\top}\mO\ve_{i}\right)\left(\ve_{i}^{\top}\mO^{\top}\bm{\mathbf{\Sigma}}^{+}\bm{\mathbf{\Sigma}}\mO\ve_{i}\right)\right]+\mathbb{E}\left(\ve_{i}^{\top}\mO^{\top}\bm{\mathbf{\Sigma}}^{+}\bm{\mathbf{\Sigma}}\mO\ve_{i}\right)^{2}\,,
    \end{align*}
    and derive each of its three terms separately in the following subsections.

    By adding these three terms and by simple algebra, we get the required result,
\begin{align*}
    &=\tfrac{m^{2}-2mp-m+p^{2}+2p+2}{p\left(p+2\right)}-2\tfrac{\left(p-m\right)\left(d\left(-m^{2}+2mp+3m-6\right)+m^{2}p-2mp^{2}-3mp+p^{3}+5p^{2}+8p-8\right)}{\left(p-1\right)p\left(p+2\right)\left(p+4\right)}+
\\&
\eqmargin\tfrac{3\left(m+4\right)\left(m+6\right)+\left(p-m\right)\left(p-m+2\right)\left(m^{2}-2mp-4m+p^{2}+10p+36\right)}{p\left(p+2\right)\left(p+4\right)\left(p+6\right)}+
\\&
\eqmargin\tfrac{\left(d-1\right)\left(-72+m^{4}\left(-1-2p\right)-72p-20p^{2}-20p^{3}+m^{3}\left(6+16p+8p^{2}\right)+m^{2}\left(-47-70p-34p^{2}-10p^{3}\right)\right)}{\left(p-1\right)p\left(p+1\right)\left(p+2\right)\left(p+4\right)\left(p+6\right)}+
\\&
\eqmargin\tfrac{\left(d-1\right)\left(m\left(42+136p+114p^{2}+22p^{3}+4p^{4}\right)\right)}{\left(p-1\right)p\left(p+1\right)\left(p+2\right)\left(p+4\right)\left(p+6\right)}+
\\&
\eqmargin\tfrac{\left(d-1\right)d\left(4mp\left(-m^{2}+m+4\right)+4\left(m+1\right)\left(m+2\right)p^{2}+\left(m-6\right)\left(m-1\right)m\left(m+1\right)-8\left(2p+3\right)\right)}{\left(p-1\right)p\left(p+1\right)\left(p+2\right)\left(p+4\right)\left(p+6\right)}
\\&
=\tfrac{m^{4}\left(p^{2}+2p\right)-2m^{3}p^{3}+m^{2}\left(p^{4}-4p^{3}+20p^{2}-24\right)+m\left(3p^{4}-10p^{3}+63p^{2}+6p-72\right)+2\left(p+1\right)\left(p^{3}-11p^{2}+38p-24\right)}{\left(p-1\right)p\left(p+1\right)\left(p+2\right)\left(p+4\right)\left(p+6\right)}+
\\&
\eqmargin\tfrac{d\left(p+1\right)\left(-2m^{4}+6m^{3}p-2m^{2}\left(p-1\right)\left(2p-5\right)-12m\left(p^{2}-3p+3\right)-8\left(p^{2}-8p+6\right)\right)}{\left(p-1\right)p\left(p+1\right)\left(p+2\right)\left(p+4\right)\left(p+6\right)}+
\\&
\eqmargin\tfrac{d^{2}\left(4mp\left(-m^{2}+m+4\right)+4\left(m+1\right)\left(m+2\right)p^{2}+\left(m-6\right)\left(m-1\right)m\left(m+1\right)-8\left(2p+3\right)\right)}{\left(p-1\right)p\left(p+1\right)\left(p+2\right)\left(p+4\right)\left(p+6\right)}
\end{align*}
\end{proof}

\bigskip

\begin{remark}[Explaining proof techniques]
In this subsection, we explain the proof steps more thoroughly than in other places, since most of the techniques repeat themselves throughout the appendices.
\end{remark}

\newpage

\subsubsection{Term 1: $\mathbb{E}\left(\ve_{i}^{\top}\mO\ve_{i}\right)^{2}
$}

Recalling \rmkref{rmk:simplify} on our simplified notations, we show that,
\begin{align*}
&\mathbb{E}\left(\ve_{i}^{\top}\mO\ve_{\text{i}}\right)^{2}
=\mathbb{E}\left(\ve_{i}^{\top}\mathbf{Q}_{p}\left[\begin{array}{cc}
\mathbf{Q}_{m}\\
 & \mathbf{I}_{p-m}
\end{array}\right]\mathbf{Q}_{p}^{\top}\ve_{i}\right)^{2}
=
\mathbb{E}_{\mathbf{u}\sim\mathcal{S}^{p-1}}\left(\mathbf{u}^{\top}\left[\begin{array}{cc}
\mathbf{Q}_{m}\\
 & \mathbf{I}_{p-m}
\end{array}\right]\mathbf{u}\right)^{2}
\\
&=
\mathbb{E}\left[\mathbf{u}^{\top}\left(\left[\begin{array}{cc}
\mathbf{Q}_{m}\\
 & \mathbf{0}
\end{array}\right]+\left[\begin{array}{cc}
\mathbf{0}\\
 & \mathbf{I}_{p-m}
\end{array}\right]\right)
\mathbf{u}\cdot\mathbf{u}^{\top}\left(\left[\begin{array}{cc}
\mathbf{Q}_{m}\\
 & \mathbf{0}
\end{array}\right]+\left[\begin{array}{cc}
\mathbf{0}\\
 & \mathbf{I}_{p-m}
\end{array}\right]\right)\mathbf{u}\right]\,.
\end{align*}
Opening the product above, 
by \corref{cor:odd-Q_m} we are only left with the following terms:
\begin{align*}
&=
\mathbb{E}\left[\mathbf{u}^{\top}
\left[\begin{array}{cc}
\mathbf{0}\\
 & \mathbf{I}_{p-m}
\end{array}\right]
\mathbf{u}\mathbf{u}^{\top}
\left[\begin{array}{cc}
\mathbf{0}\\
 & \mathbf{I}_{p-m}
\end{array}\right]\mathbf{u}\right]
+\mathbb{E}\left[\mathbf{u}^{\top}
\left[\begin{array}{cc}
\mathbf{Q}_{m}\\
 & \mathbf{0}
\end{array}\right]\mathbf{u}\mathbf{u}^{\top}\left[\begin{array}{cc}
\mathbf{Q}_{m}\\
 & \mathbf{0}
\end{array}\right]\mathbf{u}\right]
\\
&
=\mathbb{E}\left[\left(\mathbf{u}_{b}^{\top}\mathbf{u}_{b}\right)^{2}\right]+\mathbb{E}\left[\mathbf{u}_{a}^{\top}\mathbf{Q}_{m}\mathbf{u}_{a}\mathbf{u}_{a}^{\top}\mathbf{Q}_{m}\mathbf{u}_{a}\right]
\,,
\end{align*}
where, like we frequently do throughout the appendix, we decomposed $\vu$ into
${\vu=
\left[\begin{array}{c}
\vu_a \\
\vu_b 
\end{array}\right]\in\reals^{p}}$
with $\vu_a\in\reals^{m}$ and
$\vu_b \in\reals^{p-m}$.
This decomposition is often useful, since for two orthogonal unit vectors $\vu,\vv$, it holds that
${0=\vu^{\top}\vv=
\vu_{a}^{\top}\vv_{a}
+
\vu_{b}^{\top}\vv_{b}
\Longrightarrow
\vu_{a}^{\top}\vv_{a}
= -\vu_{b}^{\top}\vv_{b}}
$
and
${1=\norm{\vu}^2=
\norm{\vu_a}^2+\norm{\vu_b}^2
\Longrightarrow
\norm{\vu_a}^2 = 1-\norm{\vu_b}^2}
$.

Another ``trick'' that we use often, is to reparameterize $\Q_m \vu_a$ (for $\Q_m\sim O(m)$) as $\norm{\vu_a} \vr$ for $\vr\sim\mathcal{S}^{m-1}$.
Then, the expectation above becomes,
\begin{align*}
&=
\mathbb{E}\left[\left\Vert \mathbf{u}_{b}\right\Vert ^{4}\right]+\mathbb{E}\left[\left\Vert \mathbf{u}_{a}\right\Vert ^{2}\mathbf{u}_{a}^{\top}\left(\frac{1}{m}\mathbf{I}_{m}\right)\mathbf{u}_{a}\right]=\mathbb{E}\left[\left\Vert \mathbf{u}_{b}\right\Vert ^{4}\right]+\frac{1}{m}\mathbb{E}\left[\left\Vert \mathbf{u}_{a}\right\Vert ^{4}\right]\\&=\sum_{i=m+1}^{p}\sum_{j=m+1}^{p}\mathbb{E}\left[u_{i}^{2}u_{j}^{2}\right]+\frac{1}{m}\sum_{i=1}^{m}\sum_{j=1}^{m}\mathbb{E}\left[u_{i}^{2}u_{j}^{2}\right]
\\&
=\left(p-m\right)\mathbb{E}\left[u_{p}^{4}\right]+\left(p-m\right)\left(p-m-1\right)\mathbb{E}\left[u_{p-1}^{2}u_{p}^{2}\right]+\frac{1}{m}\left(m\mathbb{E}\left[u_{1}^{4}\right]+m\left(m-1\right)\mathbb{E}\left[u_{1}^{2}u_{2}^{2}\right]\right)
\\&
=\left(p-m\right)\mathbb{E}\left[u_{1}^{4}\right]+\left(p-m\right)\left(p-m-1\right)\mathbb{E}\left[u_{1}^{2}u_{2}^{2}\right]+\mathbb{E}\left[u_{1}^{4}\right]+\left(m-1\right)\mathbb{E}\left[u_{1}^{2}u_{2}^{2}\right]\,,
\end{align*}
where in the last step we used the fact that different entries of $\vu$ are identically distributed (see also \propref{prop:invariance}).
Using simple algebraic steps and employing the notations presented in 
\appref{app:monomials} for monomials of entries of random orthogonal matrices,
we have
\begin{align*}
&=\left(p-m+1\right)\mathbb{E}\left[u_{1}^{4}\right]+\left(m^{2}-2mp+2m+p^{2}-p-1\right)\mathbb{E}\left[u_{1}^{2}u_{2}^{2}\right]\\&=\left(p-m+1\right)\left\langle \begin{array}{c}
4\\
\overrightarrow{0}
\end{array}\right\rangle +\left(m^{2}-2mp+2m+p^{2}-p-1\right)\left\langle \begin{array}{c}
2\\
2\\
\overrightarrow{0}
\end{array}\right\rangle \,.
\end{align*}
Finally, we plug in the expectations (computed in \appref{app:monomials}), and get
\begin{align*}
    &=\frac{3\left(p-m+1\right)}{p\left(p+2\right)}+\frac{m^{2}-2mp+2m+p^{2}-p-1}{p\left(p+2\right)}
    =\frac{m^{2}-2mp-m+p^{2}+2p+2}{p\left(p+2\right)}\,.
\end{align*}

\newpage

\subsubsection{Term 2: $\mathbb{E}\left[\left(\ve_{i}^{\top}\mO\ve_{i}\right)\left(\ve_{i}^{\top}\mO^{\top}\mSigma^{+}\mSigma\mO\ve_{i}\right)\right]$}

It holds that,
\begin{align*}
&
\mathbb{E}\left[\left(\ve_{i}^{\top}\mO\ve_{i}\right)\left(\ve_{i}^{\top}\mO^{\top}\mSigma^{+}\mSigma\mO\ve_{i}\right)\right]
=
\mathbb{E}\left[\left(\ve_{1}^{\top}\mO\ve_{1}\right)\left(\ve_{1}^{\top}\mO^{\top}\mSigma^{+}\mSigma\mO\ve_{1}\right)\right]
\\
&
=
\mathbb{E}\left[\ve_{1}^{\top}\mO\ve_{1}\ve_{1}^{\top}\mO^{\top}\sum_{k=1}^{d}\ve_{k}\ve_{k}^{\top}\mO\ve_{1}\right]
=
\sum_{k=1}^{d}\mathbb{E}\left[\ve_{1}^{\top}\mO\ve_{1}\ve_{1}^{\top}\mO^{\top}\ve_{k}\ve_{k}^{\top}\mO\ve_{1}\right]
\\
&=\sum_{k=1}^{d}\mathbb{E}\left[\left(\ve_{k}^{\top}\mO\ve_{1}\right)^{2}\ve_{1}^{\top}\mO\ve_{1}\right]=\underbrace{\mathbb{E}\left[\left(\ve_{1}^{\top}\mO\ve_{1}\right)^{3}\right]}_{\text{solved in \propref{prop:(e1Oe1)3}}}+\left(d-1\right)\underbrace{\mathbb{E}\left[\left(\ve_{2}^{\top}\mO\ve_{1}\right)^{2}\ve_{1}^{\top}\mO\ve_{1}\right]}_{\text{solved in \propref{prop:(e2Oe1)2_e2Oe_2}}}
\\
&=
\frac{\left(p-m\right)\left(m^{2}-2mp-3m+p^{2}+6p+14\right)}{p\left(p+2\right)\left(p+4\right)}+\left(d-1\right)\frac{\left(p-m\right)\left(-m^{2}+2mp+3m-6\right)}{\left(p-1\right)p\left(p+2\right)\left(p+4\right)}
\\
&=
\frac{\left(p-m\right)\left(\left(p-1\right)\left(m^{2}-2mp-3m+p^{2}+6p+14\right)+\left(d-1\right)\left(-m^{2}+2mp+3m-6\right)\right)}{\left(p-1\right)p\left(p+2\right)\left(p+4\right)}
\\
&=
\frac{\left(p-m\right)\left(d\left(-m^{2}+2mp+3m-6\right)+m^{2}p-2mp^{2}-3mp+p^{3}+5p^{2}+8p-8\right)}{\left(p-1\right)p\left(p+2\right)\left(p+4\right)}
\end{align*}

\subsubsection{Term 3: $\mathbb{E}\left(\mathbf{e}_{i}^{\top}\mathbf{O}^{\top}\mSigma^{+}\mSigma\mathbf{O}\mathbf{e}_{i}\right)^{2}$}

It holds that,
\begin{align*}
&\mathbb{E}\!\left(\mathbf{e}_{i}^{\top}\mathbf{O}^{\top}\bm{\mathbf{\Sigma}}^{+}\bm{\mathbf{\Sigma}}\mathbf{O}\mathbf{e}_{i}\right)^{2}
\!\!
=
\mathbb{E}\left(\mathbf{e}_{1}^{\top}\mathbf{O}^{\top}\bm{\mathbf{\Sigma}}^{+}\bm{\mathbf{\Sigma}}\mathbf{O}\mathbf{e}_{1}\right)^{2}
\!\!
=
\mathbb{E}\!\left(
\!\mathbf{e}_{1}^{\top}
\mathbf{O}^{\top}\!\sum_{k=1}^{d}\mathbf{e}_{k}\mathbf{e}_{k}^{\top}\mathbf{O}\mathbf{e}_{1}
\!\right)^{\!2}
\!\!=\!
\mathbb{E}\!
\left(
\sum_{k=1}^{d}
\!\left(\mathbf{e}_{k}^{\top}\mathbf{O}\mathbf{e}_{1}
\!\right)^{2}\right)^{\!2}
\\&
=\sum_{k=1}^{d}\sum_{\ell=1}^{d}\mathbb{E}\left(\mathbf{e}_{k}^{\top}\mathbf{O}\mathbf{e}_{1}\right)^{2}\left(\mathbf{e}_{\ell}^{\top}\mathbf{O}\mathbf{e}_{1}\right)^{2}
=\underbrace{\sum_{k=1}^{d}\mathbb{E}\left(\mathbf{e}_{k}^{\top}\mathbf{O}\mathbf{e}_{1}\right)^{4}}_{k=\ell}+\underbrace{\sum_{k\neq\ell=1}^{d}\mathbb{E}\left(\mathbf{e}_{k}^{\top}\mathbf{O}\mathbf{e}_{1}\right)^{2}\left(\mathbf{e}_{\ell}^{\top}\mathbf{O}\mathbf{e}_{1}\right)^{2}}_{k\neq\ell}
\\&
=\underbrace{\underbrace{\mathbb{E}\left(\mathbf{e}_{1}^{\top}\mathbf{O}\mathbf{e}_{1}\right)^{4}}_{k=1\text{, solved in \propref{prop:(e1Oe1)4}}}
+
\underbrace{\left(d-1\right)\mathbb{E}\left(\mathbf{e}_{2}^{\top}\mathbf{O}\mathbf{e}_{1}\right)^{4}}_{k\ge2\text{, solved in \propref{prop:(e1Oe2)4}}}}_{k=\ell}
+
\\
&
\eqmargin
\underbrace{\underbrace{2\left(d-1\right)\mathbb{E}\left(\mathbf{e}_{1}^{\top}\mathbf{O}\mathbf{e}_{1}\right)^{2}\left(\mathbf{e}_{2}^{\top}\mathbf{O}\mathbf{e}_{1}\right)^{2}}_{\ell\neq k=1\,\vee\,k\neq\ell=1\text{, solved in \propref{prop:e2Oe2*e2Oe1}}}+
\underbrace{\left(d-1\right)\left(d-2\right)\mathbb{E}\left(\mathbf{e}_{2}^{\top}\mathbf{O}\mathbf{e}_{1}\right)^{2}\left(\mathbf{e}_{3}^{\top}\mathbf{O}\mathbf{e}_{1}\right)^{2}}_{k,\ell\ge2,\,k\neq\ell\text{ solved in \propref{prop:(e2Oe3*e2Oe1)^2}}}}_{k\neq\ell}    
\end{align*}
We note in passing that since $d\le p$,
then when $p=2\Rightarrow d\le 2$, the rightmost term is necessarily zero.
Therefore, we can use \propref{prop:(e2Oe3*e2Oe1)^2} freely $\forall p\ge 2$, even though it requires that $p\ge3$.
\begin{align*}
\mathbb{E}\!
\left(\mathbf{e}_{i}^{\top}\mathbf{O}^{\top}\bm{\mathbf{\Sigma}}^{+}\bm{\mathbf{\Sigma}}\mathbf{O}\mathbf{e}_{i}\right)^{2}
\!
&=
\tfrac{3\left(m+4\right)\left(m+6\right)+\left(p-m\right)\left(p-m+2\right)\left(m^{2}-2mp-4m+p^{2}+10p+36\right)}{p\left(p+2\right)\left(p+4\right)\left(p+6\right)}+
\\
&\eqmargin
\tfrac{3\left(d-1\right)\left(m^{4}-2m^{3}\left(2p+3\right)+m^{2}\left(4p^{2}+4p-1\right)+2m\left(6p^{2}+8p+3\right)+8\left(p^{2}-2p-3\right)\right)}{\left(p-1\right)p\left(p+1\right)\left(p+2\right)\left(p+4\right)\left(p+6\right)}+
\\&
\eqmargin 2\left(d-1\right)\tfrac{\left(m+4\right)\left(2mp+4p+m-m^{2}-6\right)-\left(p-m\right)\left(p-m+2\right)\left(m\left(m-2p-5\right)+10\right)}{\left(p-1\right)p\left(p+2\right)\left(p+4\right)\left(p+6\right)}
+
\\
&\eqmargin
\tfrac{\left(d-1\right)\left(d-2\right)\left(4mp\left(-m^{2}+m+4\right)+4\left(m+1\right)\left(m+2\right)p^{2}+\left(m-6\right)\left(m-1\right)m\left(m+1\right)-8\left(2p+3\right)\right)}{\left(p-1\right)p\left(p+1\right)\left(p+2\right)\left(p+4\right)\left(p+6\right)}
\\
&
=
\tfrac{3\left(m+4\right)\left(m+6\right)+\left(p-m\right)\left(p-m+2\right)\left(m^{2}-2mp-4m+p^{2}+10p+36\right)}{p\left(p+2\right)\left(p+4\right)\left(p+6\right)}
+
\\
&\eqmargin
\tfrac{\left(d-1\right)\left(-72+m^{4}\left(-1-2p\right)-72p-20p^{2}-20p^{3}+m^{3}\left(6+16p+8p^{2}\right)\right)}{\left(p-1\right)p\left(p+1\right)\left(p+2\right)\left(p+4\right)\left(p+6\right)}
+
\\
&\eqmargin
\tfrac{\left(d-1\right)\left(m^{2}\left(-47-70p-34p^{2}-10p^{3}\right)+m\left(42+136p+114p^{2}+22p^{3}+4p^{4}\right)\right)}{\left(p-1\right)p\left(p+1\right)\left(p+2\right)\left(p+4\right)\left(p+6\right)}
+
\\
&\eqmargin
\tfrac{\left(d-1\right)d\left(4mp\left(-m^{2}+m+4\right)+4\left(m+1\right)\left(m+2\right)p^{2}+\left(m-6\right)\left(m-1\right)m\left(m+1\right)-8\left(2p+3\right)\right)}{\left(p-1\right)p\left(p+1\right)\left(p+2\right)\left(p+4\right)\left(p+6\right)}
\end{align*}

\newpage

\subsection{Deriving $\mathbb{E}\left(\ve_{i}^{\top}\mO^{\top}\mSigma^{+}\mSigma\left(\mathbf{I}-\mO\right)\ve_{j}\right)^{2}$}

\begin{lemma}
\label{lem:at-most-four-vectors}
Let $p\ge 4, m\in\{2,\dots,p\}, d\in\cnt{p}$,
and let $\mO$ be a random transformation sampled as described in \eqref{eq:data-model}.
Then, $\forall i,j\in\cnt{d}$ such that $i\neq j$, it holds that
    \begin{align*}
    &\mathbb{E}
    \left(\ve_{i}^{\top}\mO^{\top}\mSigma^{+}\mSigma\left(\mathbf{I}-\mO\right)\ve_{j}\right)^{2}
    \\
    &
    =
    \tfrac{\left(-4p^{3}-6p^{2}+12p\right)m^{4}+
    m^{3}\left(10p^{4}+14p^{3}+20p^{2}+48p\right)+
    m^{2}\left(-7p^{5}-4p^{4}-109p^{3}-134p^{2}-120p-144\right)
    }{\left(p-3\right)\left(p-2\right)\left(p-1\right)p\left(p+1\right)\left(p+2\right)\left(p+4\right)\left(p+6\right)}
    +
    \\
    &
    \eqmargin
    \tfrac{m\left(2p^{6}+3p^{5}+84p^{4}-45p^{3}+336p^{2}+636p+144\right)+\left(-18p^{5}+24p^{4}-182p^{3}-104p^{2}-168p-288\right)}{\left(p-3\right)\left(p-2\right)\left(p-1\right)p\left(p+1\right)\left(p+2\right)\left(p+4\right)\left(p+6\right)}
    +
    \\
    &
    \eqmargin
    d\cdot
    \tfrac{3m^{4}\left(-4+4p+3p^{2}\right)-4m^{3}\left(p+2\right)\left(7p^{2}-2p+6\right)+m^{2}\left(26p^{4}+44p^{3}+163p^{2}+256p+36\right)}{\left(p-3\right)\left(p-2\right)\left(p-1\right)p\left(p+1\right)\left(p+2\right)\left(p+4\right)\left(p+6\right)}
    +
    \\
    &
    \eqmargin
    d\cdot
    \tfrac{-2m\left(3p^{5}+102p^{3}+142p^{2}+154p+132\right)-2p\left(p+1\right)\left(p^{3}-24p^{2}+26p-204\right)}{\left(p-3\right)\left(p-2\right)\left(p-1\right)p\left(p+1\right)\left(p+2\right)\left(p+4\right)\left(p+6\right)}
    +
    \\
    &
    \eqmargin
    d^{2}\cdot
    \tfrac{m^{4}\left(-5p-6\right)+m^{3}\left(18p^{2}+34p-12\right)+m^{2}\left(-20p^{3}-46p^{2}-39p-42\right)}{\left(p-3\right)\left(p-2\right)\left(p-1\right)p\left(p+1\right)\left(p+2\right)\left(p+4\right)\left(p+6\right)}
    +
    \\
    &
    \eqmargin
    d^{2}\cdot
    \tfrac{m\left(6p^{4}+10p^{3}+96p^{2}+154p+60\right)+
    \left(2p^{4}-30p^{3}-32p^{2}-144p-144\right)}{\left(p-3\right)\left(p-2\right)\left(p-1\right)p\left(p+1\right)\left(p+2\right)\left(p+4\right)\left(p+6\right)}
    \,.
    \end{align*}
\end{lemma}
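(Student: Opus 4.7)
The strategy mirrors that of Lemma~\ref{lem:at-most-three-vectors}: expand the squared linear form, split into distinguishable subcases according to the coincidence pattern of summation indices, reduce each resulting monomial expectation to a product of monomials of entries of $\Q_p$ and $\Q_m$, and invoke the exact Haar integration formulas collected in Appendix~\ref{app:monomials}. By the row/column exchangeability of $\Q_p$ (Property~\ref{prop:invariance}), we may assume without loss of generality that $i=1$ and $j=2$. Since $j\in[d]$ implies $\mSigma^{+}\mSigma\ve_{j}=\ve_{j}$, we begin with the decomposition
\begin{align*}
\mathbb{E}\bigl(\ve_{1}^{\top}\mO^{\top}\mSigma^{+}\mSigma(\I-\mO)\ve_{2}\bigr)^{2}
&= \mathbb{E}\bigl(\ve_{1}^{\top}\mO^{\top}\ve_{2}\bigr)^{2}
 - 2\,\mathbb{E}\bigl[(\ve_{1}^{\top}\mO^{\top}\ve_{2})(\ve_{1}^{\top}\mO^{\top}\mSigma^{+}\mSigma\mO\ve_{2})\bigr] \\
&\quad + \mathbb{E}\bigl(\ve_{1}^{\top}\mO^{\top}\mSigma^{+}\mSigma\mO\ve_{2}\bigr)^{2},
\end{align*}
and compute the three terms separately.

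The first term, $\mathbb{E}(\ve_{2}^{\top}\mO\ve_{1})^{2}$, is an elementary second-order monomial and should already be available from the auxiliary computations used in Lemma~\ref{lem:at-most-three-vectors}. The second term, after inserting $\mSigma^{+}\mSigma=\sum_{k=1}^{d}\ve_{k}\ve_{k}^{\top}$, becomes $\sum_{k=1}^{d}\mathbb{E}[(\ve_{2}^{\top}\mO\ve_{1})(\ve_{k}^{\top}\mO\ve_{1})(\ve_{k}^{\top}\mO\ve_{2})]$, which splits into the three subcases $k=1$, $k=2$, and $k\in\{3,\dots,d\}$; each is a third-order monomial expectation involving at most three distinct basis indices. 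The third and heaviest term expands into a double sum $\sum_{k,\ell=1}^{d}\mathbb{E}[\mO_{k1}\mO_{k2}\mO_{\ell1}\mO_{\ell2}]$, which I would stratify into seven subcases according to $(k=\ell$ vs.\ $k\neq\ell)$ and which of $\{1,2\}$ the indices $k,\ell$ coincide with; the combinatorial multiplicities are $1,1,d-2$ on the diagonal and $2,2(d-2),2(d-2),(d-2)(d-3)$ off the diagonal.

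For each such fourth-order term, the plan is to write $\mO=\Q_{p}\bigl[\begin{smallmatrix}\Q_{m}\\ & \I_{p-m}\end{smallmatrix}\bigr]\Q_{p}^{\top}$, reduce the expression by the angle-preserving trick of Remark~\ref{rmk:simplify} to an integral over independent unit vectors $\vu,\vv,\vw,\vx$ on $\mathcal{S}^{p-1}$ mutually orthogonal (for the distinct-index cases) together with an independent $\Q_{m}\sim O(m)$, decompose each unit vector as $\vu = \vu_a \oplus \vu_b$ with $\vu_a\in\mathbb{R}^{m}$, $\vu_b\in\mathbb{R}^{p-m}$, and expand the resulting product of linear forms. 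As in the proof of Lemma~\ref{lem:at-most-three-vectors}, all terms containing an odd total power of $\Q_{m}$ vanish by Corollary~\ref{cor:odd-Q_m}, leaving monomials that can be evaluated exactly using the Gorin integration formulas, together with the identities $\vu_a^{\top}\vv_a=-\vu_b^{\top}\vv_b$ and $\|\vu_a\|^2=1-\|\vu_b\|^2$ valid on orthonormal systems. Each such building-block expectation I would factor out as a standalone proposition in Appendix~\ref{app:auxiliary}, analogous to the propositions cited in the proof of Lemma~\ref{lem:at-most-three-vectors}.

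The main obstacle is bookkeeping: the four-distinct-index case $\mathbb{E}[\mO_{31}\mO_{32}\mO_{41}\mO_{42}]$ in Term~3, weighted by $(d-2)(d-3)$, has no analogue in Lemma~\ref{lem:at-most-three-vectors} and requires evaluating monomials of degree eight in entries of $\Q_p$ together with degree-four entries of $\Q_m$; this is where the denominator factor $(p-3)(p-2)$ appearing in the statement enters, reflecting the Weingarten-type pole structure. The remaining work is a careful tabulation of multiplicities, summation of all contributions weighted by the $-2$ from Term~2, and algebraic simplification over the common denominator $(p-3)(p-2)(p-1)p(p+1)(p+2)(p+4)(p+6)$ to match the stated polynomial in $d$ of degree two; I expect this to be tedious but mechanical once the auxiliary propositions are in hand.
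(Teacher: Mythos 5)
Your proposal matches the paper's proof essentially step for step: the same three-term decomposition via $\mSigma^{+}\mSigma=\sum_{k=1}^{d}\ve_k\ve_k^{\top}$, the same sub-case stratification by index coincidences (your multiplicities $1,1,d-2$ and $2,\,2(d-2),\,2(d-2),\,(d-2)(d-3)$ agree with the paper's $2,\,(d-2)$ and $2,\,4(d-2),\,(d-2)(d-3)$ after combining symmetric cases), and the same reduction to orthogonal unit vectors, parity cancellation via Corollary~\ref{cor:odd-Q_m}, and Gorin-type monomial integrals, including the genuinely new four-vector term $\mathbb{E}[\ve_3^{\top}\mO\ve_1\cdot\ve_3^{\top}\mO\ve_2\cdot\ve_4^{\top}\mO\ve_1\cdot\ve_4^{\top}\mO\ve_2]$. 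The plan is correct as an outline, with the remaining work being exactly the mechanical tabulation the paper carries out in Propositions~\ref{prop:(e2Oe1)(e1Oe1)(e1Oe2)}--\ref{prop:(e3Oe4)(e3Oe1)(e2Oe4)(e2Oe1)} and Appendix~\ref{app:auxiliary}.
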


\begin{proof}
    We decompose the expectation as,
    \begin{align*}        &\mathbb{E}\left(\ve_{i}^{\top}\mO^{\top}\bm{\mathbf{\Sigma}}^{+}\bm{\mathbf{\Sigma}}\left(\mathbf{I}-\mO\right)\ve_{j}\right)^{2}
    \\
    &=
    \mathbb{E}\left(\ve_{i}^{\top}\mO\ve_{j}\right)^{2}
    -
    2\mathbb{E}\left[\left(\ve_{j}^{\top}\mO\ve_{i}\right)\left(\ve_{i}^{\top}\mO^{\top}\bm{\mathbf{\Sigma}}^{+}\bm{\mathbf{\Sigma}}\mO\ve_{j}\right)\right]
    +
    \mathbb{E}\left(\ve_{i}^{\top}\mO^{\top}\bm{\mathbf{\Sigma}}^{+}\bm{\mathbf{\Sigma}}\mO\ve_{j}\right)^{2}\,,
    \end{align*}
and derive each of its three terms separately in the following subsections.
The final result in the lemma is obtained by summing these three terms.
\end{proof}

\bigskip

\subsubsection{Term 1: $\mathbb{E}\left(\ve_{i}^{\top}\mO\ve_{j}\right)^{2}$}

It holds that,
\begin{align*}
\mathbb{E}\left(\mathbf{e}_{i}^{\top}\mathbf{O}\mathbf{e}_{j}\right)^{2}=&\mathbb{E}\left(\mathbf{u}^{\top}\left(\left[\begin{smallmatrix}
\mathbf{Q}_{m}\\
& \mathbf{0}_{p-m}
\end{smallmatrix}\right]+\left[\begin{smallmatrix}
\mathbf{0}_{m}\\
& \mathbf{I}_{p-m}
\end{smallmatrix}\right]\right)\mathbf{v}\right)^{2}
=
\mathbb{E}\left(\mathbf{u}_{a}^{\top}\mathbf{Q}_{m}\mathbf{v}_{a}+\mathbf{u}_{b}^{\top}\mathbf{v}_{b}\right)^{2}
\\&
=\underbrace{\mathbb{E}\left(\mathbf{u}_{a}^{\top}\mathbf{Q}_{m}\mathbf{v}_{a}\right)^{2}}_{\text{solved in \eqref{eq:(uaQmva)^2}}}
+
\underbrace{2\mathbb{E}\left(\mathbf{u}_{a}^{\top}\mathbf{Q}_{m}\mathbf{v}_{a}\mathbf{u}_{b}^{\top}\mathbf{v}_{b}\right)}_{=0\text{, by \corref{cor:odd-Q_m}}}
+
\underbrace{\mathbb{E}\left(\mathbf{u}_{b}^{\top}\mathbf{v}_{b}\right)^{2}}_{\text{solved in \eqref{eq:(uava)2}}}
\\&
=\frac{mp+m-2}{\left(p-1\right)p\left(p+2\right)}+\frac{m\left(p-m\right)}{\left(p-1\right)p\left(p+2\right)}=\frac{mp+m-2+m\left(p-m\right)}{\left(p-1\right)p\left(p+2\right)}
\\&
=
\frac{2mp+m-m^{2}-2}{\left(p-1\right)p\left(p+2\right)}
\end{align*}

\newpage

\subsubsection{Term 2: $2\left(\mathbf{e}_{j}^{\top}\mathbf{O}\mathbf{e}_{i}\right)
\left(\mathbf{e}_{i}^{\top}\mathbf{O}^{\top}\mSigma^{+}\mSigma\mathbf{O}\mathbf{e}_{j}\right)$}

It holds that,
\begin{align*}
&2\mathbb{E}\left[\left(\mathbf{e}_{j}^{\top}\mathbf{O}\mathbf{e}_{i}\right)\left(\mathbf{e}_{i}^{\top}\mathbf{O}^{\top}\bm{\mathbf{\Sigma}}^{+}\bm{\mathbf{\Sigma}}\mathbf{O}\mathbf{e}_{j}\right)\right]=2\mathbb{E}\left[\mathbf{e}_{2}^{\top}\mathbf{O}\mathbf{e}_{1}\mathbf{e}_{1}^{\top}\mathbf{O}^{\top}\sum_{k=1}^{d}\mathbf{e}_{k}\mathbf{e}_{k}^{\top}\mathbf{O}\mathbf{e}_{2}\right]
\\&
=2\sum_{k=1}^{d}\mathbb{E}\left[\mathbf{e}_{2}^{\top}\mathbf{O}\mathbf{e}_{1}\mathbf{e}_{1}^{\top}\mathbf{O}^{\top}\mathbf{e}_{k}\mathbf{e}_{k}^{\top}\mathbf{O}\mathbf{e}_{2}\right]=2\sum_{k=1}^{d}\mathbb{E}\left[\mathbf{e}_{2}^{\top}\mathbf{O}\mathbf{e}_{1}\mathbf{e}_{k}^{\top}\mathbf{O}\mathbf{e}_{1}\mathbf{e}_{k}^{\top}\mathbf{O}\mathbf{e}_{2}\right]
\\&
=
2\left(\underbrace{\mathbb{E}\!
\left[\mathbf{e}_{2}^{\top}\mathbf{O}\mathbf{e}_{1}\mathbf{e}_{1}^{\top}\mathbf{O}\mathbf{e}_{1}\mathbf{e}_{1}^{\top}\mathbf{O}\mathbf{e}_{2}\right]}_{\text{solved in \propref{prop:(e2Oe1)(e1Oe1)(e1Oe2)}}}
+
\underbrace{\mathbb{E}\!
\left[\left(\mathbf{e}_{2}^{\top}\mathbf{O}\mathbf{e}_{1}\right)^{2}\mathbf{e}_{2}^{\top}\mathbf{O}\mathbf{e}_{2}\right]}_{\text{solved in \propref{prop:(e2Oe1)2_e2Oe_2}}}
+
\left(d-2\right)\underbrace{\mathbb{E}\!
\left[\mathbf{e}_{2}^{\top}\mathbf{O}\mathbf{e}_{1}\mathbf{e}_{3}^{\top}\mathbf{O}\mathbf{e}_{1}\mathbf{e}_{3}^{\top}\mathbf{O}\mathbf{e}_{2}\right]}_{\text{solved in \propref{prop:(e2Oe1)(e3Oe1)(e3Oe2)}}}\right)
\\&
=2\left(p-m\right)\left(\tfrac{-m^{2}-m+\left(m+1\right)p-2}{\left(p-1\right)p\left(p+2\right)\left(p+4\right)}
+
\tfrac{-m^{2}+2mp+3m-6}{\left(p-1\right)p\left(p+2\right)\left(p+4\right)}+\tfrac{\left(d-2\right)\left(2m^{2}-3mp-2m-p+8\right)}{\left(p-2\right)\left(p-1\right)p\left(p+2\right)\left(p+4\right)}\right)
\\&
=2\left(p-m\right)\left(\frac{-2m^{2}+2m+\left(3m+1\right)p-8}{\left(p-1\right)p\left(p+2\right)\left(p+4\right)}+\frac{\left(d-2\right)\left(2m^{2}-3mp-2m-p+8\right)}{\left(p-2\right)\left(p-1\right)p\left(p+2\right)\left(p+4\right)}\right)
\\&
=\frac{2\left(p-m\right)\left(d\left(2m^{2}-3mp-2m-p+8\right)+p\left(-2m^{2}+3mp+2m+p-8\right)\right)}{\left(p-2\right)\left(p-1\right)p\left(p+2\right)\left(p+4\right)}
\end{align*}

\newpage

\subsubsection{Term 3: $\mathbb{E}\left(\mathbf{e}_{i}^{\top}\mathbf{O}^{\top}\mSigma^{+}\mSigma\mathbf{O}\mathbf{e}_{j}\right)^{2}$}

It holds that,
\begin{align*}
&\mathbb{E}\left(\mathbf{e}_{i}^{\top}\mathbf{O}^{\top}\bm{\mathbf{\Sigma}}^{+}\bm{\mathbf{\Sigma}}\mathbf{O}\mathbf{e}_{j}\right)^{2}=\mathbb{E}\left(\mathbf{e}_{1}^{\top}\mathbf{O}^{\top}\bm{\mathbf{\Sigma}}^{+}\bm{\mathbf{\Sigma}}\mathbf{O}\mathbf{e}_{2}\right)^{2}=\mathbb{E}\left(\mathbf{e}_{1}^{\top}\mathbf{O}^{\top}\sum_{k=1}^{d}\mathbf{e}_{k}\mathbf{e}_{k}^{\top}\mathbf{O}\mathbf{e}_{2}\right)^{2}
\\&
=\sum_{k,\ell=1}^{d}\!
\mathbb{E}\left(\mathbf{e}_{1}^{\top}\mathbf{O}^{\top}\mathbf{e}_{k}\mathbf{e}_{k}^{\top}\mathbf{O}\mathbf{e}_{2}\right)\left(\mathbf{e}_{1}^{\top}\mathbf{O}^{\top}\mathbf{e}_{\ell}\mathbf{e}_{\ell}^{\top}\mathbf{O}\mathbf{e}_{2}\right)
=
\sum_{k,\ell=1}^{d}\!\mathbb{E}\left(\mathbf{e}_{k}^{\top}\mathbf{O}\mathbf{e}_{1}\cdot\mathbf{e}_{k}^{\top}\mathbf{O}\mathbf{e}_{2}\right)\left(\mathbf{e}_{\ell}^{\top}\mathbf{O}\mathbf{e}_{1}\cdot\mathbf{e}_{\ell}^{\top}\mathbf{O}\mathbf{e}_{2}\right)
\\&
=\underbrace{\sum_{k=1}^{d}\mathbb{E}\left(\mathbf{e}_{k}^{\top}\mathbf{O}\mathbf{e}_{1}\cdot\mathbf{e}_{k}^{\top}\mathbf{O}\mathbf{e}_{2}\right)^{2}}_{k=\ell}+\underbrace{\sum_{k\neq\ell=1}^{d}\mathbb{E}\left(\mathbf{e}_{k}^{\top}\mathbf{O}\mathbf{e}_{1}\cdot\mathbf{e}_{k}^{\top}\mathbf{O}\mathbf{e}_{2}\right)\left(\mathbf{e}_{\ell}^{\top}\mathbf{O}\mathbf{e}_{1}\cdot\mathbf{e}_{\ell}^{\top}\mathbf{O}\mathbf{e}_{2}\right)}_{k\neq\ell}
\end{align*}

We now show that,
\begin{align*}
&\sum_{k=1}^{d}\mathbb{E}\left(\mathbf{e}_{k}^{\top}\mathbf{O}\mathbf{e}_{1}\cdot\mathbf{e}_{k}^{\top}\mathbf{O}\mathbf{e}_{2}\right)^{2}
=
\underbrace{2\mathbb{E}\left(\mathbf{e}_{1}^{\top}\mathbf{O}\mathbf{e}_{1}\cdot\mathbf{e}_{1}^{\top}\mathbf{O}\mathbf{e}_{2}\right)^{2}}_{k=1,2\text{, solved in \propref{prop:e2Oe2*e2Oe1}}}
+
\underbrace{\left(d-2\right)\mathbb{E}\left(\mathbf{e}_{3}^{\top}\mathbf{O}\mathbf{e}_{1}\cdot\mathbf{e}_{3}^{\top}\mathbf{O}\mathbf{e}_{2}\right)^{2}}_{k\ge3\text{, solved in \propref{prop:(e2Oe3*e2Oe1)^2}}}
\\&
=\tfrac{2\left(\left(m+4\right)\left(2mp+4p+m-m^{2}-6\right)-\left(p-m\right)\left(p-m+2\right)\left(m\left(m-2p-5\right)+10\right)\right)}{\left(p-1\right)p\left(p+2\right)\left(p+4\right)\left(p+6\right)}
+
\\
&
\eqmargin
\tfrac{\left(d-2\right)\left(4mp\left(-m^{2}+m+4\right)+4\left(m+1\right)\left(m+2\right)p^{2}
+
\left(m-6\right)\left(m-1\right)m\left(m+1\right)-8\left(2p+3\right)\right)}{\left(p-1\right)p\left(p+1\right)\left(p+2\right)\left(p+4\right)\left(p+6\right)}
\\&
=
\tfrac{-2\left(m-p-1\right)\left(m-p\right)\left(m\left(p+2\right)\left(m-2p-5\right)+2\left(5p+6\right)\right)}{\left(p-1\right)p\left(p+1\right)\left(p+2\right)\left(p+4\right)\left(p+6\right)}
+
\\
&
\eqmargin
d\cdot
\tfrac{4m\left(-m^{2}+m+4\right)p+4\left(m+1\right)\left(m+2\right)p^{2}+\left(m-6\right)\left(m-1\right)m\left(m+1\right)-8\left(2p+3\right)}{\left(p-1\right)p\left(p+1\right)\left(p+2\right)\left(p+4\right)\left(p+6\right)}
\end{align*}

Moreover, we have that,
\begin{align*}
&\sum_{k\neq\ell=1}^{d}\mathbb{E}\left(\mathbf{e}_{k}^{\top}\mathbf{O}\mathbf{e}_{1}\cdot\mathbf{e}_{k}^{\top}\mathbf{O}\mathbf{e}_{2}\right)\left(\mathbf{e}_{\ell}^{\top}\mathbf{O}\mathbf{e}_{1}\cdot\mathbf{e}_{\ell}^{\top}\mathbf{O}\mathbf{e}_{2}\right)
\\
&=
\underbrace{2\mathbb{E}\left(\mathbf{e}_{1}^{\top}\mathbf{O}\mathbf{e}_{1}\mathbf{e}_{1}^{\top}\mathbf{O}\mathbf{e}_{2}\right)\left(\mathbf{e}_{2}^{\top}\mathbf{O}\mathbf{e}_{1}\mathbf{e}_{2}^{\top}\mathbf{O}\mathbf{e}_{2}\right)}_{k=1,\ell=2\,\vee\,k=2,\ell=1\text{, solved in \propref{prop:(e1Oe1)(e1Oe2)(e2Oe1)(e2Oe2)}}}
+
\underbrace{4\left(d-2\right)\mathbb{E}\left(\mathbf{e}_{1}^{\top}\mathbf{O}\mathbf{e}_{1}\mathbf{e}_{1}^{\top}\mathbf{O}\mathbf{e}_{2}\right)\left(\mathbf{e}_{3}^{\top}\mathbf{O}\mathbf{e}_{1}\mathbf{e}_{3}^{\top}\mathbf{O}\mathbf{e}_{2}\right)}_{k\le2,\ell\ge3\,\vee\,k\ge3,\ell\le2\text{, solved in \propref{prop:(e1Oe1)(e1Oe2)(e3Oe1)(e3Oe2)}.}}
+
\\
&
\eqmargin
\underbrace{\left(d-2\right)\left(d-3\right)\mathbb{E}\left(\mathbf{e}_{3}^{\top}\mathbf{O}\mathbf{e}_{1}\mathbf{e}_{3}^{\top}\mathbf{O}\mathbf{e}_{2}\right)\left(\mathbf{e}_{4}^{\top}\mathbf{O}\mathbf{e}_{1}\mathbf{e}_{4}^{\top}\mathbf{O}\mathbf{e}_{2}\right)}_{
k\neq\ell\ge3\text{, solved in \propref{prop:(e3Oe4)(e3Oe1)(e2Oe4)(e2Oe1)}}}    
\end{align*}

By summing all of the above and by some tedious algebraic steps,
we finally get that,
\begin{align*}
&
\mathbb{E}\left(\mathbf{e}_{i}^{\top}\mathbf{O}^{\top}\bm{\mathbf{\Sigma}}^{+}\bm{\mathbf{\Sigma}}\mathbf{O}\mathbf{e}_{j}\right)^{2}
\\
&=
\tfrac{2p\left(m^{4}\left(6-3p-2p^{2}\right)+m^{3}\left(-12-20p+15p^{2}+7p^{3}\right)+m^{2}\left(18+13p+5p^{2}-21p^{3}-8p^{4}\right)\right)}{\left(p-3\right)\left(p-2\right)\left(p-1\right)p\left(p+1\right)\left(p+2\right)\left(p+4\right)\left(p+6\right)}
+
\\&
\eqmargin
\tfrac{2p\left(m\left(3p^{5}+8p^{4}+21p^{3}+28p^{2}-14p-12\right)+p\left(12-4p-29p^{2}-12p^{3}+p^{4}\right)\right)}{\left(p-3\right)\left(p-2\right)\left(p-1\right)p\left(p+1\right)\left(p+2\right)\left(p+4\right)\left(p+6\right)}
+
\\&
\eqmargin
d\cdot\tfrac{3m^{4}\left(-4+4p+3p^{2}\right)-4m^{3}\left(-6-13p+16p^{2}+8p^{3}\right)+m^{2}\left(-36+16p+29p^{2}+88p^{3}+36p^{4}\right)}{\left(p-3\right)\left(p-2\right)\left(p-1\right)p\left(p+1\right)\left(p+2\right)\left(p+4\right)\left(p+6\right)}
+
\\&
\eqmargin
d\cdot\tfrac{-2m\left(6p^{5}+13p^{4}+69p^{3}+105p^{2}+16p-12\right)+\left(-4p^{5}+54p^{4}+90p^{3}+152p^{2}+120p\right)}{\left(p-3\right)\left(p-2\right)\left(p-1\right)p\left(p+1\right)\left(p+2\right)\left(p+4\right)\left(p+6\right)}
+
\\&
\eqmargin
d^{2}\cdot
\tfrac{m^{4}\left(-5p-6\right)+m^{3}\left(18p^{2}+34p-12\right)+m^{2}\left(-20p^{3}-46p^{2}-39p-42\right)}{\left(p-3\right)\left(p-2\right)\left(p-1\right)p\left(p+1\right)\left(p+2\right)\left(p+4\right)\left(p+6\right)}
+
\\&
\eqmargin
d^{2}\cdot
\tfrac{m\left(6p^{4}+10p^{3}+96p^{2}+154p+60\right)+\left(2p^{4}-30p^{3}-32p^{2}-144p-144\right)}{\left(p-3\right)\left(p-2\right)\left(p-1\right)p\left(p+1\right)\left(p+2\right)\left(p+4\right)\left(p+6\right)}\,.
\end{align*}

\newpage

\subsection{Extending \figref{fig:analytical}: More random synthetic data experiments}
\label{app:synthetic-figures}

\begin{figure}[ht!]
\begin{center}
    \subfigure[Experiments for $p=10$.]{
        \includegraphics[width=.99\columnwidth]{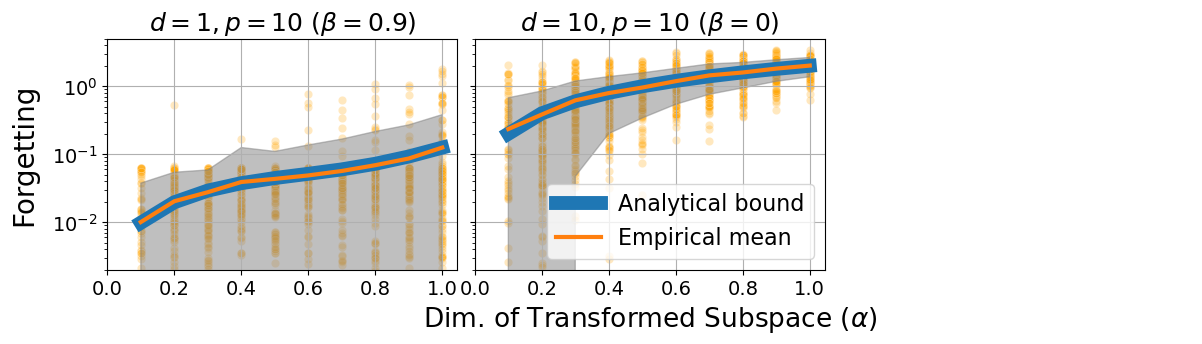}
    }
    
    \vspace{2em}
    
    \subfigure[Experiments for $p=100$.]{
        \includegraphics[width=.99\columnwidth]{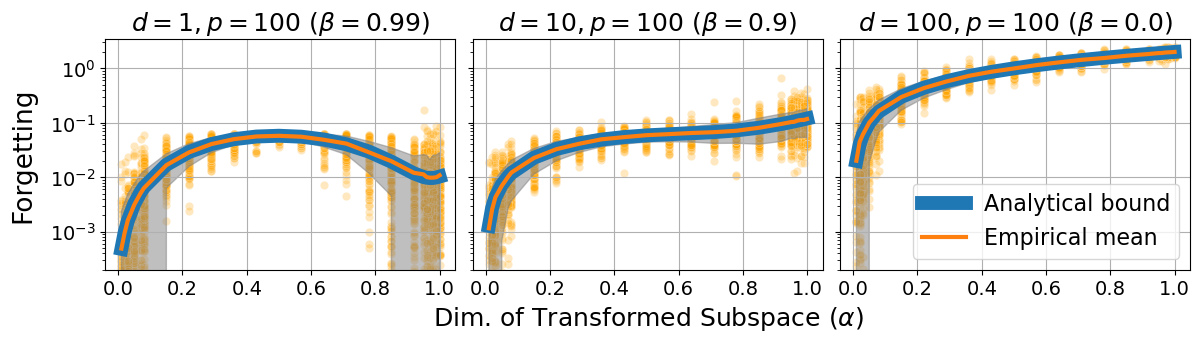}
    }
    
    \vspace{2em}
    
    \subfigure[Experiments for $p=1000$.]{
        \includegraphics[width=.99\columnwidth]{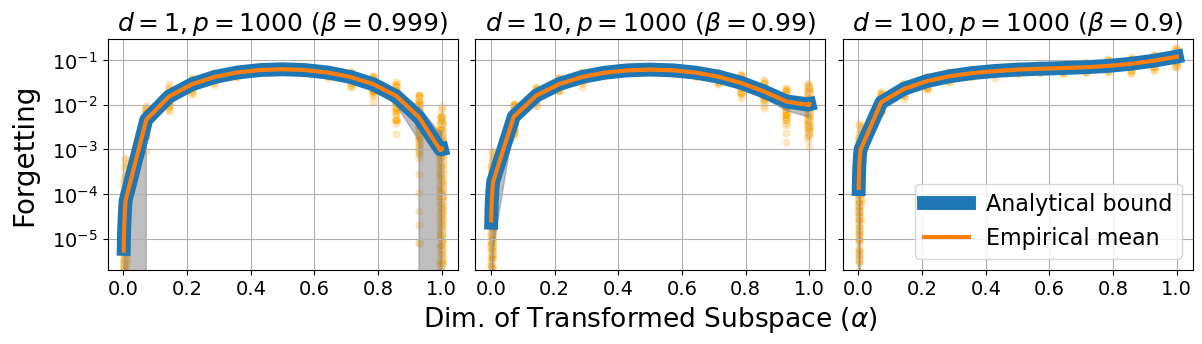}
    }
    \caption{
    Empirically illustrating the worst-case forgetting under different overparameterization levels.
    Points indicate the forgetting under many sampled random transformations applied on a (single) random data matrix $\X$. 
    Their mean is shown in the thin orange line, with the standard deviation represented by a gray band.
    The thick blue line depicts the analytical expression of \thmref{thm:main}.
    The analytical bound matches the empirical mean, thus exemplifying the tightness of our analysis.
    }
\end{center}
\end{figure}

\newpage

\section{Monomials of Entries of Random Orthogonal Matrices}
\label{app:monomials}

Throughout the supplementary materials, we often wish to compute the expectation of an arbitrary monomial of the entries
of a random orthogonal matrix $\Q\sim O(p)$ sampled uniformly from the orthogonal group,
\eg $\expectation_\Q [q_{1,1}^2 q_{1,2}^4 q_{2,2}^6]$.
Following \citet{gorin2002integrals}, we define a ``power matrix'' ${\M\in\mathbb{Z}_{\ge 0}^{p\times R}}$
(for some $R\le p$)
that maps into a monomial 
$\prod_{i,j=1}^{p,R} q_{i,j}^{m_{i,j}}$ 
constructed from the entries
of the first $R$ columns of
the random $\Q\in\reals^{p\times p}$.
We denote the expected value of this monomial by
$$
\mathbb{E}_{\Q} \left[{\prod}_{i,j=1}^{p,R} q_{i,j}^{m_{i,j}}\right] 
\triangleq 
\left\langle\M\right\rangle
\,,
\quad
\text{for example, }
\quad
\mathbb{E}_{\Q} [q_{1,1}^2 q_{2,1}^4 q_{2,2}^6]
\triangleq
\left\langle \begin{smallmatrix}
2 & 0\\
4 & 6\\
\overrightarrow{0} & \overrightarrow{0}
\end{smallmatrix}\right\rangle
\,.
$$
We employ the notation $\overrightarrow{0}$ to complement $\M$ to have $p$ rows.
For instance, in the example above, $\overrightarrow{0}$ is a vector with $p\!-\!2$ zero entries.

\bigskip

The following are helpful properties of the integral (expectation) over the orthogonal group, as mentioned in \citet{gorin2002integrals}.

\begin{property}[Invariance of the integral over the orthogonal group]
\label{prop:invariance}
Let $\Q\sim O(p)$ and $\M\in\mathbb{Z}_{\ge 0}^{p\times R}$.

\begin{enumerate}[leftmargin=.6cm]
    \item \textbf{Invariance w.r.t. transpose.} 
    Since $\Q$ and $\Q^\top$ are identically distributed, it holds that ${\left\langle\M\right\rangle=\langle\M^\top \rangle}$. For example,
    $$
    \left\langle \begin{smallmatrix}
    2 & 0\\
    4 & 2\\
    4 & 6\\
    \overrightarrow{0} & \overrightarrow{0}
    \end{smallmatrix}\right\rangle
    =
    \mathbb{E}_{\Q} [q_{1,1}^2 q_{2,1}^4 q_{2,2}^2 q_{3,1}^4 q_{3,2}^6]
    =
    \mathbb{E}_{\Q} [q_{1,1}^2 q_{1,2}^4 q_{1,3}^4 q_{2,2}^2 q_{2,3}^6]
    =
    \left\langle \begin{smallmatrix}
    2 & 4 & 4\\
    0 & 2 & 6\\
    \overrightarrow{0} & \overrightarrow{0} & \overrightarrow{0}
    \end{smallmatrix}\right\rangle
    \,.
    $$

    \item \textbf{Invariance w.r.t. row and column permutations.} 
    Since different rows/columns of $\Q$ are identically distributed, the integral over the orthogonal group $O(p)$ is invariant under permutations of columns or rows of the power matrix $\M$ (see also \citet{ullah1964invariance}). 
    For example,
    $$
    \left\langle \begin{smallmatrix}
    2 & 0\\
    4 & 6\\
    \overrightarrow{0} & \overrightarrow{0}
    \end{smallmatrix}\right\rangle
    =
    \mathbb{E}_{\Q} [q_{1,1}^2 q_{2,1}^4 q_{2,2}^6]
    =
    \mathbb{E}_{\Q} [q_{1,1}^6 q_{1,2}^4 q_{2,2}^2]
    =
    \left\langle \begin{smallmatrix}
    6 & 4\\
    0 & 2\\
    \overrightarrow{0} & \overrightarrow{0}
    \end{smallmatrix}\right\rangle
    \,.
    $$
\end{enumerate}
\end{property}

\bigskip

The following is a known property of odd moments in integrals (expectations) over the orthogonal group
(see \citet{brody1981random,gorin2002integrals}).
\begin{property}
\label{prop:odd}
    If the sum over \emph{any} row or column of the power matrix $\M$ is odd, the integral vanishes, \ie $\langle\M\rangle=0$.
    For example,
    \hfill$
    \mathbb{E}_{\Q} [q_{1,1} q_{2,1}^4 q_{2,2}^6]
    =
    \left\langle \begin{smallmatrix}
    1 & 0\\
    4 & 6\\
    \overrightarrow{0} & \overrightarrow{0}
    \end{smallmatrix}\right\rangle
    =
    0
    $.
    \\
    (In contrast, generally, it holds that 
    $\left\langle \begin{smallmatrix}
    1 & 1\\
    1 & 1\\
    \overrightarrow{0} & \overrightarrow{0} 
    \end{smallmatrix}\right\rangle
    \neq 0$).
\end{property}

\bigskip

\begin{corollary}
\label{cor:odd-column-or-row}
    Let $\vq_i$ be the $i$\nth column (or row) of $\Q$.
    Let $\A^{(1)},\dots, \A^{(N)}\in\reals^{p\times p}$ be $N$ matrices independent on $\Q$ and let $i_1,\dots,i_{2N}\in\cnt{p}$ be $2N$ indices.
    Then, if there exists an index $i\in\cnt{p}$ that appears an \emph{odd} number of times in $i_1,\dots,i_{2N}$, the following expectation vanishes:
    $$
    \mathbb{E}
    [
    \vq_{i_1} \A^{(i_1)} \vq_{i_{N+1}}
    \cdot 
    \vq_{i_2} \A^{(i_2)} \vq_{i_{N+2}}
    \cdots
    \vq_{i_N} \A^{(i_N)} \vq_{i_{2N}}
    ]
    =0\,.
    $$

    For example, the above dictates that,
    $$
    \mathbb{E}_{\Q_p,\Q_m}
    \bigg[
    \ve_{1}^{\top}
    \Q_{p}\left[\begin{smallmatrix}
    \Q_{m}\\
     & \I_{p-m}
    \end{smallmatrix}\right]
    \Q_{p}^{\top}
    \ve_{1}
    \cdot
    \ve_{1}^{\top}
    \Q_{p}
    \left[\begin{smallmatrix}
    \0_{m}\\
     & \I_{p-m}
    \end{smallmatrix}\right]
    \Q_{p}^{\top}
    \ve_{2}
    \cdot
    \ve_{3}^{\top}
    \Q_{p}
    \left[\begin{smallmatrix}
    \Q_{m}\\
     & \0_{p-m}
    \end{smallmatrix}\right]
    \Q_{p}^{\top}
    \ve_{2}
    \bigg]=0
    \,,
    $$
    since $\vq_{1}$ appears $3$ times (also because $\vq_3$ appears once).
\end{corollary}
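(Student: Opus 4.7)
The plan is to reduce the corollary directly to Property~\ref{prop:odd} by expanding each bilinear form into a sum over entry-level monomials of $\Q$, and then tracking which rows (or columns) of $\Q$ appear. Taking the ``row'' case for concreteness, interpret $\vq_i = \Q^\top \ve_i$ as (the column vector representation of) the $i$\nth row of $\Q$, so that each factor expands as
\begin{align*}
\vq_{i_j}^\top \A^{(j)} \vq_{i_{N+j}} = \sum_{k_j,\ell_j=1}^{p} Q_{i_j,k_j}\, A^{(j)}_{k_j,\ell_j}\, Q_{i_{N+j},\ell_j}.
\end{align*}
Multiplying these $N$ factors, using independence of $\A^{(1)},\dots,\A^{(N)}$ from $\Q$, and applying linearity of expectation gives
\begin{align*}
\mathbb{E}\!\left[\prod_{j=1}^{N} \vq_{i_j}^\top \A^{(j)} \vq_{i_{N+j}}\right]
= \sum_{k_1,\ell_1,\dots,k_N,\ell_N} \mathbb{E}\!\left[\prod_{j=1}^{N} A^{(j)}_{k_j,\ell_j}\right] \cdot \mathbb{E}_\Q\!\left[\prod_{j=1}^{N} Q_{i_j,k_j}\, Q_{i_{N+j},\ell_j}\right].
\end{align*}

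The second step is the bookkeeping observation that pins everything down. In the $\Q$-monomial on the right, every factor $Q_{i_j,k_j}$ or $Q_{i_{N+j},\ell_j}$ contributes exactly one unit to the row of the associated power matrix indexed by $i_j$ (respectively $i_{N+j}$). Therefore, for any fixed choice of the dummies $k_1,\ell_1,\dots,k_N,\ell_N$, the sum of entries in row $i$ of the power matrix equals exactly the number of indices in the list $i_1,\dots,i_{2N}$ that equal $i$. By hypothesis, there is some $i$ for which this count is odd; hence every single term $\mathbb{E}_\Q[\,\cdots\,]$ in the sum vanishes by Property~\ref{prop:odd}. Summing zeros yields the claim.

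For the ``column'' version of the statement, where $\vq_i$ denotes the $i$\nth column of $\Q$, the same argument applies with $Q_{i_j,k_j}$ replaced by $Q_{k_j,i_j}$, so the parity condition now falls on the $i$\nth column sum of the power matrix; Property~\ref{prop:odd} handles that case as well (alternatively, one can invoke Property~\ref{prop:invariance} to transpose the power matrix and reduce to the row case). I do not expect any genuine obstacles: the argument is purely an index-parity computation on top of Property~\ref{prop:odd}, and independence between $\Q$ and the $\A^{(j)}$ ensures the factorization is legitimate even when the $\A^{(j)}$ are themselves random (e.g., functions of $\mathbf{Q}_m$ in the body, which is independent of $\mathbf{Q}_p$).
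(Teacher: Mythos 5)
Your proposal is correct and follows essentially the same route as the paper: expand each bilinear form into entry-level monomials of $\Q$, observe that the row (or column) sum of the resulting power matrix at index $i$ equals the multiplicity of $i$ in $i_1,\dots,i_{2N}$, and invoke Property~\ref{prop:odd} term by term. Your explicit factorization of $\mathbb{E}\bigl[\prod_j A^{(j)}_{k_j,\ell_j}\bigr]$ via independence is a slightly more careful rendering of a step the paper leaves implicit, but it is not a different argument.
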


\begin{proof}
    The expectation can be rewritten as,
    \begin{align*}
    \mathbb{E}
    [
    \vq_{i_1} \A^{(i_1)} \vq_{i_{N+1}}
    \cdots
    \vq_{i_N} \A^{(i_N)} \vq_{i_{2N}}
    ]
    =
    \sum_{j_1,\dots, j_N=1}^{p}
    \sum_{k_1,\dots, k_N=1}^{p}
    \mathbb{E}
    \left[
    {\prod}_{\ell=1}^{N}
    q_{j_{\ell},i_{\ell}}
    a^{(i_{\ell})}_{j_{\ell},k_{\ell}}
    q_{k_{\ell},i_{N+{\ell}}}
    \right]\,.
    \end{align*}
Let $t\in\cnt{p}$ be an index that appears an odd number of times in $i_1,\dots, i_{2N}$.
The $t$\nth column (or row) appears the same number of times in each of the (summand) expectations.
Thus, the sum of the $t$\nth column (or row) corresponding to $\vq_{t}$ in the power matrix $\M$ corresponding to that expectation will be an odd number. 
Thus, by \propref{prop:odd}, the expectation vanishes.
\end{proof}

\bigskip

\begin{corollary}
\label{cor:odd-Q_m}
    Let $\vv_{1},\dots, \vv_{2N}\in\reals^{p}$ 
    and $c\in \reals$ 
    be random variables independent of $\Q_m\sim O(m)$.
    Then, if $N$ is an odd number, the following expectation vanishes:
    $$
    \mathbb{E}
    [
    c\cdot
    \vv_{1}^\top 
    \left[\begin{smallmatrix}
    \Q_{m}\\
     & \0_{p-m}
    \end{smallmatrix}\right]
    \vv_{N+1}
    \cdot
    \vv_{2}^\top 
    \left[\begin{smallmatrix}
    \Q_{m}\\
     & \0_{p-m}
    \end{smallmatrix}\right]
    \vv_{N+2}
    \cdots
    \vv_{N}^\top 
    \left[\begin{smallmatrix}
    \Q_{m}\\
     & \0_{p-m}
    \end{smallmatrix}\right]
    \vv_{2N}
    ]
    =0\,.
    $$

    For example, the above dictates that the following expectation vanishes,
    $$
    \mathbb{E}_{\Q_p,\Q_m}
    \bigg[
    \ve_{1}^{\top}
    \Q_{p}\left[\begin{smallmatrix}
    \Q_{m}\\
     & \0_{p-m}
    \end{smallmatrix}\right]
    \Q_{p}^{\top}
    \ve_{1}
    \cdot
    \ve_{1}^{\top}
    \Q_{p}
    \left[\begin{smallmatrix}
    \Q_{m}\\
     & \0_{p-m}
    \end{smallmatrix}\right]
    \Q_{p}^{\top}
    \ve_{1}
    \cdot
    \ve_{2}^{\top}
    \Q_{p}
    \left[\begin{smallmatrix}
    \Q_{m}\\
     & \0_{p-m}
    \end{smallmatrix}\right]
    \Q_{p}^{\top}
    \ve_{2}
    \bigg]=0
    \,,
    $$
    since here $N=3$.
    \\
    In contrast, the above does \emph{not} imply that the following expectation vanishes,
    $$
    \mathbb{E}_{\Q_p,\Q_m}
    \bigg[
    \ve_{1}^{\top}
    \Q_{p}\left[\begin{smallmatrix}
    \Q_{m}\\
     & \0_{p-m}
    \end{smallmatrix}\right]
    \Q_{p}^{\top}
    \ve_{1}
    \cdot
    \underbrace{\ve_{1}^{\top}
    \Q_{p}
    \left[\begin{smallmatrix}
    \0_{m}\\
     & \I_{p-m}
    \end{smallmatrix}\right]
    \Q_{p}^{\top}
    \ve_{1}}_{
    \text{here, this is considered as $c$}
    }
    \cdot
    \,
    \ve_{2}^{\top}
    \Q_{p}
    \left[\begin{smallmatrix}
    \Q_{m}\\
     & \0_{p-m}
    \end{smallmatrix}\right]
    \Q_{p}^{\top}
    \ve_{2}
    \bigg]
    \,,
    $$
    since here $N=2$.
\end{corollary}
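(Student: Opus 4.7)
The plan is to exploit the distributional symmetry $\Q_m \stackrel{d}{=} -\Q_m$, which follows from the left-invariance of the Haar measure on $O(m)$ together with the fact that $-\I_m \in O(m)$ (its determinant is $(-1)^m \in \{+1,-1\}$, so $-\I_m$ always belongs to $O(m)$ regardless of parity of $m$).

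First I would condition on $\vv_1,\ldots,\vv_{2N}$ and $c$, which are independent of $\Q_m$ by assumption, thereby reducing the claim to showing that the inner (conditional) expectation over $\Q_m$ alone vanishes almost surely. Denote by $F(\Q_m)$ the product inside the expectation, with $\vv_1,\ldots,\vv_{2N}$ and $c$ treated as fixed constants. Each bilinear factor can be rewritten as
$$\vv_i^\top \left[\begin{smallmatrix} \Q_m \\ & \0_{p-m} \end{smallmatrix}\right] \vv_{N+i} = (\vv_i)_{1:m}^\top \, \Q_m \, (\vv_{N+i})_{1:m},$$
which is linear (homogeneous of degree $1$) in the entries of $\Q_m$. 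Therefore $F$ is a homogeneous polynomial of total degree $N$ in $\Q_m$, and the substitution $\Q_m \mapsto -\Q_m$ produces $F(-\Q_m) = (-1)^N F(\Q_m)$.

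When $N$ is odd, applying the distributional symmetry to the conditional expectation gives
$$\mathbb{E}[F(\Q_m)] = \mathbb{E}[F(-\Q_m)] = (-1)^N\,\mathbb{E}[F(\Q_m)] = -\mathbb{E}[F(\Q_m)],$$
which forces $\mathbb{E}[F(\Q_m)] = 0$. Undoing the conditioning (via the tower property) completes the proof.

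An alternative route would be to expand each bilinear factor into a sum of monomials in the entries of $\Q_m$ and apply Property~\ref{prop:odd} termwise: each resulting monomial has total degree $N$ (odd), so the sum of entries of its power matrix is odd, whence at least one row must have an odd row sum, making the integral vanish. The sign-flip argument is considerably more economical and sidesteps the combinatorial expansion entirely. I expect no serious obstacle; the only subtlety to flag is that the zero block in $\left[\begin{smallmatrix} \Q_m \\ & \0_{p-m} \end{smallmatrix}\right]$ is preserved under $\Q_m \mapsto -\Q_m$, but since that block contributes nothing to the bilinear form, the argument goes through without modification.
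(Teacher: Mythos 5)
Your proof is correct, but it takes a genuinely different route from the paper's. The paper expands each bilinear factor into monomials in the entries of $\Q_m$, observes that every resulting monomial has total degree $N$ in those entries, and then invokes Property~\ref{prop:odd} (some row or column of the power matrix must have odd sum when the total degree is odd), so every summand vanishes. You instead use the distributional symmetry $\Q_m \stackrel{d}{=} -\Q_m$ (valid because $-\I_m\in O(m)$ and the Haar measure is invariant under left multiplication), note that the product is homogeneous of degree $N$ in $\Q_m$ after conditioning on the independent quantities, and conclude $\mathbb{E}[F]=(-1)^N\mathbb{E}[F]=-\mathbb{E}[F]=0$. Your sign-flip argument is more elementary and more self-contained: it needs nothing about moments of orthogonal matrices beyond Haar invariance, and it sidesteps the combinatorial bookkeeping of power matrices entirely. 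What the paper's route buys is uniformity with the rest of the appendix, where Property~\ref{prop:odd} and the power-matrix formalism of \citet{gorin2002integrals} are used throughout for the nonvanishing moments anyway, so the expansion argument comes at no extra conceptual cost there. You correctly flag the paper's method as the ``alternative route,'' and your handling of the conditioning and of the zero block is sound.
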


\begin{proof}
The expectations become, 
\begin{align*}
    \mathbb{E}
    \left[
    c\cdot
    {\prod}_{\ell=1}^{N}
    \vv_{\ell}^\top 
    \left[\begin{smallmatrix}
    \Q_{m}\\
     & \0_{\ell}
    \end{smallmatrix}\right]
    \vv_{N+\ell}
    \right]
    &
    =
    \mathbb{E}
    \bigg[
    c\cdot
    {\prod}_{\ell=1}^{N}
    \biggprn{
    \tsum_{i,j=1}^{m}
        (\vv_{\ell})_{i}
        \underbrace{(\Q_m)_{i,j}}_{\triangleq q_{i,j}}
        (\vv_{N+\ell})_{j}
    }
    \bigg]
    \\
    &
    =
    \mathbb{E}
    \left[
    c\cdot
    {\prod}_{\ell=1}^{N}
    \prn{
    \tsum_{i,j=1}^{m}
        q_{i,j}
        (\vv_{\ell})_{i}
        (\vv_{N+\ell})_{j}
    }
    \right]
    \\
    &
    =
    \sum_{i_1,\dots,i_N=1}^{m}
    \sum_{j_1,\dots,j_N=1}^{m}
    \mathbb{E}
    \left[
    c\cdot
    {\prod}_{\ell=1}^{N}
    q_{i_\ell,j_\ell}
    (\vv_{\ell})_{i_\ell}
    (\vv_{N+\ell})_{j_\ell}
    \right]\,.
\end{align*}
We notice that in each of the (summand) expectations the entries of $\Q_m$ appear exactly $N$ times. However, since $N$ is odd, at least one row or column of the power matrix corresponding to the monomial in the expectation must have an odd sum.
Then, by \propref{prop:odd}, all these expectations vanish.
\end{proof}

\newpage

The main result we need from \citet{gorin2002integrals} is their Eq.~(23), providing a recursive formula to compute $\langle\M\rangle$ for any power matrix $\M$.
We bring this formula here for the sake of completeness.
\begin{lemma}[Recursive formula for computing expectations of monomials over the orthogonal group]
\label{lem:recursive}
    Define the Pochhammer symbol $(z)_n = \frac{\Gamma(z+n)}{\Gamma(z)}$.
    Denote
    $\binom{\overrightarrow{n}}{\overrightarrow{k}}
    =
    \prod_{i=1}^{p}\binom{n_i}{k_i}$.
    \linebreak
    Denote $(\overrightarrow{n}|K)
    = \prod_{i=1}^{p}
    \left(
    n_i
    \,\mid\,
    K_{i,1},\dots, K_{i,R-1}
    \right)
    $.
    
    Then, the one-vector integral is given by
    $\langle\overrightarrow{m}\rangle
    =
    \prn{\frac{p}{2}}^{-1}_{\overline{m}/2}
    \prod_{i=1}^{p}
    \prn{\frac{1}{2}}_{m_i / 2}.
    $

    Moreover, the $R$-vector integral (corresponding to a matrix $\M\in\mathbb{Z}_{\ge 0}^{p\times R}$, is given by
    \begin{align*}
    \langle\M\rangle
    &=
    \prn{\tfrac{p-R+1}{2}}^{-1}_{\hfrac{\overline{m}_R}{2}}
    \cdot 
    \\
    &\cdot 
    \sum_{\overrightarrow{\kappa}}
    \!
    \prn{
    \!
        \binom{\overline{m}_R}{\overrightarrow{\kappa}}
        \prn{-1}^{\prn{\overline{m}_R - \overline{\kappa}}/2}
        \cdot
        {
        \prod_{i=1}^{p}\!
        \prn{\tfrac{1}{2}}_{\hfrac{\kappa_i}{2}}
        }
        \cdot
        \sum_{K}
        (\overrightarrow{m}_R \!-\! \overrightarrow{\kappa} |K)
        \cdot
        {
        \prod_{j=1}^{R-1}\!
        \prn{\tfrac{1}{2}}_{\hfrac{\overline{k}_j}{2}}
        }
        \cdot
        \langle
        \M^{(R-1)} \!+\! K
        \rangle
        \!
    }
    \end{align*}
    where the first sum runs over all $\overrightarrow{\kappa}$ with all even entries (less or equal to the corresponding entries of the last column $\overrightarrow{m}_R$).
    The second sum runs over all $K\in\mathbb{Z}_{\ge 0}^{p,R-1}$ for which all sums of columns are even, 
    \ie $\overline{k}_j \triangleq \sum_{i=1}^{p} k_{i,j}$ is even $\forall j\in\cnt{R-1}$.
    Finally, 
    $\M^{(R-1)}$ stands for the first $R-1$ columns of $\M$,
    and $\overline{m}_R=\sum_{i=1}^{p} m_{i,R}$
    and $\overline{\kappa}=\sum_{i=1}^{p} \kappa_{i}$.
\end{lemma}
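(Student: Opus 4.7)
The plan is to follow the strategy of \citet{gorin2002integrals} and proceed by induction on the number of columns $R$. The base case $R=1$ is a Dirichlet moment computation: the first column $\vq_1$ of a Haar-orthogonal $\Q \in O(p)$ is uniform on $\mathcal{S}^{p-1}$, so $(q_{1,1}^2,\dots,q_{p,1}^2) \sim \mathrm{Dir}(1/2,\dots,1/2)$. Odd exponents give $0$ by \propref{prop:odd}, and for even-entried $\overrightarrow{m}$ the beta-function identity
\[
\langle\overrightarrow{m}\rangle = \frac{\Gamma(p/2)\prod_i \Gamma((m_i+1)/2)}{\Gamma((p+\overline{m})/2)\prod_i \Gamma(1/2)}
\]
rearranges via $\Gamma(z+n)/\Gamma(z)=(z)_n$ into $\bigl(\tfrac{p}{2}\bigr)^{-1}_{\overline{m}/2}\prod_i(1/2)_{m_i/2}$.

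For the inductive step, integrate out the last column. Conditional on the first $R-1$ columns $\Q^{(R-1)}$, the $R$-th column $\vq_R$ is uniform on the unit sphere of the $(p-R+1)$-dimensional subspace $\mathrm{col}(\I-\mat{P})$, where $\mat{P} = \Q^{(R-1)}\Q^{(R-1)\top}$. Realize this via $\vq_R = \tilde{\vg}/\|\tilde{\vg}\|$ with $\tilde{\vg} = (\I-\mat{P})\vg$ and $\vg \sim N(0,\I_p)$ independent of $\Q^{(R-1)}$. Independence of direction and radius, together with $\|\tilde{\vg}\|^2 \sim \chi^2_{p-R+1}$, yields
\[
\mathbb{E}\biggl[\prod_i q_{i,R}^{m_{i,R}} \,\Big|\, \Q^{(R-1)}\biggr] = \frac{\mathbb{E}\bigl[\prod_i \tilde{\vg}_i^{m_{i,R}}\bigm| \Q^{(R-1)}\bigr]}{2^{\overline{m}_R/2}\bigl(\tfrac{p-R+1}{2}\bigr)_{\overline{m}_R/2}},
\]
supplying the prefactor $\bigl(\tfrac{p-R+1}{2}\bigr)^{-1}_{\overline{m}_R/2}$ (the $2^{\overline{m}_R/2}$ will cancel below).

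Next, compute $\mathbb{E}[\prod_i \tilde{\vg}_i^{m_{i,R}} \mid \Q^{(R-1)}]$ by Wick's theorem applied to $\tilde{\vg} \sim N(0, \I-\mat{P})$: it equals the sum over all perfect pairings of the $\overline{m}_R$ copies (index $i$ appearing $m_{i,R}$ times) of the products $\prod_{(i,j)\text{ paired}} (\I-\mat{P})_{i,j}$. Expanding each factor as $(\I-\mat{P})_{i,j} = \delta_{i,j} - \sum_{k<R} q_{i,k} q_{j,k}$ splits each pairing into pairs that are either \emph{Kronecker-diagonal} (forced $i=j$, contributing $+1$) or \emph{$Q$-contracted} (contributing $-q_{i,k}q_{j,k}$ for a chosen $k \in \{1,\dots,R-1\}$). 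Let $\kappa_i$ count copies of $\tilde{\vg}_i$ lying in Kronecker pairs (so each $\kappa_i$ is even and $m_{i,R}-\kappa_i$ copies are in $Q$-contracted pairs), and let $K_{i,k}$ count $Q$-contracted edges incident to index $i$ with label $k$. The combinatorial count becomes a product of (i) choices $\binom{\overrightarrow{m}_R}{\overrightarrow{\kappa}}$ of which copies are Kronecker-paired, (ii) Kronecker pair-up counts $\prod_i (\kappa_i-1)!! = \prod_i 2^{\kappa_i/2}(1/2)_{\kappa_i/2}$, (iii) multinomial distribution of the remaining copies to column labels $(\overrightarrow{m}_R - \overrightarrow{\kappa} \mid K)$, and (iv) same-label $Q$-pair-up counts $\prod_k (\overline{k}_k - 1)!! = \prod_k 2^{\overline{k}_k/2}(1/2)_{\overline{k}_k/2}$. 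The combined $2^{\overline{\kappa}/2}\cdot 2^{(\overline{m}_R-\overline{\kappa})/2} = 2^{\overline{m}_R/2}$ cancels the denominator; the sign $(-1)^{(\overline{m}_R-\overline{\kappa})/2}$ is exactly the number of $Q$-contracted pairs, each contributing one $(-1)$; and the residual $q$-factors $\prod_{i,k<R} q_{i,k}^{K_{i,k}}$ combine with the preexisting monomial $\prod_{i,k<R} q_{i,k}^{m_{i,k}}$ into $\M^{(R-1)} + K$, so the outer expectation over $\Q^{(R-1)}$ invokes the inductive hypothesis $\langle \M^{(R-1)} + K\rangle$. Parity restrictions ($\overrightarrow{\kappa}$ even, each $\overline{k}_k$ even) enforce admissibility of the pairings.

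The main technical obstacle is verifying the combinatorial collapse into exactly the product (i)--(iv): one must simultaneously count which copies are Kronecker-paired versus $Q$-contracted, distribute the remaining copies to column labels, pair up endpoints within each category via double factorials, and track signs and parity --- while ensuring each Wick pairing is counted exactly once. Once this combinatorial identity is established, the induction on $R$ closes, yielding the stated recursive formula.
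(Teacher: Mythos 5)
The paper does not actually prove this lemma: it is imported verbatim from \citet{gorin2002integrals} (their Eq.~(23)) ``for the sake of completeness,'' so there is no in-paper argument to compare yours against. Your proposal supplies a genuine derivation, and the route you take --- condition on the first $R-1$ columns, note that the $R$-th column is then uniform on the unit sphere of their orthogonal complement, realize it as a normalized projected Gaussian $(\I-\mat{P})\vg$, separate radius from direction to produce the prefactor $\prn{\tfrac{p-R+1}{2}}^{-1}_{\hfrac{\overline{m}_R}{2}}$, and expand the resulting Gaussian moment by Wick's theorem with covariance $\I-\mat{P}$ --- is sound and is essentially the standard way such Stiefel-manifold moment recursions are derived. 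I checked the combinatorial collapse you flag as the main obstacle and it does close: the bijection between (Wick matching, assignment of each pair to ``Kronecker'' or to a label $k<R$) and (choice of $\overrightarrow{\kappa}$, internal matchings of the Kronecker copies within each index, distribution $K$ of the remaining copies over labels, internal matchings within each label) is exactly right; the double factorials $(\kappa_i-1)!!=2^{\kappa_i/2}(\tfrac{1}{2})_{\kappa_i/2}$ and $(\overline{k}_j-1)!!=2^{\overline{k}_j/2}(\tfrac{1}{2})_{\overline{k}_j/2}$ produce the Pochhammer products together with a net factor $2^{\overline{m}_R/2}$ that cancels the $\chi^2$ moment in the denominator; the sign counts the $Q$-contracted pairs; and the parity conditions on $\overrightarrow{\kappa}$ and on the column sums $\overline{k}_j$ are precisely the admissibility conditions for the internal matchings. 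Two points worth making explicit in a full write-up: (i) the product of residual factors $q_{i,k}q_{j,k}$ over all label-$k$ pairs depends only on $K$ and not on the particular internal matching, which is what lets you pull the matching count out as a pure multiplicity; and (ii) the quantity $\langle\M^{(R-1)}+K\rangle$ you invoke after taking the outer expectation is an expectation over the marginal law of the first $R-1$ columns, which coincides with the full-Haar expectation of that monomial, so the induction is legitimate. With those spelled out, your argument is a complete and correct proof of the quoted formula.
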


\bigskip
\medskip

In the pages to come, we present many calculations of expectations of different monomials that we use throughout the supplementary materials.
We include these calculations since the recursive formula of \lemref{lem:recursive} is somewhat complicated to apply, and we wish our derivations to be reproducible and easily followed. 

\medskip

\subsection{Monomials of One Orthogonal Vector}

\begin{align*}
 \mathbb{E}[u_{1}^{4}]
 &
 =
\left\langle \begin{smallmatrix}
4 \\
\overrightarrow{0}
\end{smallmatrix}\right\rangle 
=
\prn{\tfrac{p}{2}}^{-1}_{2}
\prn{\tfrac{1}{2}}_{2}
=
\left(\tfrac{p}{2}\left(\tfrac{p+2}{2}\right)\right)^{-1}
\tfrac{3}{4}
=
\tfrac{3}{p(p+2)}
\\
&
\\
 \mathbb{E}[u_{1}^{6}]
 &
 =
\left\langle \begin{smallmatrix}
6 \\
\overrightarrow{0}
\end{smallmatrix}\right\rangle 
=
\prn{\tfrac{p}{2}}^{-1}_{3}
\prn{\tfrac{1}{2}}_{3}
=
\tfrac{15}{p(p+2)(p+4)}
\\
&
\\
 \mathbb{E}[u_{1}^{2}u_{2}^{2}]
 &
 =
\left\langle \begin{smallmatrix}
2 \\
2 \\
\overrightarrow{0}
\end{smallmatrix}\right\rangle 
=
\prn{\tfrac{p}{2}}^{-1}_{2}
\prn{\tfrac{1}{2}}_{1}
\prn{\tfrac{1}{2}}_{1}
=
\left(\tfrac{p}{2}\left(\tfrac{p+2}{2}\right)\right)^{-1}
\tfrac{1}{4}
=
\tfrac{1}{p(p+2)}
\\
&
\\
 \mathbb{E}[u_{1}^{4}u_{2}^{2}]
 &
 =
\left\langle \begin{smallmatrix}
4 \\
2 \\
\overrightarrow{0}
\end{smallmatrix}\right\rangle 
=
\prn{\tfrac{p}{2}}^{-1}_{3}
\prn{\tfrac{1}{2}}_{2}
\prn{\tfrac{1}{2}}_{1}
=
\tfrac{3}{p\prn{p+2}\prn{p+4}}
\\
&
\\
 \mathbb{E}[u_{1}^{2}u_{2}^{2}u_{3}^{2}]
 &
 =
\left\langle \begin{smallmatrix}
2 \\
2 \\
2 \\
\overrightarrow{0}
\end{smallmatrix}\right\rangle 
=
\prn{\tfrac{p}{2}}^{-1}_{3}
\prn{\tfrac{1}{2}}_{1}^3
=
\tfrac{1}{p\prn{p+2}\prn{p+4}}
\end{align*}

\newpage

\subsection{Monomials of Two Orthogonal Vector}
\subsubsection{One index (row)}
\begin{align*}
\left\langle \begin{smallmatrix}
4 & 4\\
\overrightarrow{0} & \overrightarrow{0}
\end{smallmatrix}\right\rangle &=
\left(\tfrac{p-1}{2}\left(\tfrac{p+1}{2}\right)\right)^{-1}\left(\left(-1\right)^{2}\left(\tfrac{1}{2}\right)_{2}%
\left\langle \begin{smallmatrix}
8\\
\overrightarrow{0}
\end{smallmatrix}\right\rangle +
\binom{4}{2}\left(-1\right)^{1}\left(\tfrac{1}{2}\right)_{1}^{2}%
\left\langle \begin{smallmatrix}
6\\
\overrightarrow{0}
\end{smallmatrix}\right\rangle +\left(-1\right)^{0}\left(\tfrac{1}{2}\right)_{2}%
\left\langle \begin{smallmatrix}
4\\
\overrightarrow{0}
\end{smallmatrix}\right\rangle \right)\\&=\tfrac{1}{\left(p-1\right)\left(p+1\right)}\left(3%
\left\langle \begin{smallmatrix}
8\\
\overrightarrow{0}
\end{smallmatrix}\right\rangle -6%
\left\langle \begin{smallmatrix}
6\\
\overrightarrow{0}
\end{smallmatrix}\right\rangle +3%
\left\langle \begin{smallmatrix}
4\\
\overrightarrow{0}
\end{smallmatrix}\right\rangle \right)
\\
&=
\tfrac{3}{\left(p-1\right)\left(p+1\right)}\left(\left(\tfrac{p}{2}\right)_{4}^{-1}\left(\tfrac{1}{2}\right)_{4}-2\left(\tfrac{p}{2}\right)_{3}^{-1}\left(\tfrac{1}{2}\right)_{3}+\left(\tfrac{p}{2}\right)_{2}^{-1}\left(\tfrac{1}{2}\right)_{2}\right)\\&=\tfrac{3}{\left(p-1\right)\left(p+1\right)}\left(\tfrac{105}{p\left(p+2\right)\left(p+4\right)\left(p+6\right)}-\tfrac{30}{p\left(p+2\right)\left(p+4\right)}+\tfrac{3}{p\left(p+2\right)}\right)
\\
&=\tfrac{3}{\left(p-1\right)\left(p+1\right)}\left(\tfrac{3\left(p-1\right)\left(p+1\right)}{p\left(p+2\right)\left(p+4\right)\left(p+6\right)}\right)
=
\tfrac{9}{p\left(p+2\right)\left(p+4\right)\left(p+6\right)}
\\
&
\\
\left\langle \begin{smallmatrix}
2 & 2\\
\overrightarrow{0} & \overrightarrow{0}
\end{smallmatrix}\right\rangle &=\tfrac{1}{p\left(p+2\right)}\\
&\\
\left\langle \begin{smallmatrix}
4 & 2\\
\overrightarrow{0} & \overrightarrow{0}
\end{smallmatrix}\right\rangle &=\left(\tfrac{p-1}{2}\right)^{-1}\left(\left(-1\right)^{1}\left(\tfrac{1}{2}\right)_{1}%
\left\langle \begin{smallmatrix}
6\\
\overrightarrow{0}
\end{smallmatrix}\right\rangle +\left(-1\right)^{0}\left(\tfrac{1}{2}\right)_{1}%
\left\langle \begin{smallmatrix}
4\\
\overrightarrow{0}
\end{smallmatrix}\right\rangle \right)
\\
&=
\tfrac{1}{\left(p-1\right)}\left(%
\left\langle \begin{smallmatrix}
4\\
\overrightarrow{0}
\end{smallmatrix}\right\rangle -\left\langle 
\begin{smallmatrix}
6\\
\overrightarrow{0}
\end{smallmatrix}
\right\rangle \right)=\tfrac{1}{\left(p-1\right)}\left(\left(\tfrac{p}{2}\right)_{2}^{-1}\left(\tfrac{1}{2}\right)_{2}-\left(\tfrac{p}{2}\right)_{3}^{-1}\left(\tfrac{1}{2}\right)_{3}\right)\\&=\tfrac{1}{\left(p-1\right)}\left(\tfrac{3}{p\left(p+2\right)}-\tfrac{15}{p\left(p+2\right)\left(p+4\right)}\right)=\tfrac{3\left(p-1\right)}{\left(p-1\right)p\left(p+2\right)\left(p+4\right)}=\tfrac{3}{p\left(p+2\right)\left(p+4\right)}\\&\\%
\left\langle \begin{smallmatrix}
6 & 2\\
\overrightarrow{0} & \overrightarrow{0}
\end{smallmatrix}\right\rangle &=%
\left\langle \begin{smallmatrix}
6\\
2\\
\overrightarrow{0}
\end{smallmatrix}\right\rangle =\left(\tfrac{p}{2}\right)_{4}^{-1}\left(\tfrac{1}{2}\right)_{3}\left(\tfrac{1}{2}\right)_{1}=\tfrac{15}{p\left(p+2\right)\left(p+4\right)\left(p+6\right)}
\end{align*}

\newpage

\subsubsection{Two indices (rows)}

\begin{align*}
\left\langle \begin{smallmatrix}
2 & 2\\
2 & 2\\
\overrightarrow{0} & \overrightarrow{0}
\end{smallmatrix}\right\rangle &=\tfrac{1}{4}\left(\tfrac{p-1}{2}\left(\tfrac{p+1}{2}\right)\right)^{-1}\left(3%
\left\langle \begin{smallmatrix}
4\\
4\\
\overrightarrow{0}
\end{smallmatrix}\right\rangle +%
\left\langle \begin{smallmatrix}
2\\
2\\
\overrightarrow{0}
\end{smallmatrix}\right\rangle -%
\left\langle \begin{smallmatrix}
2\\
4\\
\overrightarrow{0}
\end{smallmatrix}\right\rangle -%
\left\langle \begin{smallmatrix}
4\\
2\\
\overrightarrow{0}
\end{smallmatrix}\right\rangle \right)\\&=\tfrac{1}{\left(p-1\right)\left(p+1\right)}\left(3\left(\tfrac{p}{2}\right)_{4}^{-1}\left(\tfrac{1}{2}\right)_{2}\left(\tfrac{1}{2}\right)_{2}+\left(\tfrac{p}{2}\right)_{2}^{-1}\left(\tfrac{1}{2}\right)_{1}\left(\tfrac{1}{2}\right)_{1}-2\left(\tfrac{p}{2}\right)_{3}^{-1}\left(\tfrac{1}{2}\right)_{1}\left(\tfrac{1}{2}\right)_{2}\right)\\&=\tfrac{1}{\left(p-1\right)\left(p+1\right)}\left(3\tfrac{9}{16}\left(\tfrac{p}{2}\right)_{4}^{-1}+\tfrac{1}{4}\left(\tfrac{p}{2}\right)_{2}^{-1}-\tfrac{2}{2}\cdot\tfrac{3}{4}\left(\tfrac{p}{2}\right)_{3}^{-1}\right)\\&=\tfrac{1}{\left(p-1\right)\left(p+1\right)}\left(\tfrac{27}{16}\left(\tfrac{p}{2}\cdot\tfrac{p+2}{2}\cdot\tfrac{p+4}{2}\cdot\tfrac{p+6}{2}\right)^{-1}+\tfrac{1}{4}\left(\tfrac{p}{2}\cdot\tfrac{p+2}{2}\right)^{-1}-\tfrac{3}{4}\left(\tfrac{p}{2}\cdot\tfrac{p+2}{2}\cdot\tfrac{p+4}{2}\right)^{-1}\right)\\&=\tfrac{1}{\left(p-1\right)\left(p+1\right)}\left(\tfrac{27}{p\left(p+2\right)\left(p+4\right)\left(p+6\right)}+\tfrac{1}{p\left(p+2\right)}-\tfrac{3}{4}\tfrac{8}{p\left(p+2\right)\left(p+4\right)}\right)\\&=\tfrac{1}{\left(p-1\right)\left(p+1\right)}\left(\tfrac{27+\left(p+4\right)\left(p+6\right)-6\left(p+6\right)}{p\left(p+2\right)\left(p+4\right)\left(p+6\right)}\right)=\tfrac{p^{2}+4p+15}{\left(p-1\right)p\left(p+1\right)\left(p+2\right)\left(p+4\right)\left(p+6\right)}\\&\\%
\left\langle \begin{smallmatrix}
4 & 0\\
0 & 4\\
\overrightarrow{0} & \overrightarrow{0}
\end{smallmatrix}\right\rangle &=\left(\tfrac{p-1}{2}\left(\tfrac{p+1}{2}\right)\right)^{-1}\left(\left(-1\right)^{2}\left(\tfrac{1}{2}\right)_{2}%
\left\langle \begin{smallmatrix}
4\\
4\\
\overrightarrow{0}
\end{smallmatrix}\right\rangle +
\binom{4}{2}\left(-1\right)^{1}\left(\tfrac{1}{2}\right)_{1}^{2}%
\left\langle \begin{smallmatrix}
4\\
2\\
\overrightarrow{0}
\end{smallmatrix}\right\rangle +\left(-1\right)^{0}\left(\tfrac{1}{2}\right)_{2}%
\left\langle \begin{smallmatrix}
4\\
0\\
\overrightarrow{0}
\end{smallmatrix}\right\rangle \right)\\&=\tfrac{1}{\left(p-1\right)\left(p+1\right)}\left(3%
\left\langle \begin{smallmatrix}
4\\
4\\
\overrightarrow{0}
\end{smallmatrix}\right\rangle -6%
\left\langle \begin{smallmatrix}
4\\
2\\
\overrightarrow{0}
\end{smallmatrix}\right\rangle +3%
\left\langle \begin{smallmatrix}
4\\
0\\
\overrightarrow{0}
\end{smallmatrix}\right\rangle \right)\\&=\tfrac{3}{\left(p-1\right)\left(p+1\right)}\left(\left(\tfrac{p}{2}\right)_{4}^{-1}\left(\tfrac{1}{2}\right)_{2}^{2}-2\left(\tfrac{p}{2}\right)_{3}^{-1}\left(\tfrac{1}{2}\right)_{2}\left(\tfrac{1}{2}\right)_{1}+\left(\tfrac{p}{2}\right)_{2}^{-1}\left(\tfrac{1}{2}\right)_{2}\right)\\&=\tfrac{3}{\left(p-1\right)\left(p+1\right)}\left(\tfrac{9}{p\left(p+2\right)\left(p+4\right)\left(p+6\right)}-\tfrac{6}{p\left(p+2\right)\left(p+4\right)}+\tfrac{3}{p\left(p+2\right)}\right)\\&=\tfrac{3}{\left(p-1\right)\left(p+1\right)}\left(\tfrac{3\left(p+3\right)\left(p+5\right)}{p\left(p+2\right)\left(p+4\right)\left(p+6\right)}\right)\\&=\tfrac{9\left(p+3\right)\left(p+5\right)}{\left(p-1\right)p\left(p+1\right)\left(p+2\right)\left(p+4\right)\left(p+6\right)}\\&\\%
\left\langle \begin{smallmatrix}
4 & 0\\
0 & 2\\
\overrightarrow{0} & \overrightarrow{0}
\end{smallmatrix}\right\rangle &=\left(\tfrac{p-1}{2}\right)^{-1}\left(\left(-1\right)^{1}\left(\tfrac{1}{2}\right)_{1}%
\left\langle \begin{smallmatrix}
4\\
2\\
\overrightarrow{0}
\end{smallmatrix}\right\rangle +\left(-1\right)^{0}\left(\tfrac{1}{2}\right)_{1}%
\left\langle \begin{smallmatrix}
4\\
0\\
\overrightarrow{0}
\end{smallmatrix}\right\rangle \right)\\&=\tfrac{1}{\left(p-1\right)}\left(%
\left\langle \begin{smallmatrix}
4\\
0\\
\overrightarrow{0}
\end{smallmatrix}\right\rangle -%
\left\langle \begin{smallmatrix}
4\\
2\\
\overrightarrow{0}
\end{smallmatrix}\right\rangle \right)=\tfrac{1}{\left(p-1\right)}\left(\left(\tfrac{p}{2}\right)_{2}^{-1}\left(\tfrac{1}{2}\right)_{2}-\left(\tfrac{p}{2}\right)_{3}^{-1}\left(\tfrac{1}{2}\right)_{2}\left(\tfrac{1}{2}\right)_{1}\right)\\&=\tfrac{1}{\left(p-1\right)}\left(\tfrac{3}{p\left(p+2\right)}-\tfrac{3}{p\left(p+2\right)\left(p+4\right)}\right)=\tfrac{3\left(p+3\right)}{\left(p-1\right)p\left(p+2\right)\left(p+4\right)}\\&\\%
\left\langle \begin{smallmatrix}
6 & 0\\
0 & 2\\
\overrightarrow{0} & \overrightarrow{0}
\end{smallmatrix}\right\rangle &=\left(\tfrac{p-1}{2}\right)^{-1}\left(\left(-1\right)^{1}\left(\tfrac{1}{2}\right)_{1}%
\left\langle \begin{smallmatrix}
6\\
2\\
\overrightarrow{0}
\end{smallmatrix}\right\rangle +\left(-1\right)^{0}\left(\tfrac{1}{2}\right)_{1}%
\left\langle \begin{smallmatrix}
6\\
0\\
\overrightarrow{0}
\end{smallmatrix}\right\rangle \right)\\&=\tfrac{1}{p-1}\left(%
\left\langle \begin{smallmatrix}
6\\
0\\
\overrightarrow{0}
\end{smallmatrix}\right\rangle -%
\left\langle \begin{smallmatrix}
6\\
2\\
\overrightarrow{0}
\end{smallmatrix}\right\rangle \right)=\tfrac{1}{p-1}\left(\left(\tfrac{p}{2}\right)_{3}^{-1}\left(\tfrac{1}{2}\right)_{3}-\left(\tfrac{p}{2}\right)_{4}^{-1}\left(\tfrac{1}{2}\right)_{3}\left(\tfrac{1}{2}\right)_{1}\right)\\&=\tfrac{1}{p-1}\left(\tfrac{15}{p\left(p+2\right)\left(p+4\right)}-\tfrac{15}{p\left(p+2\right)\left(p+4\right)\left(p+6\right)}\right)
=\tfrac{15\left(p+5\right)}{\left(p-1\right)p\left(p+2\right)\left(p+4\right)\left(p+6\right)}\\&\\%
\left\langle \begin{smallmatrix}
3 & 3\\
1 & 1\\
\overrightarrow{0} & \overrightarrow{0}
\end{smallmatrix}\right\rangle 
&=
\left\langle \begin{smallmatrix}
3 & 1\\
3 & 1\\
\overrightarrow{0} & \overrightarrow{0}
\end{smallmatrix}\right\rangle =\left(\tfrac{p-1}{2}\left(\tfrac{p+1}{2}\right)\right)^{-1}\left(\left(-1\right)^{2}\left(\tfrac{1}{2}\right)_{2}%
\left\langle \begin{smallmatrix}
6\\
2\\
\overrightarrow{0}
\end{smallmatrix}\right\rangle +
\binom{3}{2}
\left(-1\right)^{1}\left(\tfrac{1}{2}\right)_{1}\left(\tfrac{1}{2}\right)_{1}%
\left\langle \begin{smallmatrix}
4\\
2\\
\overrightarrow{0}
\end{smallmatrix}\right\rangle \right)\\&=\tfrac{4}{\left(p-1\right)\left(p+1\right)}\left(\tfrac{3}{4}\left(\tfrac{p}{2}\right)_{4}^{-1}\left(\tfrac{1}{2}\right)_{3}\left(\tfrac{1}{2}\right)_{1}-\tfrac{3!}{2!1!}\cdot\tfrac{3}{8}\left(\tfrac{p}{2}\right)_{3}^{-1}\left(\tfrac{1}{2}\right)_{1}\left(\tfrac{1}{2}\right)_{1}\right)\\&=\tfrac{4}{\left(p-1\right)\left(p+1\right)}\left(\tfrac{3}{4}\cdot\tfrac{15}{16}\left(\tfrac{p}{2}\right)_{4}^{-1}-\tfrac{9}{8}\cdot\tfrac{1}{4}\left(\tfrac{p}{2}\right)_{3}^{-1}\right)\\&=\tfrac{4}{\left(p-1\right)\left(p+1\right)}\left(\tfrac{45}{64}\cdot\left(\tfrac{p}{2}\cdot\tfrac{p+2}{2}\cdot\tfrac{p+4}{2}\cdot\tfrac{p+6}{2}\right)^{-1}-\tfrac{9}{32}\cdot\left(\tfrac{p}{2}\cdot\tfrac{p+2}{2}\cdot\tfrac{p+4}{2}\right)^{-1}\right)\\&=\tfrac{1}{\left(p-1\right)\left(p+1\right)}\left(\tfrac{45}{p\left(p+2\right)\left(p+4\right)\left(p+6\right)}-\tfrac{9}{p\left(p+2\right)\left(p+4\right)}\right)
=\tfrac{-9\left(p+1\right)}{\left(p-1\right)p\left(p+1\right)\left(p+2\right)\left(p+4\right)\left(p+6\right)}
\end{align*}

\newpage

\begin{align*}
\left\langle \begin{smallmatrix}
3 & 1\\
1 & 3\\
\overrightarrow{0} & \overrightarrow{0}
\end{smallmatrix}\right\rangle &=\left(\tfrac{p-1}{2}\left(\tfrac{p+1}{2}\right)\right)^{-1}\left(\left(-1\right)^{2}\left(\tfrac{1}{2}\right)_{2}%
\left\langle \begin{smallmatrix}
4\\
4\\
\overrightarrow{0}
\end{smallmatrix}\right\rangle +
\binom{3}{2}\left(-1\right)^{1}\left(\tfrac{1}{2}\right)_{1}^{2}%
\left\langle \begin{smallmatrix}
4\\
2\\
\overrightarrow{0}
\end{smallmatrix}\right\rangle \right)\\&=\tfrac{3}{\left(p-1\right)\left(p+1\right)}\left(%
\left\langle \begin{smallmatrix}
4\\
4\\
\overrightarrow{0}
\end{smallmatrix}\right\rangle -%
\left\langle \begin{smallmatrix}
4\\
2\\
\overrightarrow{0}
\end{smallmatrix}\right\rangle \right)=\tfrac{3}{\left(p-1\right)\left(p+1\right)}\left(\left(\tfrac{p}{2}\right)_{4}^{-1}\left(\tfrac{1}{2}\right)_{2}^{2}-\left(\tfrac{p}{2}\right)_{3}^{-1}\left(\tfrac{1}{2}\right)_{1}\left(\tfrac{1}{2}\right)_{2}\right)\\&=\tfrac{3}{\left(p-1\right)\left(p+1\right)}\left(\tfrac{9}{p\left(p+2\right)\left(p+4\right)\left(p+6\right)}-\tfrac{3}{p\left(p+2\right)\left(p+4\right)}\right)
=\tfrac{-9\left(p+3\right)}{\left(p-1\right)p\left(p+1\right)\left(p+2\right)\left(p+4\right)\left(p+6\right)}
\\
&
\\
\left\langle \begin{smallmatrix}
3 & 1\\
1 & 1\\
\overrightarrow{0} & \overrightarrow{0}
\end{smallmatrix}\right\rangle &=\tfrac{1}{p-1}\left(\left(-1\right)^{1}\left(\tfrac{1}{2}\right)_{1}%
\left\langle \begin{smallmatrix}
4\\
2\\
\overrightarrow{0}
\end{smallmatrix}\right\rangle \right)=\tfrac{-1}{p-1}\left(\left(\tfrac{p}{2}\right)_{3}^{-1}\left(\tfrac{1}{2}\right)_{2}\left(\tfrac{1}{2}\right)_{1}\right)
=\tfrac{-3}{\left(p-1\right)p\left(p+2\right)\left(p+4\right)}\\&\\%
\left\langle \begin{smallmatrix}
5 & 1\\
1 & 1\\
\overrightarrow{0} & \overrightarrow{0}
\end{smallmatrix}\right\rangle &=\left(\tfrac{p-1}{2}\right)^{-1}\left(\left(-1\right)^{1}\left(\tfrac{1}{2}\right)_{1}%
\left\langle \begin{smallmatrix}
6\\
2\\
\overrightarrow{0}
\end{smallmatrix}\right\rangle \right)=\tfrac{1}{p-1}\left(-\left(\tfrac{p}{2}\right)_{4}^{-1}\left(\tfrac{1}{2}\right)_{1}\left(\tfrac{1}{2}\right)_{3}\right)
=\tfrac{-15}{\left(p-1\right)p\left(p+2\right)\left(p+4\right)\left(p+6\right)}\\&\\%
\left\langle \begin{smallmatrix}
4 & 2\\
0 & 2\\
\overrightarrow{0} & \overrightarrow{0}
\end{smallmatrix}\right\rangle 
&=
\tfrac{4}{\left(p-1\right)\left(p+1\right)}
\bigg(
\left(-1\right)^{2}\left(\tfrac{1}{2}\right)_{2}%
\left\langle \begin{smallmatrix}
6\\
2\\
\overrightarrow{0}
\end{smallmatrix}\right\rangle +\left(-1\right)^{1}\left(\tfrac{1}{2}\right)_{1}^{2}%
\left\langle \begin{smallmatrix}
4\\
2\\
\overrightarrow{0}
\end{smallmatrix}\right\rangle +
\\
&\eqmargin
\hspace{2cm}
\left(-1\right)^{1}\left(\tfrac{1}{2}\right)_{1}^{2}%
\left\langle \begin{smallmatrix}
6\\
0\\
\overrightarrow{0}
\end{smallmatrix}\right\rangle +\left(-1\right)^{0}\left(\tfrac{1}{2}\right)_{1}^{2}%
\left\langle \begin{smallmatrix}
4\\
0\\
\overrightarrow{0}
\end{smallmatrix}\right\rangle \bigg)
\\
&=\tfrac{1}{\left(p-1\right)\left(p+1\right)}\left(3%
\left\langle \begin{smallmatrix}
6\\
2\\
\overrightarrow{0}
\end{smallmatrix}\right\rangle -%
\left\langle \begin{smallmatrix}
4\\
2\\
\overrightarrow{0}
\end{smallmatrix}\right\rangle -%
\left\langle \begin{smallmatrix}
6\\
0\\
\overrightarrow{0}
\end{smallmatrix}\right\rangle +%
\left\langle \begin{smallmatrix}
4\\
0\\
\overrightarrow{0}
\end{smallmatrix}\right\rangle \right)
\\&
=\tfrac{1}{\left(p-1\right)\left(p+1\right)}\left(3\left(\tfrac{p}{2}\right)_{4}^{-1}\left(\tfrac{1}{2}\right)_{3}\left(\tfrac{1}{2}\right)_{1}-\left(\tfrac{p}{2}\right)_{3}^{-1}\left(\tfrac{1}{2}\right)_{2}\left(\tfrac{1}{2}\right)_{1}-\left(\tfrac{p}{2}\right)_{3}^{-1}\left(\tfrac{1}{2}\right)_{3}+\left(\tfrac{p}{2}\right)_{2}^{-1}\left(\tfrac{1}{2}\right)_{2}\right)\\&=\tfrac{1}{\left(p-1\right)\left(p+1\right)}\left(\tfrac{45}{p\left(p+2\right)\left(p+4\right)\left(p+6\right)}+\tfrac{-3-15}{p\left(p+2\right)\left(p+4\right)}+\tfrac{3}{p\left(p+2\right)}\right)\\&=\tfrac{3\left(p+1\right)\left(p+3\right)}{\left(p-1\right)p\left(p+1\right)\left(p+2\right)\left(p+4\right)\left(p+6\right)}=\tfrac{3\left(p+3\right)}{\left(p-1\right)p\left(p+2\right)\left(p+4\right)\left(p+6\right)}\\&\\%
\left\langle \begin{smallmatrix}
4 & 2\\
2 & 0\\
\overrightarrow{0} & \overrightarrow{0}
\end{smallmatrix}\right\rangle 
&=
\left(\tfrac{p-1}{2}\right)^{-1}\left(\left(-1\right)^{1}\left(\tfrac{1}{2}\right)_{1}%
\left\langle \begin{smallmatrix}
6\\
2\\
\overrightarrow{0}
\end{smallmatrix}\right\rangle +\left(-1\right)^{0}\left(\tfrac{1}{2}\right)_{1}%
\left\langle \begin{smallmatrix}
4\\
2\\
\overrightarrow{0}
\end{smallmatrix}\right\rangle \right)=\tfrac{1}{p-1}\left(%
\left\langle \begin{smallmatrix}
4\\
2\\
\overrightarrow{0}
\end{smallmatrix}\right\rangle -%
\left\langle \begin{smallmatrix}
6\\
2\\
\overrightarrow{0}
\end{smallmatrix}\right\rangle \right)
\\&
=\tfrac{1}{p-1}\left(\left(\tfrac{p}{2}\right)_{3}^{-1}\left(\tfrac{1}{2}\right)_{2}\left(\tfrac{1}{2}\right)_{1}-\left(\tfrac{p}{2}\right)_{4}^{-1}\left(\tfrac{1}{2}\right)_{3}\left(\tfrac{1}{2}\right)_{1}\right)
\\&
=\tfrac{1}{p-1}\left(\tfrac{3}{p\left(p+2\right)\left(p+4\right)}-\tfrac{15}{p\left(p+2\right)\left(p+4\right)\left(p+6\right)}\right)
=\tfrac{3\left(p+1\right)}{\left(p-1\right)p\left(p+2\right)\left(p+4\right)\left(p+6\right)}
\\&\\%
\left\langle \begin{smallmatrix}
2 & 1\\
0 & 1\\
\overrightarrow{0} & \overrightarrow{0}
\end{smallmatrix}\right\rangle &=\tfrac{1}{p-1}\left(-%
\left\langle \begin{smallmatrix}
3\\
1\\
\overrightarrow{0}
\end{smallmatrix}\right\rangle \right)=0\\&\\%
\left\langle \begin{smallmatrix}
2 & 0\\
0 & 2\\
\overrightarrow{0} & \overrightarrow{0}
\end{smallmatrix}\right\rangle &=\tfrac{2}{p-1}\left(\left(-1\right)^{1}\left(\tfrac{1}{2}\right)_{1}%
\left\langle \begin{smallmatrix}
2\\
2\\
\overrightarrow{0}
\end{smallmatrix}\right\rangle +\left(-1\right)^{0}\left(\tfrac{1}{2}\right)_{1}%
\left\langle \begin{smallmatrix}
2\\
0\\
\overrightarrow{0}
\end{smallmatrix}\right\rangle \right)=\tfrac{1}{p-1}\left(%
\left\langle \begin{smallmatrix}
2\\
0\\
\overrightarrow{0}
\end{smallmatrix}\right\rangle -%
\left\langle \begin{smallmatrix}
2\\
2\\
\overrightarrow{0}
\end{smallmatrix}\right\rangle \right)\\&=\tfrac{1}{p-1}\left(\tfrac{1}{p}-\tfrac{1}{p\left(p+2\right)}\right)=\tfrac{p+1}{\left(p-1\right)p\left(p+2\right)}\\&\\%
\left\langle \begin{smallmatrix}
1 & 1\\
1 & 1\\
\overrightarrow{0} & \overrightarrow{0}
\end{smallmatrix}\right\rangle &=\tfrac{1}{p-1}\left(-%
\left\langle \begin{smallmatrix}
2\\
2\\
\overrightarrow{0}
\end{smallmatrix}\right\rangle \right)=\tfrac{1}{p-1}\left(-\left(\tfrac{p}{2}\right)_{2}^{-1}\left(\tfrac{1}{2}\right)_{1}^{2}\right)
=\tfrac{-1}{\left(p-1\right)p\left(p+2\right)}
\\&\\%
\left\langle \begin{smallmatrix}
2 & 2\\
2 & 0\\
\overrightarrow{0} & \overrightarrow{0}
\end{smallmatrix}\right\rangle &=\left(\tfrac{p-1}{2}\right)_{1}^{-1}\left(\left(-1\right)^{1}\left(\tfrac{1}{2}\right)_{1}%
\left\langle \begin{smallmatrix}
4\\
2\\
\overrightarrow{0}
\end{smallmatrix}\right\rangle +\left(-1\right)^{0}\left(\tfrac{1}{2}\right)_{1}\left(\tfrac{1}{2}\right)_{0}%
\left\langle \begin{smallmatrix}
2\\
2\\
\overrightarrow{0}
\end{smallmatrix}\right\rangle \right)\\&=\tfrac{1}{p-1}\left(%
\left\langle \begin{smallmatrix}
2\\
2\\
\overrightarrow{0}
\end{smallmatrix}\right\rangle -%
\left\langle \begin{smallmatrix}
4\\
2\\
\overrightarrow{0}
\end{smallmatrix}\right\rangle \right)=\tfrac{1}{p-1}\left(\left(\tfrac{p}{2}\right)_{2}^{-1}\left(\tfrac{1}{2}\right)_{1}^{2}-\left(\tfrac{p}{2}\right)_{3}^{-1}\left(\tfrac{1}{2}\right)_{1}\left(\tfrac{1}{2}\right)_{2}\right)\\&=\tfrac{\left(p+4\right)-3}{\left(p-1\right)p\left(p+2\right)\left(p+4\right)}=\tfrac{p+1}{\left(p-1\right)p\left(p+2\right)\left(p+4\right)}\\&\\%
\left\langle \begin{smallmatrix}
2 & 2\\
4 & 0\\
\overrightarrow{0} & \overrightarrow{0}
\end{smallmatrix}\right\rangle &=%
\left\langle \begin{smallmatrix}
4 & 2\\
0 & 2\\
\overrightarrow{0} & \overrightarrow{0}
\end{smallmatrix}\right\rangle =\tfrac{3p+9}{\left(p-1\right)p\left(p+2\right)\left(p+4\right)\left(p+6\right)}
\end{align*}

\newpage

\subsubsection{Three indices (rows)}

\begin{align*}
    %
\left\langle \begin{smallmatrix}
4 & 0\\
0 & 2\\
0 & 2\\
\overrightarrow{0} & \overrightarrow{0}
\end{smallmatrix}\right\rangle &=
\left(\tfrac{p-1}{2}\left(\tfrac{p+1}{2}\right)\right)^{-1}
\bigg(\left(-1\right)^{2}\left(\tfrac{1}{2}\right)_{2}%
\left\langle \begin{smallmatrix}
4\\
2\\
2\\
\overrightarrow{0}
\end{smallmatrix}\right\rangle +
\\
&
\eqmargin\hspace{3cm}
\left(-1\right)^{1}\left(\tfrac{1}{2}\right)_{1}^{2}%
\left\langle \begin{smallmatrix}
4\\
0\\
2\\
\overrightarrow{0}
\end{smallmatrix}\right\rangle +\left(-1\right)^{1}\left(\tfrac{1}{2}\right)_{1}^{2}%
\left\langle \begin{smallmatrix}
4\\
2\\
0\\
\overrightarrow{0}
\end{smallmatrix}\right\rangle +\left(-1\right)^{0}\left(\tfrac{1}{2}\right)_{1}^{2}%
\left\langle \begin{smallmatrix}
4\\
0\\
0\\
\overrightarrow{0}
\end{smallmatrix}\right\rangle \bigg)\\&=\tfrac{1}{\left(p-1\right)\left(p+1\right)}\left(3%
\left\langle \begin{smallmatrix}
4\\
2\\
2\\
\overrightarrow{0}
\end{smallmatrix}\right\rangle -2%
\left\langle \begin{smallmatrix}
4\\
2\\
0\\
\overrightarrow{0}
\end{smallmatrix}\right\rangle +%
\left\langle \begin{smallmatrix}
4\\
0\\
0\\
\overrightarrow{0}
\end{smallmatrix}\right\rangle \right)\\&=\tfrac{1}{\left(p-1\right)\left(p+1\right)}\left(3\left(\tfrac{p}{2}\right)_{4}^{-1}\left(\tfrac{1}{2}\right)_{1}^{2}\left(\tfrac{1}{2}\right)_{2}-2\left(\tfrac{p}{2}\right)_{3}^{-1}\left(\tfrac{1}{2}\right)_{1}\left(\tfrac{1}{2}\right)_{2}+\left(\tfrac{p}{2}\right)_{2}^{-1}\left(\tfrac{1}{2}\right)_{2}\right)\\&=\tfrac{1}{\left(p-1\right)\left(p+1\right)}\left(\tfrac{9}{p\left(p+2\right)\left(p+4\right)\left(p+6\right)}-\tfrac{6}{p\left(p+2\right)\left(p+4\right)}+\tfrac{3}{p\left(p+2\right)}\right)\\&=\tfrac{3\left(p+3\right)\left(p+5\right)}{\left(p-1\right)p\left(p+1\right)\left(p+2\right)\left(p+4\right)\left(p+6\right)}\\&\\%
\left\langle \begin{smallmatrix}
2 & 0\\
4 & 0\\
0 & 2\\
\overrightarrow{0} & \overrightarrow{0}
\end{smallmatrix}\right\rangle &=\left(\tfrac{p-1}{2}\right)^{-1}\left(\left(-1\right)^{1}\left(\tfrac{1}{2}\right)_{1}%
\left\langle \begin{smallmatrix}
2\\
4\\
2\\
\overrightarrow{0}
\end{smallmatrix}\right\rangle +\left(-1\right)^{0}\left(\tfrac{1}{2}\right)_{1}%
\left\langle \begin{smallmatrix}
2\\
4\\
0\\
\overrightarrow{0}
\end{smallmatrix}\right\rangle \right)=\tfrac{1}{p-1}\left(%
\left\langle \begin{smallmatrix}
2\\
4\\
0\\
\overrightarrow{0}
\end{smallmatrix}\right\rangle -%
\left\langle \begin{smallmatrix}
2\\
4\\
2\\
\overrightarrow{0}
\end{smallmatrix}\right\rangle \right)\\&=\tfrac{1}{p-1}\left(\left(\tfrac{p}{2}\right)_{3}^{-1}\left(\tfrac{1}{2}\right)_{1}\left(\tfrac{1}{2}\right)_{2}-\left(\tfrac{p}{2}\right)_{4}^{-1}\left(\tfrac{1}{2}\right)_{1}^{2}\left(\tfrac{1}{2}\right)_{2}\right)=\tfrac{3\left(p+5\right)}{\left(p-1\right)p\left(p+2\right)\left(p+4\right)\left(p+6\right)}\\&\\%
\left\langle \begin{smallmatrix}
2 & 2\\
1 & 1\\
1 & 1\\
\overrightarrow{0} & \overrightarrow{0}
\end{smallmatrix}\right\rangle &=\left(\tfrac{p-1}{2}\right)_{2}^{-1}\left(\left(-1\right)^{2}\left(\tfrac{1}{2}\right)_{2}%
\left\langle \begin{smallmatrix}
4\\
2\\
2\\
\overrightarrow{0}
\end{smallmatrix}\right\rangle +\left(-1\right)^{1}\left(\tfrac{1}{2}\right)_{1}^{2}%
\left\langle \begin{smallmatrix}
2\\
2\\
2\\
\overrightarrow{0}
\end{smallmatrix}\right\rangle \right)
\\&
=\tfrac{4}{\left(p-1\right)\left(p+1\right)}\left(\tfrac{3}{4}%
\left\langle \begin{smallmatrix}
4\\
2\\
2\\
\overrightarrow{0}
\end{smallmatrix}\right\rangle -\tfrac{1}{4}%
\left\langle \begin{smallmatrix}
2\\
2\\
2\\
\overrightarrow{0}
\end{smallmatrix}\right\rangle \right)
=\tfrac{1}{\left(p-1\right)\left(p+1\right)}\left(3%
\left\langle \begin{smallmatrix}
4\\
2\\
2\\
\overrightarrow{0}
\end{smallmatrix}\right\rangle -%
\left\langle \begin{smallmatrix}
2\\
2\\
2\\
\overrightarrow{0}
\end{smallmatrix}\right\rangle \right)
\\&=\tfrac{1}{\left(p-1\right)\left(p+1\right)}\left(3\left(\tfrac{p}{2}\right)_{4}^{-1}\left(\tfrac{1}{2}\right)_{2}\left(\tfrac{1}{2}\right)_{1}^{2}-\left(\tfrac{p}{2}\right)_{3}^{-1}\left(\tfrac{1}{2}\right)_{1}^{3}\right)
=\tfrac{-p+3}{\left(p-1\right)p\left(p+1\right)\left(p+2\right)\left(p+4\right)\left(p+6\right)}
\\&\\%
\left\langle \begin{smallmatrix}
0 & 2\\
2 & 0\\
2 & 0\\
\overrightarrow{0} & \overrightarrow{0}
\end{smallmatrix}\right\rangle &=\left(\tfrac{p-1}{2}\right)^{-1}\left(\left(-1\right)^{1}\left(\tfrac{1}{2}\right)_{1}%
\left\langle \begin{smallmatrix}
2\\
2\\
2\\
\overrightarrow{0}
\end{smallmatrix}\right\rangle +\left(-1\right)^{0}\left(\tfrac{1}{2}\right)_{1}%
\left\langle \begin{smallmatrix}
0\\
2\\
2\\
\overrightarrow{0}
\end{smallmatrix}\right\rangle \right)=\tfrac{1}{\left(p-1\right)}\left(%
\left\langle \begin{smallmatrix}
0\\
2\\
2\\
\overrightarrow{0}
\end{smallmatrix}\right\rangle -%
\left\langle \begin{smallmatrix}
2\\
2\\
2\\
\overrightarrow{0}
\end{smallmatrix}\right\rangle \right)\\&=\tfrac{1}{\left(p-1\right)}\left(\left(\tfrac{p}{2}\right)_{2}^{-1}\left(\tfrac{1}{2}\right)_{1}^{2}-\left(\tfrac{p}{2}\right)_{3}^{-1}\left(\tfrac{1}{2}\right)_{1}^{3}\right)=\tfrac{1}{\left(p-1\right)}\left(\tfrac{\left(p+4\right)-1}{p\left(p+2\right)\left(p+4\right)}\right)=\tfrac{p+3}{\left(p-1\right)p\left(p+2\right)\left(p+4\right)}\\&\\%
\left\langle \begin{smallmatrix}
2 & 2\\
2 & 0\\
2 & 0\\
\overrightarrow{0} & \overrightarrow{0}
\end{smallmatrix}\right\rangle &=\left(\tfrac{p-1}{2}\right)_{1}^{-1}\left(\left(-1\right)^{1}\left(\tfrac{1}{2}\right)_{1}%
\left\langle \begin{smallmatrix}
4\\
2\\
2\\
\overrightarrow{0}
\end{smallmatrix}\right\rangle +\left(-1\right)^{0}\left(\tfrac{1}{2}\right)_{1}\left(\tfrac{1}{2}\right)_{0}%
\left\langle \begin{smallmatrix}
2\\
2\\
2\\
\overrightarrow{0}
\end{smallmatrix}\right\rangle \right)=\tfrac{1}{p-1}\left(%
\left\langle \begin{smallmatrix}
2\\
2\\
2\\
\overrightarrow{0}
\end{smallmatrix}\right\rangle 
\!-\!
\left\langle \begin{smallmatrix}
4\\
2\\
2\\
\overrightarrow{0}
\end{smallmatrix}\right\rangle \right)
\\&=
\tfrac{1}{p-1}\left(\left(\tfrac{p}{2}\right)_{3}^{-1}\left(\tfrac{1}{2}\right)_{1}^{3}-\left(\tfrac{p}{2}\right)_{4}^{-1}\left(\tfrac{1}{2}\right)_{1}^{2}\left(\tfrac{1}{2}\right)_{2}\right)
=
\tfrac{p+3}{\left(p-1\right)p\left(p+2\right)\left(p+4\right)\left(p+6\right)}
\\&\\%
\left\langle \begin{smallmatrix}
3 & 1\\
1 & 1\\
2 & 0\\
\overrightarrow{0} & \overrightarrow{0}
\end{smallmatrix}\right\rangle &=\left(\tfrac{p-1}{2}\right)^{-1}\left(\left(-1\right)^{1}\left(\tfrac{1}{2}\right)_{1}%
\left\langle \begin{smallmatrix}
4\\
2\\
2\\
\overrightarrow{0}
\end{smallmatrix}\right\rangle \right)=-\tfrac{3}{p-1}\left(\left(\tfrac{p}{2}\right)_{4}^{-1}\left(\tfrac{1}{2}\right)_{2}\left(\tfrac{1}{2}\right)_{1}^{2}\right)
\\&
=-\tfrac{3}{\left(p-1\right)p\left(p+2\right)\left(p+4\right)\left(p+6\right)}
\\&\\%
\left\langle \begin{smallmatrix}
3 & 1\\
1 & 1\\
0 & 2\\
\overrightarrow{0} & \overrightarrow{0}
\end{smallmatrix}\right\rangle &=\left(\tfrac{p-1}{2}\left(\tfrac{p+1}{2}\right)\right)^{-1}\left(\left(-1\right)^{2}\left(\tfrac{1}{2}\right)_{2}%
\left\langle \begin{smallmatrix}
4\\
2\\
2\\
\overrightarrow{0}
\end{smallmatrix}\right\rangle +\left(-1\right)^{1}\left(\tfrac{1}{2}\right)_{1}^{2}%
\left\langle \begin{smallmatrix}
4\\
2\\
0\\
\overrightarrow{0}
\end{smallmatrix}\right\rangle \right)\\&=\tfrac{1}{\left(p-1\right)\left(p+1\right)}\left(3\left(\tfrac{p}{2}\right)_{4}^{-1}\left(\tfrac{1}{2}\right)_{2}\left(\tfrac{1}{2}\right)_{1}^{2}-\left(\tfrac{p}{2}\right)_{3}^{-1}\left(\tfrac{1}{2}\right)_{2}\left(\tfrac{1}{2}\right)_{1}\right)\\&=\tfrac{1}{\left(p-1\right)\left(p+1\right)}\left(\tfrac{9}{p\left(p+2\right)\left(p+4\right)\left(p+6\right)}-\tfrac{3}{p\left(p+2\right)\left(p+4\right)}\right)\\&=\tfrac{-3\left(p+3\right)}{\left(p-1\right)p\left(p+1\right)\left(p+2\right)\left(p+4\right)\left(p+6\right)}
\end{align*}

\newpage

\begin{align*}
&\\%
\left\langle \begin{smallmatrix}
4 & 0\\
1 & 1\\
1 & 1\\
\overrightarrow{0} & \overrightarrow{0}
\end{smallmatrix}\right\rangle &=\left(\tfrac{p-1}{2}\right)_{1}^{-1}\left(-1\right)^{1}\left(\tfrac{1}{2}\right)_{1}%
\left\langle \begin{smallmatrix}
4\\
2\\
2\\
\overrightarrow{0}
\end{smallmatrix}\right\rangle =-\tfrac{1}{p-1}%
\left\langle \begin{smallmatrix}
4\\
2\\
2\\
\overrightarrow{0}
\end{smallmatrix}\right\rangle =-\tfrac{1}{p-1}\left(\tfrac{p}{2}\right)_{4}^{-1}\left(\tfrac{1}{2}\right)_{2}\left(\tfrac{1}{2}\right)_{1}^{2}
\\&
=-\tfrac{3}{\left(p-1\right)p\left(p+2\right)\left(p+4\right)\left(p+6\right)}\\&\\%
\left\langle \begin{smallmatrix}
2 & 2\\
2 & 0\\
0 & 2\\
\overrightarrow{0} & \overrightarrow{0}
\end{smallmatrix}\right\rangle &=
\left(\tfrac{p-1}{2}\right)_{2}^{-1}
\bigg(\left(-1\right)^{2}\left(\tfrac{1}{2}\right)_{2}%
\left\langle \begin{smallmatrix}
4\\
2\\
2\\
\overrightarrow{0}
\end{smallmatrix}\right\rangle +
\\
&
\eqmargin\hspace{3cm}
\left(-1\right)^{1}\left(\tfrac{1}{2}\right)_{1}^{2}%
\left\langle \begin{smallmatrix}
2\\
2\\
2\\
\overrightarrow{0}
\end{smallmatrix}\right\rangle +\left(-1\right)^{1}\left(\tfrac{1}{2}\right)_{1}^{2}%
\left\langle \begin{smallmatrix}
4\\
2\\
0\\
\overrightarrow{0}
\end{smallmatrix}\right\rangle +\left(-1\right)^{0}\left(\tfrac{1}{2}\right)_{1}^{2}%
\left\langle \begin{smallmatrix}
2\\
2\\
0\\
\overrightarrow{0}
\end{smallmatrix}\right\rangle \bigg)
\\&
=\tfrac{1}{\left(p-1\right)\left(p+1\right)}\left(3%
\left\langle \begin{smallmatrix}
4\\
2\\
2\\
\overrightarrow{0}
\end{smallmatrix}\right\rangle -%
\left\langle \begin{smallmatrix}
2\\
2\\
2\\
\overrightarrow{0}
\end{smallmatrix}\right\rangle -%
\left\langle \begin{smallmatrix}
4\\
2\\
0\\
\overrightarrow{0}
\end{smallmatrix}\right\rangle +%
\left\langle \begin{smallmatrix}
2\\
2\\
0\\
\overrightarrow{0}
\end{smallmatrix}\right\rangle \right)\\&=\tfrac{1}{\left(p-1\right)\left(p+1\right)}\left(3\left(\tfrac{p}{2}\right)_{4}^{-1}\left(\tfrac{1}{2}\right)_{2}\left(\tfrac{1}{2}\right)_{1}^{2}-\left(\tfrac{p}{2}\right)_{3}^{-1}\left(\tfrac{1}{2}\right)_{1}^{3}-\left(\tfrac{p}{2}\right)_{3}^{-1}\left(\tfrac{1}{2}\right)_{1}\left(\tfrac{1}{2}\right)_{2}+\left(\tfrac{p}{2}\right)_{2}^{-1}\left(\tfrac{1}{2}\right)_{1}^{2}\right)\\&=\tfrac{9-\left(p+6\right)-3\left(p+6\right)+\left(p+4\right)\left(p+6\right)}{\left(p-1\right)p\left(p+1\right)\left(p+2\right)\left(p+4\right)\left(p+6\right)}=\tfrac{\left(p+3\right)^{2}}{\left(p-1\right)p\left(p+1\right)\left(p+2\right)\left(p+4\right)\left(p+6\right)}
\\&\\%
\left\langle \begin{smallmatrix}
2 & 0\\
1 & 1\\
1 & 1\\
\overrightarrow{0} & \overrightarrow{0}
\end{smallmatrix}\right\rangle &=\left(\tfrac{p-1}{2}\right)_{1}^{-1}\left(-1\right)^{1}\left(\tfrac{1}{2}\right)_{1}%
\left\langle \begin{smallmatrix}
2\\
2\\
2\\
\overrightarrow{0}
\end{smallmatrix}\right\rangle =-\tfrac{1}{p-1}%
\left\langle \begin{smallmatrix}
2\\
2\\
2\\
\overrightarrow{0}
\end{smallmatrix}\right\rangle =-\tfrac{1}{p-1}\left(\tfrac{p}{2}\right)_{3}^{-1}\left(\tfrac{1}{2}\right)_{1}^{3}
\\
&
=-\tfrac{1}{\left(p-1\right)p\left(p+2\right)\left(p+4\right)}
\end{align*}

\newpage

\subsubsection{Four indices (rows)}

\begin{align*}
\left\langle \begin{smallmatrix}
2 & 0\\
0 & 2\\
1 & 1\\
1 & 1\\
\overrightarrow{0} & \overrightarrow{0}
\end{smallmatrix}\right\rangle &=\left(\tfrac{p-1}{2}\right)_{2}^{-1}\left(\left(-1\right)^{2}\left(\tfrac{1}{2}\right)_{2}%
\left\langle \begin{smallmatrix}
2\\
2\\
2\\
2\\
\overrightarrow{0}
\end{smallmatrix}\right\rangle +\left(-1\right)^{1}\left(\tfrac{1}{2}\right)_{1}^{2}%
\left\langle \begin{smallmatrix}
2\\
0\\
2\\
2\\
\overrightarrow{0}
\end{smallmatrix}\right\rangle \right)
\\&
=\tfrac{1}{\left(p-1\right)\left(p+1\right)}\left(3%
\left\langle \begin{smallmatrix}
2\\
2\\
2\\
2\\
\overrightarrow{0}
\end{smallmatrix}\right\rangle -%
\left\langle \begin{smallmatrix}
2\\
0\\
2\\
2\\
\overrightarrow{0}
\end{smallmatrix}\right\rangle \right)
=
\tfrac{1}{\left(p-1\right)\left(p+1\right)}\left(3\left(\tfrac{p}{2}\right)_{4}^{-1}\left(\tfrac{1}{2}\right)_{1}^{4}-\left(\tfrac{p}{2}\right)_{3}^{-1}\left(\tfrac{1}{2}\right)_{1}^{3}\right)
\\&
=\tfrac{-p-3}{\left(p-1\right)p\left(p+1\right)\left(p+2\right)\left(p+4\right)\left(p+6\right)}\\&\\%
\left\langle \begin{smallmatrix}
2 & 0\\
2 & 0\\
1 & 1\\
1 & 1\\
\overrightarrow{0} & \overrightarrow{0}
\end{smallmatrix}\right\rangle &=\left(\tfrac{p-1}{2}\right)_{1}^{-1}\left(-1\right)^{1}\left(\tfrac{1}{2}\right)_{1}%
\left\langle \begin{smallmatrix}
2\\
2\\
2\\
2\\
\overrightarrow{0}
\end{smallmatrix}\right\rangle =-\tfrac{1}{p-1}%
\left\langle \begin{smallmatrix}
2\\
2\\
2\\
2\\
\overrightarrow{0}
\end{smallmatrix}\right\rangle =-\tfrac{1}{p-1}\left(\tfrac{p}{2}\right)_{4}^{-1}\left(\tfrac{1}{2}\right)_{1}^{4}
\\&
=-\tfrac{1}{\left(p-1\right)p\left(p+2\right)\left(p+4\right)\left(p+6\right)}
\\&\\%
\left\langle \begin{smallmatrix}
2 & 0\\
2 & 0\\
0 & 2\\
0 & 2\\
\overrightarrow{0} & \overrightarrow{0}
\end{smallmatrix}\right\rangle &=\left(\tfrac{p-1}{2}\left(\tfrac{p+1}{2}\right)\right)^{-1}\left(\left(-1\right)^{2}\left(\tfrac{1}{2}\right)_{2}%
\left\langle \begin{smallmatrix}
2\\
2\\
2\\
2\\
\overrightarrow{0}
\end{smallmatrix}\right\rangle +2\left(-1\right)^{1}\left(\tfrac{1}{2}\right)_{1}^{2}%
\left\langle \begin{smallmatrix}
2\\
2\\
2\\
0\\
\overrightarrow{0}
\end{smallmatrix}\right\rangle +\left(-1\right)^{0}\left(\tfrac{1}{2}\right)_{1}^{2}%
\left\langle \begin{smallmatrix}
2\\
2\\
0\\
0\\
\overrightarrow{0}
\end{smallmatrix}\right\rangle \right)
\\&=\tfrac{1}{\left(p-1\right)\left(p+1\right)}\left(3%
\left\langle \begin{smallmatrix}
2\\
2\\
2\\
2\\
\overrightarrow{0}
\end{smallmatrix}\right\rangle -2%
\left\langle \begin{smallmatrix}
2\\
2\\
2\\
0\\
\overrightarrow{0}
\end{smallmatrix}\right\rangle +%
\left\langle \begin{smallmatrix}
2\\
2\\
0\\
0\\
\overrightarrow{0}
\end{smallmatrix}\right\rangle \right)
\\ &
=\tfrac{1}{\left(p-1\right)\left(p+1\right)}\left(3\left(\tfrac{p}{2}\right)_{4}^{-1}\left(\tfrac{1}{2}\right)_{1}^{4}-2\left(\tfrac{p}{2}\right)_{3}^{-1}\left(\tfrac{1}{2}\right)_{1}^{3}+\left(\tfrac{p}{2}\right)_{2}^{-1}\left(\tfrac{1}{2}\right)_{1}^{2}\right)\\&=\tfrac{1}{\left(p-1\right)\left(p+1\right)}\left(\tfrac{3}{p\left(p+2\right)\left(p+4\right)\left(p+6\right)}-\tfrac{2}{p\left(p+2\right)\left(p+4\right)}+\tfrac{1}{p\left(p+2\right)}\right)\\&=\tfrac{\left(p+3\right)\left(p+5\right)}{\left(p-1\right)p\left(p+1\right)\left(p+2\right)\left(p+4\right)\left(p+6\right)}\\&\\%
\left\langle \begin{smallmatrix}
1 & 1\\
1 & 1\\
1 & 1\\
1 & 1\\
\overrightarrow{0} & \overrightarrow{0}
\end{smallmatrix}\right\rangle &=\left(\tfrac{p-1}{2}\right)_{2}^{-1}
\left(-1\right)^{2}\left(\tfrac{1}{2}\right)_{2}%
\left\langle \begin{smallmatrix}
2\\
2\\
2\\
2\\
\overrightarrow{0}
\end{smallmatrix}\right\rangle
=\tfrac{3}{\left(p-1\right)\left(p+1\right)}
\left(\tfrac{p}{2}\right)_{4}^{-1}\left(\tfrac{1}{2}\right)_{1}^{4}
=
\tfrac{3}{\left(p-1\right)p\left(p+1\right)\left(p+2\right)\left(p+4\right)\left(p+6\right)}
\\&\\%
\left\langle \begin{smallmatrix}
2 & 0\\
2 & 0\\
2 & 0\\
0 & 2\\
\overrightarrow{0} & \overrightarrow{0}
\end{smallmatrix}\right\rangle &=%
\left\langle \begin{smallmatrix}
2 & 0\\
0 & 2\\
2 & 0\\
2 & 0\\
\overrightarrow{0} & \overrightarrow{0}
\end{smallmatrix}\right\rangle 
=\left(\tfrac{p-1}{2}\right)^{-1}\left(\left(-1\right)^{1}\left(\tfrac{1}{2}\right)_{1}%
\left\langle \begin{smallmatrix}
2\\
2\\
2\\
2\\
\overrightarrow{0}
\end{smallmatrix}\right\rangle +\left(-1\right)^{0}\left(\tfrac{1}{2}\right)_{1}%
\left\langle \begin{smallmatrix}
2\\
0\\
2\\
2\\
\overrightarrow{0}
\end{smallmatrix}\right\rangle \right)
\\ &
=
\tfrac{1}{p-1}\left(%
\left\langle \begin{smallmatrix}
2\\
0\\
2\\
2\\
\overrightarrow{0}
\end{smallmatrix}\right\rangle -%
\left\langle \begin{smallmatrix}
2\\
2\\
2\\
2\\
\overrightarrow{0}
\end{smallmatrix}\right\rangle \right)=\tfrac{1}{p-1}\left(\left(\tfrac{p}{2}\right)_{3}^{-1}\left(\tfrac{1}{2}\right)_{1}^{3}-\left(\tfrac{p}{2}\right)_{4}^{-1}\left(\tfrac{1}{2}\right)_{1}^{4}\right)\\&=\tfrac{1}{p-1}\left(\tfrac{1}{p\left(p+2\right)\left(p+4\right)}-\tfrac{1}{p\left(p+2\right)\left(p+4\right)\left(p+6\right)}\right)=\tfrac{p+5}{\left(p-1\right)p\left(p+2\right)\left(p+4\right)\left(p+6\right)}
\end{align*}

\newpage

\subsection{Monomials of Three Orthogonal Vector}

\subsubsection{One index (row)}
\begin{align*}
    %
\left\langle \begin{smallmatrix}
2 & 2 & 2\\
\overrightarrow{0} & \overrightarrow{0} & \overrightarrow{0}
\end{smallmatrix}\right\rangle &=\tfrac{1}{p\left(p+2\right)\left(p+4\right)}\\&\\%
\left\langle \begin{smallmatrix}
4 & 2 & 2\\
\overrightarrow{0} & \overrightarrow{0} & \overrightarrow{0}
\end{smallmatrix}\right\rangle &=\tfrac{1}{p-2}\left(\left(-1\right)^{1}\left(%
\left\langle \begin{smallmatrix}
6 & 2\\
\overrightarrow{0} & \overrightarrow{0}
\end{smallmatrix}\right\rangle +%
\left\langle \begin{smallmatrix}
4 & 4\\
\overrightarrow{0} & \overrightarrow{0}
\end{smallmatrix}\right\rangle \right)+\left(-1\right)^{0}%
\left\langle \begin{smallmatrix}
4 & 2\\
\overrightarrow{0} & \overrightarrow{0}
\end{smallmatrix}\right\rangle \right)\\&=\tfrac{1}{p-2}\left(%
\left\langle \begin{smallmatrix}
4 & 2\\
\overrightarrow{0} & \overrightarrow{0}
\end{smallmatrix}\right\rangle -%
\left\langle \begin{smallmatrix}
6 & 2\\
\overrightarrow{0} & \overrightarrow{0}
\end{smallmatrix}\right\rangle -%
\left\langle \begin{smallmatrix}
4 & 4\\
\overrightarrow{0} & \overrightarrow{0}
\end{smallmatrix}\right\rangle \right)
=\tfrac{1}{p-2}\left(\tfrac{3}{p\left(p+2\right)\left(p+4\right)}
+\tfrac{-15-9}{p\left(p+2\right)\left(p+4\right)\left(p+6\right)}\right)
\\&
=\tfrac{3\left(p-2\right)}{\left(p-2\right)p\left(p+2\right)\left(p+4\right)\left(p+6\right)}=\tfrac{3}{p\left(p+2\right)\left(p+4\right)\left(p+6\right)}
\end{align*}

\subsubsection{Two indices (rows)}
\begin{align*}
    %
\left\langle \begin{smallmatrix}
2 & 2 & 2\\
2 & 0 & 0\\
\overrightarrow{0} & \overrightarrow{0} & \overrightarrow{0}
\end{smallmatrix}\right\rangle &=%
\left\langle \begin{smallmatrix}
2 & 2\\
2 & 0\\
2 & 0\\
\overrightarrow{0} & \overrightarrow{0}
\end{smallmatrix}\right\rangle =\tfrac{p+3}{\left(p-1\right)p\left(p+2\right)\left(p+4\right)\left(p+6\right)}\\&\\%
\left\langle \begin{smallmatrix}
2 & 2 & 0\\
0 & 0 & 2\\
\overrightarrow{0} & \overrightarrow{0} & \overrightarrow{0}
\end{smallmatrix}\right\rangle &=%
\left\langle \begin{smallmatrix}
0 & 2\\
2 & 0\\
2 & 0\\
\overrightarrow{0} & \overrightarrow{0}
\end{smallmatrix}\right\rangle =\tfrac{\left(p+3\right)}{\left(p-1\right)p\left(p+2\right)\left(p+4\right)}
\\&
\\
\left\langle \begin{smallmatrix}
0 & 2 & 2\\
4 & 0 & 0\\
\overrightarrow{0} & \overrightarrow{0} & \overrightarrow{0}
\end{smallmatrix}\right\rangle &=%
\left\langle \begin{smallmatrix}
4 & 0\\
0 & 2\\
0 & 2\\
\overrightarrow{0} & \overrightarrow{0}
\end{smallmatrix}\right\rangle =\tfrac{3\left(p+3\right)\left(p+5\right)}{\left(p-1\right)p\left(p+1\right)\left(p+2\right)\left(p+4\right)\left(p+6\right)}\\&\\%
\left\langle \begin{smallmatrix}
2 & 1 & 1\\
0 & 1 & 1\\
\overrightarrow{0} & \overrightarrow{0} & \overrightarrow{0}
\end{smallmatrix}\right\rangle &=\tfrac{1}{p-2}\left(-%
\left\langle \begin{smallmatrix}
2 & 2\\
0 & 2\\
\overrightarrow{0} & \overrightarrow{0}
\end{smallmatrix}\right\rangle -%
\left\langle \begin{smallmatrix}
3 & 1\\
1 & 1\\
\overrightarrow{0} & \overrightarrow{0}
\end{smallmatrix}\right\rangle \right)\\&=\tfrac{1}{p-2}\left(-\tfrac{p+1}{\left(p-1\right)p\left(p+2\right)\left(p+4\right)}+\tfrac{3}{\left(p-1\right)p\left(p+2\right)\left(p+4\right)}\right)
=
\tfrac{-1}{\left(p-1\right)p\left(p+2\right)\left(p+4\right)}
\\&\\%
\left\langle \begin{smallmatrix}
4 & 1 & 1\\
0 & 1 & 1\\
\overrightarrow{0} & \overrightarrow{0} & \overrightarrow{0}
\end{smallmatrix}\right\rangle &=%
\left\langle \begin{smallmatrix}
4 & 0\\
1 & 1\\
1 & 1\\
\overrightarrow{0} & \overrightarrow{0}
\end{smallmatrix}\right\rangle =\tfrac{-3}{\left(p-1\right)p\left(p+2\right)\left(p+4\right)\left(p+6\right)}\\&\\%
\left\langle \begin{smallmatrix}
2 & 1 & 1\\
2 & 1 & 1\\
\overrightarrow{0} & \overrightarrow{0} & \overrightarrow{0}
\end{smallmatrix}\right\rangle &=%
\left\langle \begin{smallmatrix}
2 & 2\\
1 & 1\\
1 & 1\\
\overrightarrow{0} & \overrightarrow{0}
\end{smallmatrix}\right\rangle =\tfrac{-\left(p-3\right)}{\left(p-1\right)p\left(p+1\right)\left(p+2\right)\left(p+4\right)\left(p+6\right)}\\&\\%
\left\langle \begin{smallmatrix}
3 & 1 & 2\\
1 & 1 & 0\\
\overrightarrow{0} & \overrightarrow{0} & \overrightarrow{0}
\end{smallmatrix}\right\rangle &=%
\left\langle \begin{smallmatrix}
3 & 1\\
1 & 1\\
2 & 0\\
\overrightarrow{0} & \overrightarrow{0}
\end{smallmatrix}\right\rangle =\tfrac{-3}{\left(p-1\right)p\left(p+2\right)\left(p+4\right)\left(p+6\right)}\\&\\%
\left\langle \begin{smallmatrix}
1 & 1 & 2\\
3 & 1 & 0\\
\overrightarrow{0} & \overrightarrow{0} & \overrightarrow{0}
\end{smallmatrix}\right\rangle &=%
\left\langle \begin{smallmatrix}
3 & 1\\
1 & 1\\
0 & 2\\
\overrightarrow{0} & \overrightarrow{0}
\end{smallmatrix}\right\rangle =\tfrac{-3\left(p+3\right)}{\left(p-1\right)p\left(p+1\right)\left(p+2\right)\left(p+4\right)\left(p+6\right)}\\&\\%
\left\langle \begin{smallmatrix}
2 & 2 & 0\\
2 & 0 & 2\\
\overrightarrow{0} & \overrightarrow{0} & \overrightarrow{0}
\end{smallmatrix}\right\rangle &=%
\left\langle \begin{smallmatrix}
2 & 2\\
2 & 0\\
0 & 2\\
\overrightarrow{0} & \overrightarrow{0}
\end{smallmatrix}\right\rangle =\tfrac{\left(p+3\right)^{2}}{\left(p-1\right)p\left(p+1\right)\left(p+2\right)\left(p+4\right)\left(p+6\right)}\\&\\%
\left\langle \begin{smallmatrix}
4 & 2 & 0\\
0 & 0 & 2\\
\overrightarrow{0} & \overrightarrow{0} & \overrightarrow{0}
\end{smallmatrix}\right\rangle &=\tfrac{1}{p-2}\left(\left(-1\right)^{1}\left(%
\left\langle \begin{smallmatrix}
4 & 2\\
0 & 2\\
\overrightarrow{0} & \overrightarrow{0}
\end{smallmatrix}\right\rangle +%
\left\langle \begin{smallmatrix}
4 & 2\\
2 & 0\\
\overrightarrow{0} & \overrightarrow{0}
\end{smallmatrix}\right\rangle \right)+\left(-1\right)^{0}%
\left\langle \begin{smallmatrix}
4 & 2\\
0 & 0\\
\overrightarrow{0} & \overrightarrow{0}
\end{smallmatrix}\right\rangle \right)\\&=\tfrac{1}{p-2}\left(%
\left\langle \begin{smallmatrix}
4 & 2\\
0 & 0\\
\overrightarrow{0} & \overrightarrow{0}
\end{smallmatrix}\right\rangle -%
\left\langle \begin{smallmatrix}
4 & 2\\
0 & 2\\
\overrightarrow{0} & \overrightarrow{0}
\end{smallmatrix}\right\rangle -%
\left\langle \begin{smallmatrix}
4 & 2\\
2 & 0\\
\overrightarrow{0} & \overrightarrow{0}
\end{smallmatrix}\right\rangle \right)\\&=\tfrac{1}{p-2}\left(\tfrac{3\left(p-1\right)\left(p+6\right)-3\left(p+3\right)-3\left(p+1\right)}{\left(p-1\right)p\left(p+2\right)\left(p+4\right)\left(p+6\right)}\right)
=\tfrac{3\left(p+5\right)}{\left(p-1\right)p\left(p+2\right)\left(p+4\right)\left(p+6\right)}
\end{align*}

\newpage

\subsubsection{Three indices (rows)}
\begin{align*}
    %
    %
\left\langle \begin{smallmatrix}
2 & 0 & 0\\
0 & 2 & 0\\
0 & 0 & 2\\
\overrightarrow{0} & \overrightarrow{0} & \overrightarrow{0}
\end{smallmatrix}\right\rangle &=\tfrac{1}{p-2}\left(%
\left\langle \begin{smallmatrix}
2 & 0\\
0 & 2\\
0 & 0\\
\overrightarrow{0} & \overrightarrow{0}
\end{smallmatrix}\right\rangle -%
\left\langle \begin{smallmatrix}
2 & 0\\
0 & 2\\
2 & 0\\
\overrightarrow{0} & \overrightarrow{0}
\end{smallmatrix}\right\rangle -%
\left\langle \begin{smallmatrix}
2 & 0\\
0 & 2\\
0 & 2\\
\overrightarrow{0} & \overrightarrow{0}
\end{smallmatrix}\right\rangle \right)=\tfrac{1}{p-2}\left(%
\left\langle \begin{smallmatrix}
2 & 0\\
0 & 2\\
0 & 0\\
\overrightarrow{0} & \overrightarrow{0}
\end{smallmatrix}\right\rangle -2%
\left\langle \begin{smallmatrix}
2 & 0\\
0 & 2\\
0 & 2\\
\overrightarrow{0} & \overrightarrow{0}
\end{smallmatrix}\right\rangle \right)
\\&=\tfrac{1}{p-2}\left(\tfrac{p+1}{\left(p-1\right)p\left(p+2\right)}-\tfrac{2\left(p+3\right)}{\left(p-1\right)p\left(p+2\right)\left(p+4\right)}\right)
=\tfrac{p^{2}+3p-2}{\left(p-2\right)\left(p-1\right)p\left(p+2\right)\left(p+4\right)}
\\&
\\%
\left\langle \begin{smallmatrix}
4 & 0 & 0\\
0 & 2 & 0\\
0 & 0 & 2\\
\overrightarrow{0} & \overrightarrow{0} & \overrightarrow{0}
\end{smallmatrix}\right\rangle &=\tfrac{1}{p-2}\left(%
\left\langle \begin{smallmatrix}
4 & 0\\
0 & 2\\
0 & 0\\
\overrightarrow{0} & \overrightarrow{0}
\end{smallmatrix}\right\rangle -%
\left\langle \begin{smallmatrix}
4 & 0\\
0 & 2\\
2 & 0\\
\overrightarrow{0} & \overrightarrow{0}
\end{smallmatrix}\right\rangle -%
\left\langle \begin{smallmatrix}
4 & 0\\
0 & 2\\
0 & 2\\
\overrightarrow{0} & \overrightarrow{0}
\end{smallmatrix}\right\rangle \right)\\&=\tfrac{1}{p-2}\left(\tfrac{3\left(p+3\right)}{\left(p-1\right)p\left(p+2\right)\left(p+4\right)}-\tfrac{3\left(p+5\right)}{\left(p-1\right)p\left(p+2\right)\left(p+4\right)\left(p+6\right)}-\tfrac{3\left(p+3\right)\left(p+5\right)}{\left(p-1\right)p\left(p+1\right)\left(p+2\right)\left(p+4\right)\left(p+6\right)}\right)\\&=\tfrac{1}{p-2}\left(\tfrac{3\left(p+1\right)\left(p+3\right)\left(p+6\right)-3\left(p+1\right)\left(p+5\right)-3\left(p+3\right)\left(p+5\right)}{\left(p-1\right)p\left(p+1\right)\left(p+2\right)\left(p+4\right)\left(p+6\right)}\right)\\&=\tfrac{3\left(p^{3}+8p^{2}+13p-2\right)}{\left(p-2\right)\left(p-1\right)p\left(p+1\right)\left(p+2\right)\left(p+4\right)\left(p+6\right)}\\
&\\%
\left\langle \begin{smallmatrix}
0 & 2 & 2\\
2 & 0 & 0\\
2 & 0 & 0\\
\overrightarrow{0} & \overrightarrow{0} & \overrightarrow{0}
\end{smallmatrix}\right\rangle 
&
=\tfrac{1}{p-2}\left(\left(-1\right)^{1}\left(%
\left\langle \begin{smallmatrix}
2 & 2\\
2 & 0\\
2 & 0\\
\overrightarrow{0} & \overrightarrow{0}
\end{smallmatrix}\right\rangle +%
\left\langle \begin{smallmatrix}
0 & 4\\
2 & 0\\
2 & 0\\
\overrightarrow{0} & \overrightarrow{0}
\end{smallmatrix}\right\rangle \right)+\left(-1\right)^{0}%
\left\langle \begin{smallmatrix}
0 & 2\\
2 & 0\\
2 & 0\\
\overrightarrow{0} & \overrightarrow{0}
\end{smallmatrix}\right\rangle \right)
\\
&
=\tfrac{1}{p-2}\left(%
\left\langle \begin{smallmatrix}
0 & 2\\
2 & 0\\
2 & 0\\
\overrightarrow{0} & \overrightarrow{0}
\end{smallmatrix}\right\rangle -%
\left\langle \begin{smallmatrix}
2 & 2\\
2 & 0\\
2 & 0\\
\overrightarrow{0} & \overrightarrow{0}
\end{smallmatrix}\right\rangle -%
\left\langle \begin{smallmatrix}
0 & 4\\
2 & 0\\
2 & 0\\
\overrightarrow{0} & \overrightarrow{0}
\end{smallmatrix}\right\rangle \right)\\&=\tfrac{1}{p-2}\left(\tfrac{p+3}{\left(p-1\right)p\left(p+2\right)\left(p+4\right)}-\tfrac{p+3}{\left(p-1\right)p\left(p+2\right)\left(p+4\right)\left(p+6\right)}-\tfrac{3\left(p+3\right)\left(p+5\right)}{\left(p-1\right)p\left(p+1\right)\left(p+2\right)\left(p+4\right)\left(p+6\right)}\right)\\&=\tfrac{\left(p+3\right)\left(\left(p+1\right)\left(p+6\right)-\left(p+1\right)-3\left(p+5\right)\right)}{\left(p-2\right)\left(p-1\right)p\left(p+1\right)\left(p+2\right)\left(p+4\right)\left(p+6\right)}=\tfrac{\left(p-2\right)\left(p+3\right)\left(p+5\right)}{\left(p-2\right)\left(p-1\right)p\left(p+1\right)\left(p+2\right)\left(p+4\right)\left(p+6\right)}\\&=\tfrac{\left(p+3\right)\left(p+5\right)}{\left(p-1\right)p\left(p+1\right)\left(p+2\right)\left(p+4\right)\left(p+6\right)}\\&\\%
\left\langle \begin{smallmatrix}
2 & 0 & 0\\
0 & 2 & 0\\
2 & 0 & 2\\
\overrightarrow{0} & \overrightarrow{0} & \overrightarrow{0}
\end{smallmatrix}\right\rangle &=\tfrac{1}{p-2}\left(%
\left\langle \begin{smallmatrix}
2 & 0\\
0 & 2\\
2 & 0\\
\overrightarrow{0} & \overrightarrow{0}
\end{smallmatrix}\right\rangle -%
\left\langle \begin{smallmatrix}
2 & 0\\
0 & 2\\
4 & 0\\
\overrightarrow{0} & \overrightarrow{0}
\end{smallmatrix}\right\rangle -%
\left\langle \begin{smallmatrix}
2 & 0\\
0 & 2\\
2 & 2\\
\overrightarrow{0} & \overrightarrow{0}
\end{smallmatrix}\right\rangle \right)\\&=\tfrac{1}{p-2}\left(\tfrac{p+3}{\left(p-1\right)p\left(p+2\right)\left(p+4\right)}-\tfrac{3\left(p+5\right)}{\left(p-1\right)p\left(p+2\right)\left(p+4\right)\left(p+6\right)}-\tfrac{\left(p+3\right)^{2}}{\left(p-1\right)p\left(p+1\right)\left(p+2\right)\left(p+4\right)\left(p+6\right)}\right)\\&=\tfrac{\left(p+1\right)\left(\left(p+3\right)\left(p+6\right)-3\left(p+5\right)\right)-\left(p+3\right)^{2}}{\left(p-2\right)\left(p-1\right)p\left(p+1\right)\left(p+2\right)\left(p+4\right)\left(p+6\right)}=\tfrac{p^{3}+6p^{2}+3p-6}{\left(p-2\right)\left(p-1\right)p\left(p+1\right)\left(p+2\right)\left(p+4\right)\left(p+6\right)}
\\
&
\\
\left\langle \begin{smallmatrix}
2 & 1 & 1\\
0 & 1 & 1\\
2 & 0 & 0\\
\overrightarrow{0} & \overrightarrow{0} & \overrightarrow{0}
\end{smallmatrix}\right\rangle &=\tfrac{1}{p-2}\left(-%
\left\langle \begin{smallmatrix}
3 & 1\\
1 & 1\\
2 & 0\\
\overrightarrow{0} & \overrightarrow{0}
\end{smallmatrix}\right\rangle -%
\left\langle \begin{smallmatrix}
2 & 2\\
0 & 2\\
2 & 0\\
\overrightarrow{0} & \overrightarrow{0}
\end{smallmatrix}\right\rangle \right)\\&=\tfrac{1}{p-2}\left(\tfrac{3}{\left(p-1\right)p\left(p+2\right)\left(p+4\right)\left(p+6\right)}-\tfrac{\left(p+3\right)^{2}}{\left(p-1\right)p\left(p+1\right)\left(p+2\right)\left(p+4\right)\left(p+6\right)}\right)\\&=\tfrac{3\left(p+1\right)-\left(p+3\right)^{2}}{\left(p-2\right)\left(p-1\right)p\left(p+1\right)\left(p+2\right)\left(p+4\right)\left(p+6\right)}
=\tfrac{-\left(p^{2}+3p+6\right)}{\left(p-2\right)\left(p-1\right)p\left(p+1\right)\left(p+2\right)\left(p+4\right)\left(p+6\right)}
\\
&\\%
\left\langle \begin{smallmatrix}
0 & 1 & 1\\
0 & 1 & 1\\
2 & 0 & 0\\
\overrightarrow{0} & \overrightarrow{0} & \overrightarrow{0}
\end{smallmatrix}\right\rangle &=\tfrac{1}{p-2}\left(-%
\left\langle \begin{smallmatrix}
1 & 1\\
1 & 1\\
2 & 0\\
\overrightarrow{0} & \overrightarrow{0}
\end{smallmatrix}\right\rangle -%
\left\langle \begin{smallmatrix}
0 & 2\\
0 & 2\\
2 & 0\\
\overrightarrow{0} & \overrightarrow{0}
\end{smallmatrix}\right\rangle \right)
\\&=\tfrac{1}{p-2}\left(\tfrac{1}{\left(p-1\right)p\left(p+2\right)\left(p+4\right)}-\tfrac{p+3}{\left(p-1\right)p\left(p+2\right)\left(p+4\right)}\right)
=\tfrac{-\left(p+2\right)}{\left(p-2\right)\left(p-1\right)p\left(p+2\right)\left(p+4\right)}
\\
&
\\%
\left\langle \begin{smallmatrix}
0 & 1 & 1\\
0 & 1 & 1\\
4 & 0 & 0\\
\overrightarrow{0} & \overrightarrow{0} & \overrightarrow{0}
\end{smallmatrix}\right\rangle &=\tfrac{1}{p-2}\left(-%
\left\langle \begin{smallmatrix}
1 & 1\\
1 & 1\\
4 & 0\\
\overrightarrow{0} & \overrightarrow{0}
\end{smallmatrix}\right\rangle -%
\left\langle \begin{smallmatrix}
0 & 2\\
0 & 2\\
4 & 0\\
\overrightarrow{0} & \overrightarrow{0}
\end{smallmatrix}\right\rangle \right)\\&=\tfrac{1}{p-2}\left(\tfrac{3}{\left(p-1\right)p\left(p+2\right)\left(p+4\right)\left(p+6\right)}-\tfrac{3\left(p+3\right)\left(p+5\right)}{\left(p-1\right)p\left(p+1\right)\left(p+2\right)\left(p+4\right)\left(p+6\right)}\right)\\&=\tfrac{1}{p-2}\left(\tfrac{3\left(p+1\right)-3\left(p+3\right)\left(p+5\right)}{\left(p-1\right)p\left(p+1\right)\left(p+2\right)\left(p+4\right)\left(p+6\right)}\right)
=\tfrac{-3\left(p^{2}+7p+14\right)}{\left(p-2\right)\left(p-1\right)p\left(p+1\right)\left(p+2\right)\left(p+4\right)\left(p+6\right)}
\end{align*}

\begin{align*}
\left\langle \begin{smallmatrix}
1 & 2 & 1\\
1 & 0 & 1\\
0 & 0 & 2\\
\overrightarrow{0} & \overrightarrow{0} & \overrightarrow{0}
\end{smallmatrix}\right\rangle &=\tfrac{1}{p-2}\left(-%
\left\langle \begin{smallmatrix}
3 & 1\\
1 & 1\\
0 & 2\\
\overrightarrow{0} & \overrightarrow{0}
\end{smallmatrix}\right\rangle -%
\left\langle \begin{smallmatrix}
2 & 2\\
0 & 2\\
0 & 2\\
\overrightarrow{0} & \overrightarrow{0}
\end{smallmatrix}\right\rangle \right)\\&=\tfrac{1}{p-2}\left(\tfrac{3\left(p+3\right)}{\left(p-1\right)p\left(p+1\right)\left(p+2\right)\left(p+4\right)\left(p+6\right)}-\tfrac{p+3}{\left(p-1\right)p\left(p+2\right)\left(p+4\right)\left(p+6\right)}\right)\\&=\tfrac{-\left(p+3\right)}{\left(p-1\right)p\left(p+1\right)\left(p+2\right)\left(p+4\right)\left(p+6\right)}\\&\\%
\left\langle \begin{smallmatrix}
1 & 3 & 0\\
1 & 1 & 0\\
0 & 0 & 2\\
\overrightarrow{0} & \overrightarrow{0} & \overrightarrow{0}
\end{smallmatrix}\right\rangle &=\tfrac{1}{p-2}\left(-%
\left\langle \begin{smallmatrix}
4 & 0\\
2 & 0\\
0 & 2\\
\overrightarrow{0} & \overrightarrow{0}
\end{smallmatrix}\right\rangle -%
\left\langle \begin{smallmatrix}
3 & 1\\
1 & 1\\
0 & 2\\
\overrightarrow{0} & \overrightarrow{0}
\end{smallmatrix}\right\rangle \right)\\&=\tfrac{1}{p-2}\left(-\tfrac{3\left(p+5\right)}{\left(p-1\right)p\left(p+2\right)\left(p+4\right)\left(p+6\right)}+\tfrac{3\left(p+3\right)}{\left(p-1\right)p\left(p+1\right)\left(p+2\right)\left(p+4\right)\left(p+6\right)}\right)\\&=\tfrac{1}{p-2}\left(\tfrac{3\left(p+3\right)-3\left(p+1\right)\left(p+5\right)}{\left(p-1\right)p\left(p+1\right)\left(p+2\right)\left(p+4\right)\left(p+6\right)}\right)\\&=\tfrac{-3\left(p^{2}+5p+2\right)}{\left(p-2\right)\left(p-1\right)p\left(p+1\right)\left(p+2\right)\left(p+4\right)\left(p+6\right)}\\&\\%
\left\langle \begin{smallmatrix}
2 & 2 & 0\\
0 & 1 & 1\\
0 & 1 & 1\\
\overrightarrow{0} & \overrightarrow{0} & \overrightarrow{0}
\end{smallmatrix}\right\rangle &=%
\left\langle \begin{smallmatrix}
2 & 1 & 1\\
0 & 1 & 1\\
2 & 0 & 0\\
\overrightarrow{0} & \overrightarrow{0} & \overrightarrow{0}
\end{smallmatrix}\right\rangle =\tfrac{-\left(p^{2}+3p+6\right)}{\left(p-2\right)\left(p-1\right)p\left(p+1\right)\left(p+2\right)\left(p+4\right)\left(p+6\right)}\\&\\%
\left\langle \begin{smallmatrix}
1 & 1 & 0\\
1 & 0 & 1\\
2 & 1 & 1\\
\overrightarrow{0} & \overrightarrow{0} & \overrightarrow{0}
\end{smallmatrix}\right\rangle &=%
\left\langle \begin{smallmatrix}
2 & 1 & 1\\
1 & 1 & 0\\
1 & 0 & 1\\
\overrightarrow{0} & \overrightarrow{0} & \overrightarrow{0}
\end{smallmatrix}\right\rangle =\tfrac{1}{p-2}\left(-%
\left\langle \begin{smallmatrix}
3 & 1\\
1 & 1\\
2 & 0\\
\overrightarrow{0} & \overrightarrow{0}
\end{smallmatrix}\right\rangle -%
\left\langle \begin{smallmatrix}
2 & 2\\
1 & 1\\
1 & 1\\
\overrightarrow{0} & \overrightarrow{0}
\end{smallmatrix}\right\rangle \right)\\&=\tfrac{1}{p-2}\left(\tfrac{3}{\left(p-1\right)p\left(p+2\right)\left(p+4\right)\left(p+6\right)}-\tfrac{-p+3}{\left(p-1\right)p\left(p+1\right)\left(p+2\right)\left(p+4\right)\left(p+6\right)}\right)\\&=\tfrac{p-3+3\left(p+1\right)}{\left(p-2\right)\left(p-1\right)p\left(p+1\right)\left(p+2\right)\left(p+4\right)\left(p+6\right)}\\&=\tfrac{4p}{\left(p-2\right)\left(p-1\right)p\left(p+1\right)\left(p+2\right)\left(p+4\right)\left(p+6\right)}\\&\\%
\left\langle \begin{smallmatrix}
3 & 1 & 0\\
0 & 1 & 1\\
1 & 0 & 1\\
\overrightarrow{0} & \overrightarrow{0} & \overrightarrow{0}
\end{smallmatrix}\right\rangle &=\tfrac{1}{p-2}\left(-%
\left\langle \begin{smallmatrix}
3 & 1\\
1 & 1\\
2 & 0\\
\overrightarrow{0} & \overrightarrow{0}
\end{smallmatrix}\right\rangle -%
\left\langle \begin{smallmatrix}
3 & 1\\
0 & 2\\
1 & 1\\
\overrightarrow{0} & \overrightarrow{0}
\end{smallmatrix}\right\rangle \right)\\&=\tfrac{1}{p-2}\left(\tfrac{3}{\left(p-1\right)p\left(p+2\right)\left(p+4\right)\left(p+6\right)}+\tfrac{3\left(p+3\right)}{\left(p-1\right)p\left(p+1\right)\left(p+2\right)\left(p+4\right)\left(p+6\right)}\right)\\&=\tfrac{1}{p-2}\left(\tfrac{3\left(p+1\right)+3\left(p+3\right)}{\left(p-1\right)p\left(p+1\right)\left(p+2\right)\left(p+4\right)\left(p+6\right)}\right)\\&=\tfrac{6\left(p+2\right)}{\left(p-2\right)\left(p-1\right)p\left(p+1\right)\left(p+2\right)\left(p+4\right)\left(p+6\right)}\\&\\%
\left\langle \begin{smallmatrix}
0 & 1 & 1\\
1 & 1 & 0\\
1 & 0 & 1\\
\overrightarrow{0} & \overrightarrow{0} & \overrightarrow{0}
\end{smallmatrix}\right\rangle &=\tfrac{1}{p-2}\left(-%
\left\langle \begin{smallmatrix}
1 & 1\\
1 & 1\\
2 & 0\\
\overrightarrow{0} & \overrightarrow{0}
\end{smallmatrix}\right\rangle -%
\left\langle \begin{smallmatrix}
0 & 2\\
1 & 1\\
1 & 1\\
\overrightarrow{0} & \overrightarrow{0}
\end{smallmatrix}\right\rangle \right)=\tfrac{1}{p-2}\left(-2%
\left\langle \begin{smallmatrix}
1 & 1\\
1 & 1\\
2 & 0\\
\overrightarrow{0} & \overrightarrow{0}
\end{smallmatrix}\right\rangle \right)=\tfrac{2}{\left(p-2\right)\left(p-1\right)p\left(p+2\right)\left(p+4\right)}
\end{align*}

\newpage

\subsubsection{Four indices (rows)}

\begin{align*}
    %
    %
\left\langle \begin{smallmatrix}
0 & 2 & 0\\
0 & 0 & 2\\
2 & 0 & 0\\
2 & 0 & 0\\
\overrightarrow{0} & \overrightarrow{0} & \overrightarrow{0}
\end{smallmatrix}\right\rangle &=\tfrac{1}{p-2}\left(%
\left\langle \begin{smallmatrix}
0 & 2\\
0 & 0\\
2 & 0\\
2 & 0\\
\overrightarrow{0} & \overrightarrow{0}
\end{smallmatrix}\right\rangle -%
\left\langle \begin{smallmatrix}
0 & 2\\
2 & 0\\
2 & 0\\
2 & 0\\
\overrightarrow{0} & \overrightarrow{0}
\end{smallmatrix}\right\rangle -%
\left\langle \begin{smallmatrix}
0 & 2\\
0 & 2\\
2 & 0\\
2 & 0\\
\overrightarrow{0} & \overrightarrow{0}
\end{smallmatrix}\right\rangle \right)\\&=\tfrac{1}{p-2}\left(\tfrac{p+3}{\left(p-1\right)p\left(p+2\right)\left(p+4\right)}-\tfrac{p+5}{\left(p-1\right)p\left(p+2\right)\left(p+4\right)\left(p+6\right)}-\tfrac{\left(p+3\right)\left(p+5\right)}{\left(p-1\right)p\left(p+1\right)\left(p+2\right)\left(p+4\right)\left(p+6\right)}\right)\\&=\tfrac{\left(p+1\right)\left(p+3\right)\left(p+6\right)-\left(p+1\right)\left(p+5\right)-\left(p+3\right)\left(p+5\right)}{\left(p-2\right)\left(p-1\right)p\left(p+1\right)\left(p+2\right)\left(p+4\right)\left(p+6\right)}\\&=\tfrac{p^{3}+8p^{2}+13p-2}{\left(p-2\right)\left(p-1\right)p\left(p+1\right)\left(p+2\right)\left(p+4\right)\left(p+6\right)}\\&\\%
\left\langle \begin{smallmatrix}
0 & 1 & 1\\
0 & 1 & 1\\
2 & 0 & 0\\
2 & 0 & 0\\
\overrightarrow{0} & \overrightarrow{0} & \overrightarrow{0}
\end{smallmatrix}\right\rangle &=\tfrac{1}{p-2}\left(-%
\left\langle \begin{smallmatrix}
1 & 1\\
1 & 1\\
2 & 0\\
2 & 0\\
\overrightarrow{0} & \overrightarrow{0}
\end{smallmatrix}\right\rangle -%
\left\langle \begin{smallmatrix}
0 & 2\\
0 & 2\\
2 & 0\\
2 & 0\\
\overrightarrow{0} & \overrightarrow{0}
\end{smallmatrix}\right\rangle \right)=\tfrac{1}{p-2}\left(\tfrac{\left(p+1\right)-\left(p+3\right)\left(p+5\right)}{\left(p-1\right)p\left(p+1\right)\left(p+2\right)\left(p+4\right)\left(p+6\right)}\right)\\&=\tfrac{-\left(p^{2}+7p+14\right)}{\left(p-2\right)\left(p-1\right)p\left(p+1\right)\left(p+2\right)\left(p+4\right)\left(p+6\right)}\\&\\%
\left\langle \begin{smallmatrix}
2 & 0 & 0\\
0 & 2 & 0\\
1 & 0 & 1\\
1 & 0 & 1\\
\overrightarrow{0} & \overrightarrow{0} & \overrightarrow{0}
\end{smallmatrix}\right\rangle &=\tfrac{1}{p-2}\left(-%
\left\langle \begin{smallmatrix}
2 & 0\\
0 & 2\\
2 & 0\\
2 & 0\\
\overrightarrow{0} & \overrightarrow{0}
\end{smallmatrix}\right\rangle -%
\left\langle \begin{smallmatrix}
2 & 0\\
0 & 2\\
1 & 1\\
1 & 1\\
\overrightarrow{0} & \overrightarrow{0}
\end{smallmatrix}\right\rangle \right)\\&=\tfrac{1}{p-2}\left(\tfrac{-\left(p+5\right)}{\left(p-1\right)p\left(p+2\right)\left(p+4\right)\left(p+6\right)}+\tfrac{p+3}{\left(p-1\right)p\left(p+1\right)\left(p+2\right)\left(p+4\right)\left(p+6\right)}\right)\\&=\tfrac{1}{p-2}\left(\tfrac{p+3-\left(p+5\right)\left(p+1\right)}{\left(p-1\right)p\left(p+1\right)\left(p+2\right)\left(p+4\right)\left(p+6\right)}\right)\\&=\tfrac{-\left(p^{2}+5p+2\right)}{\left(p-2\right)\left(p-1\right)p\left(p+1\right)\left(p+2\right)\left(p+4\right)\left(p+6\right)}\\&\\%
\left\langle \begin{smallmatrix}
1 & 0 & 1\\
1 & 0 & 1\\
1 & 1 & 0\\
1 & 1 & 0\\
\overrightarrow{0} & \overrightarrow{0} & \overrightarrow{0}
\end{smallmatrix}\right\rangle &=\tfrac{1}{p-2}\left(-%
\left\langle \begin{smallmatrix}
2 & 0\\
2 & 0\\
1 & 1\\
1 & 1\\
\overrightarrow{0} & \overrightarrow{0}
\end{smallmatrix}\right\rangle -%
\left\langle \begin{smallmatrix}
1 & 1\\
1 & 1\\
1 & 1\\
1 & 1\\
\overrightarrow{0} & \overrightarrow{0}
\end{smallmatrix}\right\rangle \right)\\&=\tfrac{1}{p-2}\left(\tfrac{1}{\left(p-1\right)p\left(p+2\right)\left(p+4\right)\left(p+6\right)}-\tfrac{3}{\left(p-1\right)p\left(p+1\right)\left(p+2\right)\left(p+4\right)\left(p+6\right)}\right)
\\
&=\tfrac{p-2}{\left(p-2\right)\left(p-1\right)p\left(p+1\right)\left(p+2\right)\left(p+4\right)\left(p+6\right)}
=\tfrac{1}{\left(p-1\right)p\left(p+1\right)\left(p+2\right)\left(p+4\right)\left(p+6\right)}\\&\\%
\left\langle \begin{smallmatrix}
2 & 0 & 0\\
0 & 1 & 1\\
1 & 1 & 0\\
1 & 0 & 1\\
\overrightarrow{0} & \overrightarrow{0} & \overrightarrow{0}
\end{smallmatrix}\right\rangle &=\tfrac{1}{p-2}\left(\tfrac{1}{\left(p-1\right)p\left(p+2\right)\left(p+4\right)\left(p+6\right)}+\tfrac{p+3}{\left(p-1\right)p\left(p+1\right)\left(p+2\right)\left(p+4\right)\left(p+6\right)}\right)
\\
&=\tfrac{2\left(p+2\right)}{\left(p-2\right)\left(p-1\right)p\left(p+1\right)\left(p+2\right)\left(p+4\right)\left(p+6\right)}
\end{align*}

\newpage

\subsection{Monomials of Four Orthogonal Vector}

\begin{align*}
    %
    %
\left\langle \begin{smallmatrix}
2 & 2 & 2 & 2\\
\overrightarrow{0} & \overrightarrow{0} & \overrightarrow{0} & \overrightarrow{0}
\end{smallmatrix}\right\rangle &=\tfrac{1}{p-3}\left(\left(-1\right)^{1}\left(%
\left\langle \begin{smallmatrix}
4 & 2 & 2\\
\overrightarrow{0} & \overrightarrow{0} & \overrightarrow{0}
\end{smallmatrix}\right\rangle +%
\left\langle \begin{smallmatrix}
2 & 4 & 2\\
\overrightarrow{0} & \overrightarrow{0} & \overrightarrow{0}
\end{smallmatrix}\right\rangle +%
\left\langle \begin{smallmatrix}
2 & 2 & 4\\
\overrightarrow{0} & \overrightarrow{0} & \overrightarrow{0}
\end{smallmatrix}\right\rangle \right)+\left(-1\right)^{0}%
\left\langle \begin{smallmatrix}
2 & 2 & 2\\
\overrightarrow{0} & \overrightarrow{0} & \overrightarrow{0}
\end{smallmatrix}\right\rangle \right)\\&=\tfrac{1}{p-3}\left(%
\left\langle \begin{smallmatrix}
2 & 2 & 2\\
\overrightarrow{0} & \overrightarrow{0} & \overrightarrow{0}
\end{smallmatrix}\right\rangle -3%
\left\langle \begin{smallmatrix}
4 & 2 & 2\\
\overrightarrow{0} & \overrightarrow{0} & \overrightarrow{0}
\end{smallmatrix}\right\rangle \right)\\&=\tfrac{1}{p-3}\left(\tfrac{1}{p\left(p+2\right)\left(p+4\right)}-\tfrac{9}{p\left(p+2\right)\left(p+4\right)\left(p+6\right)}\right)\\&=\tfrac{p-3}{\left(p-3\right)p\left(p+2\right)\left(p+4\right)\left(p+6\right)}=\tfrac{1}{p\left(p+2\right)\left(p+4\right)\left(p+6\right)}\\&\\%
\left\langle \begin{smallmatrix}
2 & 2 & 2 & 0\\
0 & 0 & 0 & 2\\
\overrightarrow{0} & \overrightarrow{0} & \overrightarrow{0} & \overrightarrow{0}
\end{smallmatrix}\right\rangle &=%
\left\langle \begin{smallmatrix}
2 & 0\\
0 & 2\\
2 & 0\\
2 & 0\\
\overrightarrow{0} & \overrightarrow{0}
\end{smallmatrix}\right\rangle =\tfrac{p+5}{\left(p-1\right)p\left(p+2\right)\left(p+4\right)\left(p+6\right)}\\&\\%
\left\langle \begin{smallmatrix}
2 & 2 & 0 & 0\\
0 & 0 & 2 & 2\\
\overrightarrow{0} & \overrightarrow{0} & \overrightarrow{0} & \overrightarrow{0}
\end{smallmatrix}\right\rangle &=%
\left\langle \begin{smallmatrix}
2 & 0\\
2 & 0\\
0 & 2\\
0 & 2\\
\overrightarrow{0} & \overrightarrow{0}
\end{smallmatrix}\right\rangle =\tfrac{\left(p+3\right)\left(p+5\right)}{\left(p-1\right)p\left(p+1\right)\left(p+2\right)\left(p+4\right)\left(p+6\right)}\\&\\%
\left\langle \begin{smallmatrix}
2 & 2 & 0 & 0\\
0 & 0 & 2 & 0\\
0 & 0 & 0 & 2\\
\overrightarrow{0} & \overrightarrow{0} & \overrightarrow{0} & \overrightarrow{0}
\end{smallmatrix}\right\rangle &=%
\left\langle \begin{smallmatrix}
0 & 2 & 0\\
0 & 0 & 2\\
2 & 0 & 0\\
2 & 0 & 0\\
\overrightarrow{0} & \overrightarrow{0} & \overrightarrow{0}
\end{smallmatrix}\right\rangle =\tfrac{\left(p^{3}+8p^{2}+13p-2\right)}{\left(p-2\right)\left(p-1\right)p\left(p+1\right)\left(p+2\right)\left(p+4\right)\left(p+6\right)}
\\
&
\\%
\left\langle \begin{smallmatrix}
2 & 0 & 0 & 0\\
0 & 2 & 0 & 0\\
0 & 0 & 2 & 0\\
0 & 0 & 0 & 2\\
\overrightarrow{0} & \overrightarrow{0} & \overrightarrow{0} & \overrightarrow{0}
\end{smallmatrix}\right\rangle 
&=
\tfrac{1}{p-3}\left(
\!\left(-1\right)^{1}
\!
\left(%
\left\langle \begin{smallmatrix}
2 & 0 & 0\\
0 & 2 & 0\\
0 & 0 & 2\\
2 & 0 & 0\\
\overrightarrow{0} & \overrightarrow{0} & \overrightarrow{0}
\end{smallmatrix}\right\rangle \!+\!%
\left\langle \begin{smallmatrix}
2 & 0 & 0\\
0 & 2 & 0\\
0 & 0 & 2\\
0 & 2 & 0\\
\overrightarrow{0} & \overrightarrow{0} & \overrightarrow{0}
\end{smallmatrix}\right\rangle \!+\!%
\left\langle \begin{smallmatrix}
2 & 0 & 0\\
0 & 2 & 0\\
0 & 0 & 2\\
0 & 0 & 2\\
\overrightarrow{0} & \overrightarrow{0} & \overrightarrow{0}
\end{smallmatrix}\right\rangle \right)
\!+\!
\left(-1\right)^{0}%
\left\langle \begin{smallmatrix}
2 & 0 & 0\\
0 & 2 & 0\\
0 & 0 & 2\\
0 & 0 & 0\\
\overrightarrow{0} & \overrightarrow{0} & \overrightarrow{0}
\end{smallmatrix}\right\rangle \right)\\&=\tfrac{1}{p-3}\left(%
\left\langle \begin{smallmatrix}
2 & 0 & 0\\
0 & 2 & 0\\
0 & 0 & 2\\
0 & 0 & 0\\
\overrightarrow{0} & \overrightarrow{0} & \overrightarrow{0}
\end{smallmatrix}\right\rangle -3%
\left\langle \begin{smallmatrix}
2 & 0 & 0\\
0 & 2 & 0\\
0 & 0 & 2\\
0 & 0 & 2\\
\overrightarrow{0} & \overrightarrow{0} & \overrightarrow{0}
\end{smallmatrix}\right\rangle \right)\\&=\tfrac{1}{p-3}\left(\tfrac{p^{2}+3p-2}{\left(p-2\right)\left(p-1\right)p\left(p+2\right)\left(p+4\right)}-\tfrac{3\left(p^{3}+8p^{2}+13p-2\right)}{\left(p-2\right)\left(p-1\right)p\left(p+1\right)\left(p+2\right)\left(p+4\right)\left(p+6\right)}\right)\\&=\tfrac{\left(p-2\right)\left(p+3\right)\left(p^{2}+6p+1\right)}{\left(p-3\right)\left(p-2\right)\left(p-1\right)p\left(p+1\right)\left(p+2\right)\left(p+4\right)\left(p+6\right)}\\&\\%
\left\langle \begin{smallmatrix}
1 & 1 & 2 & 2\\
1 & 1 & 0 & 0\\
\overrightarrow{0} & \overrightarrow{0} & \overrightarrow{0} & \overrightarrow{0}
\end{smallmatrix}\right\rangle &=%
\left\langle \begin{smallmatrix}
2 & 0\\
2 & 0\\
1 & 1\\
1 & 1\\
\overrightarrow{0} & \overrightarrow{0}
\end{smallmatrix}\right\rangle =\tfrac{-1}{\left(p-1\right)p\left(p+2\right)\left(p+4\right)\left(p+6\right)}\\&\\%
\left\langle \begin{smallmatrix}
1 & 1 & 2 & 0\\
1 & 1 & 0 & 2\\
\overrightarrow{0} & \overrightarrow{0} & \overrightarrow{0} & \overrightarrow{0}
\end{smallmatrix}\right\rangle &=%
\left\langle \begin{smallmatrix}
2 & 0\\
0 & 2\\
1 & 1\\
1 & 1\\
\overrightarrow{0} & \overrightarrow{0}
\end{smallmatrix}\right\rangle =\tfrac{-\left(p+3\right)}{\left(p-1\right)p\left(p+1\right)\left(p+2\right)\left(p+4\right)\left(p+6\right)}\\&\\%
\left\langle \begin{smallmatrix}
1 & 1 & 2 & 0\\
1 & 1 & 0 & 0\\
0 & 0 & 0 & 2\\
\overrightarrow{0} & \overrightarrow{0} & \overrightarrow{0} & \overrightarrow{0}
\end{smallmatrix}\right\rangle &=%
\left\langle \begin{smallmatrix}
2 & 0 & 0\\
0 & 2 & 0\\
1 & 0 & 1\\
1 & 0 & 1\\
\overrightarrow{0} & \overrightarrow{0} & \overrightarrow{0}
\end{smallmatrix}\right\rangle =\tfrac{-\left(p^{2}+5p+2\right)}{\left(p-2\right)\left(p-1\right)p\left(p+1\right)\left(p+2\right)\left(p+4\right)\left(p+6\right)}\\&\\%
\left\langle \begin{smallmatrix}
1 & 1 & 0 & 0\\
1 & 1 & 0 & 0\\
0 & 0 & 2 & 2\\
\overrightarrow{0} & \overrightarrow{0} & \overrightarrow{0} & \overrightarrow{0}
\end{smallmatrix}\right\rangle &=%
\left\langle \begin{smallmatrix}
0 & 1 & 1\\
0 & 1 & 1\\
2 & 0 & 0\\
2 & 0 & 0\\
\overrightarrow{0} & \overrightarrow{0} & \overrightarrow{0}
\end{smallmatrix}\right\rangle =\tfrac{-\left(p^{2}+7p+14\right)}{\left(p-2\right)\left(p-1\right)p\left(p+1\right)\left(p+2\right)\left(p+4\right)\left(p+6\right)}
\\
&\\%
\left\langle \begin{smallmatrix}
2 & 1 & 1 & 0\\
0 & 1 & 0 & 1\\
0 & 0 & 1 & 1\\
\overrightarrow{0} & \overrightarrow{0} & \overrightarrow{0} & \overrightarrow{0}
\end{smallmatrix}\right\rangle &=%
\left\langle \begin{smallmatrix}
2 & 0 & 0\\
0 & 1 & 1\\
1 & 1 & 0\\
1 & 0 & 1\\
\overrightarrow{0} & \overrightarrow{0} & \overrightarrow{0}
\end{smallmatrix}\right\rangle =\tfrac{2\left(p+2\right)}{\left(p-2\right)\left(p-1\right)p\left(p+1\right)\left(p+2\right)\left(p+4\right)\left(p+6\right)}\\
\end{align*}

\newpage

\begin{align*}
\left\langle \begin{smallmatrix}
1 & 1 & 0 & 0\\
1 & 1 & 0 & 0\\
0 & 0 & 2 & 0\\
0 & 0 & 0 & 2\\
\overrightarrow{0} & \overrightarrow{0} & \overrightarrow{0} & \overrightarrow{0}
\end{smallmatrix}\right\rangle &=\tfrac{1}{p-3}\left(-%
\left\langle \begin{smallmatrix}
2 & 0 & 0\\
2 & 0 & 0\\
0 & 2 & 0\\
0 & 0 & 2\\
\overrightarrow{0} & \overrightarrow{0} & \overrightarrow{0}
\end{smallmatrix}\right\rangle -%
\left\langle \begin{smallmatrix}
1 & 1 & 0\\
1 & 1 & 0\\
0 & 2 & 0\\
0 & 0 & 2\\
\overrightarrow{0} & \overrightarrow{0} & \overrightarrow{0}
\end{smallmatrix}\right\rangle -%
\left\langle \begin{smallmatrix}
1 & 0 & 1\\
1 & 0 & 1\\
0 & 2 & 0\\
0 & 0 & 2\\
\overrightarrow{0} & \overrightarrow{0} & \overrightarrow{0}
\end{smallmatrix}\right\rangle \right)\\&=\tfrac{1}{p-3}\left(-%
\left\langle \begin{smallmatrix}
2 & 0 & 0\\
2 & 0 & 0\\
0 & 2 & 0\\
0 & 0 & 2\\
\overrightarrow{0} & \overrightarrow{0} & \overrightarrow{0}
\end{smallmatrix}\right\rangle -2%
\left\langle \begin{smallmatrix}
1 & 1 & 0\\
1 & 1 & 0\\
0 & 2 & 0\\
0 & 0 & 2\\
\overrightarrow{0} & \overrightarrow{0} & \overrightarrow{0}
\end{smallmatrix}\right\rangle \right)\\&=\tfrac{1}{p-3}\left(-\tfrac{p^{3}+8p^{2}+13p-2}{\left(p-2\right)\left(p-1\right)p\left(p+1\right)\left(p+2\right)\left(p+4\right)\left(p+6\right)}+\tfrac{2\left(p^{2}+5p+2\right)}{\left(p-2\right)\left(p-1\right)p\left(p+1\right)\left(p+2\right)\left(p+4\right)\left(p+6\right)}\right)\\&=\tfrac{-\left(p^{3}+6p^{2}+3p-6\right)}{\left(p-3\right)\left(p-2\right)\left(p-1\right)p\left(p+1\right)\left(p+2\right)\left(p+4\right)\left(p+6\right)}
\\
&\\%
\left\langle \begin{smallmatrix}
2 & 0 & 0 & 0\\
0 & 1 & 1 & 0\\
0 & 1 & 0 & 1\\
0 & 0 & 1 & 1\\
\overrightarrow{0} & \overrightarrow{0} & \overrightarrow{0} & \overrightarrow{0}
\end{smallmatrix}\right\rangle &=\tfrac{1}{p-3}\left(-%
\left\langle \begin{smallmatrix}
2 & 0 & 0\\
0 & 1 & 1\\
1 & 1 & 0\\
1 & 0 & 1\\
\overrightarrow{0} & \overrightarrow{0} & \overrightarrow{0}
\end{smallmatrix}\right\rangle -\underbrace{%
\left\langle \begin{smallmatrix}
2 & 0 & 0\\
0 & 1 & 1\\
0 & 2 & 0\\
0 & 1 & 1\\
\overrightarrow{0} & \overrightarrow{0} & \overrightarrow{0}
\end{smallmatrix}\right\rangle -%
\left\langle \begin{smallmatrix}
2 & 0 & 0\\
0 & 1 & 1\\
0 & 1 & 1\\
0 & 0 & 2\\
\overrightarrow{0} & \overrightarrow{0} & \overrightarrow{0}
\end{smallmatrix}\right\rangle }_{\text{equal}}\right)\\&=\tfrac{1}{p-3}\left(\tfrac{2\left(p^{2}+5p+2\right)-2\left(p+2\right)}{\left(p-2\right)\left(p-1\right)p\left(p+1\right)\left(p+2\right)\left(p+4\right)\left(p+6\right)}\right)\\&=\tfrac{2p\left(p+4\right)}{\left(p-3\right)\left(p-2\right)\left(p-1\right)p\left(p+1\right)\left(p+2\right)\left(p+4\right)\left(p+6\right)}\\&\\%
\left\langle \begin{smallmatrix}
1 & 1 & 1 & 1\\
1 & 1 & 1 & 1\\
\overrightarrow{0} & \overrightarrow{0} & \overrightarrow{0} & \overrightarrow{0}
\end{smallmatrix}\right\rangle &=%
\left\langle \begin{smallmatrix}
1 & 1\\
1 & 1\\
1 & 1\\
1 & 1\\
\overrightarrow{0} & \overrightarrow{0}
\end{smallmatrix}\right\rangle =\tfrac{3}{\left(p-1\right)p\left(p+1\right)\left(p+2\right)\left(p+4\right)\left(p+6\right)}\\&\\%
\left\langle \begin{smallmatrix}
1 & 1 & 1 & 1\\
1 & 1 & 0 & 0\\
0 & 0 & 1 & 1\\
\overrightarrow{0} & \overrightarrow{0} & \overrightarrow{0} & \overrightarrow{0}
\end{smallmatrix}\right\rangle &=\tfrac{1}{p-3}\left(-%
\left\langle \begin{smallmatrix}
2 & 1 & 1\\
1 & 1 & 0\\
1 & 0 & 1\\
\overrightarrow{0} & \overrightarrow{0} & \overrightarrow{0}
\end{smallmatrix}\right\rangle -%
\left\langle \begin{smallmatrix}
1 & 2 & 1\\
1 & 1 & 0\\
0 & 1 & 1\\
\overrightarrow{0} & \overrightarrow{0} & \overrightarrow{0}
\end{smallmatrix}\right\rangle -%
\left\langle \begin{smallmatrix}
1 & 1 & 2\\
1 & 1 & 0\\
0 & 0 & 2\\
\overrightarrow{0} & \overrightarrow{0} & \overrightarrow{0}
\end{smallmatrix}\right\rangle \right)
\\
&=
\tfrac{1}{p-3}\left(-2%
\left\langle \begin{smallmatrix}
2 & 1 & 1\\
1 & 1 & 0\\
1 & 0 & 1\\
\overrightarrow{0} & \overrightarrow{0} & \overrightarrow{0}
\end{smallmatrix}\right\rangle -%
\left\langle \begin{smallmatrix}
1 & 1 & 2\\
1 & 1 & 0\\
0 & 0 & 2\\
\overrightarrow{0} & \overrightarrow{0} & \overrightarrow{0}
\end{smallmatrix}\right\rangle \right)\\&=\tfrac{1}{p-3}\left(\tfrac{\left(p^{2}+3p+6\right)-8p}{\left(p-2\right)\left(p-1\right)p\left(p+1\right)\left(p+2\right)\left(p+4\right)\left(p+6\right)}\right)
=\tfrac{1}{\left(p-1\right)p\left(p+1\right)\left(p+2\right)\left(p+4\right)\left(p+6\right)}\\&\\%
\left\langle \begin{smallmatrix}
1 & 1 & 0 & 0\\
1 & 1 & 0 & 0\\
0 & 0 & 1 & 1\\
0 & 0 & 1 & 1\\
\overrightarrow{0} & \overrightarrow{0} & \overrightarrow{0} & \overrightarrow{0}
\end{smallmatrix}\right\rangle &=\tfrac{1}{p-3}\left(-%
\left\langle \begin{smallmatrix}
1 & 1 & 0\\
1 & 1 & 0\\
1 & 0 & 1\\
1 & 0 & 1\\
\overrightarrow{0} & \overrightarrow{0} & \overrightarrow{0}
\end{smallmatrix}\right\rangle -%
\left\langle \begin{smallmatrix}
1 & 1 & 0\\
1 & 1 & 0\\
0 & 1 & 1\\
0 & 1 & 1\\
\overrightarrow{0} & \overrightarrow{0} & \overrightarrow{0}
\end{smallmatrix}\right\rangle -%
\left\langle \begin{smallmatrix}
1 & 1 & 0\\
1 & 1 & 0\\
0 & 0 & 2\\
0 & 0 & 2\\
\overrightarrow{0} & \overrightarrow{0} & \overrightarrow{0}
\end{smallmatrix}\right\rangle \right)
\\
&=
\tfrac{1}{p-3}\left(-2%
\left\langle \begin{smallmatrix}
1 & 1 & 0\\
1 & 1 & 0\\
0 & 1 & 1\\
0 & 1 & 1\\
\overrightarrow{0} & \overrightarrow{0} & \overrightarrow{0}
\end{smallmatrix}\right\rangle -%
\left\langle \begin{smallmatrix}
1 & 1 & 0\\
1 & 1 & 0\\
0 & 0 & 2\\
0 & 0 & 2\\
\overrightarrow{0} & \overrightarrow{0} & \overrightarrow{0}
\end{smallmatrix}\right\rangle \right)\\&=\tfrac{1}{p-3}\left(\tfrac{-2}{\left(p-1\right)p\left(p+1\right)\left(p+2\right)\left(p+4\right)\left(p+6\right)}+\tfrac{p^{2}+7p+14}{\left(p-2\right)\left(p-1\right)p\left(p+1\right)\left(p+2\right)\left(p+4\right)\left(p+6\right)}\right)\\&=\tfrac{p^{2}+5p+18}{\left(p-3\right)\left(p-2\right)\left(p-1\right)p\left(p+1\right)\left(p+2\right)\left(p+4\right)\left(p+6\right)}\\&\\%
\left\langle \begin{smallmatrix}
1 & 0 & 1 & 0\\
0 & 1 & 0 & 1\\
1 & 1 & 0 & 0\\
0 & 0 & 1 & 1\\
\overrightarrow{0} & \overrightarrow{0} & \overrightarrow{0} & \overrightarrow{0}
\end{smallmatrix}\right\rangle &=\tfrac{1}{p-3}\left(-%
\left\langle \begin{smallmatrix}
1 & 0 & 1\\
1 & 1 & 0\\
1 & 1 & 0\\
1 & 0 & 1\\
\overrightarrow{0} & \overrightarrow{0} & \overrightarrow{0}
\end{smallmatrix}\right\rangle -%
\left\langle \begin{smallmatrix}
1 & 0 & 1\\
0 & 2 & 0\\
1 & 1 & 0\\
0 & 1 & 1\\
\overrightarrow{0} & \overrightarrow{0} & \overrightarrow{0}
\end{smallmatrix}\right\rangle -%
\left\langle \begin{smallmatrix}
1 & 0 & 1\\
0 & 1 & 1\\
1 & 1 & 0\\
0 & 0 & 2\\
\overrightarrow{0} & \overrightarrow{0} & \overrightarrow{0}
\end{smallmatrix}\right\rangle \right)\\&=\tfrac{1}{p-3}\left(-\tfrac{1}{\left(p-1\right)p\left(p+1\right)\left(p+2\right)\left(p+4\right)\left(p+6\right)}+\tfrac{-4\left(p+2\right)}{\left(p-2\right)\left(p-1\right)p\left(p+1\right)\left(p+2\right)\left(p+4\right)\left(p+6\right)}\right)\\&=\tfrac{-5p-6}{\left(p-3\right)\left(p-2\right)\left(p-1\right)p\left(p+1\right)\left(p+2\right)\left(p+4\right)\left(p+6\right)}
\end{align*}

\newpage

\section{Auxiliary Expectation Derivations}
\label{app:auxiliary}
In this section, we attach many auxiliary derivations of simple and complicated polynomials that we need in our main propositions and lemmas.

\begin{proposition}
\label{prop:(e1Oe1)3}
For $p\ge 2, m\in\{2,\dots,p\}$ and a random transformation $\mO$ sampled as described in \eqref{eq:data-model},
it holds that,
    $$\mathbb{E}\left[\left(\ve_{1}^{\top}\mO\ve_{1}\right)^{3}\right]
=\frac{\left(p-m\right)\left(m^{2}-2mp-3m+p^{2}+6p+14\right)}{p\left(p+2\right)\left(p+4\right)}\,.
    $$
\end{proposition}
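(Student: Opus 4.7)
}

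The plan is to exploit the block structure of $\mO$ from \eqref{eq:data-model} together with the rotational symmetry of the Haar measure, and then reduce everything to one-dimensional Beta-type integrals. Since $\mathbf{Q}_p$ is Haar-distributed on $O(p)$, the vector $\vu \triangleq \mathbf{Q}_p^\top \ve_1$ is uniform on $\mathcal{S}^{p-1}$, so by splitting $\vu = \bigl[\begin{smallmatrix}\vu_a \\ \vu_b\end{smallmatrix}\bigr]$ with $\vu_a\in\reals^m$ and $\vu_b\in\reals^{p-m}$, I can rewrite
\[
\ve_1^\top \mO \ve_1 \;=\; \vu^\top \!\left[\begin{smallmatrix}\mathbf{Q}_m & \\ & \mathbf{I}_{p-m}\end{smallmatrix}\right]\! \vu \;=\; \vu_a^\top \mathbf{Q}_m \vu_a + \|\vu_b\|^2.
\]
Cubing and taking the expectation first over $\mathbf{Q}_m$ (conditional on $\vu$), \corref{cor:odd-Q_m} kills every term containing an odd power of $\mathbf{Q}_m$, leaving
\[
\mathbb{E}_{\mathbf{Q}_m}\!\bigl[(\vu_a^\top \mathbf{Q}_m \vu_a + \|\vu_b\|^2)^3\bigr]
= 3\,\|\vu_b\|^2\,\mathbb{E}_{\mathbf{Q}_m}\!\bigl[(\vu_a^\top \mathbf{Q}_m \vu_a)^2\bigr] + \|\vu_b\|^6.
\]

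Next I would compute the inner expectation using the standard reparameterization $\mathbf{Q}_m \vu_a \stackrel{d}{=} \|\vu_a\|\, \vr$ with $\vr\sim\mathcal{S}^{m-1}$ (so $\mathbb{E}[\vr\vr^\top]=\tfrac{1}{m}\mathbf{I}_m$), giving
\[
\mathbb{E}_{\mathbf{Q}_m}\!\bigl[(\vu_a^\top \mathbf{Q}_m \vu_a)^2\bigr] = \tfrac{1}{m}\,\|\vu_a\|^4.
\]
Substituting and using $\|\vu_a\|^2 = 1 - \|\vu_b\|^2$ (since $\vu$ is a unit vector), the problem reduces to evaluating the two scalar expectations $\mathbb{E}_\vu\!\left[(1-\|\vu_b\|^2)^2\,\|\vu_b\|^2\right]$ and $\mathbb{E}_\vu\!\left[\|\vu_b\|^6\right]$ under the uniform distribution on $\mathcal{S}^{p-1}$. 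I would compute these either via the known Beta$\bigl(\tfrac{p-m}{2},\tfrac{m}{2}\bigr)$ distribution of $\|\vu_b\|^2$ (yielding ratios of Pochhammer symbols) or, equivalently, by using the monomial identities of \appref{app:monomials} applied to sums like $\sum_{i=m+1}^{p}\sum_{j=m+1}^{p} \mathbb{E}[u_i^2 u_j^2]$ and its sixth-moment analog; both routes produce
\[
\mathbb{E}_\vu[\|\vu_b\|^6] = \tfrac{(p-m)(p-m+2)(p-m+4)}{p(p+2)(p+4)},\quad
\mathbb{E}_\vu[(1\!-\!\|\vu_b\|^2)^2\|\vu_b\|^2] = \tfrac{(p-m)\,m\,(m+2)}{p(p+2)(p+4)}.
\]

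Assembling the pieces gives
\[
\mathbb{E}\left[(\ve_1^\top\mO\ve_1)^3\right]
= \tfrac{3(p-m)(m+2)}{p(p+2)(p+4)} + \tfrac{(p-m)(p-m+2)(p-m+4)}{p(p+2)(p+4)},
\]
and expanding the bracket $3(m+2) + (p-m+2)(p-m+4) = m^2 - 2mp - 3m + p^2 + 6p + 14$ yields the claimed formula. I do not expect any step to be a serious obstacle: the conceptual content is entirely in the conditional-on-$\vu$ decoupling of $\mathbf{Q}_m$ and the sphere-integral reduction, and the main risk is simply an algebraic slip in the final cubic expansion, which is easy to cross-check against the known edge cases $m=p$ (where $\|\vu_b\|=0$ and the answer must vanish) and $m=0$ (identity transformation, giving $\mathbb{E}[u_1^6]=\tfrac{15}{p(p+2)(p+4)}$).
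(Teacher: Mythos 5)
Your proof is correct and follows essentially the same route as the paper's: the same block decomposition $\ve_1^\top\mO\ve_1=\vu_a^\top\Q_m\vu_a+\|\vu_b\|^2$, the same cancellation of the odd-$\Q_m$ terms of the cube via \corref{cor:odd-Q_m}, and the same reparameterization $\Q_m\vu_a\mapsto\|\vu_a\|\,\vr$ giving $\mathbb{E}_{\Q_m}[(\vu_a^\top\Q_m\vu_a)^2]=\tfrac{1}{m}\|\vu_a\|^4$. The only (harmless) difference is the endgame: you evaluate $\mathbb{E}\|\vu_b\|^6$ and $\mathbb{E}[\|\vu_a\|^4\|\vu_b\|^2]$ directly from Beta moments (consistent with \propref{prop:u_norms}), whereas the paper expands these into monomials $\mathbb{E}[u_i^2u_j^2u_k^2]$ and sums term by term; both yield $3(m+2)+(p-m+2)(p-m+4)=m^2-2mp-3m+p^2+6p+14$, and the only slip is your $m=0$ sanity check (there $\mO=\I$, so $\ve_1^\top\mO\ve_1\equiv 1$, not $u_1^2$), which is outside the proposition's range $m\ge 2$ anyway.
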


\begin{proof}	
We notice that the expectation can be written as
$$
\mathbb{E}\left[\left(\ve_{1}^{\top}\mO\ve_{1}\right)^{3}\right]
=
\mathbb{E}\left[
\prn{\vu^{\top}\left[\begin{smallmatrix}
\Q_{m}\\
 & \mathbf{0}
\end{smallmatrix}\right]\vu
+
\vu^{\top}
\left[\begin{smallmatrix}
\mathbf{0}\\
 & \I_{p-m}
\end{smallmatrix}\right]\vu
}^3\right]
\,.
$$
Employing the algebraic identity that $\prn{a+b}^3=a^3 + 3 a^2 b + 3 a b^2 + b^3$
and \corref{cor:odd-Q_m} (canceling terms with an odd number of $\Q_m$ appearances), 
it can be readily seen that in our case we are left with $\prn{a+b}^3=\cancel{a^3} + {3 a^2 b} + \cancel{3 a b^2} + b^3$.
We thus get,
\begin{align*}
&
\mathbb{E}\left[\left(\ve_{1}^{\top}\mO\ve_{1}\right)^{3}\right]
=
3\mathbb{E}\left[\left(\vu^{\top}\left[\begin{smallmatrix}
\Q_{m}\\
 & \mathbf{0}
\end{smallmatrix}\right]\vu\right)^{2}\vu^{\top}\left[\begin{smallmatrix}
\mathbf{0}\\
 & \I_{p-m}
\end{smallmatrix}\right]\vu\right]+\mathbb{E}\left[\left(\vu^{\top}\left[\begin{smallmatrix}
\mathbf{0}\\
 & \I_{p-m}
\end{smallmatrix}\right]\vu\right)^{3}\right]\\&=3\mathbb{E}\left[\left(\vu^{\top}\left[\begin{smallmatrix}
\Q_{m}\\
 & \mathbf{0}
\end{smallmatrix}\right]\vu\right)^{2}\sum_{i=m+1}^{p}u_{i}^{2}\right]+\mathbb{E}\left[\left(\sum_{i=m+1}^{p}u_{i}^{2}\right)^{3}\right]
\\
&
=
3\sum_{i=m+1}^{p}\mathbb{E}\left[\left(\vu^{\top}\left[\begin{smallmatrix}
\Q_{m}\\
 & \mathbf{0}
\end{smallmatrix}\right]\vu\right)^{2}u_{i}^{2}\right]+\sum_{i=m+1}^{p}\sum_{j=m+1}^{p}\sum_{k=m+1}^{p}\mathbb{E}\left[u_{i}^{2}u_{j}^{2}u_{k}^{2}\right]\\&=\left(p-m\right)\left(3\mathbb{E}\left[u_{p}^{2}\cdot\vu_{a}^{\top}\Q_{m}\vu_{a}\cdot\vu_{a}^{\top}\Q_{m}\vu_{a}\right]+\sum_{j=m+1}^{p}\sum_{k=m+1}^{p}\mathbb{E}\left[u_{p}^{2}u_{j}^{2}u_{k}^{2}\right]\right)
\end{align*}

The inner left term becomes
\begin{align*}
&
\mathbb{E}\left[u_{p}^{2}\left\Vert \vu_{a}\right\Vert ^{2}\cdot\vu_{a}^{\top}\mathbb{E}_{\mathbf{r}\sim\mathcal{S}^{m-1}}\left[\mathbf{r}\mathbf{r}^{\top}\right]\vu_{a}\right]
=
\frac{3}{m}\mathbb{E}\left[u_{p}^{2}\left\Vert \vu_{a}\right\Vert ^{4}\right]
=
\frac{3}{m}\sum_{i=1}^{m}\sum_{j=1}^{m}\mathbb{E}\left[u_{p}^{2}u_{i}^{2}u_{j}^{2}\right]
\\&
=
\frac{3}{m}
\left(\underbrace{m\mathbb{E}\left[u_{p}^{2}u_{1}^{4}\right]}_{i=j}+\underbrace{m\left(m-1\right)\mathbb{E}\left[u_{p}^{2}u_{1}^{2}u_{2}^{2}\right]}_{i\neq j}\right)
=
3\left(\left\langle \begin{smallmatrix}
4\\
2\\
\overrightarrow{0}
\end{smallmatrix}\right\rangle +\left(m-1\right)\left\langle \begin{smallmatrix}
2\\
2\\
2\\
\overrightarrow{0}
\end{smallmatrix}\right\rangle \right)\,,
\end{align*}
while the inner right term becomes
\begin{align*}
&
\underbrace{\sum_{k=m+1}^{p}\mathbb{E}\left[u_{p}^{4}u_{k}^{2}\right]}_{j=p}+\underbrace{\left(p-m-1\right)\sum_{k=m+1}^{p}\mathbb{E}\left[u_{p-1}^{2}u_{p}^{2}u_{k}^{2}\right]}_{m+1\le j\le p-1}
\\&
=
\underbrace{\mathbb{E}\left[u_{p}^{6}\right]}_{k=p}
+
\underbrace{\left(p\!-\!m\!-\!1\right)\mathbb{E}\left[u_{p-1}^{2}u_{p}^{4}\right]}_{m+1\le k\le p-1}+\left(p\!-\!m\!-\!1\right)
\left(\underbrace{2\mathbb{E}\left[u_{p-1}^{2}u_{p}^{4}\right]}_{k=p\,\vee\,k=p-1}+
\underbrace{\left(p\!-\!m\!-\!2\right)
\mathbb{E}\left[u_{p-2}^{2}u_{p-1}^{2}u_{p}^{2}\right]}_{m+1\le j\le p-2}\right)
\\
&=
\left\langle \begin{smallmatrix}
6\\
\overrightarrow{0}
\end{smallmatrix}\right\rangle +\left(p-m-1\right)\left(3\left\langle \begin{smallmatrix}
4\\
2\\
\overrightarrow{0}
\end{smallmatrix}\right\rangle +\left(p-m-2\right)\left\langle \begin{smallmatrix}
2\\
2\\
2\\
\overrightarrow{0}
\end{smallmatrix}\right\rangle \right)\,.
\end{align*}

Combining these two inner terms, we get,
\begin{align*}
&\mathbb{E}\left[\left(\ve_{1}^{\top}\mO\ve_{1}\right)^{3}\right]
\\
&
=\left(p-m\right)\left(\left\langle \begin{smallmatrix}
6\\
\overrightarrow{0}
\end{smallmatrix}\right\rangle +3\left(p-m\right)\left\langle \begin{smallmatrix}
4\\
2\\
\overrightarrow{0}
\end{smallmatrix}\right\rangle +\left(m^{2}-2mp+6m+p^{2}-3p-1\right)\left\langle \begin{smallmatrix}
2\\
2\\
2\\
\overrightarrow{0}
\end{smallmatrix}\right\rangle \right)\\&=\left(p-m\right)\left(\frac{15}{p\left(p+2\right)\left(p+4\right)}+\frac{9\left(p-m\right)}{p\left(p+2\right)\left(p+4\right)}+\frac{m^{2}-2mp+6m+p^{2}-3p-1}{p\left(p+2\right)\left(p+4\right)}\right)
\\&
=\frac{\left(p-m\right)\left(m^{2}-2mp-3m+p^{2}+6p+14\right)}{p\left(p+2\right)\left(p+4\right)}\,.
\end{align*}
\end{proof}

\bigskip

\begin{proposition} 
\label{prop:(e2Oe1)2_e2Oe_2}
For $p\ge 2, m\in\{2,\dots,p\}$ and a random transformation $\mO$ sampled as described in \eqref{eq:data-model},
it holds that,
    $$    \mathbb{E}\left[\left(\ve_{2}^{\top}\mO\ve_{1}\right)^{2}\ve_{2}^{\top}\mO\ve_{2}\right]=\frac{\left(p-m\right)\left(-m^{2}+2mp+3m-6\right)}{\left(p-1\right)p\left(p+2\right)\left(p+4\right)}
    \,.
    $$
\end{proposition}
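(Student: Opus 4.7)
The plan is to follow the same strategy used throughout Appendix~\ref{app:auxiliary}, and in particular the one that yielded \propref{prop:(e1Oe1)3}. By the Haar invariance of $\Q_p$, I would replace $\Q_p^{\top}\ve_1,\Q_p^{\top}\ve_2$ with two orthonormal vectors $\vu\perp\vv$ sampled uniformly from the sphere in $\reals^{p}$, and split each into its first $m$ coordinates and last $p-m$ coordinates: $\vu=[\vu_a;\vu_b]$, $\vv=[\vv_a;\vv_b]$. Substituting into \eqref{eq:data-model} gives
\begin{align*}
\ve_2^{\top}\mO\ve_1 &= \vv_a^{\top}\Q_m\vu_a + \vv_b^{\top}\vu_b,\\
\ve_2^{\top}\mO\ve_2 &= \vv_a^{\top}\Q_m\vv_a + \|\vv_b\|^2.
\end{align*}

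Expanding the product $(\ve_2^{\top}\mO\ve_1)^{2}(\ve_2^{\top}\mO\ve_2)$ yields six terms, and by Corollary~\ref{cor:odd-Q_m} those containing $\Q_m$ an odd number of times vanish, so only three survive:
\[
\mathbb{E}\big[(\vv_a^{\top}\Q_m\vu_a)^{2}\|\vv_b\|^{2}\big]\;+\;2\,\mathbb{E}\big[(\vv_a^{\top}\Q_m\vu_a)(\vv_a^{\top}\Q_m\vv_a)(\vv_b^{\top}\vu_b)\big]\;+\;\mathbb{E}\big[(\vv_b^{\top}\vu_b)^{2}\|\vv_b\|^{2}\big].
\]
For the first two I would integrate out $\Q_m$ using the standard identity $\mathbb{E}_{\Q_m}[Q_{ij}Q_{k\ell}]=\tfrac{1}{m}\delta_{ik}\delta_{j\ell}$, which yields $\mathbb{E}_{\Q_m}[(\vv_a^{\top}\Q_m\vu_a)^{2}]=\tfrac{1}{m}\|\vu_a\|^{2}\|\vv_a\|^{2}$ and $\mathbb{E}_{\Q_m}[(\vv_a^{\top}\Q_m\vu_a)(\vv_a^{\top}\Q_m\vv_a)]=\tfrac{1}{m}(\vu_a^{\top}\vv_a)\|\vv_a\|^{2}$. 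Substituting the orthogonality relation $\vu_a^{\top}\vv_a=-\vu_b^{\top}\vv_b$ reduces the problem to the three pure-$\Q_p$ expectations
\[
\tfrac{1}{m}\mathbb{E}\big[\|\vu_a\|^{2}\|\vv_a\|^{2}\|\vv_b\|^{2}\big]\;-\;\tfrac{2}{m}\mathbb{E}\big[(\vv_b^{\top}\vu_b)^{2}\|\vv_a\|^{2}\big]\;+\;\mathbb{E}\big[(\vv_b^{\top}\vu_b)^{2}\|\vv_b\|^{2}\big].
\]

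Each such expectation expands, after interchanging sum and expectation, into a sum over block indices of monomials in the entries of the first two columns of $\Q_p$. I would group these monomials by symmetry class via \propref{prop:invariance} and evaluate each class from the two-vector monomial tables in Appendix~\ref{app:monomials} (entries such as $\left\langle\begin{smallmatrix}2 & 2\\ 2 & 0\\ \overrightarrow{0} & \overrightarrow{0}\end{smallmatrix}\right\rangle$, $\left\langle\begin{smallmatrix}4 & 2\\ \overrightarrow{0} & \overrightarrow{0}\end{smallmatrix}\right\rangle$, $\left\langle\begin{smallmatrix}3 & 1\\ 1 & 1\\ \overrightarrow{0} & \overrightarrow{0}\end{smallmatrix}\right\rangle$, and $\left\langle\begin{smallmatrix}2 & 0\\ 1 & 1\\ 1 & 1\\ \overrightarrow{0} & \overrightarrow{0}\end{smallmatrix}\right\rangle$, all already computed there). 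The main obstacle is the combinatorial bookkeeping: each block sum splits into several sub-cases depending on which indices coincide (e.g.\ $i=j$ vs.\ $i\neq j$ in the $a$-block, and up to three-way index coincidences in the $b$-block), contributing the multiplicities $m$, $m(m-1)$, $p-m$, $(p-m)(p-m-1)$ and $(p-m)(p-m-1)(p-m-2)$ respectively, each multiplying a different tabulated monomial. After collecting over the common denominator $(p-1)p(p+2)(p+4)$, the factor $(p-m)$ should drop out automatically, and combining the residuals should yield the claimed $-m^{2}+2mp+3m-6$. A quick sanity check is that the formula vanishes at $m=p$, which is consistent with the fact that $\mO$ then becomes Haar on $O(p)$ and the moment $\mathbb{E}[O_{21}^{2}O_{22}]$ is zero by a single row sign-flip.
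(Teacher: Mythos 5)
Your proposal is correct and follows essentially the same route as the paper: expand in the $a$/$b$ block decomposition, kill the odd-$\Q_m$ terms via Corollary~\ref{cor:odd-Q_m}, integrate out $\Q_m$ with the $\tfrac{1}{m}$ second-moment identity, and substitute $\vu_a^{\top}\vv_a=-\vu_b^{\top}\vv_b$; the three surviving pure-$\Q_p$ expectations you reach are exactly the quantities the paper evaluates in \eqref{eq:ua2(ubvb)2}, \eqref{eq:ub2(uava)2} and \eqref{eq:ua2va2ub2}, and plugging those in indeed yields $\tfrac{(p-m)(-m^{2}+2mp+3m-6)}{(p-1)p(p+2)(p+4)}$. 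The only (cosmetic) difference is that you expand the full product into six terms at once rather than distributing one factor of the square first as the paper does.
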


\begin{proof}
By decomposing the expectation into the two additive terms below, 
we get that
\begin{align*}
&\mathbb{E}\left[\left(\ve_{2}^{\top}\mO\ve_{1}\right)^{2}\ve_{2}^{\top}\mO\ve_{2}\right]
=
\mathbb{E}\left[\left(\vu^{\top}\left[\begin{smallmatrix}
\Q_{m}
\\
& \mathbf{I}_{p-m}
\end{smallmatrix}\right]\mathbf{v}\right)^{2}\vu^{\top}\left[\begin{smallmatrix}
\Q_{m}\\
& \mathbf{I}_{p-m}
\end{smallmatrix}\right]\vu\right]
\\
&=\mathbb{E}\left[\left(\vu^{\top}\left[\begin{smallmatrix}\Q_{m}\\
 & \mathbf{0}
\end{smallmatrix}\right]\mathbf{v}+\vu^{\top}\left[\begin{smallmatrix}\mathbf{0}\\
 & \mathbf{I}_{p-m}
\end{smallmatrix}\right]\mathbf{v}\right)^{2}\left(\vu^{\top}\left[\begin{smallmatrix}\Q_{m}\\
 & \mathbf{0}
\end{smallmatrix}\right]\mathbf{u}+\vu^{\top}\left[\begin{smallmatrix}\mathbf{0}\\
 & \mathbf{I}_{p-m}
\end{smallmatrix}\right]\mathbf{u}\right)\right]
\\&
=\mathbb{E}\left[\left(\mathbf{u}_{a}\Q_{m}\mathbf{v}_{a}+\vu_{b}^{\top}\mathbf{v}_{b}\right)^{2}\left(\mathbf{u}_{a}\Q_{m}\mathbf{u}_{a}+\vu_{b}^{\top}\mathbf{u}_{b}\right)\right]
\\&
=
\underbrace{\mathbb{E}\left[\mathbf{u}_{a}\Q_{m}\mathbf{v}_{a}\left(\mathbf{u}_{a}\Q_{m}\mathbf{v}_{a}+\vu_{b}^{\top}\mathbf{v}_{b}\right)\left(\mathbf{u}_{a}\Q_{m}\mathbf{u}_{a}+\vu_{b}^{\top}\mathbf{u}_{b}\right)\right]}_{\text{below}}+
\\
&
\eqmargin
\underbrace{
\mathbb{E}\left[\vu_{b}^{\top}\mathbf{v}_{b}\left(\mathbf{u}_{a}\Q_{m}\mathbf{v}_{a}+\vu_{b}^{\top}\mathbf{v}_{b}\right)\left(\mathbf{u}_{a}\Q_{m}\mathbf{u}_{a}+\vu_{b}^{\top}\mathbf{u}_{b}\right)\right]
}_{\text{below}}
\\
%
%
&=\frac{\left(p-m\right)\left(m\left(p+2\right)-4-m^{2}+mp+m-2\right)}{\left(p-1\right)p\left(p+2\right)\left(p+4\right)}=\frac{\left(p-m\right)\left(-m^{2}+2mp+3m-6\right)}{\left(p-1\right)p\left(p+2\right)\left(p+4\right)}\,.
\end{align*}

Using \corref{cor:odd-Q_m} in the first step below, we show that the first term is,
\begin{align*}
&\mathbb{E}\left[\mathbf{u}_{a}\Q_{m}\mathbf{v}_{a}\left(\mathbf{u}_{a}\Q_{m}\mathbf{v}_{a}+\vu_{b}^{\top}\mathbf{v}_{b}\right)\left(\mathbf{u}_{a}\Q_{m}\mathbf{u}_{a}+\vu_{b}^{\top}\mathbf{u}_{b}\right)\right]
\\
&
=
\mathbb{E}\left[\mathbf{u}_{a}\Q_{m}\mathbf{v}_{a}
\vu_{b}^{\top}\mathbf{v}_{b}
\mathbf{u}_{a}\Q_{m}\mathbf{u}_{a}\right]
+
\mathbb{E}\left[\mathbf{u}_{a}\Q_{m}\mathbf{v}_{a}
\mathbf{u}_{a}\Q_{m}\mathbf{v}_{a}
\vu_{b}^{\top}\mathbf{u}_{b}\right]
\\&
=\frac{1}{m}\mathbb{E}\left[\left\Vert \vu_{a}\right\Vert ^{2}\vu_{a}^{\top}\vv_{a}\cdot\vu_{b}^{\top}\vv_{b}\right]
+
\frac{1}{m}\mathbb{E}\left[\left\Vert \vu_{a}\right\Vert ^{2}\left\Vert \vv_{a}\right\Vert ^{2}\left\Vert \vu_{b}\right\Vert ^{2}\right]
\\&
=
-\frac{1}{m}
\underbrace{\mathbb{E}\left[\left\Vert \vu_{a}\right\Vert ^{2}\left(\vu_{b}^{\top}\mathbf{v}_{b}\right)^{2}\right]}_{
\text{solved in \eqref{eq:ua2(ubvb)2}}
}
+
\frac{1}{m}
\underbrace{\mathbb{E}\left[\left\Vert \vu_{a}\right\Vert ^{2}\left\Vert \mathbf{v}_{a}\right\Vert ^{2}\left\Vert \vu_{b}\right\Vert ^{2}\right]}_{
\text{solved in \eqref{eq:ua2va2ub2}}
}
\\&
=-\tfrac{1}{m}\tfrac{\left(p-m\right)m\left(m+2\right)}{\left(p-1\right)p\left(p+2\right)\left(p+4\right)}
+
\tfrac{1}{m}\tfrac{m\left(p-m\right)\left(m\left(p+3\right)-2\right)}{\left(p-1\right)p\left(p+2\right)\left(p+4\right)}
%
=\frac{\left(p-m\right)\left(m\left(p+2\right)-4\right)}{\left(p-1\right)p\left(p+2\right)\left(p+4\right)}
\,.
\end{align*}

From \eqref{eq:(ubvb)(uaQmua+ubub)(uaQmva+ubvb)}, 
we know that the second term is
\begin{align*}
&
\mathbb{E}\left[\vu_{b}^{\top}\mathbf{v}_{b}\left(\mathbf{u}_{a}\Q_{m}\mathbf{v}_{a}+\vu_{b}^{\top}\mathbf{v}_{b}\right)\left(\mathbf{u}_{a}\Q_{m}\mathbf{u}_{a}+\vu_{b}^{\top}\mathbf{u}_{b}\right)\right]
\\
&
=
\underbrace{\mathbb{E}\left[\left(\mathbf{u}_{b}^{\top}\mathbf{v}_{b}\right)^{2}\right]}_{\text{solved in \eqref{eq:(uava)2}}}-\left(1+\frac{1}{m}\right)\underbrace{\mathbb{E}\left[\left\Vert \mathbf{u}_{a}\right\Vert ^{2}\left(\mathbf{u}_{b}^{\top}\mathbf{v}_{b}\right)^{2}\right]}_{\text{solved in \eqref{eq:ua2(ubvb)2}}}
=
\tfrac{\left(p-m\right)m}{\left(p-1\right)p\left(p+2\right)}-\tfrac{m+1}{m}\tfrac{\left(p-m\right)m\left(m+2\right)}{\left(p-1\right)p\left(p+2\right)\left(p+4\right)}
\\&
=
\frac{\left(p-m\right)\left(-m^{2}+mp+m-2\right)}{\left(p-1\right)p\left(p+2\right)\left(p+4\right)}\,.
\end{align*}

\end{proof}

\newpage

\begin{proposition}
\label{prop:(e1Oe1)4}
For $p\ge 2, m\in\{2,\dots,p\}$ and a random transformation $\mO$ sampled as described in \eqref{eq:data-model},
it holds that,
%
%
$$
\mathbb{E}\left(\ve_{1}^{\top}\mO\ve_{1}\right)^{4}
=
\frac{3\left(m+4\right)\left(m+6\right)+\left(p-m\right)\left(p-m+2\right)\left(m^{2}-2mp-4m+p^{2}+10p+36\right)}{p\left(p+2\right)\left(p+4\right)\left(p+6\right)}
$$
\end{proposition}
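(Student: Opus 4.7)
Following the notations established in earlier propositions, the plan is to reduce the computation to known monomial integrals over the sphere (equivalently, Beta-type moments). After the substitution $\vu \triangleq \Q_p^\top \ve_1 \sim \mathcal{S}^{p-1}$ and the block decomposition $\vu = \begin{pmatrix}\vu_a \\ \vu_b\end{pmatrix}$ with $\vu_a \in \reals^m, \vu_b \in \reals^{p-m}$, we obtain
\begin{align*}
\ve_1^\top \mO \ve_1 \;=\; \underbrace{\vu_a^\top \Q_m \vu_a}_{\triangleq A} \;+\; \underbrace{\|\vu_b\|^2}_{\triangleq B}\,.
\end{align*}

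Expanding $(A+B)^4 = A^4 + 4A^3 B + 6 A^2 B^2 + 4AB^3 + B^4$ and observing that $A$ contains exactly one factor of $\Q_m$, I will invoke Corollary~\ref{cor:odd-Q_m} to eliminate the terms $\mathbb{E}[A^3 B]$ and $\mathbb{E}[AB^3]$, leaving
\begin{align*}
\mathbb{E}\left(\ve_1^\top \mO \ve_1\right)^4 \;=\; \mathbb{E}[A^4] + 6\,\mathbb{E}[A^2 B^2] + \mathbb{E}[B^4]\,.
\end{align*}

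Next, I will reparameterize $\Q_m \vu_a = \|\vu_a\| \vr$ where $\vr \sim \mathcal{S}^{m-1}$ is independent of $\vu_a$ (by invariance of the Haar measure), so that $A = \|\vu_a\|^2 \,(\hat{\vu}_a^\top \vr)$ with $\hat{\vu}_a \triangleq \vu_a/\|\vu_a\|$. Standard moments of a coordinate of a uniform unit vector give
\begin{align*}
\mathbb{E}\!\left[(\hat{\vu}_a^\top \vr)^2 \mid \vu_a \right] = \tfrac{1}{m}, \qquad \mathbb{E}\!\left[(\hat{\vu}_a^\top \vr)^4 \mid \vu_a \right] = \tfrac{3}{m(m+2)}\,.
\end{align*}
Hence $\mathbb{E}[A^2 B^2] = \tfrac{1}{m}\,\mathbb{E}\bigl[\|\vu_a\|^4 \|\vu_b\|^4\bigr]$ and $\mathbb{E}[A^4] = \tfrac{3}{m(m+2)}\,\mathbb{E}\bigl[\|\vu_a\|^8\bigr]$, while $\mathbb{E}[B^4] = \mathbb{E}\bigl[\|\vu_b\|^8\bigr]$. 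Since $\|\vu_a\|^2 \sim \mathrm{Beta}(m/2,(p{-}m)/2)$, each of these reduces to the Pochhammer quotient
\begin{align*}
\mathbb{E}\!\left[\|\vu_a\|^{2j}\|\vu_b\|^{2k}\right] \;=\; \frac{(m/2)_j\,\bigl((p-m)/2\bigr)_k}{(p/2)_{j+k}}\,,
\end{align*}
which is exactly what the $\langle \cdot \rangle$ notation of \appref{app:monomials} evaluates to for the corresponding power matrices (so one can equivalently assemble the result from $\bigl\langle\begin{smallmatrix}8\\\overrightarrow{0}\end{smallmatrix}\bigr\rangle$, $\bigl\langle\begin{smallmatrix}4\\4\\\overrightarrow{0}\end{smallmatrix}\bigr\rangle$, etc.).

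The final step will be purely algebraic: substituting the three values, factoring the common denominator $p(p+2)(p+4)(p+6)$, and collapsing the numerator $3(m+4)(m+6) + 6(m+2)(p-m)(p-m+2) + (p-m)(p-m+2)(p-m+4)(p-m+6)$ into $(p-m)(p-m+2)\bigl[(p-m+4)(p-m+6) + 6(m+2)\bigr] + 3(m+4)(m+6)$, and then expanding $(p-m+4)(p-m+6)+6(m+2) = m^2 - 2mp - 4m + p^2 + 10p + 36$. I do not anticipate a real obstacle; the main care needed is to verify that all odd-$\Q_m$ terms really do vanish (which is immediate from \corref{cor:odd-Q_m}) and to keep the polynomial bookkeeping consistent during the final simplification.
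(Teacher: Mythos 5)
Your proposal is correct and follows essentially the same route as the paper's proof: the same decomposition $\ve_1^\top\mO\ve_1=\vu_a^\top\Q_m\vu_a+\|\vu_b\|^2$, the same cancellation of the odd-$\Q_m$ cross terms via \corref{cor:odd-Q_m}, and the same reparameterization $\Q_m\vu_a=\|\vu_a\|\vr$ with $\vr\sim\mathcal{S}^{m-1}$ giving the factors $\tfrac{3}{m(m+2)}$ and $\tfrac{1}{m}$. The only (cosmetic) difference is that you evaluate the mixed moment $\mathbb{E}[\|\vu_a\|^{4}\|\vu_b\|^{4}]$ directly from the joint Beta/Pochhammer formula, whereas the paper substitutes $\|\vu_b\|^2=1-\|\vu_a\|^2$ and uses only the marginal moments of \propref{prop:u_norms}; both give the same numerator and the final factorization you indicate is the correct one.
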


\begin{proof}
We start by showing,
\begin{align*}
\mathbb{E}\left(\ve_{1}^{\top}\mO\ve_{1}\right)^{4}
=\mathbb{E}\left(\vu^{\top}\left[\begin{smallmatrix}
\Q_{m}\\
 & \mathbf{I}_{p-m}
\end{smallmatrix}\right]\vu\right)^{4}=\mathbb{E}\left(\vu_{a}^{\top}\Q_{m}\vu_{a}+\vu_{b}^{\top}\vu_{b}\right)^{4}\,.
\end{align*}

Employing an algebraic identity and \corref{cor:odd-Q_m} to cancel terms with an odd number of $\Q_m$ appearances,
it can be readily seen that in our case we are left with 
$\prn{a+b}^4=
a^{4}+\cancel{4a^{3}b}+6a^{2}b^{2}+\cancel{4ab^{3}}+b^{4}$.
We thus get,
\begin{align*}
&=\mathbb{E}\left(\vu_{a}^{\top}\Q_{m}\vu_{a}\right)^{4}
+
6\mathbb{E}\left(\vu_{a}^{\top}\Q_{m}\vu_{a}\right)^{2}\left(\vu_{a}^{\top}\vu_{a}\right)^{2}
+
\mathbb{E}\left(\vu_{b}^{\top}\vu_{b}\right)^{4}
\\
&=\mathbb{E}_{\mathbf{r}\sim\mathcal{S}^{m-1},\mathbf{u}\sim\mathcal{S}^{p-1}}\left(\left\Vert \mathbf{u}_{a}\right\Vert \mathbf{r}^{\top}\mathbf{u}_{a}\right)^{4}+6\mathbb{E}\left[\left(\mathbf{u}_{b}^{\top}\mathbf{u}_{b}\right)^{2}\mathbb{E}_{\mathbf{r}\sim\mathcal{S}^{m-1}}\left(\left\Vert \mathbf{u}_{a}\right\Vert \mathbf{r}^{\top}\mathbf{u}_{a}\right)^{2}\right]+\mathbb{E}\left(\mathbf{u}_{b}^{\top}\mathbf{u}_{b}\right)^{4}
\\&
=\mathbb{E}\left[\left\Vert \mathbf{u}_{a}\right\Vert ^{4}\mathbb{E}_{\mathbf{r}\sim\mathcal{S}^{m-1}}\left(\mathbf{r}^{\top}\mathbf{u}_{a}\right)^{4}\right]+6\mathbb{E}\left[\left\Vert \mathbf{u}_{b}\right\Vert ^{4}\left\Vert \mathbf{u}_{a}\right\Vert ^{2}\mathbf{u}_{a}^{\top}\mathbb{E}_{\mathbf{r}\sim\mathcal{S}^{m-1}}\left[\mathbf{r}\mathbf{r}^{\top}\right]\mathbf{u}_{a}\right]+\mathbb{E}\left\Vert \mathbf{u}_{b}\right\Vert ^{8}
\end{align*}
We employ the isotropy of $\mathbf{r}$ and show that,
\begin{align*}
    &=\mathbb{E}\left[\left\Vert \mathbf{u}_{a}\right\Vert ^{4}\mathbb{E}_{\mathbf{r}\sim\mathcal{S}^{m-1}}\left(\left\Vert \mathbf{u}_{a}\right\Vert \mathbf{r}^{\top}\ve_{1}\right)^{4}\right]+\frac{6}{m}\mathbb{E}\left[\left\Vert \mathbf{u}_{b}\right\Vert ^{4}\left\Vert \mathbf{u}_{a}\right\Vert ^{2}\mathbf{u}_{a}^{\top}\mathbf{u}_{a}\right]+\mathbb{E}\left\Vert \mathbf{u}_{b}\right\Vert ^{8}
\\&
=\mathbb{E}_{\mathbf{r}\sim\mathcal{S}^{m-1}}\left[r_{1}^{4}\right]\mathbb{E}\left\Vert \mathbf{u}_{a}\right\Vert ^{8}+\frac{6}{m}\mathbb{E}\left[\left\Vert \mathbf{u}_{b}\right\Vert ^{4}\left\Vert \mathbf{u}_{a}\right\Vert ^{4}\right]+\mathbb{E}\left\Vert \mathbf{u}_{b}\right\Vert ^{8}
\\&
=\left\langle \begin{smallmatrix}
4\\
\overrightarrow{0}
\end{smallmatrix}\right\rangle _{m}\mathbb{E}\left\Vert \mathbf{u}_{a}\right\Vert ^{8}+\frac{6}{m}\mathbb{E}\left[\left(1-\left\Vert \mathbf{u}_{a}\right\Vert ^{2}\right)^{2}\left\Vert \mathbf{u}_{a}\right\Vert ^{4}\right]+\mathbb{E}\left\Vert \mathbf{u}_{b}\right\Vert ^{8}
\\&
=\frac{3}{m\left(m+2\right)}\mathbb{E}\left\Vert \mathbf{u}_{a}\right\Vert ^{8}+\frac{6}{m}\left(\mathbb{E}\left\Vert \mathbf{u}_{a}\right\Vert ^{4}-2\mathbb{E}\left[\left\Vert \mathbf{u}_{a}\right\Vert ^{2}\left\Vert \mathbf{u}_{a}\right\Vert ^{4}\right]+\mathbb{E}\left\Vert \mathbf{u}_{a}\right\Vert ^{8}\right)+\mathbb{E}\left\Vert \mathbf{u}_{b}\right\Vert ^{8}
\\&
=\left(\frac{3}{m\left(m+2\right)}+\frac{6}{m}\right)\mathbb{E}\left\Vert \mathbf{u}_{a}\right\Vert ^{8}+\frac{6}{m}\mathbb{E}\left\Vert \mathbf{u}_{a}\right\Vert ^{4}-\frac{12}{m}\mathbb{E}\left\Vert \mathbf{u}_{a}\right\Vert ^{6}+\mathbb{E}\left\Vert \mathbf{u}_{b}\right\Vert ^{8}
\\&
=\frac{6m+15}{m\left(m+2\right)}\mathbb{E}\left\Vert \mathbf{u}_{a}\right\Vert ^{8}+\frac{6}{m}\mathbb{E}\left\Vert \mathbf{u}_{a}\right\Vert ^{4}-\frac{12}{m}\mathbb{E}\left\Vert \mathbf{u}_{a}\right\Vert ^{6}+\mathbb{E}\left\Vert \mathbf{u}_{b}\right\Vert ^{8}\,,
\end{align*}
where we use the subscript in $\left\langle \begin{smallmatrix}
4\\
\overrightarrow{0}
\end{smallmatrix}\right\rangle_{m}$
to indicate that, unlike in most places, the corresponding random vector is in $\mathcal{S}^{m-1}$  rather than $\mathcal{S}^{p-1}$.

Next, we derive these expected norms in \propref{prop:u_norms},
and obtain
\begin{align*}
&=
\frac{6m+15}{m\left(m+2\right)}
\frac{m\left(m+2\right)\left(m+4\right)\left(m+6\right)}{p\left(p+2\right)\left(p+4\right)\left(p+6\right)}
+
\frac{6}{m}
\frac{m\left(m+2\right)}{p\left(p+2\right)}
-
\frac{12}{m}
\frac{m\left(m+2\right)\left(m+4\right)}{p\left(p+2\right)\left(p+4\right)}
+
\\
&
\eqmargin
\frac{\left(p-m\right)\left(p-m+2\right)\left(p-m+4\right)\left(p-m+6\right)}{p\left(p+2\right)\left(p+4\right)\left(p+6\right)}
\\&
=\frac{\left(6m+15\right)\left(m+4\right)\left(m+6\right)}{p\left(p+2\right)\left(p+4\right)\left(p+6\right)}+\frac{6\left(m+2\right)}{p\left(p+2\right)}-\frac{12\left(m+2\right)\left(m+4\right)}{p\left(p+2\right)\left(p+4\right)}
+
\\
&
\eqmargin
\frac{\left(p-m\right)\left(p-m+2\right)\left(p-m+4\right)\left(p-m+6\right)}{p\left(p+2\right)\left(p+4\right)\left(p+6\right)}
\\
&=
\frac{\left(6m+15\right)\left(m+4\right)\left(m+6\right)+6\left(m+2\right)\left(p+4\right)\left(p+6\right)-12\left(m+2\right)\left(m+4\right)\left(p+6\right)}
{p\left(p+2\right)\left(p+4\right)\left(p+6\right)}
\\
&
\eqmargin
\frac{\left(p-m\right)\left(p-m+2\right)\left(p-m+4\right)\left(p-m+6\right)}{p\left(p+2\right)\left(p+4\right)\left(p+6\right)}
\\
&=
\frac{3\left(m+4\right)\left(m+6\right)+\left(p-m\right)\left(p-m+2\right)\left(m^{2}-2mp-4m+p^{2}+10p+36\right)}{p\left(p+2\right)\left(p+4\right)\left(p+6\right)}\,.
\end{align*}
\end{proof}

\newpage

\begin{proposition}
\label{prop:(e1Oe2)4}
For $p\ge 2, m\in\{2,\dots,p\}$ and a random transformation $\mO$ sampled as described in \eqref{eq:data-model},
it holds that,
%
%
\begin{align*}
&
\mathbb{E}\left(\ve_{1}^{\top}\mO\ve_{2}\right)^{4}
=
\tfrac{3\left(m^{4}-2m^{3}\left(2p+3\right)+m^{2}\left(4p^{2}+4p-1\right)+2m\left(6p^{2}+8p+3\right)+8\left(p^{2}-2p-3\right)\right)}{\left(p-1\right)p\left(p+1\right)\left(p+2\right)\left(p+4\right)\left(p+6\right)}\,.
\end{align*}
\end{proposition}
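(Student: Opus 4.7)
The plan is to mirror the two-step reduction used throughout the appendix: first collapse the random orthogonal $\Q_m$ using its Haar-invariance properties to produce an expression depending only on $\vu = \Q_p^\top \ve_1$ and $\vv = \Q_p^\top \ve_2$ (two orthonormal rows of $\Q_p$), and then expand the remaining polynomial into monomial expectations of the kind tabulated in Appendix~\ref{app:monomials}. Concretely, I would decompose $\vu = \begin{bmatrix}\vu_a\\ \vu_b\end{bmatrix}$, $\vv = \begin{bmatrix}\vv_a\\ \vv_b\end{bmatrix}$ with $\vu_a,\vv_a \in \reals^m$, so that
\[
\ve_1^\top \mO \ve_2 \;=\; \vu_a^\top \Q_m \vv_a + \vu_b^\top \vv_b \;\triangleq\; a + b.
\]

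Next, I would expand $(a+b)^4 = a^4 + 4a^3 b + 6 a^2 b^2 + 4 a b^3 + b^4$ and observe that, by Corollary~\ref{cor:odd-Q_m}, the cross terms $4a^3 b$ and $4a b^3$ carry an odd number of $\Q_m$-factors (three and one, respectively), so they vanish. The surviving computation is therefore
\[
\mathbb{E}(\ve_1^\top \mO \ve_2)^4 \;=\; \mathbb{E}[a^4] \;+\; 6\,\mathbb{E}[a^2 b^2] \;+\; \mathbb{E}[b^4].
\]
For each surviving piece I would integrate out $\Q_m$ first, conditionally on $\vu,\vv$: reparameterizing $\Q_m \vv_a$ as $\norm{\vv_a}\, \vr$ with $\vr \sim \mathcal{S}^{m-1}$ and using the isotropy identities $\mathbb{E}[\vr \vr^\top] = \tfrac{1}{m}\I_m$ and $\mathbb{E}[r_1^4] = \tfrac{3}{m(m+2)}$, one obtains
\[
\mathbb{E}[a^4 \mid \vu,\vv] = \tfrac{3}{m(m+2)} \norm{\vu_a}^4 \norm{\vv_a}^4, \qquad \mathbb{E}[a^2 \mid \vu,\vv] = \tfrac{1}{m} \norm{\vu_a}^2 \norm{\vv_a}^2.
\]

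The remaining step is to evaluate $\mathbb{E}\!\left[\norm{\vu_a}^4 \norm{\vv_a}^4\right]$, $\mathbb{E}\!\left[\norm{\vu_a}^2 \norm{\vv_a}^2 (\vu_b^\top \vv_b)^2\right]$, and $\mathbb{E}\!\left[(\vu_b^\top \vv_b)^4\right]$ over the pair $(\vu,\vv)$ of orthonormal random vectors. Writing $\norm{\vu_a}^2 = \sum_{i\le m} u_i^2$, $\vu_b^\top \vv_b = \sum_{j>m} u_j v_j$, etc., and expanding, each expectation becomes a finite sum of two-vector monomial integrals $\langle \M\rangle$ over $O(p)$, which are exactly the quantities systematically computed in the one-, two-, three-, and four-index tables of Appendix~\ref{app:monomials}; invariance under permutations of rows/columns (Property~\ref{prop:invariance}) lets me collect terms by orbit type and multiply by the corresponding combinatorial counts.

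The main obstacle is book-keeping rather than conceptual: $(\vu_b^\top\vv_b)^4 = (\sum_{j>m} u_j v_j)^4$ expands into index classes $j_1=j_2=j_3=j_4$, two-pair patterns, one-pair-plus-two-singles, and four-distinct-indices, with analogous classification for the mixed term $\norm{\vu_a}^2\norm{\vv_a}^2(\vu_b^\top\vv_b)^2$ (where the $a$-block indices are independent from the $b$-block ones). I would carefully count the number of index tuples of each class in terms of $m$ and $p-m$ (producing the polynomial factors in $m,p$), substitute the tabulated monomial values, and combine the three pieces with weights $\tfrac{3}{m(m+2)}$, $\tfrac{6}{m}$, and $1$. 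After clearing common denominators over $(p-1)p(p+1)(p+2)(p+4)(p+6)$, the $m$-dependent factors should collapse to the claimed numerator, which I would verify symbolically to match the stated formula.
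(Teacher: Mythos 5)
Your proposal is correct and follows essentially the same route as the paper: reduce to two orthonormal random vectors $\vu\perp\vv$, expand $(\vu_a^\top\Q_m\vv_a+\vu_b^\top\vv_b)^4$, kill the odd cross terms via Corollary~\ref{cor:odd-Q_m}, integrate out $\Q_m$ by isotropy to get the weights $\tfrac{3}{m(m+2)}$, $\tfrac{6}{m}$, and $1$, and finish with the tabulated monomial integrals (the paper's Eqs.~\ref{eq:(uaQmva)4}, \ref{eq:(uaQmva)2(uv)2}, and \ref{eq:(uava)4}). The only cosmetic difference is that for the last piece the paper first rewrites $(\vu_b^\top\vv_b)^4$ as $(\vu_a^\top\vv_a)^2(\vu_b^\top\vv_b)^2$ using orthogonality, which slightly reduces the index bookkeeping you describe.
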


\begin{proof}
We begin by showing that,
\begin{align*}
&
\mathbb{E}\left(\ve_{1}^{\top}\mO\ve_{2}\right)^{4}
=
\mathbb{E}\left(\ve_{1}^{\top}\mathbf{Q}_{p}\left[\begin{smallmatrix}
\mathbf{Q}_{m}\\
 & \mathbf{I}_{p-m}
\end{smallmatrix}\right]\mathbf{Q}_{p}^{\top}\ve_{2}\right)^{4}
=
\mathbb{E}_{\mathbf{u}\perp\mathbf{v}}\left(\mathbf{u}^{\top}\left[\begin{smallmatrix}
\mathbf{Q}_{m}\\
 & \mathbf{I}_{p-m}
\end{smallmatrix}\right]\mathbf{v}\right)^{4}
\\
&=
\mathbb{E}_{\mathbf{u}\perp\mathbf{v}}\left(\mathbf{u}_{a}^{\top}\mathbf{Q}_{m}\mathbf{v}_{a}+\mathbf{u}_{b}^{\top}\mathbf{v}_{b}\right)^{4}
=
\mathbb{E}\left(\mathbf{u}_{a}^{\top}\mathbf{Q}_{m}\mathbf{v}_{a}+\mathbf{u}_{b}^{\top}\mathbf{v}_{b}\right)^{4}\,,
\end{align*}
where the last step simply employs our simplifying notations 
(\rmkref{rmk:simplify}).

Employing an algebraic identity and \corref{cor:odd-Q_m} to cancel terms with an odd number of $\Q_m$ appearances,
it can be readily seen that in our case we are left with 
$\prn{a+b}^4=
a^{4}+\cancel{4a^{3}b}+6a^{2}b^{2}+\cancel{4ab^{3}}+b^{4}$.
We thus get,
\begin{align*}
&
=
\underbrace{\mathbb{E}\left[\left(\mathbf{u}_{a}^{\top}\mathbf{Q}_{m}\mathbf{v}_{a}\right)^{4}\right]
}_{\text{solved in \eqref{eq:(uaQmva)4}}}
+
6\underbrace{\mathbb{E}\left[\left(\mathbf{u}_{a}^{\top}\mathbf{Q}_{m}\mathbf{v}_{a}\right)^{2}\left(\mathbf{u}_{b}^{\top}\mathbf{v}_{b}\right)^{2}\right]
}_{\text{solved in \eqref{eq:(uaQmva)2(uv)2}}}
+
\underbrace{
\mathbb{E}\left[\left(\mathbf{u}_{b}^{\top}\mathbf{v}_{b}\right)^{4}\right]
}_{\text{solved in \eqref{eq:(uava)4}}}
\\
&
=
\tfrac{3\left(m^{2}\left(p+3\right)\left(p+5\right)+2m\left(p+1\right)\left(p+3\right)-8\left(2p+3\right)\right)}{\left(p-1\right)p\left(p+1\right)\left(p+2\right)\left(p+4\right)\left(p+6\right)}
+
6
\tfrac{\left(m+2\right)\left(mp+2p+3m\right)\left(p-m\right)}{\left(p-1\right)p\left(p+1\right)\left(p+2\right)\left(p+4\right)\left(p+6\right)}
+
\\
&
\eqmargin
\tfrac{3m^{4}-6m^{3}p+3m^{2}p^{2}-6m^{2}p-12m^{2}+6mp^{2}+12mp}{\left(p-1\right)p\left(p+1\right)\left(p+2\right)\left(p+4\right)\left(p+6\right)}
\,.
\end{align*}
After some tedious algebra,
we get, as required,
\begin{align*}
&
\mathbb{E}\left(\ve_{1}^{\top}\mO\ve_{2}\right)^{4}
=
\tfrac{3\left(m^{4}-2m^{3}\left(2p+3\right)+m^{2}\left(4p^{2}+4p-1\right)+2m\left(6p^{2}+8p+3\right)+8\left(p^{2}-2p-3\right)\right)}{\left(p-1\right)p\left(p+1\right)\left(p+2\right)\left(p+4\right)\left(p+6\right)}\,.
\end{align*}
\end{proof}

\newpage

\begin{proposition}
\label{prop:e2Oe2*e2Oe1}
For $p\ge 2, m\in\{2,\dots,p\}$ and a random transformation $\mO$ sampled as described in \eqref{eq:data-model},
it holds that,
\begin{align*}
&
\mathbb{E}
\left(\ve_{1}^{\top}\mO\ve_{1}\cdot\ve_{2}^{\top}\mO\ve_{1}\right)^{2}
=
\mathbb{E}
\left(\ve_{2}^{\top}\mO\ve_{2}\cdot\ve_{2}^{\top}\mO\ve_{1}\right)^{2}
\\
&
=
\frac{\left(m+4\right)\left(2mp+4p+m-m^{2}-6\right)-\left(p-m\right)\left(p-m+2\right)\left(m\left(m-2p-5\right)+10\right)}{\left(p-1\right)p\left(p+2\right)\left(p+4\right)\left(p+6\right)}\,.
\end{align*}
\end{proposition}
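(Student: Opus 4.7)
The plan is to first reduce both expectations to a single one by symmetry, and then reproduce the decompose--integrate-out-$\mQ_m$--look-up-monomials strategy used throughout Appendices~\ref{app:analysis}--\ref{app:auxiliary}.

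\textbf{Symmetry reduction.} Writing $\mO = \mQ_p \A \mQ_p^\top$ with $\A = \diag(\mQ_m,\mathbf{I}_{p-m})$ and $\vq_i \triangleq \mQ_p^\top \ve_i$, one has $\ve_i^\top \mO \ve_j = \vq_i^\top \A \vq_j$. Since the first two rows $\vq_1,\vq_2$ of $\mQ_p \sim O(p)$ are exchangeable (\propref{prop:invariance}), swapping them sends $(\vq_1^\top \A \vq_1)(\vq_2^\top \A \vq_1)$ to $(\vq_2^\top \A \vq_2)(\vq_1^\top \A \vq_2)$, which has the same squared value as $(\vq_2^\top \A \vq_2)(\vq_2^\top \A \vq_1)$ (by the self-adjointness of $\A$). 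Hence the two expectations coincide and it suffices to compute one of them, say the first.

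\textbf{Expansion and cancellation of odd $\mQ_m$-terms.} Writing $\vu \triangleq \vq_1, \vv \triangleq \vq_2$ with blocks $\vu_a, \vv_a \in \mathbb{R}^{m}$ and $\vu_b, \vv_b \in \mathbb{R}^{p-m}$, we get
\begin{align*}
\ve_1^\top \mO \ve_1 &= \vu_a^\top \mQ_m \vu_a + \|\vu_b\|^2,\\
\ve_2^\top \mO \ve_1 &= \vv_a^\top \mQ_m \vu_a + \vv_b^\top \vu_b.
\end{align*}
Expanding $(\vu_a^\top \mQ_m \vu_a + \|\vu_b\|^2)^2 (\vv_a^\top \mQ_m \vu_a + \vv_b^\top \vu_b)^2$ into nine summands and dropping those with an odd number of $\mQ_m$-factors by \corref{cor:odd-Q_m}, five terms survive:
\[
T_1{=}(\vu_a^\top\!\mQ_m\vu_a)^2(\vv_a^\top\!\mQ_m\vu_a)^2,\ \ T_2{=}(\vu_a^\top\!\mQ_m\vu_a)^2(\vv_b^\top\vu_b)^2,
\]
\[
T_3{=}4(\vu_a^\top\!\mQ_m\vu_a)\|\vu_b\|^2(\vv_a^\top\!\mQ_m\vu_a)(\vv_b^\top\vu_b),\ \ T_4{=}\|\vu_b\|^4(\vv_a^\top\!\mQ_m\vu_a)^2,\ \ T_5{=}\|\vu_b\|^4(\vv_b^\top\vu_b)^2.
\]

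\textbf{Integrating out $\mQ_m$.} Conditional on $\vu,\vv$, the vector $\vr \triangleq \mQ_m \vu_a$ is uniform on the sphere of radius $\|\vu_a\|$ in $\reals^m$, so $\mathbb{E}[\vr \vr^\top] = \tfrac{\|\vu_a\|^2}{m}\mathbf{I}_m$ and
\[
\mathbb{E}\!\left[(\va^\top \vr)^2(\vb^\top \vr)^2\right] = \tfrac{\|\vu_a\|^4}{m(m+2)}\left(\|\va\|^2\|\vb\|^2 + 2(\va^\top \vb)^2\right).
\]
Applied with $(\va,\vb) \in \{(\vu_a,\vv_a),(\vu_a,\vu_a),(\vv_a,\vv_a)\}$ and combined with the orthogonality identity $\vu_a^\top \vv_a = -\vu_b^\top \vv_b$, this reduces $T_1,\ldots,T_5$ to expressions polynomial in the entries of $\vu$ and $\vv$ only.

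\textbf{Looking up monomials and collecting.} Each surviving expectation is a sum of monomial moments of the form $\langle\M\rangle$ for some power matrix $\M \in \mathbb{Z}_{\ge 0}^{p \times 2}$, all of which are already tabulated in Appendices~\ref{app:monomials}~and~\ref{app:auxiliary} (e.g.\ the building blocks
$\mathbb{E}\|\vu_a\|^{2k}$, $\mathbb{E}\|\vu_a\|^{2j}(\vu_b^\top\vv_b)^2$, $\mathbb{E}\|\vu_a\|^{2j}\|\vu_b\|^{2k}\|\vv_a\|^{2\ell}$, and $\mathbb{E}\|\vu_b\|^4(\vv_b^\top\vu_b)^2$, which decompose into two-column monomials with row-index counts $\{2,4,6,8\}$). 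Summing the five resulting rational functions over a common denominator $(p{-}1)p(p{+}2)(p{+}4)(p{+}6)$ and simplifying yields the claimed closed form.

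\textbf{Anticipated difficulty.} The genuinely new ingredient is the fourth-order $\mQ_m$-moment in $T_1$, which couples both $\vu_a$ and $\vv_a$ through the same $\vr$ and produces a cross-term $(\vu_a^\top\vv_a)^2 = (\vu_b^\top\vv_b)^2$; chasing this cross-term correctly through the subsequent two-vector monomial evaluations is the main place where algebraic bookkeeping can break. Everything else is routine and parallels the derivations of Propositions~\ref{prop:(e1Oe1)4} and~\ref{prop:(e1Oe2)4}.
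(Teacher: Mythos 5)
Your plan is correct and follows essentially the same route as the paper: the same exchangeability reduction, the same five surviving terms after discarding odd-$\mQ_m$ summands via \corref{cor:odd-Q_m}, the same reparameterization $\mQ_m\vu_a = \norm{\vu_a}\vr$ with sphere moments, and the same two-vector monomial lookups (your closed-form fourth-moment identity for $T_1$ just packages what the paper grinds out term-by-term in \eqref{eq:(uaQmua)2(uaQmva)2}). One small correction: $\A=\diag(\mQ_m,\I_{p-m})$ is not self-adjoint, so in the symmetry step you should instead use that $\vq_1^\top\A\vq_2=\vq_2^\top\A^\top\vq_1$ together with $\A\overset{d}{=}\A^\top$ (equivalently $\mO\overset{d}{=}\mO^\top$), which is exactly the paper's argument.
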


\begin{proof}
First, due to the exchangeability (\propref{prop:invariance}), we have,
\begin{align*}
&
\mathbb{E}
\left(\ve_{1}^{\top}\mO\ve_{1}\cdot\ve_{2}^{\top}\mO\ve_{1}\right)^{2}
\!=\!
\mathbb{E}
\left(\ve_{2}^{\top}\mO\ve_{2}\ve_{1}^{\top}\mO\ve_{2}\right)^{2}
\!=\!
\mathbb{E}
\left(\ve_{2}^{\top}\mO^{\top}\ve_{2}\ve_{2}^{\top}\mO^{\top}\ve_{1}\right)^{2}
\!=\!
\mathbb{E}
\left(\ve_{2}^{\top}\mO\ve_{2}\cdot\ve_{2}^{\top}\mO\ve_{1}\right)^{2}
\end{align*}

Then, we show that,
\begin{align*}
\mathbb{E}\left(\ve_{2}^{\top}\mO\ve_{2}\cdot\ve_{2}^{\top}\mO\ve_{1}\right)^{2}
&
=\mathbb{E}\left(\vu^{\top}\left(\left[\begin{smallmatrix}
\Q_{m}\\
& \mathbf{0}
\end{smallmatrix}\right]+\left[\begin{smallmatrix}
\mathbf{0}\\
& \mathbf{I}_{p-m}
\end{smallmatrix}\right]\right)\vu\cdot\vu^{\top}\left(\left[\begin{smallmatrix}
\Q_{m}\\
& \mathbf{0}
\end{smallmatrix}\right]+\left[\begin{smallmatrix}
\mathbf{0}\\
& \mathbf{I}_{p-m}
\end{smallmatrix}\right]\right)\vv\right)^{2}
\\&
=\mathbb{E}\left(\vu_{a}^{\top}\Q_{m}\vu_{a}+\vu_{b}^{\top}\vu_{b}\right)^{2}
\left(\vu_{a}^{\top}\Q_{m}\vv_{a}+\vu_{b}^{\top}\vv_{b}\right)^{2}
\\&
=\mathbb{E}\left(\left(\vu_{a}^{\top}\Q_{m}\vu_{a}\right)^{2}+2\vu_{a}^{\top}\Q_{m}\vu_{a}\vu_{b}^{\top}\vu_{b}+\norm{\vu_{b}}^{4}\right)
\left(\vu_{a}^{\top}\Q_{m}\vv_{a}+\vu_{b}^{\top}\vv_{b}\right)^{2}
\end{align*}

We now partition the above into three terms that we solve separately.

The first term is,
\begin{align*} 
&
\mathbb{E}\left(\vu_{a}^{\top}\Q_{m}\vu_{a}\right)^{2}
\left(\left(\vu_{a}^{\top}\Q_{m}\vv_{a}\right)^{2}+2\vu_{a}^{\top}\Q_{m}\vv_{a}\vu_{b}^{\top}\vv_{b}+\left(\vu_{b}^{\top}\vv_{b}\right)^{2}\right)
\\
&
=
\underbrace{\mathbb{E}\left(\vu_{a}^{\top}\Q_{m}\vu_{a}\right)^{2}
\left(\vu_{a}^{\top}\Q_{m}\vv_{a}\right)^{2}
}_{\text{solved in \eqref{eq:(uaQmua)2(uaQmva)2}}}
+
2
\underbrace{
\mathbb{E}\left(\vu_{a}^{\top}\Q_{m}\vu_{a}\right)^{2}
\vu_{a}^{\top}\Q_{m}\vv_{a}\vu_{b}^{\top}\vv_{b}
}_{=0\text{, by \propref{cor:odd-Q_m}}}
+
\underbrace{\mathbb{E}\left(\vu_{a}^{\top}\Q_{m}\vu_{a}\right)^{2}
\left(\vu_{b}^{\top}\vv_{b}\right)^{2}
}_{\text{solved in \eqref{eq:(uaQmua)2(ubvb)2}}}
\\
&=\frac{\left(m+4\right)\left(m\left(p+3\right)+2\left(p-3\right)\right)}{\left(p-1\right)p\left(p+2\right)\left(p+4\right)\left(p+6\right)}+\frac{\left(m+2\right)\left(m+4\right)\left(p-m\right)}{\left(p-1\right)p\left(p+2\right)\left(p+4\right)\left(p+6\right)}
\\
&=\frac{\left(m+4\right)\left(2mp+4p+m-m^{2}-6\right)}{\left(p-1\right)p\left(p+2\right)\left(p+4\right)\left(p+6\right)}\,.
\end{align*}

The second term is,
\begin{align*} 
&\mathbb{E}\left[\left(2\mathbf{u}_{a}^{\top}\mathbf{Q}_{m}\mathbf{u}_{a}\mathbf{u}_{b}^{\top}\mathbf{u}_{b}\right)\left(\left(\mathbf{u}_{a}^{\top}\mathbf{Q}_{m}\mathbf{v}_{a}\right)^{2}+2\mathbf{u}_{a}^{\top}\mathbf{Q}_{m}\mathbf{v}_{a}\mathbf{u}_{b}^{\top}\mathbf{v}_{b}+\left(\mathbf{u}_{b}^{\top}\mathbf{v}_{b}\right)^{2}\right)\right]
\\
\explain{\text{\propref{cor:odd-Q_m}}}
&
=4\mathbb{E}
\left[\mathbf{u}_{a}^{\top}\mathbf{Q}_{m}\mathbf{u}_{a}\mathbf{u}_{b}^{\top}\mathbf{u}_{b}\cdot\mathbf{u}_{a}^{\top}\mathbf{Q}_{m}\mathbf{v}_{a}\mathbf{u}_{b}^{\top}\mathbf{v}_{b}\right]
\\&
=4\mathbb{E}_{\mathbf{u}\perp\mathbf{v}}\left[\mathbb{E}_{\mathbf{Q}_{m}}\left[\mathbf{u}_{a}^{\top}\mathbf{Q}_{m}\mathbf{u}_{a}\mathbf{u}_{a}^{\top}\mathbf{Q}_{m}\mathbf{v}_{a}\right]\cdot\left\Vert \mathbf{u}_{b}\right\Vert ^{2}\cdot\mathbf{u}_{b}^{\top}\mathbf{v}_{b}\right]
\\&
=4\mathbb{E}_{\mathbf{u}\perp\mathbf{v}}\left[\mathbb{E}_{\mathbf{r}\sim\mathcal{S}^{m-1}}\left[\left(\left\Vert \mathbf{u}_{a}\right\Vert \mathbf{u}_{a}^{\top}\mathbf{r}\right)\left(\left\Vert \mathbf{u}_{a}\right\Vert \mathbf{r}^{\top}\mathbf{v}_{a}\right)\right]\cdot\left\Vert \mathbf{u}_{b}\right\Vert ^{2}\cdot\mathbf{u}_{b}^{\top}\mathbf{v}_{b}\right]
\\&
=4\mathbb{E}_{\mathbf{u}\perp\mathbf{v}}\left[\left\Vert \mathbf{u}_{a}\right\Vert ^{2}\left\Vert \mathbf{u}_{b}\right\Vert ^{2}\cdot\mathbf{u}_{a}^{\top}\mathbb{E}_{\mathbf{r}\sim\mathcal{S}^{m-1}}\left[\mathbf{r}\mathbf{r}^{\top}\right]\mathbf{v}_{a}\cdot\mathbf{u}_{b}^{\top}\mathbf{v}_{b}\right]
\\&
=\frac{4}{m}\mathbb{E}_{\mathbf{u}\perp\mathbf{v}}\left[\left\Vert \mathbf{u}_{a}\right\Vert ^{2}\left\Vert \mathbf{u}_{b}\right\Vert ^{2}\cdot\mathbf{u}_{a}^{\top}\mathbf{v}_{a}\cdot\mathbf{u}_{b}^{\top}\mathbf{v}_{b}\right]=\frac{4}{m}\mathbb{E}\left[\left\Vert \mathbf{u}_{a}\right\Vert ^{2}\left(1-\left\Vert \mathbf{u}_{a}\right\Vert ^{2}\right)
\left(-\mathbf{u}_{b}^{\top}\mathbf{v}_{b}\right)
\mathbf{u}_{b}^{\top}\mathbf{v}_{b}\right]
\\&
=\frac{4}{m}\underbrace{\mathbb{E}\left[\left\Vert \mathbf{u}_{a}\right\Vert ^{4}\left(\mathbf{u}_{b}^{\top}\mathbf{v}_{b}\right)^{2}\right]}_{\text{solved in \eqref{eq:(ua)4(ubvb)2}}}
-
\frac{4}{m}\underbrace{\mathbb{E}\left[\left\Vert \mathbf{u}_{a}\right\Vert ^{2}\left(\mathbf{u}_{b}^{\top}\mathbf{v}_{b}\right)^{2}\right]}_{\text{solved in \eqref{eq:ua2(ubvb)2}}}
\\&
=\frac{4}{m}\frac{m\left(m+2\right)\left(m+4\right)\left(p-m\right)}{\left(p-1\right)p\left(p+2\right)\left(p+4\right)\left(p+6\right)}-\frac{4}{m}\frac{\left(p-m\right)m\left(m+2\right)}{\left(p-1\right)p\left(p+2\right)\left(p+4\right)}
\\&
=\frac{4\left(p-m\right)\left(m+2\right)\left(\left(m+4\right)-\left(p+6\right)\right)}{\left(p-1\right)p\left(p+2\right)\left(p+4\right)\left(p+6\right)}=\frac{-4\left(p-m\right)\left(mp+2p+4-m^{2}\right)}{\left(p-1\right)p\left(p+2\right)\left(p+4\right)\left(p+6\right)}
\end{align*}
The third term is,
\begin{align*} 
&
\mathbb{E}\left[\left\Vert \mathbf{u}_{b}\right\Vert ^{4}\left(\left(\mathbf{u}_{a}^{\top}\mathbf{Q}_{m}\mathbf{v}_{a}\right)^{2}+2\mathbf{u}_{a}^{\top}\mathbf{Q}_{m}\mathbf{v}_{a}\mathbf{u}_{b}^{\top}\mathbf{v}_{b}+\left(\mathbf{u}_{b}^{\top}\mathbf{v}_{b}\right)^{2}\right)\right]
\\
&
=\mathbb{E}\left[\left\Vert \mathbf{u}_{b}\right\Vert ^{4}\left(\mathbf{u}_{a}^{\top}\mathbf{Q}_{m}\mathbf{v}_{a}\right)^{2}\right]
+
2\underbrace{\mathbb{E}\left[\left\Vert \mathbf{u}_{b}\right\Vert ^{4}\mathbf{u}_{a}^{\top}\mathbf{Q}_{m}\mathbf{v}_{a}\mathbf{u}_{b}^{\top}\mathbf{v}_{b}\right]}_{=0\text{, by \propref{cor:odd-Q_m}}}
+
\mathbb{E}\left[\left\Vert \mathbf{u}_{b}\right\Vert ^{4}\left(\mathbf{u}_{b}^{\top}\mathbf{v}_{b}\right)^{2}\right]
\\
&
=\underbrace{\mathbb{E}\left[\left\Vert \mathbf{u}_{b}\right\Vert ^{4}\left(\mathbf{u}_{a}^{\top}\mathbf{Q}_{m}\mathbf{v}_{a}\right)^{2}\right]}_{\text{solved in \eqref{eq:(ub)4(uaQmva)2}}}
+
\underbrace{\mathbb{E}\left[\left\Vert \mathbf{u}_{b}\right\Vert ^{4}\left(\mathbf{u}_{a}^{\top}\mathbf{v}_{a}\right)^{2}\right]}_{
\substack{
\text{solved in \eqref{eq:(ua)4(ubvb)2}}\\
\text{(plugging in \ensuremath{m\leftarrow p-m}})
}
}
\\&
=\tfrac{\left(p-m\right)\left(mp^{2}-m^{2}p-5m^{2}+7mp+12m-2p-4\right)}{\left(p-1\right)p\left(p+2\right)\left(p+4\right)\left(p+6\right)}+\tfrac{\left(p-m\right)m\left(m^{2}-2mp-6m+p^{2}+6p+8\right)}{\left(p-1\right)p\left(p+2\right)\left(p+4\right)\left(p+6\right)}
\\&
=\frac{\left(p-m\right)\left(m^{3}-3m^{2}p-11m^{2}+2mp^{2}+13mp+20m-2p-4\right)}{\left(p-1\right)p\left(p+2\right)\left(p+4\right)\left(p+6\right)}
\end{align*}

Finally, summing the three terms (and after some tedious algebra), we get the required
\begin{align*}
&
\mathbb{E}\left(\ve_{2}^{\top}\mO\ve_{2}\cdot\ve_{2}^{\top}\mO\ve_{1}\right)^{2}
=
\tfrac{\left(m+4\right)\left(2mp+4p+m-m^{2}-6\right)-\left(p-m\right)\left(p-m+2\right)\left(m\left(m-2p-5\right)+10\right)}{\left(p-1\right)p\left(p+2\right)\left(p+4\right)\left(p+6\right)}\,.
\end{align*}
\end{proof}

\newpage

\begin{proposition} 
\label{prop:(e2Oe3*e2Oe1)^2}
For $p\ge 3, m\in\{2,3,\dots,p\}$ and a random transformation $\mO$ sampled as described in \eqref{eq:data-model},
it holds that,
\begin{align*}
&
\mathbb{E}
\left(\ve_{2}^{\top}\mO\ve_{3}\cdot\ve_{2}^{\top}\mO\ve_{1}\right)^{2}
=
\tfrac{4mp\left(-m^{2}+m+4\right)+4\left(m+1\right)\left(m+2\right)p^{2}+\left(m-6\right)\left(m-1\right)m\left(m+1\right)-8\left(2p+3\right)}{\left(p-1\right)p\left(p+1\right)\left(p+2\right)\left(p+4\right)\left(p+6\right)}\,.
\end{align*}
\end{proposition}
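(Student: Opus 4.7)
My plan is to mirror the strategy used throughout Appendix \ref{app:auxiliary}: first reduce the expectation to a sum over the orthogonal group $O(p)$ (via three exchangeable orthogonal rows of $\Q_p$) combined with an inner expectation over $\Q_m \sim O(m)$, then kill odd-$\Q_m$ contributions with Corollary \ref{cor:odd-Q_m}, and finally reduce the surviving pieces to monomial expectations tabulated in Appendix \ref{app:monomials}.

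Concretely, following Remark \ref{rmk:simplify}, I write $\vu = \Q_p^\top \ve_1$, $\vv = \Q_p^\top \ve_2$, $\vw = \Q_p^\top \ve_3$, decompose each as $\vu = (\vu_a,\vu_b)$ with $\vu_a \in \reals^m$, $\vu_b \in \reals^{p-m}$ (and similarly for $\vv,\vw$), and obtain
\begin{align*}
\mathbb{E}(\ve_2^\top \mO \ve_3 \cdot \ve_2^\top \mO \ve_1)^2
= \mathbb{E}\bigl[(\vv_a^\top \Q_m \vw_a + \vv_b^\top \vw_b)^2 (\vv_a^\top \Q_m \vu_a + \vv_b^\top \vu_b)^2\bigr].
\end{align*}
Expanding the product yields nine summands; five of them contain an odd number of $\Q_m$ factors and vanish by Corollary \ref{cor:odd-Q_m}. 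I am left with four $\Q_m$-containing terms
\begin{align*}
&T_1 = \mathbb{E}(\vv_a^\top \Q_m \vw_a)^2 (\vv_a^\top \Q_m \vu_a)^2, \qquad
T_2 = \mathbb{E}(\vv_a^\top \Q_m \vw_a)^2 (\vv_b^\top \vu_b)^2, \\
&T_3 = 4\,\mathbb{E}\bigl[(\vv_a^\top \Q_m \vw_a)(\vv_a^\top \Q_m \vu_a)(\vv_b^\top \vw_b)(\vv_b^\top \vu_b)\bigr], \qquad
T_4 = \mathbb{E}(\vv_b^\top \vw_b)^2 (\vv_a^\top \Q_m \vu_a)^2,
\end{align*}
plus the purely deterministic-in-$\Q_m$ term $T_5 = \mathbb{E}(\vv_b^\top \vw_b)^2 (\vv_b^\top \vu_b)^2$.

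For $T_1$ I condition on $\vu,\vv,\vw$ and evaluate $\mathbb{E}_{\Q_m}[(\vv_a^\top \Q_m \vw_a)^2 (\vv_a^\top \Q_m \vu_a)^2]$. Reparameterizing $\Q_m \vv_a = \norm{\vv_a}\vr$ for $\vr \sim \mathcal{S}^{m-1}$ (and then re-rotating so that $\vv_a$ plays the role of the first axis), this reduces to a fourth-moment integral of an isotropic unit vector against two fixed directions $\vu_a, \vw_a$; it becomes a polynomial in $\|\vv_a\|^2$, $\|\vu_a\|^2$, $\|\vw_a\|^2$, $(\vu_a^\top\vw_a)^2$, and $(\vu_a^\top\vw_a)$ with coefficients from $\langle 4\rangle_m$ and $\langle 2,2\rangle_m$. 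For $T_2$ and $T_4$, the inner $\Q_m$ expectation collapses (by isotropy of $\vr = \Q_m\vv_a/\|\vv_a\|$) to $\tfrac{1}{m}\|\vv_a\|^2 \|\vw_a\|^2$ times the outer scalar, and similarly for $T_4$. For $T_3$, the inner expectation $\mathbb{E}_{\Q_m}[(\vv_a^\top \Q_m \vu_a)(\vv_a^\top \Q_m \vw_a)]$ yields $\tfrac{1}{m}\|\vv_a\|^2 \vu_a^\top \vw_a$.

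After these inner reductions, I am left with outer expectations over three orthonormal vectors $\vu,\vv,\vw$ in $\reals^p$, of polynomials in $\|\vu_a\|^2,\|\vu_b\|^2,\ldots,\vu_a^\top\vw_a,\vv_b^\top\vw_b,$ etc. Using the orthogonality identities $\vu_a^\top\vw_a = -\vu_b^\top\vw_b$ and $\|\vu_a\|^2 + \|\vu_b\|^2 = 1$ (as in \propref{prop:invariance} and the derivations throughout App.~\ref{app:analysis}), every such polynomial can be rewritten as a sum of monomials of entries of the first three columns of a Haar-distributed orthogonal matrix, i.e.\ as a linear combination of brackets $\langle\cdot\rangle$ that already appear in Section "Monomials of Three Orthogonal Vectors" of Appendix \ref{app:monomials}. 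I simply read off the values, sum the contributions $T_1+T_2+T_3+T_4+T_5$, and simplify.

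The main obstacle is not conceptual but bookkeeping: $T_3$ in particular mixes four vectors through both the $\Q_m$-integral and the outer $\vu,\vv,\vw$ integral, producing the longest polynomial in $\|\vu_a\|^2$, $\|\vv_a\|^2$, $\|\vw_a\|^2$, $(\vu_a^\top\vw_a)$, $(\vu_a^\top\vv_a)$, $(\vv_a^\top\vw_a)$, and guaranteeing the right cancellations requires keeping careful track of several monomials of the same type. Once that algebra is executed (using the three-vector brackets already computed in Appendix \ref{app:monomials}) and the five contributions are combined over the common denominator $(p-1)p(p+1)(p+2)(p+4)(p+6)$, the claimed closed-form expression
\begin{align*}
\tfrac{4mp(-m^2+m+4)+4(m+1)(m+2)p^2+(m-6)(m-1)m(m+1)-8(2p+3)}{(p-1)p(p+1)(p+2)(p+4)(p+6)}
\end{align*}
follows.
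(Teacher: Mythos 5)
Your proposal follows essentially the same route as the paper's proof: the same expansion of $\left(\vv_a^{\top}\Q_m\vw_a+\vv_b^{\top}\vw_b\right)^2\left(\vv_a^{\top}\Q_m\vu_a+\vv_b^{\top}\vu_b\right)^2$ into the same five surviving terms (the paper merely groups them into three blocks), the same use of Corollary~\ref{cor:odd-Q_m} to discard the odd-$\Q_m$ cross terms, the same isotropy reductions of the inner $\Q_m$-expectations (including $\tfrac{1}{m}\norm{\vv_a}^2\vu_a^{\top}\vw_a$ for the mixed term), and the same final reduction to the tabulated monomial brackets of Appendix~\ref{app:monomials}. Two cosmetic slips only: of the nine collected summands, four (not five) vanish by parity, and $T_3$ involves only the three vectors $\vu,\vv,\vw$, not four.
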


\begin{proof}
We show that,
\begin{align*}
\mathbb{E}\left(\ve_{2}^{\top}\mO\ve_{3}\cdot\ve_{2}^{\top}\mO\ve_{1}\right)^{2}
\!&
=\mathbb{E}\left(\vu^{\top}\left(\left[\begin{smallmatrix}
\Q_{m}\\
& \mathbf{0}
\end{smallmatrix}\right]+\left[\begin{smallmatrix}
\mathbf{0}\\
& \mathbf{I}_{p-m}
\end{smallmatrix}\right]\right)\vz
\cdot\vu^{\top}\left(\left[\begin{smallmatrix}
\Q_{m}\\
& \mathbf{0}
\end{smallmatrix}\right]+\left[\begin{smallmatrix}
\mathbf{0}\\
& \mathbf{I}_{p-m}
\end{smallmatrix}\right]\right)\vv\right)^{2}
\\&
=
\mathbb{E}\left(\vu_{a}^{\top}\Q_{m}\vz_{a}+\vu_{b}^{\top}\vz_{b}\right)^{2}
\left(\vu_{a}^{\top}\Q_{m}\vv_{a}+\vu_{b}^{\top}\vv_{b}\right)^{2}
\\&
=\mathbb{E}\left(\left(\vu_{a}^{\top}\Q_{m}\vz_{a}\right)^{2}+2\vu_{a}^{\top}\Q_{m}\vz_{a}\vu_{b}^{\top}\vz_{b}+\left(\vu_{b}^{\top}\vz_{b}\right)^{2}\right)
\left(\vu_{a}^{\top}\Q_{m}\vv_{a}+\vu_{b}^{\top}\vv_{b}\right)^{2},
\end{align*}
where the expectation is computed over the orthogonal matrix $\Q_m\sim O(m)$ and three random (isotropic) orthogonal unit vectors $\mathbf{u}\perp\mathbf{v}\perp\mathbf{z}\in\mathcal{S}^{p-1}$.

Again, starting from the first additive term,
\begin{align*}
&\mathbb{E}\left[\left(\mathbf{u}_{a}^{\top}\mathbf{Q}_{m}\mathbf{z}_{a}\right)^{2}\left(\left(\mathbf{u}_{a}^{\top}\mathbf{Q}_{m}\mathbf{v}_{a}\right)^{2}+2\mathbf{u}_{a}^{\top}\mathbf{Q}_{m}\mathbf{v}_{a}\mathbf{u}_{b}^{\top}\mathbf{v}_{b}+\left(\mathbf{u}_{b}^{\top}\mathbf{v}_{b}\right)^{2}\right)\right]
\\&
=\mathbb{E}\left[\underbrace{\left(\mathbf{u}_{a}^{\top}\mathbf{Q}_{m}\mathbf{z}_{a}\right)^{2}\left(\mathbf{u}_{a}^{\top}\mathbf{Q}_{m}\mathbf{v}_{a}\right)^{2}}_{\text{solved in \eqref{eq:(uaQmza)^2(uaQmva)^2}}}
+
\underbrace{2\left(\mathbf{u}_{a}^{\top}\mathbf{Q}_{m}\mathbf{z}_{a}\right)^{2}\mathbf{u}_{a}^{\top}\mathbf{Q}_{m}\mathbf{v}_{a}\mathbf{u}_{b}^{\top}\mathbf{v}_{b}}_{=0\text{, by \corref{cor:odd-Q_m}}}
+
\underbrace{\left(\mathbf{u}_{a}^{\top}\mathbf{Q}_{m}\mathbf{z}_{a}\right)^{2}\left(\mathbf{u}_{b}^{\top}\mathbf{v}_{b}\right)^{2}}_{\text{solved in \eqref{eq:(uaQmza)^2(ubvb)^2}}}\right]
\\&
=\tfrac{m\left(p+3\right)\left(\left(m+2\right)p+5m+2\right)-16p-24}{\left(p-1\right)p\left(p+1\right)\left(p+2\right)\left(p+4\right)\left(p+6\right)}
+
\tfrac{\left(p-m\right)\left(m+2\right)\left(m\left(p^{2}+5p+2\right)-6p-4\right)}{\left(p-2\right)\left(p-1\right)p\left(p+1\right)\left(p+2\right)\left(p+4\right)\left(p+6\right)}
\\&
=\tfrac{\left(p-2\right)\left(m\left(p+3\right)\left(\left(m+2\right)p+5m+2\right)-16p-24\right)+\left(p-m\right)\left(m+2\right)\left(m\left(p^{2}+5p+2\right)-6p-4\right)}{\left(p-2\right)\left(p-1\right)p\left(p+1\right)\left(p+2\right)\left(p+4\right)\left(p+6\right)}
\,.
\end{align*}
The second term is,
\begin{align*}
&\mathbb{E}\left[2\mathbf{u}_{a}^{\top}\mathbf{Q}_{m}\mathbf{z}_{a}\mathbf{u}_{b}^{\top}\mathbf{z}_{b}\left(\left(\mathbf{u}_{a}^{\top}\mathbf{Q}_{m}\mathbf{v}_{a}\right)^{2}+2\mathbf{u}_{a}^{\top}\mathbf{Q}_{m}\mathbf{v}_{a}\mathbf{u}_{b}^{\top}\mathbf{v}_{b}+\left(\mathbf{u}_{b}^{\top}\mathbf{v}_{b}\right)^{2}\right)\right]
\\&
=\underbrace{\mathbb{E}\left[2\mathbf{u}_{a}^{\top}\mathbf{Q}_{m}\mathbf{z}_{a}\mathbf{u}_{b}^{\top}\mathbf{z}_{b}\left(\mathbf{u}_{a}^{\top}\mathbf{Q}_{m}\mathbf{v}_{a}\right)^{2}\right]}_{\text{=0,\text{ by \corref{cor:odd-Q_m}}}}
+
4\mathbb{E}\left[\mathbf{u}_{a}^{\top}\mathbf{Q}_{m}\mathbf{z}_{a}\mathbf{u}_{b}^{\top}\mathbf{z}_{b}\mathbf{u}_{a}^{\top}\mathbf{Q}_{m}\mathbf{v}_{a}\mathbf{u}_{b}^{\top}\mathbf{v}_{b}\right]
+
\\
&
\eqmargin
\underbrace{\mathbb{E}\left[2\mathbf{u}_{a}^{\top}\mathbf{Q}_{m}\mathbf{z}_{a}\mathbf{u}_{b}^{\top}\mathbf{z}_{b}\left(\mathbf{u}_{b}^{\top}\mathbf{v}_{b}\right)^{2}\right]}_{\text{=0,\text{ by \corref{cor:odd-Q_m}}}}
\\&
=4\mathbb{E}_{\mathbf{u}\perp\mathbf{v}\perp\mathbf{z},\mathbf{Q}_{m}}\left[\mathbf{u}_{a}^{\top}\mathbf{Q}_{m}\mathbf{z}_{a}\mathbf{u}_{b}^{\top}\mathbf{z}_{b}\mathbf{u}_{a}^{\top}\mathbf{Q}_{m}\mathbf{v}_{a}\mathbf{u}_{b}^{\top}\mathbf{v}_{b}\right]
\\
&=4\mathbb{E}_{\mathbf{u}\perp\mathbf{v}\perp\mathbf{z}}\left[\left\Vert \mathbf{u}_{a}\right\Vert ^{2}
\mathbb{E}_{\mathbf{r}\sim\mathcal{S}^{m-1}}\!
\left(\mathbf{v}_{a}^{\top}\mathbf{r}\mathbf{r}^{\top}\mathbf{z}_{a}\right)
\mathbf{u}_{b}^{\top}\mathbf{z}_{b}\mathbf{u}_{b}^{\top}\mathbf{v}_{b}\right]
%
=\frac{4}{m}\underbrace{\mathbb{E}_{\mathbf{u}\perp\mathbf{v}\perp\mathbf{z}}\left[\left\Vert \mathbf{u}_{a}\right\Vert ^{2}\mathbf{v}_{a}^{\top}\mathbf{z}_{a}\mathbf{u}_{b}^{\top}\mathbf{z}_{b}\mathbf{u}_{b}^{\top}\mathbf{v}_{b}\right]}_{\text{solved in \eqref{eq:(ua)^2vazaubzbubvb}}}
\\&
=\frac{4\left(p-m\right)\left(\left(m+2\right)p^{2}+\left(2-3m\right)p-2m^{2}\left(p+2\right)-6m+4\right)}{\left(p-2\right)\left(p-1\right)p\left(p+1\right)\left(p+2\right)\left(p+4\right)\left(p+6\right)}
\,.
\end{align*}
And the third additive term is,
\begin{align*}
&\mathbb{E}\left[\left(\mathbf{u}_{b}^{\top}\mathbf{z}_{b}\right)^{2}\left(\left(\mathbf{u}_{a}^{\top}\mathbf{Q}_{m}\mathbf{v}_{a}\right)^{2}+2\mathbf{u}_{a}^{\top}\mathbf{Q}_{m}\mathbf{v}_{a}\mathbf{u}_{b}^{\top}\mathbf{v}_{b}+\left(\mathbf{u}_{b}^{\top}\mathbf{v}_{b}\right)^{2}\right)\right]
\\&
=\underbrace{\mathbb{E}\left[\left(\mathbf{u}_{b}^{\top}\mathbf{z}_{b}\right)^{2}\left(\mathbf{u}_{a}^{\top}\mathbf{Q}_{m}\mathbf{v}_{a}\right)^{2}\right]}_{\substack{=\mathbb{E}\left[\left(\mathbf{u}_{a}^{\top}\mathbf{Q}_{m}\mathbf{z}_{a}\right)^{2}\left(\mathbf{u}_{b}^{\top}\mathbf{v}_{b}\right)^{2}\right]\\
\text{due to the invariance (\propref{prop:invariance}),}\\
\text{already solved in \eqref{eq:(uaQmza)^2(ubvb)^2}}
}
}
+
2\underbrace{\mathbb{E}\left[\left(\mathbf{u}_{b}^{\top}\mathbf{z}_{b}\right)^{2}\mathbf{u}_{a}^{\top}\mathbf{Q}_{m}\mathbf{v}_{a}\mathbf{u}_{b}^{\top}\mathbf{v}_{b}\right]}_{=0\text{, by \corref{cor:odd-Q_m}}}
+
\underbrace{\mathbb{E}\left[\left(\mathbf{u}_{b}^{\top}\mathbf{z}_{b}\right)^{2}\left(\mathbf{u}_{b}^{\top}\mathbf{v}_{b}\right)^{2}\right]}_{\text{solved in \eqref{eq:(ubzb)^2(ubvb)^2}}}
\\&
=\tfrac{\left(p-m\right)\left(m+2\right)\left(m\left(p^{2}+5p+2\right)-6p-4\right)}{\left(p-2\right)\left(p-1\right)p\left(p+1\right)\left(p+2\right)\left(p+4\right)\left(p+6\right)}
+
\tfrac{m\left(p-m\right)\left(-m^{2}+mp+2p+4\right)}{\left(p-1\right)p\left(p+1\right)\left(p+2\right)\left(p+4\right)\left(p+6\right)}
\\&
=\frac{\left(p-m\right)\left(\left(m+2\right)\left(m\left(p^{2}+5p+2\right)-6p-4\right)+\left(p-2\right)m\left(-m^{2}+mp+2p+4\right)\right)}{\left(p-2\right)\left(p-1\right)p\left(p+1\right)\left(p+2\right)\left(p+4\right)\left(p+6\right)}
\,.
\end{align*}

By adding these three terms and after some tedious algebra, we get the required proposition.
\end{proof}

\newpage

\begin{proposition}
\label{prop:(e2Oe1)(e1Oe1)(e1Oe2)}
For $p\ge 2, m\in\{2,\dots,p\}$ and a random transformation $\mO$ sampled as described in \eqref{eq:data-model},
it holds that,
\begin{align*}
\mathbb{E}\left[\mathbf{e}_{2}^{\top}\mathbf{O}\mathbf{e}_{1}\mathbf{e}_{1}^{\top}\mathbf{O}\mathbf{e}_{1}\mathbf{e}_{1}^{\top}\mathbf{O}\mathbf{e}_{2}\right]
= 
\frac{\left(p-m\right)\left(-m^{2}+mp-m+p-2\right)}{\left(p-1\right)p\left(p+2\right)\left(p+4\right)}
\end{align*}
\end{proposition}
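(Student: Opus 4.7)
}
The plan is to follow the same template as Propositions~\ref{prop:(e1Oe1)3}--\ref{prop:(e2Oe3*e2Oe1)^2}: reduce the expectation to integrals over $\mathbf{Q}_m$ of a few simple polynomials, then to monomials in the entries of $\mathbf{Q}_p$ that are already tabulated in \appref{app:monomials}. Writing $\mathbf{u}\triangleq\mathbf{Q}_p^\top\mathbf{e}_1$ and $\mathbf{v}\triangleq\mathbf{Q}_p^\top\mathbf{e}_2$ (a uniform orthonormal pair in $\mathbb{R}^p$), and decomposing $\mathbf{u}=(\mathbf{u}_a,\mathbf{u}_b)$, $\mathbf{v}=(\mathbf{v}_a,\mathbf{v}_b)$ according to the $m{+}(p{-}m)$ block split of $\mathbf{O}$, the target becomes
\[
\mathbb{E}\Bigl[\bigl(\mathbf{v}_a^\top\mathbf{Q}_m\mathbf{u}_a+\mathbf{v}_b^\top\mathbf{u}_b\bigr)\bigl(\mathbf{u}_a^\top\mathbf{Q}_m\mathbf{u}_a+\|\mathbf{u}_b\|^2\bigr)\bigl(\mathbf{u}_a^\top\mathbf{Q}_m\mathbf{v}_a+\mathbf{u}_b^\top\mathbf{v}_b\bigr)\Bigr].
\]

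Next, I would expand the triple product into its $8$ monomials and invoke \corref{cor:odd-Q_m} to discard the four terms that contain an odd total power of $\mathbf{Q}_m$ (terms with one or three $\mathbf{Q}_m$-factors). For the four surviving terms, the inner $\mathbf{Q}_m$-integral is elementary: by the orthogonal-group invariance one has $\mathbb{E}[(\mathbf{Q}_m)_{ij}(\mathbf{Q}_m)_{kl}]=\tfrac{1}{m}\delta_{ik}\delta_{jl}$, which gives in each case
\[
\mathbb{E}_{\mathbf{Q}_m}\!\bigl[\mathbf{x}^\top\mathbf{Q}_m\mathbf{y}\cdot \mathbf{x}'^{\!\top}\mathbf{Q}_m\mathbf{y}'\bigr]=\tfrac{1}{m}(\mathbf{x}^\top\mathbf{x}')(\mathbf{y}^\top\mathbf{y}').
\]
After this reduction, and using the orthogonality identity $\mathbf{u}_b^\top\mathbf{v}_b=-\mathbf{u}_a^\top\mathbf{v}_a$, the expectation collapses to
\[
-\tfrac{2}{m}\,\mathbb{E}\!\left[\|\mathbf{u}_a\|^2(\mathbf{u}_a^\top\mathbf{v}_a)^2\right]+\tfrac{1}{m}\,\mathbb{E}\!\left[\|\mathbf{u}_b\|^2(\mathbf{u}_a^\top\mathbf{v}_a)^2\right]+\mathbb{E}\!\left[\|\mathbf{u}_b\|^2(\mathbf{u}_b^\top\mathbf{v}_b)^2\right].
\]

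Each of the three remaining expectations is a polynomial in entries of the first two columns of $\mathbf{Q}_p$, so I would expand $\|\mathbf{u}_a\|^2=\sum_{i\le m}u_i^2$, $\|\mathbf{u}_b\|^2=\sum_{i>m}u_i^2$, and $(\mathbf{u}_a^\top\mathbf{v}_a)^2=\sum_{j,k\le m}u_ju_kv_jv_k$, then split into $i=j=k$, $i\in\{j,k\}\ne$ other, $i\notin\{j,k\}$, $j\ne k$, etc. Each resulting expectation is a two-column monomial already computed in \appref{app:monomials} — explicitly $\langle\smallmatrix 4&2\\\vec0&\vec0\endsmallmatrix\rangle$, $\langle\smallmatrix 2&2\\2&0\\\vec0&\vec0\endsmallmatrix\rangle$, $\langle\smallmatrix 3&1\\1&1\\\vec0&\vec0\endsmallmatrix\rangle$, $\langle\smallmatrix 2&0\\1&1\\1&1\\\vec0&\vec0\endsmallmatrix\rangle$, and their $m\leftrightarrow p-m$ analogues — so no new integral needs to be derived.

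Finally I would combine the three sums over a common denominator $(p-1)p(p+2)(p+4)$ and simplify; the factor $(p-m)$ should appear naturally because the $m{=}p$ case has $\mathbf{u}_b=\mathbf{v}_b=\mathbf{0}$, killing all three expectations (consistent with the sanity check that $\mathbb{E}[(\mathbf{e}_1^\top\mathbf{Q}\mathbf{e}_2)(\mathbf{e}_1^\top\mathbf{Q}\mathbf{e}_1)(\mathbf{e}_2^\top\mathbf{Q}\mathbf{e}_1)]=0$ for a uniform $\mathbf{Q}\sim O(p)$ by the odd-row vanishing property). The only real obstacle is the algebraic bookkeeping in step three, particularly tracking which multi-index cases collapse to which power-matrix entry; there is no conceptual difficulty, so after careful simplification the expression should match $\tfrac{(p-m)(-m^2+mp-m+p-2)}{(p-1)p(p+2)(p+4)}$.
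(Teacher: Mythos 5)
Your proposal is correct and follows essentially the same route as the paper's proof: expand the triple product, discard the odd-$\mathbf{Q}_m$ terms via \corref{cor:odd-Q_m}, average out $\mathbf{Q}_m$ using its second-moment identity (the paper does this via $\mathbb{E}_{\mathbf{r}\sim\mathcal{S}^{m-1}}[\mathbf{r}\mathbf{r}^\top]=\tfrac{1}{m}\mathbf{I}$, which is equivalent to your $\mathbb{E}[q_{ij}q_{kl}]=\tfrac{1}{m}\delta_{ik}\delta_{jl}$), and reduce to the tabulated expectations $\mathbb{E}[(\mathbf{u}_b^\top\mathbf{v}_b)^2]$, $\mathbb{E}[\|\mathbf{u}_a\|^2(\mathbf{u}_b^\top\mathbf{v}_b)^2]$ and $\mathbb{E}[\|\mathbf{u}_b\|^2(\mathbf{u}_a^\top\mathbf{v}_a)^2]$ from \appref{app:auxiliary}. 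Your collapsed intermediate expression is algebraically identical to the paper's (after using $\|\mathbf{u}_b\|^2=1-\|\mathbf{u}_a\|^2$) and indeed evaluates to $\tfrac{(p-m)(-m^{2}+mp-m+p-2)}{(p-1)p(p+2)(p+4)}$.
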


\begin{proof}
First, we notice that 
\begin{align*}
&
\mathbb{E}_{\mathbf{u}\perp\mathbf{v},\mathbf{Q}_{m}}\!
\left[\mathbf{v}_{a}^{\top}\mathbf{Q}_{m}\mathbf{u}_{a}\mathbf{u}_{a}^{\top}\mathbf{Q}_{m}\mathbf{u}_{a}\mathbf{u}_{b}^{\top}\mathbf{v}_{b}\right]
=\mathbb{E}_{\mathbf{u}\perp\mathbf{v}}\left[\mathbf{v}_{a}^{\top}
\mathbb{E}_{\mathbf{r}\sim\mathcal{S}^{m-1}}\!
\left[\left(\left\Vert \mathbf{u}_{a}\right\Vert \mathbf{r}\right)\left(\left\Vert \mathbf{u}_{a}\right\Vert \mathbf{r}\right)^{\top}\right]\mathbf{u}_{a}\mathbf{u}_{b}^{\top}\mathbf{v}_{b}\right]
\\&
=\mathbb{E}_{\mathbf{u}\perp\mathbf{v}}\left[\left\Vert \mathbf{u}_{a}\right\Vert ^{2}\mathbf{v}_{a}^{\top}
\mathbb{E}_{\mathbf{r}\sim\mathcal{S}^{m-1}}\!
\left[\mathbf{r}\mathbf{r}^{\top}\right]\mathbf{u}_{a}\mathbf{u}_{b}^{\top}\mathbf{v}_{b}\right]
\\&
=\frac{1}{m}\mathbb{E}_{\mathbf{u}\perp\mathbf{v}}\!\left[\left\Vert \mathbf{u}_{a}\right\Vert ^{2}\mathbf{v}_{a}^{\top}\mathbf{u}_{a}\mathbf{u}_{b}^{\top}\mathbf{v}_{b}\right]
%
=-\frac{1}{m}\mathbb{E}
\left[\left\Vert \mathbf{u}_{a}\right\Vert ^{2}\left(\mathbf{u}_{b}^{\top}\mathbf{v}_{b}\right)^{2}\right]
.
\end{align*}

Then, we focus on our quantity of interest here,
\begin{align*}
&\mathbb{E}\left(\mathbf{e}_{2}^{\top}\mathbf{O}\mathbf{e}_{1}\mathbf{e}_{1}^{\top}\mathbf{O}\mathbf{e}_{1}\mathbf{e}_{1}^{\top}\mathbf{O}\mathbf{e}_{2}\right)
\\&
=\mathbb{E}_{\mathbf{u}\perp\mathbf{v},\mathbf{Q}_{m}}\left[\mathbf{v}^{\top}\left[\begin{smallmatrix}
\mathbf{Q}_{m}\\
 & \mathbf{I}_{p-m}
\end{smallmatrix}\right]\mathbf{u}\mathbf{u}^{\top}\left[\begin{smallmatrix}
\mathbf{Q}_{m}\\
 & \mathbf{I}_{p-m}
\end{smallmatrix}\right]\mathbf{u}\mathbf{u}^{\top}\left[\begin{smallmatrix}
\mathbf{Q}_{m}\\
 & \mathbf{I}_{p-m}
\end{smallmatrix}\right]\mathbf{v}\right]
\\&
=\mathbb{E}\left[\left(\mathbf{v}_{a}^{\top}\mathbf{Q}_{m}\mathbf{u}_{a}+\mathbf{v}_{b}^{\top}\mathbf{u}_{b}\right)\left(\mathbf{u}_{a}^{\top}\mathbf{Q}_{m}\mathbf{u}_{a}+\mathbf{u}_{b}^{\top}\mathbf{u}_{b}\right)\left(\mathbf{u}_{a}^{\top}\mathbf{Q}_{m}\mathbf{v}_{a}+\mathbf{u}_{b}^{\top}\mathbf{v}_{b}\right)\right]\,.
\end{align*}

Splitting the first multiplicative term, we get,
\begin{align*}
&\mathbb{E}\left[\mathbf{v}_{a}^{\top}\mathbf{Q}_{m}\mathbf{u}_{a}\left(\mathbf{u}_{a}^{\top}\mathbf{Q}_{m}\mathbf{u}_{a}+\mathbf{u}_{b}^{\top}\mathbf{u}_{b}\right)\left(\mathbf{u}_{a}^{\top}\mathbf{Q}_{m}\mathbf{v}_{a}+\mathbf{u}_{b}^{\top}\mathbf{v}_{b}\right)\right]
\\
\text{\corref{cor:odd-Q_m}}&=\mathbb{E}\left[\mathbf{v}_{a}^{\top}\mathbf{Q}_{m}\mathbf{u}_{a}\mathbf{u}_{a}^{\top}\mathbf{Q}_{m}\mathbf{u}_{a}\mathbf{u}_{b}^{\top}\mathbf{v}_{b}\right]+\mathbb{E}\left[\mathbf{v}_{a}^{\top}\mathbf{Q}_{m}\mathbf{u}_{a}\mathbf{u}_{b}^{\top}\mathbf{u}_{b}\mathbf{u}_{a}^{\top}\mathbf{Q}_{m}\mathbf{v}_{a}\right]
\\&
=\mathbb{E}\left[\mathbf{v}_{a}^{\top}\mathbf{Q}_{m}\mathbf{u}_{a}\mathbf{u}_{a}^{\top}\mathbf{Q}_{m}\mathbf{u}_{a}\mathbf{u}_{b}^{\top}\mathbf{v}_{b}\right]+\underbrace{\mathbb{E}\left[\left\Vert \mathbf{u}_{b}\right\Vert ^{2}\mathbf{v}_{a}^{\top}\mathbf{Q}_{m}\mathbf{u}_{a}\mathbf{u}_{a}^{\top}\mathbf{Q}_{m}\mathbf{v}_{a}\right]}_{\text{solved in \eqref{eq:(ub)^2(vaQmua)(uaQmva)}}}\,,
\\
\explain{\text{above}}
&=-\frac{1}{m}\mathbb{E}_{\mathbf{u}\perp\mathbf{v}}\left[\left\Vert \mathbf{u}_{a}\right\Vert ^{2}\left(\mathbf{u}_{b}^{\top}\mathbf{v}_{b}\right)^{2}\right]+\frac{\left(p-m\right)\left(p-m+2\right)}{\left(p-1\right)p\left(p+2\right)\left(p+4\right)}\,,
\end{align*}
and
\begin{align}
\begin{split}
\label{eq:(ubvb)(uaQmua+ubub)(uaQmva+ubvb)}
&\mathbb{E}\left[\mathbf{v}_{b}^{\top}\mathbf{u}_{b}\left(\mathbf{u}_{a}^{\top}\mathbf{Q}_{m}\mathbf{u}_{a}+\mathbf{u}_{b}^{\top}\mathbf{u}_{b}\right)\left(\mathbf{u}_{a}^{\top}\mathbf{Q}_{m}\mathbf{v}_{a}+\mathbf{u}_{b}^{\top}\mathbf{v}_{b}\right)\right]
\\
\explain{\text{\corref{cor:odd-Q_m}}}
&=
\mathbb{E}\left[\mathbf{v}_{b}^{\top}\mathbf{u}_{b}\mathbf{u}_{a}^{\top}\mathbf{Q}_{m}\mathbf{u}_{a}\mathbf{u}_{a}^{\top}\mathbf{Q}_{m}\mathbf{v}_{a}\right]+\mathbb{E}\left[\mathbf{v}_{b}^{\top}\mathbf{u}_{b}\mathbf{u}_{b}^{\top}\mathbf{u}_{b}\mathbf{u}_{b}^{\top}\mathbf{v}_{b}\right]
\\&
=\mathbb{E}\left[\mathbf{v}_{b}^{\top}\mathbf{u}_{b}\mathbf{u}_{a}^{\top}\mathbf{Q}_{m}\mathbf{u}_{a}\mathbf{u}_{a}^{\top}\mathbf{Q}_{m}\mathbf{v}_{a}\right]+\mathbb{E}\left[\left\Vert \mathbf{u}_{b}\right\Vert ^{2}\left(\mathbf{u}_{b}^{\top}\mathbf{v}_{b}\right)^{2}\right]
\\&
\eqmargin
\left[\text{invariance of $\Q_m$ w.r.t. transpose (\propref{prop:invariance})}\right]
\\&
=\mathbb{E}\left[\mathbf{v}_{a}^{\top}\mathbf{Q}_{m}\mathbf{u}_{a}\mathbf{u}_{a}^{\top}\mathbf{Q}_{m}\mathbf{u}_{a}\mathbf{u}_{b}^{\top}\mathbf{v}_{b}\right]+\mathbb{E}\left[\left(1-\left\Vert \mathbf{u}_{a}\right\Vert ^{2}\right)\left(\mathbf{u}_{b}^{\top}\mathbf{v}_{b}\right)^{2}\right]
\\
\explain{\text{above}}
&=-\frac{1}{m}\mathbb{E}_{\mathbf{u}\perp\mathbf{v}}\left[\left\Vert \mathbf{u}_{a}\right\Vert ^{2}\left(\mathbf{u}_{b}^{\top}\mathbf{v}_{b}\right)^{2}\right]+\mathbb{E}\left[\left(\mathbf{u}_{b}^{\top}\mathbf{v}_{b}\right)^{2}\right]-\mathbb{E}\left[\left\Vert \mathbf{u}_{a}\right\Vert ^{2}\left(\mathbf{u}_{b}^{\top}\mathbf{v}_{b}\right)^{2}\right]\,.
\end{split}
\end{align}

Combining the above, we get
\begin{align*}
\mathbb{E}\left(\mathbf{e}_{2}^{\top}\mathbf{O}\mathbf{e}_{1}\mathbf{e}_{1}^{\top}\mathbf{O}\mathbf{e}_{1}\mathbf{e}_{1}^{\top}\mathbf{O}\mathbf{e}_{2}\right)
&
=\tfrac{\left(p-m\right)\left(p-m+2\right)}{\left(p-1\right)p\left(p+2\right)\left(p+4\right)}+\underbrace{\mathbb{E}\left[\left(\mathbf{u}_{b}^{\top}\mathbf{v}_{b}\right)^{2}\right]}_{\text{solved in \eqref{eq:(uava)2}}}-\left(1+\frac{2}{m}\right)\underbrace{\mathbb{E}\left[\left\Vert \mathbf{u}_{a}\right\Vert ^{2}\left(\mathbf{u}_{b}^{\top}\mathbf{v}_{b}\right)^{2}\right]}_{\text{solved in \eqref{eq:ua2(ubvb)2}}}
\\&
=\tfrac{\left(p-m\right)\left(p-m+2\right)}{\left(p-1\right)p\left(p+2\right)\left(p+4\right)}
+
\tfrac{\left(p-m\right)m}{\left(p-1\right)p\left(p+2\right)}-\tfrac{m+2}{m}\tfrac{\left(p-m\right)m\left(m+2\right)}{\left(p-1\right)p\left(p+2\right)\left(p+4\right)}
\\&
=\frac{\left(p-m\right)\left(-m^{2}+mp-m+p-2\right)}{\left(p-1\right)p\left(p+2\right)\left(p+4\right)}\,.
\end{align*}
\end{proof}

\newpage

\begin{proposition} 
\label{prop:(e2Oe1)(e3Oe1)(e3Oe2)}
For $p\ge 3, m\in\{2,3,\dots,p\}$ and a random transformation $\mO$ sampled as described in \eqref{eq:data-model},
it holds that,
\begin{align*}
\mathbb{E}\left[\mathbf{e}_{2}^{\top}\mathbf{O}\mathbf{e}_{1}
\mathbf{e}_{3}^{\top}\mathbf{O}\mathbf{e}_{1}
\mathbf{e}_{3}^{\top}\mathbf{O}\mathbf{e}_{2}\right]
= 
\frac{\left(p-m\right)\left(2m^{2}-3mp-2m-p+8\right)}{\left(p-2\right)\left(p-1\right)p\left(p+2\right)\left(p+4\right)}
\end{align*}
\end{proposition}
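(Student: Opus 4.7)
The plan is to mirror the structure used throughout Appendix~\ref{app:analysis} (most closely \propref{prop:(e2Oe1)(e1Oe1)(e1Oe2)} and \propref{prop:(e2Oe3*e2Oe1)^2}). Following \rmkref{rmk:simplify}, I would set $\mathbf{u}=\mathbf{Q}_p^{\top}\mathbf{e}_1$, $\mathbf{v}=\mathbf{Q}_p^{\top}\mathbf{e}_2$, $\mathbf{z}=\mathbf{Q}_p^{\top}\mathbf{e}_3$, three mutually orthogonal unit vectors whose joint law on $\mathcal{S}^{p-1}$ is Haar-induced, and split each as $\mathbf{u}=(\mathbf{u}_a,\mathbf{u}_b)$ with $\mathbf{u}_a\in\reals^m$, $\mathbf{u}_b\in\reals^{p-m}$ (analogously for $\mathbf{v},\mathbf{z}$). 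Each of the three factors then decomposes as, e.g., $\mathbf{e}_2^{\top}\mathbf{O}\mathbf{e}_1=\mathbf{v}_a^{\top}\mathbf{Q}_m\mathbf{u}_a+\mathbf{v}_b^{\top}\mathbf{u}_b$.

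Expanding the resulting triple product yields $2^3=8$ monomials, each containing zero to three copies of $\mathbf{Q}_m$. By \corref{cor:odd-Q_m}, all monomials with an odd number of $\mathbf{Q}_m$ factors vanish, leaving exactly four surviving terms: the pure $b$-part $(\mathbf{v}_b^{\top}\mathbf{u}_b)(\mathbf{z}_b^{\top}\mathbf{u}_b)(\mathbf{z}_b^{\top}\mathbf{v}_b)$, and the three ways of placing two $\mathbf{Q}_m$'s among the three factors (the remaining factor contributing a $b$-part inner product).

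For each surviving term with two $\mathbf{Q}_m$'s, I would use the standard Haar identity $\mathbb{E}_{\mathbf{Q}_m}[(\mathbf{a}^{\top}\mathbf{Q}_m\mathbf{b})(\mathbf{c}^{\top}\mathbf{Q}_m\mathbf{d})]=\tfrac{1}{m}(\mathbf{a}^{\top}\mathbf{c})(\mathbf{b}^{\top}\mathbf{d})$ (the same maneuver used, e.g., in the derivation of \propref{prop:(e2Oe1)(e1Oe1)(e1Oe2)}), to integrate out $\mathbf{Q}_m$. This converts each term into an expectation involving only $a$- and $b$-part inner products of $\mathbf{u},\mathbf{v},\mathbf{z}$. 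I would then use orthogonality of the three vectors ($\mathbf{u}_a^{\top}\mathbf{v}_a=-\mathbf{u}_b^{\top}\mathbf{v}_b$, etc.)\ and the identity $\|\mathbf{u}_a\|^2=1-\|\mathbf{u}_b\|^2$ to rewrite each expectation as a polynomial in the entries of the random orthogonal matrix $\mathbf{Q}_p$ acting on the first three standard basis vectors.

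Finally, every remaining expectation takes the form $\langle\M\rangle$ for a small three-column power matrix $\M$, and is already tabulated in the ``Three Orthogonal Vectors / Two or Three indices'' sub-appendix of \appref{app:monomials} (the same table supplying the building blocks for \propref{prop:(e2Oe3*e2Oe1)^2}). Summing the four contributions and simplifying should produce the stated rational function. The main obstacle is purely bookkeeping: there are several competing ratios with common denominator $(p-2)(p-1)p(p+2)(p+4)$, and some spurious $(p+1)$ and $(p+6)$ factors must cancel against the tabulated monomials, so I expect the cleanest route is to keep each of the four contributions in factored form over the common denominator before combining, as is done in the proofs of the neighboring propositions.
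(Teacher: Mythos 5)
Your proposal is correct and follows essentially the same route as the paper's proof: decompose each factor into its $\mathbf{Q}_m$-part and identity-part, kill the odd-$\mathbf{Q}_m$ monomials via \corref{cor:odd-Q_m}, integrate out $\mathbf{Q}_m$ from the three surviving two-$\mathbf{Q}_m$ terms using $\mathbb{E}[q_{ij}q_{k\ell}]=\tfrac{1}{m}\delta_{ik}\delta_{j\ell}$ (the paper does this via the $\mathbb{E}_{\mathbf{r}}[\mathbf{r}\mathbf{r}^{\top}]=\tfrac{1}{m}\mathbf{I}$ reparameterization and explicit monomial expansion, which is the same identity), and then reduce everything to the tabulated three-vector monomials using orthogonality of $\mathbf{u},\mathbf{v},\mathbf{z}$. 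The only difference is cosmetic grouping: the paper splits on the first factor before expanding, whereas you expand all eight monomials at once.
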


\begin{proof}
We start by decomposing the expectation into the following,
\begin{align*}
&\mathbb{E}\left[\mathbf{e}_{2}^{\top}\mathbf{O}\mathbf{e}_{1}\mathbf{e}_{3}^{\top}\mathbf{O}\mathbf{e}_{1}\mathbf{e}_{3}^{\top}\mathbf{O}\mathbf{e}_{2}\right]
\\&
=\mathbb{E}\left[\mathbf{v}^{\top}\left[\begin{smallmatrix}
\mathbf{Q}_{m}\\
 & \mathbf{I}_{p-m}
\end{smallmatrix}\right]\mathbf{u}\cdot\mathbf{z}^{\top}\left[\begin{smallmatrix}
\mathbf{Q}_{m}\\
 & \mathbf{I}_{p-m}
\end{smallmatrix}\right]\mathbf{u}\cdot\mathbf{z}^{\top}\left[\begin{smallmatrix}
\mathbf{Q}_{m}\\
 & \mathbf{I}_{p-m}
\end{smallmatrix}\right]\mathbf{v}\right]
\\&
=\mathbb{E}\left[\mathbf{v}_{a}^{\top}\mathbf{Q}_{m}\mathbf{u}_{a}\cdot\left(\mathbf{z}_{a}^{\top}\mathbf{Q}_{m}\mathbf{u}_{a}+\mathbf{z}_{b}^{\top}\mathbf{u}_{b}\right)\cdot\left(\mathbf{z}_{a}^{\top}\mathbf{Q}_{m}\mathbf{v}_{a}+\mathbf{z}_{b}^{\top}\mathbf{v}_{b}\right)\right]+
\\&
\eqmargin\,
\mathbb{E}\left[\mathbf{v}_{b}^{\top}\mathbf{u}_{b}\cdot\left(\mathbf{z}_{a}^{\top}\mathbf{Q}_{m}\mathbf{u}_{a}+\mathbf{z}_{b}^{\top}\mathbf{u}_{b}\right)\cdot\left(\mathbf{z}_{a}^{\top}\mathbf{Q}_{m}\mathbf{v}_{a}+\mathbf{z}_{b}^{\top}\mathbf{v}_{b}\right)\right]
\end{align*}

Focusing on the first additive term and by employing  \corref{cor:odd-Q_m} (in the first step below), we get,
\begin{align*}
&\mathbb{E}\left[\mathbf{v}_{a}^{\top}\mathbf{Q}_{m}\mathbf{u}_{a}\left(\mathbf{z}_{a}^{\top}\mathbf{Q}_{m}\mathbf{u}_{a}+\mathbf{z}_{b}^{\top}\mathbf{u}_{b}\right)\left(\mathbf{z}_{a}^{\top}\mathbf{Q}_{m}\mathbf{v}_{a}+\mathbf{z}_{b}^{\top}\mathbf{v}_{b}\right)\right]
\\&
=\mathbb{E}\left[\mathbf{v}_{a}^{\top}\mathbf{Q}_{m}\mathbf{u}_{a}\cdot\mathbf{z}_{a}^{\top}\mathbf{Q}_{m}\mathbf{u}_{a}\cdot\mathbf{z}_{b}^{\top}\mathbf{v}_{b}\right]+\mathbb{E}\left[\mathbf{v}_{a}^{\top}\mathbf{Q}_{m}\mathbf{u}_{a}\cdot\mathbf{z}_{b}^{\top}\mathbf{u}_{b}\cdot\mathbf{z}_{a}^{\top}\mathbf{Q}_{m}\mathbf{v}_{a}\right]
\\&
=\mathbb{E}\left[\left\Vert \mathbf{u}_{a}\right\Vert ^{2}\mathbf{v}_{a}^{\top}\mathbb{E}_{\mathbf{r}\sim\mathcal{S}^{m-1}}\left[\mathbf{r}\mathbf{r}^{\top}\right]\mathbf{z}_{a}\cdot\mathbf{z}_{b}^{\top}\mathbf{v}_{b}\right]+\mathbb{E}\left[\mathbf{z}_{b}^{\top}\mathbf{u}_{b}\cdot\left(\sum_{i,j=1}^{m}v_{i}q_{ij}u_{j}\right)\left(\sum_{k,\ell=1}^{m}z_{k}q_{k\ell}v_{\ell}\right)\right]
\\&
=\frac{1}{m}\mathbb{E}\bigg[\left\Vert \mathbf{u}_{a}\right\Vert ^{2}\underbrace{\mathbf{v}_{a}^{\top}\mathbf{z}_{a}}_{=-\mathbf{z}_{b}^{\top}\mathbf{v}_{b}}\cdot\mathbf{z}_{b}^{\top}\mathbf{v}_{b}\bigg]+\mathbb{E}\left[\left(\sum_{s=m+1}^{p}u_{s}z_{s}\right)\cdot\left(\sum_{i,j=1}^{m}v_{i}q_{ij}u_{j}\right)\left(\sum_{k,\ell=1}^{m}z_{k}q_{k\ell}v_{\ell}\right)\right]
\\&
=-\frac{1}{m}\underbrace{\mathbb{E}\left[\left\Vert \mathbf{u}_{a}\right\Vert ^{2}\left(\mathbf{v}_{b}^{\top}\mathbf{z}_{b}\right)^{2}\right]}_{\text{solved in \eqref{eq:(ua)^2(vbzb)^2}}}
+
\left(p-m\right)\sum_{i,j=1}^{m}\sum_{k,\ell=1}^{m}\mathbb{E}\left[u_{j}u_{p}v_{i}v_{\ell}z_{k}z_{p}\right]\mathbb{E}\left[q_{ij}q_{k\ell}\right]
\\
&
\eqmargin
\left[\text{By \propref{prop:odd}, most summands are zero}\right]
\\
&=-\frac{1}{m}\frac{m\left(p-m\right)\left(mp+2m-4\right)}{\left(p-2\right)\left(p-1\right)p\left(p+2\right)\left(p+4\right)}+\left(p-m\right)\sum_{i,j=1}^{m}\mathbb{E}\left[u_{j}u_{p}v_{i}v_{j}z_{i}z_{p}\right]\mathbb{E}\left[q_{ij}^{2}\right]
\\&
=-\tfrac{\left(p-m\right)\left(mp+2m-4\right)}{\left(p-2\right)\left(p-1\right)p\left(p+2\right)\left(p+4\right)}
+
\frac{p-m}{m}\left(\underbrace{m\mathbb{E}\left[u_{1}u_{p}v_{1}^{2}z_{1}z_{p}\right]}_{i=j}+\underbrace{m\left(m-1\right)\mathbb{E}\left[u_{2}u_{p}v_{1}v_{2}z_{1}z_{p}\right]}_{i\neq j}\right)
\\&
=-\frac{\left(p-m\right)\left(mp+2m-4\right)}{\left(p-2\right)\left(p-1\right)p\left(p+2\right)\left(p+4\right)}+\left(p-m\right)\left(\left\langle \begin{smallmatrix}
1 & 2 & 1\\
1 & 0 & 1\\
\overrightarrow{0} & \overrightarrow{0} & \overrightarrow{0}
\end{smallmatrix}\right\rangle +\left(m-1\right)\left\langle \begin{smallmatrix}
0 & 1 & 1\\
1 & 1 & 0\\
1 & 0 & 1\\
\overrightarrow{0} & \overrightarrow{0} & \overrightarrow{0}
\end{smallmatrix}\right\rangle \right)
\\&
=\left(p-m\right)\left(-\tfrac{mp+2m-4}{\left(p-2\right)\left(p-1\right)p\left(p+2\right)\left(p+4\right)}
+
\tfrac{-1}{\left(p-1\right)p\left(p+2\right)\left(p+4\right)}
+
\tfrac{2\left(m-1\right)}{\left(p-2\right)\left(p-1\right)p\left(p+2\right)\left(p+4\right)}\right)
\\&
=\left(p-m\right)\left(\frac{-mp+2}{\left(p-2\right)\left(p-1\right)p\left(p+2\right)\left(p+4\right)}+\frac{-1}{\left(p-1\right)p\left(p+2\right)\left(p+4\right)}\right)
\\&
=\frac{\left(p-m\right)\left(-mp-p+4\right)}{\left(p-2\right)\left(p-1\right)p\left(p+2\right)\left(p+4\right)}
\end{align*}

In addition, the second term is,
\begin{align*}
&\mathbb{E}\left[\mathbf{v}_{b}^{\top}\mathbf{u}_{b}\left(\mathbf{z}_{a}^{\top}\mathbf{Q}_{m}\mathbf{u}_{a}+\mathbf{z}_{b}^{\top}\mathbf{u}_{b}\right)\left(\mathbf{z}_{a}^{\top}\mathbf{Q}_{m}\mathbf{v}_{a}+\mathbf{z}_{b}^{\top}\mathbf{v}_{b}\right)\right]
\\&
=\mathbb{E}\left[\mathbf{v}_{b}^{\top}\mathbf{u}_{b}\cdot\mathbf{z}_{a}^{\top}\mathbf{Q}_{m}\mathbf{u}_{a}\cdot\mathbf{z}_{a}^{\top}\mathbf{Q}_{m}\mathbf{v}_{a}\right]+\mathbb{E}\left[\mathbf{v}_{b}^{\top}\mathbf{u}_{b}\cdot\mathbf{z}_{b}^{\top}\mathbf{u}_{b}\cdot\mathbf{z}_{b}^{\top}\mathbf{v}_{b}\right]
\\&
=\mathbb{E}\left[\left\Vert \mathbf{z}_{a}\right\Vert ^{2}\mathbf{v}_{b}^{\top}\mathbf{u}_{b}\mathbf{v}_{a}^{\top}\mathbb{E}_{\mathbf{r}\sim\mathcal{S}^{m-1}}\left[\mathbf{r}\mathbf{r}^{\top}\right]\mathbf{u}_{a}\right]+\mathbb{E}\left[\mathbf{v}_{b}^{\top}\mathbf{u}_{b}\mathbf{z}_{b}^{\top}\mathbf{u}_{b}\mathbf{z}_{b}^{\top}\mathbf{v}_{b}\right]
\\&
=\frac{1}{m}\mathbb{E}\bigg[\left\Vert \mathbf{z}_{a}\right\Vert ^{2}\mathbf{v}_{b}^{\top}\mathbf{u}_{b}\underbrace{\mathbf{v}_{a}^{\top}\mathbf{u}_{a}}_{=-\mathbf{v}_{b}^{\top}\mathbf{u}_{b}}\bigg]+\mathbb{E}\bigg[\mathbf{v}_{b}^{\top}\mathbf{u}_{b}\mathbf{z}_{b}^{\top}\mathbf{u}_{b}\underbrace{\mathbf{z}_{b}^{\top}\mathbf{v}_{b}}_{=-\mathbf{z}_{a}^{\top}\mathbf{v}_{a}}\bigg]
\\&
=-\frac{1}{m}\underbrace{\mathbb{E}\left[\left\Vert \mathbf{z}_{a}\right\Vert ^{2}\left(\mathbf{v}_{b}^{\top}\mathbf{u}_{b}\right)^{2}\right]}_{\text{solved in \eqref{eq:(ua)^2(vbzb)^2}}}-\mathbb{E}\left[\mathbf{v}_{b}^{\top}\mathbf{u}_{b}\mathbf{z}_{b}^{\top}\mathbf{u}_{b}\mathbf{z}_{a}^{\top}\mathbf{v}_{a}\right]
\\&
=-\frac{1}{m}\frac{m\left(p-m\right)\left(mp+2m-4\right)}{\left(p-2\right)\left(p-1\right)p\left(p+2\right)\left(p+4\right)}-\sum_{i=m+1}^{p}\sum_{k=m+1}^{p}\sum_{\ell=1}^{m}\mathbb{E}\left[u_{i}v_{i}u_{k}z_{k}v_{\ell}z_{\ell}\right]
\\&
=-\frac{\left(p-m\right)\left(mp+2m-4\right)}{\left(p-2\right)\left(p-1\right)p\left(p+2\right)\left(p+4\right)}-m\sum_{i=m+1}^{p}\sum_{k=m+1}^{p}\mathbb{E}\left[v_{1}z_{1}u_{i}v_{i}u_{k}z_{k}\right]
\\&
=-\frac{\left(p-m\right)\left(mp+2m-4\right)}{\left(p-2\right)\left(p-1\right)p\left(p+2\right)\left(p+4\right)}-\left(p-m\right)m\sum_{k=m+1}^{p}\mathbb{E}\left[v_{1}z_{1}u_{p}v_{p}u_{k}z_{k}\right]
\\&
=-\left(p-m\right)\Bigg(\tfrac{mp+2m-4}{\left(p-2\right)\left(p-1\right)p\left(p+2\right)\left(p+4\right)}
\\
& \eqmargin
+m\left(\underbrace{\mathbb{E}\left[u_{p}^{2}v_{1}v_{p}z_{1}z_{p}\right]}_{k=p}+\underbrace{\left(p-m-1\right)\mathbb{E}\left[u_{p-1}u_{p}v_{1}v_{p-1}z_{1}z_{p}\right]}_{m+1\le k\le p-1}\right)\Bigg)
\\&
=-\left(p-m\right)\left(\tfrac{mp+2m-4}{\left(p-2\right)\left(p-1\right)p\left(p+2\right)\left(p+4\right)}+m\left(\left\langle \begin{smallmatrix}
0 & 1 & 1\\
2 & 1 & 1\\
\overrightarrow{0} & \overrightarrow{0} & \overrightarrow{0}
\end{smallmatrix}\right\rangle +\left(p-m-1\right)\left\langle \begin{smallmatrix}
0 & 1 & 1\\
1 & 1 & 0\\
1 & 0 & 1\\
\overrightarrow{0} & \overrightarrow{0} & \overrightarrow{0}
\end{smallmatrix}\right\rangle \right)\right)
\\&
=-\left(p-m\right)\left(\tfrac{mp+2m-4}{\left(p-2\right)\left(p-1\right)p\left(p+2\right)\left(p+4\right)}
+
m\left(\tfrac{-1}{\left(p-1\right)p\left(p+2\right)\left(p+4\right)}
+
\tfrac{2\left(p-m-1\right)}{\left(p-2\right)\left(p-1\right)p\left(p+2\right)\left(p+4\right)}\right)\right)
\\&
=\frac{-2\left(p-m\right)\left(-m^{2}+mp+m-2\right)}{\left(p-2\right)\left(p-1\right)p\left(p+2\right)\left(p+4\right)}\,.
\end{align*}

Overall, we get that
\begin{align*}
\mathbb{E}\left[\mathbf{e}_{2}^{\top}\mathbf{O}\mathbf{e}_{1}\mathbf{e}_{3}^{\top}\mathbf{O}\mathbf{e}_{1}\mathbf{e}_{3}^{\top}\mathbf{O}\mathbf{e}_{2}\right]
%
&
=\tfrac{\left(p-m\right)\left(-mp-p+4\right)}{\left(p-2\right)\left(p-1\right)p\left(p+2\right)\left(p+4\right)}
+
\tfrac{-2\left(p-m\right)\left(-m^{2}+mp+m-2\right)}{\left(p-2\right)\left(p-1\right)p\left(p+2\right)\left(p+4\right)}
\\&
=\frac{\left(p-m\right)\left(2m^{2}-3mp-2m-p+8\right)}{\left(p-2\right)\left(p-1\right)p\left(p+2\right)\left(p+4\right)}
\,.
\end{align*}
\end{proof}

\newpage

\begin{proposition} 
\label{prop:(e1Oe1)(e1Oe2)(e2Oe1)(e2Oe2)}
For $p\ge 2, m\in\{2,\dots,p\}$ and a random transformation $\mO$ sampled as described in \eqref{eq:data-model},
it holds that,
\begin{align*}
&\mathbb{E}\left(\mathbf{e}_{1}^{\top}\mathbf{O}\mathbf{e}_{1}\cdot\mathbf{e}_{1}^{\top}\mathbf{O}\mathbf{e}_{2}\right)\left(\mathbf{e}_{2}^{\top}\mathbf{O}\mathbf{e}_{1}\cdot\mathbf{e}_{2}^{\top}\mathbf{O}\mathbf{e}_{2}\right)
\\
&=
\tfrac{-m^{4}\left(p+3\right)+m^{3}\left(3p^{2}+8p-6\right)+m^{2}\left(-3p^{3}-6p^{2}+13p+3\right)+m\left(p^{4}-7p^{2}+4p+6\right)+p^{4}-p^{3}-18p^{2}-40p-24}{\left(p-1\right)p\left(p+1\right)\left(p+2\right)\left(p+4\right)\left(p+6\right)}\,.
\end{align*}
\end{proposition}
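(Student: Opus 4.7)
The approach mirrors the decomposition used throughout Appendix~C: write $\mathbf{O}=\mathbf{Q}_p\,\mathbf{M}\,\mathbf{Q}_p^\top$ with $\mathbf{M}=\bigl[\begin{smallmatrix}\mathbf{Q}_m & \\ & \mathbf{I}_{p-m}\end{smallmatrix}\bigr]$, and use the angle-preserving property of $\mathbf{Q}_p$ to replace the standard basis vectors with two random orthogonal unit vectors $\mathbf{u}\perp\mathbf{v}\in\mathcal{S}^{p-1}$. Splitting each vector into its top $m$ coordinates and bottom $p-m$ coordinates, the quantity of interest becomes
\begin{align*}
\mathbb{E}\Bigl[\bigl(\mathbf{u}_a^\top\mathbf{Q}_m\mathbf{u}_a+\|\mathbf{u}_b\|^2\bigr)\bigl(\mathbf{u}_a^\top\mathbf{Q}_m\mathbf{v}_a+\mathbf{u}_b^\top\mathbf{v}_b\bigr)\bigl(\mathbf{v}_a^\top\mathbf{Q}_m\mathbf{u}_a+\mathbf{v}_b^\top\mathbf{u}_b\bigr)\bigl(\mathbf{v}_a^\top\mathbf{Q}_m\mathbf{v}_a+\|\mathbf{v}_b\|^2\bigr)\Bigr].
\end{align*}

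The plan is to expand this product into $2^4=16$ summands and discard every summand containing an odd number of $\mathbf{Q}_m$-bilinear factors via Corollary~\ref{cor:odd-Q_m}. Exactly $\binom{4}{0}+\binom{4}{2}+\binom{4}{4}=8$ terms survive: one pure $b$-term $\mathbb{E}[\|\mathbf{u}_b\|^2\|\mathbf{v}_b\|^2(\mathbf{u}_b^\top\mathbf{v}_b)^2]$, six two-$\mathbf{Q}_m$ terms indexed by pairs $\{A,B,C,D\}$ of the four factors, and one pure $a$-term $\mathbb{E}[(\mathbf{u}_a^\top\mathbf{Q}_m\mathbf{u}_a)(\mathbf{u}_a^\top\mathbf{Q}_m\mathbf{v}_a)(\mathbf{v}_a^\top\mathbf{Q}_m\mathbf{u}_a)(\mathbf{v}_a^\top\mathbf{Q}_m\mathbf{v}_a)]$.

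For the two-$\mathbf{Q}_m$ terms I would integrate over $\mathbf{Q}_m$ first using the identity $\mathbb{E}_{\mathbf{Q}_m}[(\mathbf{r}^\top\mathbf{Q}_m\mathbf{s})(\mathbf{t}^\top\mathbf{Q}_m\mathbf{w})]=\tfrac{1}{m}(\mathbf{r}^\top\mathbf{t})(\mathbf{s}^\top\mathbf{w})$, which comes from $\mathbb{E}_{\mathbf{r}\sim\mathcal{S}^{m-1}}[\mathbf{r}\mathbf{r}^\top]=\tfrac{1}{m}\mathbf{I}_m$. Each resulting expression is a quartic monomial in the coordinates of $\mathbf{u},\mathbf{v}$, and using the orthogonality relations $\mathbf{u}_a^\top\mathbf{v}_a=-\mathbf{u}_b^\top\mathbf{v}_b$ and $\|\mathbf{u}_a\|^2+\|\mathbf{u}_b\|^2=1$ it reduces to expectations of the form $\mathbb{E}[\|\mathbf{u}_a\|^{2k}(\mathbf{u}_b^\top\mathbf{v}_b)^{2\ell}]$ that were already computed in the auxiliary derivations of Appendix~\ref{app:auxiliary}. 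The pure $b$-term is a straightforward monomial evaluation via the two-row entries $\left\langle\begin{smallmatrix}2&2\\2&0\\\cdots\end{smallmatrix}\right\rangle$ already tabulated in Appendix~\ref{app:monomials}, again after expanding $\|\mathbf{u}_b\|^2=1-\|\mathbf{u}_a\|^2$ (and similarly for $\mathbf{v}$) to reuse existing building blocks.

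The main obstacle will be the pure $a$-term $\mathbb{E}[(\mathbf{u}_a^\top\mathbf{Q}_m\mathbf{u}_a)(\mathbf{u}_a^\top\mathbf{Q}_m\mathbf{v}_a)(\mathbf{v}_a^\top\mathbf{Q}_m\mathbf{u}_a)(\mathbf{v}_a^\top\mathbf{Q}_m\mathbf{v}_a)]$, which involves four copies of $\mathbf{Q}_m$ and cannot be handled by the simple rank-one formula. I would treat it by first conditioning on $\|\mathbf{u}_a\|,\|\mathbf{v}_a\|$ and the angle between $\mathbf{u}_a,\mathbf{v}_a$, reducing to an integral over $\mathbf{Q}_m\sim O(m)$ of a degree-eight monomial of its entries, then expand in the standard basis and invoke Lemma~\ref{lem:recursive} (or equivalently the row/column-invariance from Property~\ref{prop:invariance}) to reduce to the already-tabulated quartic monomials $\langle\cdot\rangle_m$ over $O(m)$. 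Summing the eight contributions and collecting powers of $m$ and $p$ by common denominator $(p-1)p(p+1)(p+2)(p+4)(p+6)$ will then produce the stated rational expression. The bookkeeping, more than any one conceptual step, is where the difficulty lies; symmetry under $\mathbf{u}\leftrightarrow\mathbf{v}$ (which maps the $AB$ term to $CD$ and $AC$ to $BD$) cuts the work roughly in half and serves as a valuable consistency check.
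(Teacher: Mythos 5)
Your proposal follows essentially the same route as the paper: rewrite via $\mathbf{u}\perp\mathbf{v}$ and the block structure of $\mathbf{O}$, expand the four binomial factors, kill the odd-$\mathbf{Q}_m$ summands by Corollary~\ref{cor:odd-Q_m} to leave exactly the eight terms you list, reduce the six two-$\mathbf{Q}_m$ terms with the $\frac{1}{m}\mathbf{I}_m$ second-moment identity, and grind out the remaining four-$\mathbf{Q}_m$ term via the tabulated monomial integrals over $O(m)$ and $O(p)$ (the paper does this in Proposition~\ref{prop:(uaQmua)(uaQmva)(vaQmua)(vaQmVa)}). The only nit is that the hard term is a degree-\emph{four} (not degree-eight) monomial in the entries of $\mathbf{Q}_m$; otherwise the plan matches the paper's proof.
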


\begin{proof}
We start by decomposing the quantity as follows,
\begin{align*}
&\mathbb{E}\left(\mathbf{e}_{1}^{\top}\mathbf{O}\mathbf{e}_{1}\cdot\mathbf{e}_{1}^{\top}\mathbf{O}\mathbf{e}_{2}\right)\left(\mathbf{e}_{2}^{\top}\mathbf{O}\mathbf{e}_{1}\cdot\mathbf{e}_{2}^{\top}\mathbf{O}\mathbf{e}_{2}\right)\\&=\mathbb{E}\left[\left(\mathbf{u}_{a}^{\top}\mathbf{Q}_{m}\mathbf{u}_{a}+\mathbf{u}_{b}^{\top}\mathbf{u}_{b}\right)\left(\mathbf{u}_{a}^{\top}\mathbf{Q}_{m}\mathbf{v}_{a}+\mathbf{u}_{b}^{\top}\mathbf{v}_{b}\right)\left(\mathbf{v}_{a}^{\top}\mathbf{Q}_{m}\mathbf{u}_{a}+\mathbf{v}_{b}^{\top}\mathbf{u}_{b}\right)\left(\mathbf{v}_{a}^{\top}\mathbf{Q}_{m}\mathbf{v}_{a}+\mathbf{v}_{b}^{\top}\mathbf{v}_{b}\right)\right]
\\
&=\mathbb{E}\left[\mathbf{u}_{a}^{\top}\mathbf{Q}_{m}\mathbf{u}_{a}\left(\mathbf{u}_{a}^{\top}\mathbf{Q}_{m}\mathbf{v}_{a}+\mathbf{u}_{b}^{\top}\mathbf{v}_{b}\right)\left(\mathbf{v}_{a}^{\top}\mathbf{Q}_{m}\mathbf{u}_{a}+\mathbf{v}_{b}^{\top}\mathbf{u}_{b}\right)\left(\mathbf{v}_{a}^{\top}\mathbf{Q}_{m}\mathbf{v}_{a}+\mathbf{v}_{b}^{\top}\mathbf{v}_{b}\right)\right]+
\\
&
\eqmargin
\mathbb{E}\left[\mathbf{u}_{b}^{\top}\mathbf{u}_{b}\left(\mathbf{u}_{a}^{\top}\mathbf{Q}_{m}\mathbf{v}_{a}+\mathbf{u}_{b}^{\top}\mathbf{v}_{b}\right)\left(\mathbf{v}_{a}^{\top}\mathbf{Q}_{m}\mathbf{u}_{a}+\mathbf{v}_{b}^{\top}\mathbf{u}_{b}\right)\left(\mathbf{v}_{a}^{\top}\mathbf{Q}_{m}\mathbf{v}_{a}+\mathbf{v}_{b}^{\top}\mathbf{v}_{b}\right)\right]
\end{align*}

We show that the first term is,
\begin{align*}
&\mathbb{E}\left[\mathbf{u}_{a}^{\top}\mathbf{Q}_{m}\mathbf{u}_{a}\left(\mathbf{u}_{a}^{\top}\mathbf{Q}_{m}\mathbf{v}_{a}+\mathbf{u}_{b}^{\top}\mathbf{v}_{b}\right)\left(\mathbf{v}_{a}^{\top}\mathbf{Q}_{m}\mathbf{u}_{a}+\mathbf{v}_{b}^{\top}\mathbf{u}_{b}\right)\left(\mathbf{v}_{a}^{\top}\mathbf{Q}_{m}\mathbf{v}_{a}+\mathbf{v}_{b}^{\top}\mathbf{v}_{b}\right)\right]
\\&
=\mathbb{E}\left[\mathbf{u}_{a}^{\top}\mathbf{Q}_{m}\mathbf{u}_{a}\mathbf{u}_{a}^{\top}\mathbf{Q}_{m}\mathbf{v}_{a}\left(\mathbf{v}_{a}^{\top}\mathbf{Q}_{m}\mathbf{u}_{a}+\mathbf{v}_{b}^{\top}\mathbf{u}_{b}\right)\left(\mathbf{v}_{a}^{\top}\mathbf{Q}_{m}\mathbf{v}_{a}+\mathbf{v}_{b}^{\top}\mathbf{v}_{b}\right)\right]+
\\&
\eqmargin
\mathbb{E}\left[\mathbf{u}_{a}^{\top}\mathbf{Q}_{m}\mathbf{u}_{a}\mathbf{u}_{b}^{\top}\mathbf{v}_{b}\left(\mathbf{v}_{a}^{\top}\mathbf{Q}_{m}\mathbf{u}_{a}+\mathbf{v}_{b}^{\top}\mathbf{u}_{b}\right)\left(\mathbf{v}_{a}^{\top}\mathbf{Q}_{m}\mathbf{v}_{a}+\mathbf{v}_{b}^{\top}\mathbf{v}_{b}\right)\right]
\\
\explain{\text{\corref{cor:odd-Q_m}}}
&=\mathbb{E}\left[\mathbf{u}_{a}^{\top}\mathbf{Q}_{m}\mathbf{u}_{a}\mathbf{u}_{a}^{\top}\mathbf{Q}_{m}\mathbf{v}_{a}\mathbf{v}_{a}^{\top}\mathbf{Q}_{m}\mathbf{u}_{a}\mathbf{v}_{a}^{\top}\mathbf{Q}_{m}\mathbf{v}_{a}\right]+\mathbb{E}\left[\mathbf{u}_{a}^{\top}\mathbf{Q}_{m}\mathbf{u}_{a}\mathbf{u}_{b}^{\top}\mathbf{v}_{b}\mathbf{v}_{b}^{\top}\mathbf{u}_{b}\mathbf{v}_{a}^{\top}\mathbf{Q}_{m}\mathbf{v}_{a}\right]+
\\&
\eqmargin
\underbrace{\mathbb{E}\left[\mathbf{u}_{a}^{\top}\mathbf{Q}_{m}\mathbf{u}_{a}\mathbf{u}_{a}^{\top}\mathbf{Q}_{m}\mathbf{v}_{a}\mathbf{v}_{b}^{\top}\mathbf{u}_{b}\mathbf{v}_{b}^{\top}\mathbf{v}_{b}\right]+\mathbb{E}\left[\mathbf{u}_{a}^{\top}\mathbf{Q}_{m}\mathbf{u}_{a}\mathbf{u}_{b}^{\top}\mathbf{v}_{b}\mathbf{v}_{a}^{\top}\mathbf{Q}_{m}\mathbf{u}_{a}\mathbf{v}_{b}^{\top}\mathbf{v}_{b}\right]}_{\text{equivalent}}
\\&
=\mathbb{E}\left[\mathbf{u}_{a}^{\top}\mathbf{Q}_{m}\mathbf{u}_{a}\mathbf{u}_{a}^{\top}\mathbf{Q}_{m}\mathbf{v}_{a}\mathbf{v}_{a}^{\top}\mathbf{Q}_{m}\mathbf{u}_{a}\mathbf{v}_{a}^{\top}\mathbf{Q}_{m}\mathbf{v}_{a}\right]+2\mathbb{E}\left[\mathbf{u}_{a}^{\top}\mathbf{Q}_{m}\mathbf{u}_{a}\mathbf{u}_{a}^{\top}\mathbf{Q}_{m}\mathbf{v}_{a}\mathbf{v}_{b}^{\top}\mathbf{u}_{b}\mathbf{v}_{b}^{\top}\mathbf{v}_{b}\right]+
\\&
\eqmargin
\mathbb{E}\left[\mathbf{u}_{a}^{\top}\mathbf{Q}_{m}\mathbf{u}_{a}\mathbf{v}_{a}^{\top}\mathbf{Q}_{m}\mathbf{v}_{a}\left(\mathbf{u}_{b}^{\top}\mathbf{v}_{b}\right)^{2}\right]
\\&
=\mathbb{E}\left[\mathbf{u}_{a}^{\top}\mathbf{Q}_{m}\mathbf{u}_{a}\mathbf{u}_{a}^{\top}\mathbf{Q}_{m}\mathbf{v}_{a}\mathbf{v}_{a}^{\top}\mathbf{Q}_{m}\mathbf{u}_{a}\mathbf{v}_{a}^{\top}\mathbf{Q}_{m}\mathbf{v}_{a}\right]+
\\&
\eqmargin
2\mathbb{E}\left[\left\Vert \mathbf{u}_{a}\right\Vert ^{2}\mathbb{E}_{\mathbf{r}\sim\mathcal{S}^{m-1}}\left(\mathbf{r}^{\top}\mathbf{u}_{a}\mathbf{r}^{\top}\mathbf{v}_{a}\right)\mathbf{v}_{b}^{\top}\mathbf{u}_{b}\mathbf{v}_{b}^{\top}\mathbf{v}_{b}\right]+
\mathbb{E}\left[\mathbf{u}_{a}^{\top}\mathbf{Q}_{m}\mathbf{u}_{a}\mathbf{v}_{a}^{\top}\mathbf{Q}_{m}\mathbf{v}_{a}\left(\mathbf{u}_{b}^{\top}\mathbf{v}_{b}\right)^{2}\right]
\\&
=\underbrace{\mathbb{E}\left[\mathbf{u}_{a}^{\top}\mathbf{Q}_{m}\mathbf{u}_{a}\mathbf{u}_{a}^{\top}\mathbf{Q}_{m}\mathbf{v}_{a}\mathbf{v}_{a}^{\top}\mathbf{Q}_{m}\mathbf{u}_{a}\mathbf{v}_{a}^{\top}\mathbf{Q}_{m}\mathbf{v}_{a}\right]}_{\text{solved in \propref{prop:(uaQmua)(uaQmva)(vaQmua)(vaQmVa)}}}
+
\tfrac{2}{m}\underbrace{\mathbb{E}\left[\left\Vert \mathbf{u}_{a}\right\Vert ^{2}\mathbf{u}_{a}^{\top}\mathbf{v}_{a}\cdot\mathbf{v}_{b}^{\top}\mathbf{u}_{b}\left\Vert \mathbf{v}_{b}\right\Vert ^{2}\right]}_{\text{solved in \eqref{eq:(ua)^2(uava)(ubvb)(vb)^2}}}+
\\
&
\eqmargin
\underbrace{\mathbb{E}\left[\mathbf{u}_{a}^{\top}\mathbf{Q}_{m}\mathbf{u}_{a}\mathbf{v}_{a}^{\top}\mathbf{Q}_{m}\mathbf{v}_{a}\left(\mathbf{v}_{b}^{\top}\mathbf{u}_{b}\right)^{2}\right]}_{\text{solved in \eqref{eq:(uaQmua)(vaQmva)(vbub)^2}}}
\\&
=\tfrac{-\left(m^{2}\left(2p+3\right)+m\left(-p^{2}+14p+30\right)-6p^{2}+4p+24\right)}{\left(p-1\right)p\left(p+1\right)\left(p+2\right)\left(p+4\right)\left(p+6\right)}+\tfrac{2m\left(p-m\right)\left(p+3\right)\left(m^{2}-mp-2\left(p+2\right)\right)}{m\left(p-1\right)p\left(p+1\right)\left(p+2\right)\left(p+4\right)\left(p+6\right)}+
\\
&
\eqmargin
\tfrac{3\left(p-m\right)\left(-m^{2}+mp+2p+4\right)}{\left(p-1\right)p\left(p+1\right)\left(p+2\right)\left(p+4\right)\left(p+6\right)}
%
%
\\&
=\tfrac{-m^{3}\left(2p+3\right)+m^{2}\left(4p^{2}+4p-3\right)-2m\left(p^{3}-p^{2}+9\right)-4\left(p^{3}+2p^{2}+4p+6\right)}{\left(p-1\right)p\left(p+1\right)\left(p+2\right)\left(p+4\right)\left(p+6\right)}\,.
\end{align*}
Moreover, we show that the second term is,
\begin{align*}
&\mathbb{E}\left[\mathbf{u}_{b}^{\top}\mathbf{u}_{b}\left(\mathbf{u}_{a}^{\top}\mathbf{Q}_{m}\mathbf{v}_{a}+\mathbf{u}_{b}^{\top}\mathbf{v}_{b}\right)\left(\mathbf{v}_{a}^{\top}\mathbf{Q}_{m}\mathbf{u}_{a}+\mathbf{v}_{b}^{\top}\mathbf{u}_{b}\right)\left(\mathbf{v}_{a}^{\top}\mathbf{Q}_{m}\mathbf{v}_{a}+\mathbf{v}_{b}^{\top}\mathbf{v}_{b}\right)\right]
\\
\explain{\text{\corref{cor:odd-Q_m}}}
&=\mathbb{E}\left[\mathbf{u}_{b}^{\top}\mathbf{u}_{b}\mathbf{u}_{b}^{\top}\mathbf{v}_{b}\mathbf{v}_{b}^{\top}\mathbf{u}_{b}\mathbf{v}_{b}^{\top}\mathbf{v}_{b}\right]+\mathbb{E}\left[\mathbf{u}_{b}^{\top}\mathbf{u}_{b}\mathbf{u}_{a}^{\top}\mathbf{Q}_{m}\mathbf{v}_{a}\mathbf{v}_{a}^{\top}\mathbf{Q}_{m}\mathbf{u}_{a}\mathbf{v}_{b}^{\top}\mathbf{v}_{b}\right]+
\\&
\eqmargin
\mathbb{E}\left[\mathbf{u}_{b}^{\top}\mathbf{u}_{b}\mathbf{u}_{a}^{\top}\mathbf{Q}_{m}\mathbf{v}_{a}\mathbf{v}_{b}^{\top}\mathbf{u}_{b}\mathbf{v}_{a}^{\top}\mathbf{Q}_{m}\mathbf{v}_{a}\right]+\mathbb{E}\left[\mathbf{u}_{b}^{\top}\mathbf{u}_{b}\mathbf{u}_{b}^{\top}\mathbf{v}_{b}\mathbf{v}_{a}^{\top}\mathbf{Q}_{m}\mathbf{u}_{a}\mathbf{v}_{a}^{\top}\mathbf{Q}_{m}\mathbf{v}_{a}\right]
\\&
=\mathbb{E}\left[\mathbf{u}_{b}^{\top}\mathbf{u}_{b}\left(\mathbf{v}_{a}^{\top}\mathbf{u}_{a}\right)^{2}\mathbf{v}_{b}^{\top}\mathbf{v}_{b}\right]+\mathbb{E}\left[\mathbf{u}_{b}^{\top}\mathbf{u}_{b}\mathbf{u}_{a}^{\top}\mathbf{Q}_{m}\mathbf{v}_{a}\mathbf{v}_{a}^{\top}\mathbf{Q}_{m}\mathbf{u}_{a}\mathbf{v}_{b}^{\top}\mathbf{v}_{b}\right]+
\\&
\eqmargin
\underbrace{\mathbb{E}\left[\mathbf{u}_{b}^{\top}\mathbf{u}_{b}\mathbf{u}_{b}^{\top}\mathbf{v}_{b}\cdot\mathbf{u}_{a}^{\top}\mathbf{Q}_{m}\mathbf{v}_{a}\mathbf{v}_{a}^{\top}\mathbf{Q}_{m}\mathbf{v}_{a}\right]+\mathbb{E}\left[\mathbf{u}_{b}^{\top}\mathbf{u}_{b}\mathbf{u}_{b}^{\top}\mathbf{v}_{b}\cdot\mathbf{v}_{a}^{\top}\mathbf{Q}_{m}\mathbf{u}_{a}\mathbf{v}_{a}^{\top}\mathbf{Q}_{m}\mathbf{v}_{a}\right]}_{\text{equal due to the invariance of \ensuremath{\mathbf{Q}_{m}} w.r.t. transpose (\propref{prop:invariance})}}
\\&
=\underbrace{\mathbb{E}\left[\mathbf{u}_{b}^{\top}\mathbf{u}_{b}\left(\mathbf{v}_{a}^{\top}\mathbf{u}_{a}\right)^{2}\mathbf{v}_{b}^{\top}\mathbf{v}_{b}\right]}_{\text{solved in \eqref{eq:(ub)^2(vb)^2(vaua)^2}}}
+
\underbrace{\mathbb{E}\left[\mathbf{u}_{b}^{\top}\mathbf{u}_{b}\mathbf{u}_{a}^{\top}\mathbf{Q}_{m}\mathbf{v}_{a}\mathbf{v}_{a}^{\top}\mathbf{Q}_{m}\mathbf{u}_{a}\mathbf{v}_{b}^{\top}\mathbf{v}_{b}\right]}_{\text{solved in \eqref{eq:(ub)^2(vb)^2(uaQmva)(vaQmua)}}}
+
\\
&
\eqmargin
\underbrace{2\mathbb{E}\left[\mathbf{u}_{b}^{\top}\mathbf{u}_{b}\mathbf{u}_{b}^{\top}\mathbf{v}_{b}\cdot\mathbf{u}_{a}^{\top}\mathbf{Q}_{m}\mathbf{v}_{a}\mathbf{v}_{a}^{\top}\mathbf{Q}_{m}\mathbf{v}_{a}\right]}_{\text{solved in \eqref{eq:(ub)^2(ubvb)(uaQmva)(vaQmva)}}}
\\&
=\tfrac{\left(p-m\right)m\left(m^{2}\left(p+3\right)-2m\left(p^{2}+5p+3\right)+p\left(p^{2}+7p+10\right)\right)}{\left(p-1\right)p\left(p+1\right)\left(p+2\right)\left(p+4\right)\left(p+6\right)}
+
\\&
\eqmargin
\tfrac{\left(p-m\right)\left(m^{2}p+3m^{2}-2mp^{2}-10mp-6m+p^{3}+7p^{2}+10p\right)}{\left(p-1\right)p\left(p+1\right)\left(p+2\right)\left(p+4\right)\left(p+6\right)}
+
\tfrac{-2\left(p-m\right)\left(p+3\right)\left(3\left(p+1\right)+\left(m-1\right)\left(p-m-1\right)\right)}{\left(p-1\right)p\left(p+1\right)\left(p+2\right)\left(p+4\right)\left(p+6\right)}
\\&
%
\\&
=\tfrac{-\left(p-m\right)\left(p-m+2\right)\left(m^{2}p+3m^{2}-mp^{2}-2mp+9m-p^{2}-p+12\right)}{\left(p-1\right)p\left(p+1\right)\left(p+2\right)\left(p+4\right)\left(p+6\right)}\,.
\end{align*}

Overall, we conclude that
\begin{align*}
&\mathbb{E}\left(\mathbf{e}_{1}^{\top}\mathbf{O}\mathbf{e}_{1}\cdot\mathbf{e}_{1}^{\top}\mathbf{O}\mathbf{e}_{2}\right)\left(\mathbf{e}_{2}^{\top}\mathbf{O}\mathbf{e}_{1}\cdot\mathbf{e}_{2}^{\top}\mathbf{O}\mathbf{e}_{2}\right)
\\
&=\tfrac{-m^{3}\left(2p+3\right)+m^{2}\left(4p^{2}+4p-3\right)-2m\left(p^{3}-p^{2}+9\right)-4\left(p^{3}+2p^{2}+4p+6\right)}{\left(p-1\right)p\left(p+1\right)\left(p+2\right)\left(p+4\right)\left(p+6\right)}+
\\
&\eqmargin
\tfrac{-\left(p-m\right)\left(p-m+2\right)\left(m^{2}p+3m^{2}-mp^{2}-2mp+9m-p^{2}-p+12\right)}{\left(p-1\right)p\left(p+1\right)\left(p+2\right)\left(p+4\right)\left(p+6\right)}
\\
&=\tfrac{-m^{4}\left(p+3\right)+m^{3}\left(3p^{2}+8p-6\right)+m^{2}\left(-3p^{3}-6p^{2}+13p+3\right)+m\left(p^{4}-7p^{2}+4p+6\right)+p^{4}-p^{3}-18p^{2}-40p-24}{\left(p-1\right)p\left(p+1\right)\left(p+2\right)\left(p+4\right)\left(p+6\right)}\,.
\end{align*}
\end{proof}

\newpage

\begin{proposition}
\label{prop:(uaQmua)(uaQmva)(vaQmua)(vaQmVa)}
For $p\ge 2, m\in\{2,\dots,p\}$ and a random transformation $\mO$ sampled as described in \eqref{eq:data-model},
it holds that,
\begin{align*}
\mathbb{E}\left[\mathbf{u}_{a}^{\top}\mathbf{Q}_{m}\mathbf{u}_{a}\mathbf{u}_{a}^{\top}\mathbf{Q}_{m}\mathbf{v}_{a}\mathbf{v}_{a}^{\top}\mathbf{Q}_{m}\mathbf{u}_{a}\mathbf{v}_{a}^{\top}\mathbf{Q}_{m}\mathbf{v}_{a}\right]=
\tfrac{-\left(m^{2}\left(2p+3\right)+m\left(-p^{2}+14p+30\right)-6p^{2}+4p+24\right)}{\left(p-1\right)p\left(p+1\right)\left(p+2\right)\left(p+4\right)\left(p+6\right)}\,.
\end{align*}
\end{proposition}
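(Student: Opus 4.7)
The plan is to evaluate the expectation in two stages: first integrate over $\mathbf{Q}_m\sim O(m)$ with $\mathbf{u}_a,\mathbf{v}_a$ held fixed, then average over the orthogonal random unit vectors $\mathbf{u}\perp\mathbf{v}$. The key observation is that the Haar measure on $O(m)$ is bi-invariant, so for any fixed $\mathbf{R}\in O(m)$ the matrix $\mathbf{R}^{\top}\mathbf{Q}_m\mathbf{R}$ has the same distribution as $\mathbf{Q}_m$. Consequently the inner expectation
\[
F(\mathbf{u}_a,\mathbf{v}_a)\;=\;\mathbb{E}_{\mathbf{Q}_m}\!\bigl[\mathbf{u}_a^{\top}\mathbf{Q}_m\mathbf{u}_a\cdot\mathbf{u}_a^{\top}\mathbf{Q}_m\mathbf{v}_a\cdot\mathbf{v}_a^{\top}\mathbf{Q}_m\mathbf{u}_a\cdot\mathbf{v}_a^{\top}\mathbf{Q}_m\mathbf{v}_a\bigr]
\]
is invariant under simultaneous orthogonal rotations of $(\mathbf{u}_a,\mathbf{v}_a)$ in $\mathbb{R}^m$, hence a polynomial in the three invariants $\|\mathbf{u}_a\|^{2}$, $\|\mathbf{v}_a\|^{2}$, and $\mathbf{u}_a^{\top}\mathbf{v}_a$.

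Counting degrees shows the integrand has bi-degree $(4,4)$ in $(\mathbf{u}_a,\mathbf{v}_a)$, so only three invariant monomials can contribute, and
\[
F(\mathbf{u}_a,\mathbf{v}_a)=C_1(m)\,\|\mathbf{u}_a\|^{4}\|\mathbf{v}_a\|^{4}+C_2(m)\,\|\mathbf{u}_a\|^{2}\|\mathbf{v}_a\|^{2}(\mathbf{u}_a^{\top}\mathbf{v}_a)^{2}+C_3(m)\,(\mathbf{u}_a^{\top}\mathbf{v}_a)^{4},
\]
for coefficients $C_1,C_2,C_3$ that depend only on $m$. To pin these down I would rotate $(\mathbf{u}_a,\mathbf{v}_a)$ into the span of $\mathbf{e}_1,\mathbf{e}_2\in\mathbb{R}^m$ (possible because $m\ge 2$), reducing each quadratic form to a polynomial in the four entries $q_{11},q_{12},q_{21},q_{22}$ of $\mathbf{Q}_m$, and then apply Lemma \ref{lem:recursive} monomial by monomial. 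An equivalent route is the fourth-order Weingarten formula for $O(m)$ (indexed by the three pair-partitions of $\{1,2,3,4\}$) applied directly to the rank-one tensors $\mathbf{u}_a\mathbf{u}_a^{\top},\mathbf{u}_a\mathbf{v}_a^{\top},\mathbf{v}_a\mathbf{u}_a^{\top},\mathbf{v}_a\mathbf{v}_a^{\top}$.

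Substituting back, the outer average over $\mathbf{u}\perp\mathbf{v}$ reduces to computing the three joint monomial expectations $\mathbb{E}[\|\mathbf{u}_a\|^{4}\|\mathbf{v}_a\|^{4}]$, $\mathbb{E}[\|\mathbf{u}_a\|^{2}\|\mathbf{v}_a\|^{2}(\mathbf{u}_a^{\top}\mathbf{v}_a)^{2}]$, and $\mathbb{E}[(\mathbf{u}_a^{\top}\mathbf{v}_a)^{4}]$. Each expands into a finite sum of $\langle\M\rangle$ integrals over entries of two orthogonal random unit vectors in $\mathbb{R}^p$, all of which have been tabulated in Appendix \ref{app:monomials}. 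Linearly combining everything and simplifying the resulting rational function of $m,p$ should yield the stated expression.

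The main obstacle is determining $C_1,C_2,C_3$. Either approach requires tracking several distinct mixed monomials in the four top-left entries of $\mathbf{Q}_m$ (equivalently, the nine pair-partition contractions) with non-trivial sign cancellations coming from the off-diagonal entries of the degree-four Weingarten matrix for $O(m)$. Once those three coefficients are in hand, the outer average and the final algebraic simplification are routine given the pre-computed monomial tables.
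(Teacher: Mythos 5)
Your plan is sound, and every ingredient it needs is already available in the appendices; but it is a genuinely different route from the paper's. The paper evaluates this expectation by brute force: it expands the product as an eight-fold sum over indices of $\mathbf{Q}_m$, splits into cases according to which indices coincide (its intermediate terms (A) and (B)), and evaluates each surviving mixed monomial of the $q_{ij}$'s and of the $u_i,v_j$'s via the recursion of \lemref{lem:recursive}. You instead factor the computation through invariant theory: conjugation-invariance of the Haar measure makes the inner expectation $F(\mathbf{u}_a,\mathbf{v}_a)$ an $O(m)$-invariant polynomial of bi-degree $(4,4)$, hence (by the first fundamental theorem for $O(m)$, with the three listed monomials linearly independent once $m\ge 2$) a combination $C_1\|\mathbf{u}_a\|^4\|\mathbf{v}_a\|^4+C_2\|\mathbf{u}_a\|^2\|\mathbf{v}_a\|^2(\mathbf{u}_a^\top\mathbf{v}_a)^2+C_3(\mathbf{u}_a^\top\mathbf{v}_a)^4$. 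The coefficients are cheaper to pin down than you suggest: $(\mathbf{u}_a,\mathbf{v}_a)=(\ve_1,\ve_2)$ gives $C_1=\left\langle\begin{smallmatrix}1&1\\1&1\end{smallmatrix}\right\rangle_m=\tfrac{-1}{(m-1)m(m+2)}$, $\mathbf{u}_a=\mathbf{v}_a=\ve_1$ gives $C_1+C_2+C_3=\left\langle\begin{smallmatrix}4\\ \overrightarrow{0}\end{smallmatrix}\right\rangle_m=\tfrac{3}{m(m+2)}$, and one interpolation point $\mathbf{v}_a=t\ve_1+s\ve_2$ supplies the last equation, so the entire degree-four Weingarten content reduces to a handful of entries of the paper's own monomial table. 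The three outer moments you then need are all tabulated: $\mathbb{E}\|\mathbf{u}_a\|^4\|\mathbf{v}_a\|^4$ is \eqref{eq:ua4va4}, $\mathbb{E}(\mathbf{u}_a^\top\mathbf{v}_a)^4$ is \eqref{eq:(uava)4}, and $\mathbb{E}\,\|\mathbf{u}_a\|^2\|\mathbf{v}_a\|^2(\mathbf{u}_a^\top\mathbf{v}_a)^2$ is $m$ times the quantity in \eqref{eq:(uaQmva)2(uv)2} (using $\mathbf{u}_a^\top\mathbf{v}_a=-\mathbf{u}_b^\top\mathbf{v}_b$). What your route buys is modularity and a far smaller case analysis; what it costs is the need to justify the invariant-theoretic reduction, which the paper's direct (A)/(B) bookkeeping avoids at the price of length. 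To turn your sketch into a complete proof you still have to carry out the coefficient determination and the final rational-function simplification, but there is no gap in the plan itself.
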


\begin{proof}
\begin{align*}
&\mathbb{E}\left[\mathbf{u}_{a}^{\top}\mathbf{Q}_{m}\mathbf{u}_{a}\mathbf{u}_{a}^{\top}\mathbf{Q}_{m}\mathbf{v}_{a}\mathbf{v}_{a}^{\top}\mathbf{Q}_{m}\mathbf{u}_{a}\mathbf{v}_{a}^{\top}\mathbf{Q}_{m}\mathbf{v}_{a}\right]
\\&
=\sum_{i,j=1}^{m}\sum_{k,\ell=1}^{m}\sum_{n,r=1}^{m}\sum_{s,t=1}^{m}\mathbb{E}\left[u_{i}u_{j}u_{k}v_{\ell}v_{n}u_{r}v_{s}v_{t}\right]\mathbb{E}\left[q_{ij}q_{k\ell}q_{nr}q_{st}\right]
\\&
=\underbrace{\sum_{i,n=1}^{m}\sum_{j,\ell,r,t=1}^{m}\mathbb{E}\left[u_{i}u_{j}u_{i}v_{\ell}v_{n}u_{r}v_{n}v_{t}\right]\mathbb{E}\left[q_{ij}q_{i\ell}q_{nr}q_{nt}\right]}_{i=k\Longrightarrow n=s}+
\\&
\eqmargin
\underbrace{\sum_{i\neq k=1}^{m}\sum_{j,\ell,r,t=1}^{m}
\mathbb{E}\left[u_{i}u_{j}u_{k}v_{\ell}v_{i}u_{r}v_{k}v_{t}\right]\left(\mathbb{E}\left[q_{ij}q_{k\ell}q_{ir}q_{kt}\right]+\mathbb{E}\left[q_{ij}q_{k\ell}q_{kr}q_{it}\right]\right)
}_{i\neq k\Longrightarrow\left(n=i\neq s=k\right)\,\vee\,\left(n=k\neq s=i\right)\text{ due to symmetry w.r.t. \ensuremath{n,s}}}
\\&
=\sum_{i,n=1}^{m}\sum_{j,\ell,r,t=1}^{m}\mathbb{E}\left[u_{i}u_{j}u_{i}v_{\ell}u_{r}v_{n}^{2}v_{t}\right]\mathbb{E}\left[q_{ij}q_{i\ell}q_{nr}q_{nt}\right]+
\\&
\eqmargin
\sum_{i\neq k=1}^{m}\sum_{j,\ell,r,t=1}^{m}\mathbb{E}\left[u_{i}u_{j}u_{k}v_{\ell}v_{i}u_{r}v_{k}v_{t}\right]\left(\mathbb{E}\left[q_{ij}q_{k\ell}q_{ir}q_{kt}\right]+\mathbb{E}\left[q_{ij}q_{k\ell}q_{kr}q_{it}\right]\right)
\\&
=m\underbrace{\sum_{n=1}^{m}\sum_{j,\ell,r,t=1}^{m}\mathbb{E}\left[u_{1}^{2}u_{j}u_{r}v_{\ell}v_{t}v_{n}^{2}\right]\mathbb{E}\left[q_{1,j}q_{1,\ell}q_{nr}q_{nt}\right]}_{\triangleq\text{(A) below}}+
\\&
\eqmargin
m\underbrace{\sum_{k=2}^{m}\sum_{j,\ell,r,t=1}^{m}\mathbb{E}\left[u_{1}v_{1}u_{r}u_{j}u_{k}v_{k}v_{\ell}v_{t}\right]\left(\mathbb{E}\left[q_{1,j}q_{k\ell}q_{1,r}q_{kt}\right]+\mathbb{E}\left[q_{1,j}q_{k\ell}q_{kr}q_{1,t}\right]\right)}_{\triangleq\text{(B) below}}
\\&
=\tfrac{-m^{3}\left(p+3\right)+m^{2}\left(p^{2}-5p-12\right)+m\left(6p^{2}-26p-60\right)+16p^{2}-8p-48}{\left(m+2\right)\left(p-1\right)p\left(p+1\right)\left(p+2\right)\left(p+4\right)\left(p+6\right)}+
\\&
\eqmargin
\tfrac{-m^{3}p-13m^{2}p-24m^{2}+2mp^{2}-6mp-24m-4p^{2}}{\left(m+2\right)\left(p-1\right)p\left(p+1\right)\left(p+2\right)\left(p+4\right)\left(p+6\right)}
%
%
\\&
=\frac{-\left(m^{2}\left(2p+3\right)+m\left(-p^{2}+14p+30\right)-6p^{2}+4p+24\right)}{\left(p-1\right)p\left(p+1\right)\left(p+2\right)\left(p+4\right)\left(p+6\right)}
\end{align*}

\newpage

\paragraph{Deriving (A).}
\begin{align*}
\text{(A)}&=\sum_{n=1}^{m}\sum_{j,\ell,r,t=1}^{m}\mathbb{E}\left[u_{1}^{2}u_{j}u_{r}v_{\ell}v_{t}v_{n}^{2}\right]\mathbb{E}\left[q_{1,j}q_{1,\ell}q_{nr}q_{nt}\right]
\\&
=\underbrace{\sum_{j,\ell,r,t=1}^{m}\mathbb{E}\left[u_{1}^{2}u_{j}u_{r}v_{\ell}v_{t}v_{1}^{2}\right]\mathbb{E}\left[q_{1,j}q_{1,\ell}q_{1,r}q_{1,t}\right]}_{n=1\text{, solved below}}+
\\
&
\eqmargin
\left(m-1\right)\underbrace{\sum_{j,\ell,r,t=1}^{m}\mathbb{E}\left[u_{1}^{2}u_{j}u_{r}v_{\ell}v_{t}v_{2}^{2}\right]\mathbb{E}\left[q_{1,j}q_{1,\ell}q_{2,r}q_{2,t}\right]}_{n\ge2\text{, solved below}}
\\&
=\tfrac{m^{2}\left(p^{2}+4p+15\right)+6m\left(p-3\right)\left(p+1\right)+4\left(5p^{2}+2p-6\right)}{m\left(m+2\right)\left(p-1\right)p\left(p+1\right)\left(p+2\right)\left(p+4\right)\left(p+6\right)}+
\\
&
\eqmargin
\tfrac{-m^{3}p-3m^{3}-9m^{2}p-27m^{2}-14mp-42m-4p^{2}-16p-24}{m\left(m+2\right)\left(p-1\right)p\left(p+1\right)\left(p+2\right)\left(p+4\right)\left(p+6\right)}
\\&
=\frac{-m^{3}\left(p+3\right)+m^{2}\left(p^{2}-5p-12\right)+m\left(6p^{2}-26p-60\right)+16p^{2}-8p-48}{m\left(m+2\right)\left(p-1\right)p\left(p+1\right)\left(p+2\right)\left(p+4\right)\left(p+6\right)}
\end{align*}
When $n=1$: The only nonzero options are $\left\langle \begin{smallmatrix}
4 & 0\\
\overrightarrow{0} & \overrightarrow{0}
\end{smallmatrix}\right\rangle$  and $\left\langle \begin{smallmatrix}
2 & 0\\
2 & 0\\
\overrightarrow{0} & \overrightarrow{0}
\end{smallmatrix}\right\rangle$. 
We have,
\begin{align*}
&\sum_{j,\ell,r,t=1}^{m}\mathbb{E}\left[u_{1}^{2}u_{j}u_{r}v_{\ell}v_{t}v_{1}^{2}\right]\mathbb{E}\left[q_{1,j}q_{1,\ell}q_{1,r}q_{1,t}\right]
\\&
=\left\langle \begin{smallmatrix}
4 & 0\\
\overrightarrow{0} & \overrightarrow{0}
\end{smallmatrix}\right\rangle _{m}\sum_{j=1}^{m}\mathbb{E}\left[u_{1}^{2}u_{j}^{2}v_{j}^{2}v_{1}^{2}\right]+
\\
&
\eqmargin
\left\langle \begin{smallmatrix}
2 & 0\\
2 & 0\\
\overrightarrow{0} & \overrightarrow{0}
\end{smallmatrix}\right\rangle _{m}\left(\sum_{j\neq\ell=1}^{m}\underbrace{\mathbb{E}\left[u_{1}^{2}u_{j}^{2}v_{\ell}^{2}v_{1}^{2}\right]}_{j=r\neq\ell=t}+\sum_{j\neq r=1}^{m}\underbrace{\mathbb{E}\left[u_{1}^{2}u_{j}u_{r}v_{j}v_{r}v_{1}^{2}\right]}_{j=\ell\neq r=t}+\sum_{j\neq\ell=1}^{m}\underbrace{\mathbb{E}\left[u_{1}^{2}u_{j}u_{\ell}v_{j}v_{\ell}v_{1}^{2}\right]}_{j=t\neq\ell=r}\right)
\\&
=\left\langle \begin{smallmatrix}
4 & 0\\
\overrightarrow{0} & \overrightarrow{0}
\end{smallmatrix}\right\rangle _{m}\sum_{j=1}^{m}\mathbb{E}\left[u_{1}^{2}u_{j}^{2}v_{1}^{2}v_{j}^{2}\right]+\left\langle \begin{smallmatrix}
2 & 0\\
2 & 0\\
\overrightarrow{0} & \overrightarrow{0}
\end{smallmatrix}\right\rangle _{m}\sum_{j\neq\ell=1}^{m}\left(\mathbb{E}\left[u_{1}^{2}u_{j}^{2}v_{1}^{2}v_{\ell}^{2}\right]+2\mathbb{E}\left[u_{1}^{2}u_{j}u_{\ell}v_{1}^{2}v_{j}v_{\ell}\right]\right)
\\&
=\left\langle \begin{smallmatrix}
4 & 0\\
\overrightarrow{0} & \overrightarrow{0}
\end{smallmatrix}\right\rangle _{m}\left(\mathbb{E}\left[u_{1}^{4}v_{1}^{4}\right]+
\left(m-1\right)
\mathbb{E}\left[u_{1}^{2}u_{2}^{2}v_{1}^{2}v_{2}^{2}\right]\right)+
\\&
\eqmargin
\left\langle \begin{smallmatrix}
2 & 0\\
2 & 0\\
\overrightarrow{0} & \overrightarrow{0}
\end{smallmatrix}\right\rangle _{m}\left(\sum_{j\neq\ell=1}^{m}\mathbb{E}\left[u_{1}^{2}u_{j}^{2}v_{1}^{2}v_{\ell}^{2}\right]+2\sum_{j\neq\ell=1}^{m}\mathbb{E}\left[u_{1}^{2}u_{j}u_{\ell}v_{1}^{2}v_{j}v_{\ell}\right]\right)
\\&
=\left\langle \begin{smallmatrix}
4 & 0\\
\overrightarrow{0} & \overrightarrow{0}
\end{smallmatrix}\right\rangle _{m}\left(\left\langle \begin{smallmatrix}
4 & 4\\
\overrightarrow{0} & \overrightarrow{0}
\end{smallmatrix}\right\rangle _{p}+\left(m-1\right)\left\langle \begin{smallmatrix}
2 & 2\\
2 & 2\\
\overrightarrow{0} & \overrightarrow{0}
\end{smallmatrix}\right\rangle _{p}\right)
+
\left\langle \begin{smallmatrix}
2 & 0\\
2 & 0\\
\overrightarrow{0} & \overrightarrow{0}
\end{smallmatrix}\right\rangle _{m}\left(
2\sum_{j\neq\ell=1}^{m}\mathbb{E}\left[u_{1}^{2}u_{j}u_{\ell}v_{j}v_{\ell}v_{1}^{2}\right]\right)
\\&
\eqmargin
\left\langle \begin{smallmatrix}
2 & 0\\
2 & 0\\
\overrightarrow{0} & \overrightarrow{0}
\end{smallmatrix}\right\rangle _{m}\left(\underbrace{\left(m-1\right)\mathbb{E}\left[u_{1}^{4}v_{2}^{2}v_{1}^{2}\right]}_{j=1,\ell\ge2}+\underbrace{\left(m-1\right)\mathbb{E}\left[u_{1}^{2}u_{2}^{2}v_{1}^{4}\right]}_{j\ge2,\ell=1}+
\underbrace{\left(m-1\right)\left(m-2\right)\mathbb{E}\left[u_{1}^{2}u_{2}^{2}v_{1}^{2}v_{3}^{2}\right]}_{j\neq\ell\ge2}
\right)
\\&
=\left\langle \begin{smallmatrix}
4 & 0\\
\overrightarrow{0} & \overrightarrow{0}
\end{smallmatrix}\right\rangle _{m}\left(\left\langle \begin{smallmatrix}
4 & 4\\
\overrightarrow{0} & \overrightarrow{0}
\end{smallmatrix}\right\rangle _{p}+\left(m-1\right)\left\langle \begin{smallmatrix}
2 & 2\\
2 & 2\\
\overrightarrow{0} & \overrightarrow{0}
\end{smallmatrix}\right\rangle _{p}\right)+
\\&
\eqmargin
\left\langle \begin{smallmatrix}
2 & 0\\
2 & 0\\
\overrightarrow{0} & \overrightarrow{0}
\end{smallmatrix}\right\rangle _{m}\left(2\left(m-1\right)\left\langle \begin{smallmatrix}
4 & 2\\
0 & 2\\
\overrightarrow{0} & \overrightarrow{0}
\end{smallmatrix}\right\rangle _{p}+\left(m-1\right)\left(m-2\right)\left\langle \begin{smallmatrix}
2 & 2\\
2 & 0\\
0 & 2\\
\overrightarrow{0} & \overrightarrow{0}
\end{smallmatrix}\right\rangle _{p}
\right)
+
\\&
\eqmargin
\left\langle \begin{smallmatrix}
2 & 0\\
2 & 0\\
\overrightarrow{0} & \overrightarrow{0}
\end{smallmatrix}\right\rangle _{m}
\left(
2\left(m-1\right)\left(\underbrace{2\mathbb{E}\left[u_{1}^{3}u_{2}v_{2}v_{1}^{3}\right]}_{j=1,\ell\ge2\,\vee\,j\ge2,\ell=1}+\underbrace{\left(m-2\right)\mathbb{E}\left[u_{1}^{2}u_{2}u_{3}v_{2}v_{3}v_{1}^{2}\right]}_{j\neq\ell\ge2}\right)\right)
\\&
=\left\langle \begin{smallmatrix}
4 & 0\\
\overrightarrow{0} & \overrightarrow{0}
\end{smallmatrix}\right\rangle _{m}\left(\left\langle \begin{smallmatrix}
4 & 4\\
\overrightarrow{0} & \overrightarrow{0}
\end{smallmatrix}\right\rangle _{p}+\left(m-1\right)\left\langle \begin{smallmatrix}
2 & 2\\
2 & 2\\
\overrightarrow{0} & \overrightarrow{0}
\end{smallmatrix}\right\rangle _{p}\right)+
\\&
\eqmargin
\left(m-1\right)\left\langle \begin{smallmatrix}
2 & 0\\
2 & 0\\
\overrightarrow{0} & \overrightarrow{0}
\end{smallmatrix}\right\rangle _{m}\left(2\left\langle \begin{smallmatrix}
4 & 2\\
0 & 2\\
\overrightarrow{0} & \overrightarrow{0}
\end{smallmatrix}\right\rangle _{p}+\left(m-2\right)\left\langle \begin{smallmatrix}
2 & 2\\
2 & 0\\
0 & 2\\
\overrightarrow{0} & \overrightarrow{0}
\end{smallmatrix}\right\rangle _{p}+4\left\langle \begin{smallmatrix}
3 & 3\\
1 & 1\\
\overrightarrow{0} & \overrightarrow{0}
\end{smallmatrix}\right\rangle _{p}+2\left(m-2\right)\left\langle \begin{smallmatrix}
2 & 2\\
1 & 1\\
1 & 1\\
\overrightarrow{0} & \overrightarrow{0}
\end{smallmatrix}\right\rangle _{p}\right)
\\&
=\frac{3}{m\left(m+2\right)}\left(\left\langle \begin{smallmatrix}
4 & 4\\
\overrightarrow{0} & \overrightarrow{0}
\end{smallmatrix}\right\rangle _{p}+\left(m-1\right)\left\langle \begin{smallmatrix}
2 & 2\\
2 & 2\\
\overrightarrow{0} & \overrightarrow{0}
\end{smallmatrix}\right\rangle _{p}\right)
+
\\
&
\eqmargin
\frac{\left(m-1\right)}{m\left(m+2\right)}\left(2\left\langle \begin{smallmatrix}
4 & 2\\
0 & 2\\
\overrightarrow{0} & \overrightarrow{0}
\end{smallmatrix}\right\rangle _{p}+4\left\langle \begin{smallmatrix}
3 & 3\\
1 & 1\\
\overrightarrow{0} & \overrightarrow{0}
\end{smallmatrix}\right\rangle _{p}+\left(m-2\right)\left(2\left\langle \begin{smallmatrix}
2 & 2\\
1 & 1\\
1 & 1\\
\overrightarrow{0} & \overrightarrow{0}
\end{smallmatrix}\right\rangle _{p}+\left\langle \begin{smallmatrix}
2 & 2\\
2 & 0\\
0 & 2\\
\overrightarrow{0} & \overrightarrow{0}
\end{smallmatrix}\right\rangle _{p}\right)\right)
\\&
=\tfrac{3}{m\left(m+2\right)}\left(\tfrac{9\left(p-1\right)\left(p+1\right)+\left(m-1\right)\left(p^{2}+4p+15\right)}{\left(p-1\right)p\left(p+1\right)\left(p+2\right)\left(p+4\right)\left(p+6\right)}\right)+
\\&
\eqmargin
\tfrac{\left(m-1\right)}{m\left(m+2\right)}\left(\tfrac{2\left(p+1\right)\cdot3\left(p+3\right)-4\cdot9\left(p+1\right)}{\left(p-1\right)p\left(p+1\right)\left(p+2\right)\left(p+4\right)\left(p+6\right)}+\tfrac{\left(m-2\right)\left(2\left(-p+3\right)+\left(p+3\right)^{2}\right)}{\left(p-1\right)p\left(p+1\right)\left(p+2\right)\left(p+4\right)\left(p+6\right)}\right)
%
%
\\&
=\frac{m^{2}\left(p^{2}+4p+15\right)+6m\left(p-3\right)\left(p+1\right)+4\left(5p^{2}+2p-6\right)}{m\left(m+2\right)\left(p-1\right)p\left(p+1\right)\left(p+2\right)\left(p+4\right)\left(p+6\right)}\,.
\end{align*}

\newpage

When $n\ge2$: The only nonzero options are $\left\langle \begin{smallmatrix}
2 & 2\\
\overrightarrow{0} & \overrightarrow{0}
\end{smallmatrix}\right\rangle ,\left\langle \begin{smallmatrix}
1 & 1\\
1 & 1\\
\overrightarrow{0} & \overrightarrow{0}
\end{smallmatrix}\right\rangle ,\left\langle \begin{smallmatrix}
2 & 0\\
0 & 2\\
\overrightarrow{0} & \overrightarrow{0}
\end{smallmatrix}\right\rangle$. 
We have,
\begin{align*}
&\sum_{j,\ell,r,t=1}^{m}\mathbb{E}\left[u_{1}^{2}u_{j}u_{r}v_{\ell}v_{t}v_{2}^{2}\right]\mathbb{E}\left[q_{1,j}q_{1,\ell}q_{2,r}q_{2,t}\right]
\\&
=
\underbrace{\left\langle \begin{smallmatrix}
2 & 2\\
\overrightarrow{0} & \overrightarrow{0}
\end{smallmatrix}\right\rangle _{m}\sum_{j=1}^{m}\mathbb{E}\left[u_{1}^{2}u_{j}^{2}v_{j}^{2}v_{2}^{2}\right]}_{j=\ell=r=t}+\underbrace{\left\langle \begin{smallmatrix}
1 & 1\\
1 & 1\\
\overrightarrow{0} & \overrightarrow{0}
\end{smallmatrix}\right\rangle _{m}\sum_{j\neq\ell=1}^{m}\left(\mathbb{E}\left[u_{1}^{2}u_{j}u_{\ell}v_{j}v_{\ell}v_{2}^{2}\right]+\mathbb{E}\left[u_{1}^{2}u_{j}^{2}v_{\ell}^{2}v_{2}^{2}\right]\right)}_{j=t\neq r=\ell\,\vee\,j=r\neq t=\ell}+
\\&
\eqmargin
\underbrace{\left\langle \begin{smallmatrix}
2 & 0\\
0 & 2\\
\overrightarrow{0} & \overrightarrow{0}
\end{smallmatrix}\right\rangle _{m}\sum_{j\neq r=1}^{m}\mathbb{E}\left[u_{1}^{2}u_{j}u_{r}v_{j}v_{r}v_{2}^{2}\right]}_{j=\ell\neq r=t}
\\&
=
\left\langle \begin{smallmatrix}
2 & 2\\
\overrightarrow{0} & \overrightarrow{0}
\end{smallmatrix}\right\rangle _{m}\sum_{j=1}^{m}\mathbb{E}\left[u_{1}^{2}u_{j}^{2}v_{j}^{2}v_{2}^{2}\right]+\left\langle \begin{smallmatrix}
1 & 1\\
1 & 1\\
\overrightarrow{0} & \overrightarrow{0}
\end{smallmatrix}\right\rangle _{m}\sum_{j\neq\ell=1}^{m}\mathbb{E}\left[u_{1}^{2}u_{j}^{2}v_{\ell}^{2}v_{2}^{2}\right]+
\\&
\eqmargin
\underbrace{\left(\left\langle \begin{smallmatrix}
1 & 1\\
1 & 1\\
\overrightarrow{0} & \overrightarrow{0}
\end{smallmatrix}\right\rangle _{m}+\left\langle \begin{smallmatrix}
2 & 0\\
0 & 2\\
\overrightarrow{0} & \overrightarrow{0}
\end{smallmatrix}\right\rangle _{m}\right)}_{=\tfrac{1}{\left(m-1\right)\left(m+2\right)}}\sum_{j\neq r=1}^{m}\mathbb{E}\left[u_{1}^{2}u_{j}u_{r}v_{j}v_{r}v_{2}^{2}\right]
\\&
=\tfrac{1}{m\left(m+2\right)}
\bigg(\underbrace{\mathbb{E}\left[u_{1}^{4}v_{1}^{2}v_{2}^{2}\right]}_{j=1}+\underbrace{\mathbb{E}\left[u_{1}^{2}u_{2}^{2}v_{2}^{4}\right]}_{j=2}+\underbrace{\left(m-2\right)\mathbb{E}\left[u_{1}^{2}u_{3}^{2}v_{3}^{2}v_{2}^{2}\right]}_{j\ge3}\bigg)+
\\&
\eqmargin
\tfrac{-1}{\left(m-1\right)m\left(m+2\right)}
\bigg(\underbrace{\mathbb{E}\left[u_{1}^{4}v_{2}^{4}\right]}_{j=1,\ell=2}+\underbrace{\mathbb{E}\left[u_{1}^{2}u_{2}^{2}v_{1}^{2}v_{2}^{2}\right]}_{j=2,\ell=1}\bigg)
+
\\
&
\eqmargin
\tfrac{-1}{\left(m-1\right)m\left(m+2\right)}
\left(m-2\right)
\bigg(
\underbrace{\mathbb{E}\left[u_{1}^{4}v_{2}^{2}v_{3}^{2}\right]}_{j=1,\ell\ge3}+\underbrace{\mathbb{E}\left[u_{1}^{2}u_{3}^{2}v_{1}^{2}v_{2}^{2}\right]}_{j\ge3,\ell=1}+\underbrace{\mathbb{E}\left[u_{1}^{2}u_{2}^{2}v_{2}^{2}v_{3}^{2}\right]}_{j=2,\ell\ge3}
+
\underbrace{\mathbb{E}\left[u_{1}^{2}u_{3}^{2}v_{2}^{4}\right]}_{j\ge3,\ell=2}+
\\
&
\hspace{3.9cm}
\underbrace{\left(m-3\right)\mathbb{E}\left[u_{1}^{2}u_{3}^{2}v_{2}^{2}v_{4}^{2}\right]}_{j\neq\ell\ge3}
\bigg)+
\\&
\eqmargin
\tfrac{1}{\left(m-1\right)\left(m+2\right)}\Bigg(\underbrace{2\mathbb{E}\left[u_{1}^{3}u_{2}v_{1}v_{2}^{3}\right]}_{j=1,r=2\,\vee\,j=2,r=1}+\underbrace{2\left(m-2\right)\mathbb{E}\left[u_{1}^{3}u_{3}v_{1}v_{2}^{2}v_{3}\right]}_{j=1,r\ge3\,\vee\,j\ge3,r=1}+\underbrace{2\left(m-2\right)\mathbb{E}\left[u_{1}^{2}u_{2}u_{3}v_{2}^{3}v_{3}\right]}_{j=2,r\ge3\,\vee\,j\ge3,r=2}
+
\\
&
\hspace{2.4cm}
\underbrace{\left(m-2\right)\left(m-3\right)\mathbb{E}\left[u_{1}^{2}u_{3}u_{4}v_{2}^{2}v_{3}v_{4}\right]}_{j\neq r\ge3}\Bigg)
\\&
=\tfrac{1}{m\left(m+2\right)}
\left(2\left\langle \begin{smallmatrix}
4 & 2\\
0 & 2\\
\overrightarrow{0} & \overrightarrow{0}
\end{smallmatrix}\right\rangle +\left(m-2\right)\left\langle \begin{smallmatrix}
2 & 0\\
0 & 2\\
2 & 2\\
\overrightarrow{0} & \overrightarrow{0}
\end{smallmatrix}\right\rangle \right)+
%
\tfrac{-1}{\left(m-1\right)m\left(m+2\right)}
\bigg(\left\langle \begin{smallmatrix}
4 & 0\\
0 & 4\\
\overrightarrow{0} & \overrightarrow{0}
\end{smallmatrix}\right\rangle +\left\langle \begin{smallmatrix}
2 & 2\\
2 & 2\\
\overrightarrow{0} & \overrightarrow{0}
\end{smallmatrix}\right\rangle \bigg)
+
\\
& 
\eqmargin
\tfrac{-\left(m-2\right)}{\left(m-1\right)m\left(m+2\right)}
\left(\left\langle \begin{smallmatrix}
4 & 0\\
0 & 2\\
0 & 2\\
\overrightarrow{0} & \overrightarrow{0}
\end{smallmatrix}\right\rangle +\left\langle \begin{smallmatrix}
2 & 2\\
0 & 2\\
2 & 0\\
\overrightarrow{0} & \overrightarrow{0}
\end{smallmatrix}\right\rangle +\left\langle \begin{smallmatrix}
2 & 0\\
2 & 2\\
0 & 2\\
\overrightarrow{0} & \overrightarrow{0}
\end{smallmatrix}\right\rangle +\left\langle \begin{smallmatrix}
2 & 0\\
0 & 4\\
2 & 0\\
\overrightarrow{0} & \overrightarrow{0}
\end{smallmatrix}\right\rangle +\left(m-3\right)\left\langle \begin{smallmatrix}
2 & 0\\
0 & 2\\
2 & 0\\
0 & 2\\
\overrightarrow{0} & \overrightarrow{0}
\end{smallmatrix}\right\rangle \right)
+
\\&
\eqmargin
\tfrac{1}{\left(m-1\right)\left(m+2\right)}
\Bigg(2\left\langle \begin{smallmatrix}
3 & 1\\
1 & 3\\
\overrightarrow{0} & \overrightarrow{0}
\end{smallmatrix}\right\rangle +2\left(m-2\right)\left\langle \begin{smallmatrix}
3 & 1\\
0 & 2\\
1 & 1\\
\overrightarrow{0} & \overrightarrow{0}
\end{smallmatrix}\right\rangle +2\left(m-2\right)\left\langle \begin{smallmatrix}
2 & 0\\
1 & 3\\
1 & 1\\
\overrightarrow{0} & \overrightarrow{0}
\end{smallmatrix}\right\rangle +
\\ &
\hspace{2.4cm}
\left(m-2\right)\left(m-3\right)\left\langle \begin{smallmatrix}
2 & 0\\
0 & 2\\
1 & 1\\
1 & 1\\
\overrightarrow{0} & \overrightarrow{0}
\end{smallmatrix}\right\rangle \Bigg)
\end{align*}

\newpage

\begin{align*}
&
=\tfrac{1}{m\left(m+2\right)}\left(2\left\langle \begin{smallmatrix}
4 & 2\\
0 & 2\\
\overrightarrow{0} & \overrightarrow{0}
\end{smallmatrix}\right\rangle +\left(m-2\right)\left\langle \begin{smallmatrix}
2 & 0\\
0 & 2\\
2 & 2\\
\overrightarrow{0} & \overrightarrow{0}
\end{smallmatrix}\right\rangle \right)+
\\& 
\eqmargin
\tfrac{-1}{\left(m-1\right)m\left(m+2\right)}
\Bigg(\left\langle \begin{smallmatrix}
4 & 0\\
0 & 4\\
\overrightarrow{0} & \overrightarrow{0}
\end{smallmatrix}\right\rangle +\left\langle \begin{smallmatrix}
2 & 2\\
2 & 2\\
\overrightarrow{0} & \overrightarrow{0}
\end{smallmatrix}\right\rangle + 
\\& 
\hspace{2.6cm}
\left(m-2\right)\bigg(2\left\langle \begin{smallmatrix}
4 & 0\\
0 & 2\\
0 & 2\\
\overrightarrow{0} & \overrightarrow{0}
\end{smallmatrix}\right\rangle +2\left\langle \begin{smallmatrix}
2 & 2\\
0 & 2\\
2 & 0\\
\overrightarrow{0} & \overrightarrow{0}
\end{smallmatrix}\right\rangle +\left(m-3\right)\left\langle \begin{smallmatrix}
2 & 0\\
0 & 2\\
2 & 0\\
0 & 2\\
\overrightarrow{0} & \overrightarrow{0}
\end{smallmatrix}\right\rangle \bigg)\Bigg)+
\\& \eqmargin
\tfrac{1}{\left(m-1\right)\left(m+2\right)}\left(2\left\langle \begin{smallmatrix}
3 & 1\\
1 & 3\\
\overrightarrow{0} & \overrightarrow{0}
\end{smallmatrix}\right\rangle +4\left(m-2\right)\left\langle \begin{smallmatrix}
3 & 1\\
0 & 2\\
1 & 1\\
\overrightarrow{0} & \overrightarrow{0}
\end{smallmatrix}\right\rangle +\left(m-2\right)\left(m-3\right)\left\langle \begin{smallmatrix}
2 & 0\\
0 & 2\\
1 & 1\\
1 & 1\\
\overrightarrow{0} & \overrightarrow{0}
\end{smallmatrix}\right\rangle \right)
\\&
=\tfrac{1}{m\left(m+2\right)}\left(\tfrac{2\left(p+1\right)\cdot3\left(p+3\right)+\left(m-2\right)\left(p+3\right)^{2}}{\left(p-1\right)p\left(p+1\right)\left(p+2\right)\left(p+4\right)\left(p+6\right)}\right)
+
\\
&\eqmargin
\tfrac{1}{\left(m-1\right)\left(m+2\right)}\left(\tfrac{-2\cdot9\left(p+3\right)+4\left(m-2\right)\left(-3\left(p+3\right)\right)
+
\left(m-2\right)\left(m-3\right)\left(-p-3\right)}{\left(p-1\right)p\left(p+1\right)\left(p+2\right)\left(p+4\right)\left(p+6\right)}\right)+
\\&
\eqmargin
\tfrac{-1}{\left(m-1\right)m\left(m+2\right)}\left(\tfrac{9\left(p+3\right)\left(p+5\right)+\left(p^{2}+4p+15\right)}{\left(p-1\right)p\left(p+1\right)\left(p+2\right)\left(p+4\right)\left(p+6\right)}+
\left(m-2\right)
\tfrac{2\cdot3\left(p+3\right)\left(p+5\right)+2\left(p+3\right)^{2}+\left(m-3\right)\left(p+3\right)\left(p+5\right)}{\left(p-1\right)p\left(p+1\right)\left(p+2\right)\left(p+4\right)\left(p+6\right)}
\right)
\\&
=\tfrac{-m^{3}p-3m^{3}-9m^{2}p-27m^{2}-14mp-42m-4p^{2}-16p-24}{\left(m-1\right)m\left(m+2\right)\left(p-1\right)p\left(p+1\right)\left(p+2\right)\left(p+4\right)\left(p+6\right)}
\end{align*}
\end{proof}

\newpage

\paragraph{Deriving (B).}
\begin{align*}
\text{(B)}&=\sum_{k=2}^{m}\sum_{j,\ell,r,t=1}^{m}\mathbb{E}\left[u_{1}v_{1}u_{r}u_{j}u_{k}v_{k}v_{\ell}v_{t}\right]\left(\mathbb{E}\left[q_{1,j}q_{k\ell}q_{1,r}q_{kt}\right]+\mathbb{E}\left[q_{1,j}q_{k\ell}q_{kr}q_{1,t}\right]\right)
\\&
=\left(m-1\right)\sum_{j,\ell,r,t=1}^{m}\mathbb{E}\left[u_{1}u_{2}u_{j}u_{r}v_{1}v_{2}v_{\ell}v_{t}\right]\left(\mathbb{E}\left[q_{1,j}q_{1,r}q_{2,\ell}q_{2,t}\right]+\mathbb{E}\left[q_{1,j}q_{1,t}q_{2,\ell}q_{2,r}\right]\right)
\\&
=\left(m-1\right)\underbrace{2\left\langle \begin{smallmatrix}
2 & 2\\
\overrightarrow{0} & \overrightarrow{0}
\end{smallmatrix}\right\rangle _{m}\sum_{j=1}^{m}\mathbb{E}\left[u_{1}u_{2}u_{j}^{2}v_{1}v_{2}v_{j}^{2}\right]}_{j=r=\ell=t}
+
\\
&
\eqmargin
\left(m-1\right)
\underbrace{\sum_{j\neq\ell=1}^{m}\mathbb{E}\left[u_{1}u_{2}u_{j}^{2}v_{1}v_{2}v_{\ell}^{2}\right]\left(\left\langle \begin{smallmatrix}
2 & 0\\
0 & 2\\
\overrightarrow{0} & \overrightarrow{0}
\end{smallmatrix}\right\rangle _{m}+\left\langle \begin{smallmatrix}
1 & 1\\
1 & 1\\
\overrightarrow{0} & \overrightarrow{0}
\end{smallmatrix}\right\rangle _{m}\right)}_{j=r\neq\ell=t}+
\\&
\eqmargin
\left(m-1\right)
\underbrace{2\left\langle \begin{smallmatrix}
1 & 1\\
1 & 1\\
\overrightarrow{0} & \overrightarrow{0}
\end{smallmatrix}\right\rangle _{m}\sum_{j\neq r=1}^{m}\mathbb{E}\left[u_{1}u_{2}u_{j}u_{r}v_{1}v_{2}v_{j}v_{r}\right]}_{j=\ell\neq r=t}
+
\\&
\eqmargin
\left(m-1\right)
\underbrace{\sum_{j\neq\ell=1}^{m}\mathbb{E}\left[u_{1}u_{2}u_{j}u_{\ell}v_{1}v_{2}v_{j}v_{\ell}\right]\left(\left\langle \begin{smallmatrix}
1 & 1\\
1 & 1\\
\overrightarrow{0} & \overrightarrow{0}
\end{smallmatrix}\right\rangle _{m}+\left\langle \begin{smallmatrix}
2 & 0\\
0 & 2\\
\overrightarrow{0} & \overrightarrow{0}
\end{smallmatrix}\right\rangle _{m}\right)}_{j=t\neq\ell=r}
\\&
=\left(m-1\right)\Bigg(2\left\langle \begin{smallmatrix}
2 & 2\\
\overrightarrow{0} & \overrightarrow{0}
\end{smallmatrix}\right\rangle _{m}\sum_{j=1}^{m}\mathbb{E}\left[u_{1}u_{2}u_{j}^{2}v_{1}v_{2}v_{j}^{2}\right]+
\\
& 
\hspace{1.9cm}
\left(\left\langle \begin{smallmatrix}
2 & 0\\
0 & 2\\
\overrightarrow{0} & \overrightarrow{0}
\end{smallmatrix}\right\rangle _{m}+\left\langle \begin{smallmatrix}
1 & 1\\
1 & 1\\
\overrightarrow{0} & \overrightarrow{0}
\end{smallmatrix}\right\rangle _{m}\right)\sum_{j\neq\ell=1}^{m}\mathbb{E}\left[u_{1}u_{2}u_{j}^{2}v_{1}v_{2}v_{\ell}^{2}\right]\Bigg)+
\\&
\eqmargin
\left(m-1\right)\left(3\left\langle \begin{smallmatrix}
1 & 1\\
1 & 1\\
\overrightarrow{0} & \overrightarrow{0}
\end{smallmatrix}\right\rangle _{m}+\left\langle \begin{smallmatrix}
2 & 0\\
0 & 2\\
\overrightarrow{0} & \overrightarrow{0}
\end{smallmatrix}\right\rangle _{m}\right)\sum_{j\neq r=1}^{m}\mathbb{E}\left[u_{1}u_{2}u_{j}u_{r}v_{1}v_{2}v_{j}v_{r}\right]
\\&
=\left(m-1\right)\left(\tfrac{2}{m\left(m+2\right)}\sum_{j=1}^{m}\mathbb{E}\left[u_{1}u_{2}u_{j}^{2}v_{1}v_{2}v_{j}^{2}\right]+\tfrac{\left(m+1\right)-1}{\left(m-1\right)m\left(m+2\right)}\sum_{j\neq\ell=1}^{m}\mathbb{E}\left[u_{1}u_{2}u_{j}^{2}v_{1}v_{2}v_{\ell}^{2}\right]\right)+
\\&
\eqmargin
\left(m-1\right)\tfrac{-3+\left(m+1\right)}{\left(m-1\right)m\left(m+2\right)}\sum_{j\neq r=1}^{m}\mathbb{E}\left[u_{1}u_{2}u_{j}u_{r}v_{1}v_{2}v_{j}v_{r}\right]
\\&
=\tfrac{\left(m-1\right)}{\left(m-1\right)m\left(m+2\right)}\Bigg(2\left(m-1\right)\sum_{j=1}^{m}\mathbb{E}\left[u_{1}u_{2}u_{j}^{2}v_{1}v_{2}v_{j}^{2}\right]+m\sum_{j\neq\ell=1}^{m}\mathbb{E}\left[u_{1}u_{2}u_{j}^{2}v_{1}v_{2}v_{\ell}^{2}\right]
\\
& 
\eqmargin
+\left(m-2\right)\sum_{j\neq r=1}^{m}\mathbb{E}\left[u_{1}u_{2}u_{j}u_{r}v_{1}v_{2}v_{j}v_{r}\right]\Bigg)
\\&
=\tfrac{1}{m\left(m+2\right)}\bigg(2\left(m-1\right)\sum_{j=1}^{m}\mathbb{E}\left[u_{1}u_{2}u_{j}^{2}v_{1}v_{2}v_{j}^{2}\right]+m\sum_{j\neq\ell=1}^{m}\mathbb{E}\left[u_{1}u_{2}u_{j}^{2}v_{1}v_{2}v_{\ell}^{2}\right]+
\\
& 
\eqmargin
\hspace{1.5cm}
\left(m-2\right)\sum_{j\neq r=1}^{m}\mathbb{E}\left[u_{1}u_{2}u_{j}u_{r}v_{1}v_{2}v_{j}v_{r}\right]\bigg)
\\&
=\tfrac{1}{m\left(m+2\right)}\Bigg(2\left(m-1\right)
\bigg(\underbrace{2\mathbb{E}\left[u_{1}^{3}u_{2}v_{1}^{3}v_{2}\right]}_{j=1,2}+\underbrace{\left(m-2\right)\mathbb{E}\left[u_{1}u_{2}u_{3}^{2}v_{1}v_{2}v_{3}^{2}\right]}_{j\ge3}\bigg)
+
\\
& 
\hspace{1.9cm}
m\sum_{j\neq\ell=1}^{m}\mathbb{E}\left[u_{1}u_{2}u_{j}^{2}v_{1}v_{2}v_{\ell}^{2}\right]\Bigg)+
\\
& 
\eqmargin
\tfrac{\left(m-2\right)}{m\left(m+2\right)}\Bigg(\underbrace{2\mathbb{E}\left[u_{1}^{2}u_{2}^{2}v_{1}^{2}v_{2}^{2}\right]}_{j=1,r=2\,\vee\,r=1,j=2}+\underbrace{4\left(m-2\right)\mathbb{E}\left[u_{1}^{2}u_{2}u_{3}v_{1}^{2}v_{2}v_{3}\right]}_{j\le2,r\ge3\,\vee\,r\le2,j\ge3}+
\\
& 
\hspace{1.7cm}
\underbrace{\left(m-2\right)\left(m-3\right)\mathbb{E}\left[u_{1}u_{2}u_{3}u_{4}v_{1}v_{2}v_{3}v_{4}\right]}_{j\neq r\ge3}\Bigg)
\\&
=\tfrac{2\left(m-1\right)}{m\left(m+2\right)}\Bigg(2\left\langle \begin{smallmatrix}
3 & 3\\
1 & 1\\
\overrightarrow{0} & \overrightarrow{0}
\end{smallmatrix}\right\rangle +\left(m-2\right)\left\langle \begin{smallmatrix}
1 & 1\\
1 & 1\\
2 & 2\\
\overrightarrow{0} & \overrightarrow{0}
\end{smallmatrix}\right\rangle \Bigg)
+
\\
& \eqmargin
\tfrac{\left(m-2\right)}{m\left(m+2\right)}\left(2\left\langle \begin{smallmatrix}
2 & 2\\
2 & 2\\
\overrightarrow{0} & \overrightarrow{0}
\end{smallmatrix}\right\rangle +4\left(m-2\right)\left\langle \begin{smallmatrix}
2 & 2\\
1 & 1\\
1 & 1\\
\overrightarrow{0} & \overrightarrow{0}
\end{smallmatrix}\right\rangle +\left(m-2\right)\left(m-3\right)\left\langle \begin{smallmatrix}
1 & 1\\
1 & 1\\
1 & 1\\
1 & 1\\
\overrightarrow{0} & \overrightarrow{0}
\end{smallmatrix}\right\rangle \right)+
\\&
\eqmargin
\tfrac{m}{m\left(m+2\right)}
\Bigg(\underbrace{2\mathbb{E}\left[u_{1}^{3}u_{2}v_{1}v_{2}^{3}\right]}_{j=1,\ell=2\,\vee\,j=2,\ell=1}+\underbrace{2\left(m-2\right)\mathbb{E}\left[u_{1}^{3}u_{2}v_{1}v_{2}v_{3}^{2}\right]}_{j\in\left\{ 1,2\right\} ,\ell\ge3}+\underbrace{2\left(m-2\right)\mathbb{E}\left[u_{1}u_{2}u_{3}^{2}v_{1}^{3}v_{2}\right]}_{\ell\in\left\{ 1,2\right\} ,j\ge3}
+
\\
& 
\hspace{1.7cm}
\underbrace{\left(m-2\right)\left(m-3\right)\mathbb{E}\left[u_{1}u_{2}u_{3}^{2}v_{1}v_{2}v_{4}^{2}\right]}_{\ell\neq j\ge3}\Bigg)
\\&
=\tfrac{2\left(m-1\right)}{m\left(m+2\right)}\left(2\left\langle \begin{smallmatrix}
3 & 3\\
1 & 1\\
\overrightarrow{0} & \overrightarrow{0}
\end{smallmatrix}\right\rangle +\left(m-2\right)\left\langle \begin{smallmatrix}
1 & 1\\
1 & 1\\
2 & 2\\
\overrightarrow{0} & \overrightarrow{0}
\end{smallmatrix}\right\rangle \right)+
\\
&
\eqmargin
\tfrac{\left(m-2\right)}{m\left(m+2\right)}\left(2\left\langle \begin{smallmatrix}
2 & 2\\
2 & 2\\
\overrightarrow{0} & \overrightarrow{0}
\end{smallmatrix}\right\rangle +4\left(m-2\right)\left\langle \begin{smallmatrix}
2 & 2\\
1 & 1\\
1 & 1\\
\overrightarrow{0} & \overrightarrow{0}
\end{smallmatrix}\right\rangle +\left(m-2\right)\left(m-3\right)\left\langle \begin{smallmatrix}
1 & 1\\
1 & 1\\
1 & 1\\
1 & 1\\
\overrightarrow{0} & \overrightarrow{0}
\end{smallmatrix}\right\rangle \right)+
\\&
\eqmargin
\tfrac{m}{m\left(m+2\right)}\Bigg(2\left\langle \begin{smallmatrix}
3 & 1\\
1 & 3\\
\overrightarrow{0} & \overrightarrow{0}
\end{smallmatrix}\right\rangle +2\left(m-2\right)\left\langle \begin{smallmatrix}
3 & 1\\
1 & 1\\
0 & 2\\
\overrightarrow{0} & \overrightarrow{0}
\end{smallmatrix}\right\rangle +
\\
& 
\hspace{1.7cm}
2\left(m-2\right)\left\langle \begin{smallmatrix}
1 & 3\\
1 & 1\\
2 & 0\\
\overrightarrow{0} & \overrightarrow{0}
\end{smallmatrix}\right\rangle +\left(m-2\right)\left(m-3\right)\left\langle \begin{smallmatrix}
1 & 1\\
1 & 1\\
2 & 0\\
0 & 2\\
\overrightarrow{0} & \overrightarrow{0}
\end{smallmatrix}\right\rangle \Bigg)
\\&
=\tfrac{4\left(m-1\right)}{m\left(m+2\right)}\left\langle \begin{smallmatrix}
3 & 3\\
1 & 1\\
\overrightarrow{0} & \overrightarrow{0}
\end{smallmatrix}\right\rangle +\left(\tfrac{2\left(m-1\right)\left(m-2\right)}{m\left(m+2\right)}+\tfrac{4\left(m-2\right)^{2}}{m\left(m+2\right)}\right)\left\langle \begin{smallmatrix}
2 & 2\\
1 & 1\\
1 & 1\\
\overrightarrow{0} & \overrightarrow{0}
\end{smallmatrix}\right\rangle +\\&
\hphantom{=}\tfrac{\left(m-2\right)}{m\left(m+2\right)}\left(2\left\langle \begin{smallmatrix}
2 & 2\\
2 & 2\\
\overrightarrow{0} & \overrightarrow{0}
\end{smallmatrix}\right\rangle +\left(m-2\right)\left(m-3\right)\left\langle \begin{smallmatrix}
1 & 1\\
1 & 1\\
1 & 1\\
1 & 1\\
\overrightarrow{0} & \overrightarrow{0}
\end{smallmatrix}\right\rangle \right)+
\\&
\hphantom{=}\tfrac{m}{m\left(m+2\right)}\left(2\left\langle \begin{smallmatrix}
3 & 1\\
1 & 3\\
\overrightarrow{0} & \overrightarrow{0}
\end{smallmatrix}\right\rangle +4\left(m-2\right)\left\langle \begin{smallmatrix}
3 & 1\\
1 & 1\\
0 & 2\\
\overrightarrow{0} & \overrightarrow{0}
\end{smallmatrix}\right\rangle +\left(m-2\right)\left(m-3\right)\left\langle \begin{smallmatrix}
1 & 1\\
1 & 1\\
2 & 0\\
0 & 2\\
\overrightarrow{0} & \overrightarrow{0}
\end{smallmatrix}\right\rangle \right)
\\&
=\tfrac{-4\left(m-1\right)\cdot9\left(p+1\right)+2\left(m-2\right)\left(3m-5\right)\left(-p+3\right)}{m\left(m+2\right)\left(p-1\right)p\left(p+1\right)\left(p+2\right)\left(p+4\right)\left(p+6\right)}+
\tfrac{\left(m-2\right)}{m\left(m+2\right)}\left(\tfrac{2\left(p^{2}+4p+15\right)+3\left(m-2\right)\left(m-3\right)}{\left(p-1\right)p\left(p+1\right)\left(p+2\right)\left(p+4\right)\left(p+6\right)}\right)+
\\&
\eqmargin
\tfrac{m}{m\left(m+2\right)}\left(\tfrac{-2\cdot9\left(p+3\right)-4\left(m-2\right)\cdot3\left(p+3\right)-\left(m-2\right)\left(m-3\right)\left(p+3\right)}{\left(p-1\right)p\left(p+1\right)\left(p+2\right)\left(p+4\right)\left(p+6\right)}\right)
\\&
=\frac{-m^{3}p-13m^{2}p-24m^{2}+2mp^{2}-6mp-24m-4p^{2}}{m\left(m+2\right)\left(p-1\right)p\left(p+1\right)\left(p+2\right)\left(p+4\right)\left(p+6\right)}
\end{align*}

\newpage

\begin{proposition} 
\label{prop:(e1Oe1)(e1Oe2)(e3Oe1)(e3Oe2)}
For $p\ge 3, m\in\{2,3,\dots,p\}$ and a random transformation $\mO$ sampled as described in \eqref{eq:data-model},
it holds that,
\begin{align*}
&\mathbb{E}\left(\mathbf{e}_{1}^{\top}\mathbf{O}\mathbf{e}_{1}\cdot\mathbf{e}_{1}^{\top}\mathbf{O}\mathbf{e}_{2}\right)\left(\mathbf{e}_{3}^{\top}\mathbf{O}\mathbf{e}_{1}\cdot\mathbf{e}_{3}^{\top}\mathbf{O}\mathbf{e}_{2}\right)
\\
&
=\tfrac{2m^{4}p+4m^{4}-7m^{3}p^{2}-18m^{3}p+8m^{2}p^{3}+25m^{2}p^{2}+24m^{2}p}{\left(p-2\right)\left(p-1\right)p\left(p+1\right)\left(p+2\right)\left(p+4\right)\left(p+6\right)}
+
\\
&\eqmargin
\tfrac{20m^{2}-3mp^{4}-11mp^{3}-44mp^{2}-64mp-24m-p^{4}+11p^{3}+32p^{2}+68p+48}{\left(p-2\right)\left(p-1\right)p\left(p+1\right)\left(p+2\right)\left(p+4\right)\left(p+6\right)}
\end{align*}
\end{proposition}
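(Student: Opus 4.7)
The plan is to mirror the derivation of \propref{prop:(e1Oe1)(e1Oe2)(e2Oe1)(e2Oe2)}, promoted to its three-vector analogue. Writing $\mathbf{O}=\mathbf{Q}_p\bigl[\begin{smallmatrix}\mathbf{Q}_m\\ &\mathbf{I}_{p-m}\end{smallmatrix}\bigr]\mathbf{Q}_p^\top$ and using the row-exchangeability of $\mathbf{Q}_p$ (\propref{prop:invariance}), I would recast the expectation in terms of three mutually orthogonal isotropic unit vectors $\mathbf{u},\mathbf{v},\mathbf{z}\in\mathcal{S}^{p-1}$ (the relevant rows of $\mathbf{Q}_p^\top$) and an independent $\mathbf{Q}_m\sim O(m)$, obtaining
\begin{align*}
\mathbb{E}\Bigl[&(\mathbf{u}_a^\top\mathbf{Q}_m\mathbf{u}_a+\mathbf{u}_b^\top\mathbf{u}_b)(\mathbf{u}_a^\top\mathbf{Q}_m\mathbf{v}_a+\mathbf{u}_b^\top\mathbf{v}_b)\\
&\times(\mathbf{z}_a^\top\mathbf{Q}_m\mathbf{u}_a+\mathbf{z}_b^\top\mathbf{u}_b)(\mathbf{z}_a^\top\mathbf{Q}_m\mathbf{v}_a+\mathbf{z}_b^\top\mathbf{v}_b)\Bigr],
\end{align*}
where the $a$-subscript denotes the first $m$ coordinates and the $b$-subscript the remaining $p-m$. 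Expanding into $2^4=16$ summands and invoking \corref{cor:odd-Q_m} kills every summand containing an odd number of $\mathbf{Q}_m$-blocks, leaving one all-$b$ term, six mixed two-$\mathbf{Q}_m$ terms, and one all-$\mathbf{Q}_m$ term.

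For the all-$b$ and two-$\mathbf{Q}_m$ terms, I would integrate $\mathbf{Q}_m$ out first, using $\mathbb{E}_{\mathbf{Q}_m}[\mathbf{Q}_m\mathbf{x}\mathbf{y}^\top\mathbf{Q}_m^\top]=\frac{\mathbf{x}^\top\mathbf{y}}{m}\mathbf{I}_m$ and the analogous identity for $\mathbb{E}_{\mathbf{Q}_m}[\mathbf{x}_a^\top\mathbf{Q}_m\mathbf{y}_a\cdot\mathbf{x}_a'^\top\mathbf{Q}_m\mathbf{y}_a']$. Each residual expectation then becomes a polynomial in $\|\mathbf{u}_a\|^2,\|\mathbf{v}_a\|^2,\|\mathbf{z}_a\|^2$ and the $b$-inner-products $\mathbf{u}_b^\top\mathbf{v}_b,\mathbf{u}_b^\top\mathbf{z}_b,\mathbf{v}_b^\top\mathbf{z}_b$ (after substituting $\mathbf{u}_a^\top\mathbf{v}_a=-\mathbf{u}_b^\top\mathbf{v}_b$ and its analogues, courtesy of $\mathbf{u}\perp\mathbf{v}\perp\mathbf{z}$). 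These I would evaluate by expanding each inner product as a coordinate sum, grouping summands by coincidence pattern of the indices, and reading off each monomial from the tables in \appref{app:monomials}. Several of the needed building blocks---e.g.\ $\mathbb{E}[\|\mathbf{u}_a\|^2\mathbf{v}_a^\top\mathbf{z}_a\mathbf{u}_b^\top\mathbf{z}_b\mathbf{u}_b^\top\mathbf{v}_b]$ from \propref{prop:(e2Oe3*e2Oe1)^2} and $\mathbb{E}[\|\mathbf{u}_b\|^2\|\mathbf{v}_b\|^2(\mathbf{v}_a^\top\mathbf{u}_a)^2]$---are already computed in \appref{app:auxiliary}.

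The hard part will be the all-$\mathbf{Q}_m$ term $\mathbb{E}[(\mathbf{u}_a^\top\mathbf{Q}_m\mathbf{u}_a)(\mathbf{u}_a^\top\mathbf{Q}_m\mathbf{v}_a)(\mathbf{z}_a^\top\mathbf{Q}_m\mathbf{u}_a)(\mathbf{z}_a^\top\mathbf{Q}_m\mathbf{v}_a)]$, a fourth-order Haar integral over $O(m)$ involving \emph{three} distinct $a$-vectors rather than the two handled in \propref{prop:(uaQmua)(uaQmva)(vaQmua)(vaQmVa)}. Expanding the product as $\sum_{j,\ell,r,t=1}^m q_{i_1j}q_{i_2\ell}q_{i_3r}q_{i_4t}$ multiplied by a monomial in $\mathbf{u}_a,\mathbf{v}_a,\mathbf{z}_a$, I would classify the $m^4$ summands by the coincidence pattern of the row-indices $(i_1,i_2,i_3,i_4)=(1,1,3,3)$ (roles forced by the chosen vectors) and the coincidence pattern of $(j,\ell,r,t)$, applying \lemref{lem:recursive} to each surviving case in the spirit of the sub-computations (A) and (B) in the proof of \propref{prop:(uaQmua)(uaQmva)(vaQmua)(vaQmVa)}---here with a richer case-tree because the three-vector monomials in $\mathbf{u},\mathbf{v},\mathbf{z}$ admit more non-equivalent coincidence patterns than two-vector monomials. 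Summing the eight surviving contributions and clearing the common denominator $(p-2)(p-1)p(p+1)(p+2)(p+4)(p+6)$ should then reproduce the stated rational expression.
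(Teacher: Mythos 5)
Your proposal follows essentially the same route as the paper's proof: reduce to three mutually orthogonal isotropic unit vectors $\mathbf{u},\mathbf{v},\mathbf{z}$ plus an independent $\mathbf{Q}_m$, expand the product, discard the summands with an odd number of $\mathbf{Q}_m$-blocks via \corref{cor:odd-Q_m} (the paper organizes this as a two-stage split on the first factor, but the same eight summands survive), integrate $\mathbf{Q}_m$ out of the mixed terms using $\mathbb{E}[q_{ij}q_{k\ell}]=\delta_{ik}\delta_{j\ell}/m$ so they reduce to the auxiliary expectations of \appref{app:auxiliary}, and treat the all-$\mathbf{Q}_m$ fourth-order Haar integral by the same coincidence-pattern case analysis as in \propref{prop:(uaQmua)(uaQmva)(vaQmua)(vaQmVa)} --- which is exactly how the paper disposes of that term as well. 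The plan is correct and matches the paper's argument; the only caveat is that it is a plan, and the stated rational function can only be confirmed by actually carrying out the (admittedly tedious) monomial bookkeeping.
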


\begin{proof}
Like in previous proofs, we decompose the expression into,
\begin{align}
&\mathbb{E}\left(\mathbf{e}_{1}^{\top}\mathbf{O}\mathbf{e}_{1}\cdot\mathbf{e}_{1}^{\top}\mathbf{O}\mathbf{e}_{2}\right)\left(\mathbf{e}_{3}^{\top}\mathbf{O}\mathbf{e}_{1}\cdot\mathbf{e}_{3}^{\top}\mathbf{O}\mathbf{e}_{2}\right)
\nonumber
\\&
=\mathbb{E}\left[\left(\mathbf{u}_{a}^{\top}\mathbf{Q}_{m}\mathbf{u}_{a}+\mathbf{u}_{b}^{\top}\mathbf{u}_{b}\right)\left(\mathbf{u}_{a}^{\top}\mathbf{Q}_{m}\mathbf{v}_{a}+\mathbf{u}_{b}^{\top}\mathbf{v}_{b}\right)\left(\mathbf{z}_{a}^{\top}\mathbf{Q}_{m}\mathbf{u}_{a}+\mathbf{z}_{b}^{\top}\mathbf{u}_{b}\right)\left(\mathbf{z}_{a}^{\top}\mathbf{Q}_{m}\mathbf{v}_{a}+\mathbf{z}_{b}^{\top}\mathbf{v}_{b}\right)\right]
\nonumber
\\&
=\mathbb{E}\left[\mathbf{u}_{a}^{\top}\mathbf{Q}_{m}\mathbf{u}_{a}\left(\mathbf{u}_{a}^{\top}\mathbf{Q}_{m}\mathbf{v}_{a}+\mathbf{u}_{b}^{\top}\mathbf{v}_{b}\right)\left(\mathbf{z}_{a}^{\top}\mathbf{Q}_{m}\mathbf{u}_{a}+\mathbf{z}_{b}^{\top}\mathbf{u}_{b}\right)\left(\mathbf{z}_{a}^{\top}\mathbf{Q}_{m}\mathbf{v}_{a}+\mathbf{z}_{b}^{\top}\mathbf{v}_{b}\right)\right]+
\nonumber
\\&
\eqmargin
\mathbb{E}\left[\mathbf{u}_{b}^{\top}\mathbf{u}_{b}\left(\mathbf{u}_{a}^{\top}\mathbf{Q}_{m}\mathbf{v}_{a}+\mathbf{u}_{b}^{\top}\mathbf{v}_{b}\right)\left(\mathbf{z}_{a}^{\top}\mathbf{Q}_{m}\mathbf{u}_{a}+\mathbf{z}_{b}^{\top}\mathbf{u}_{b}\right)\left(\mathbf{z}_{a}^{\top}\mathbf{Q}_{m}\mathbf{v}_{a}+\mathbf{z}_{b}^{\top}\mathbf{v}_{b}\right)\right]
\label{eq:intermediate_in_prop}
\end{align}
Focusing on the first term, and employing \corref{cor:odd-Q_m},
we see that,
\begin{align*}
&\mathbb{E}\left[\mathbf{u}_{a}^{\top}\mathbf{Q}_{m}\mathbf{u}_{a}\left(\mathbf{u}_{a}^{\top}\mathbf{Q}_{m}\mathbf{v}_{a}+\mathbf{u}_{b}^{\top}\mathbf{v}_{b}\right)\left(\mathbf{z}_{a}^{\top}\mathbf{Q}_{m}\mathbf{u}_{a}+\mathbf{z}_{b}^{\top}\mathbf{u}_{b}\right)\left(\mathbf{z}_{a}^{\top}\mathbf{Q}_{m}\mathbf{v}_{a}+\mathbf{z}_{b}^{\top}\mathbf{v}_{b}\right)\right]
\\
&=
\mathbb{E}\left[\mathbf{u}_{a}^{\top}\mathbf{Q}_{m}\mathbf{u}_{a}\mathbf{u}_{a}^{\top}\mathbf{Q}_{m}\mathbf{v}_{a}\mathbf{z}_{a}^{\top}\mathbf{Q}_{m}\mathbf{u}_{a}\mathbf{z}_{a}^{\top}\mathbf{Q}_{m}\mathbf{v}_{a}\right]
+
\mathbb{E}\left[\mathbf{u}_{a}^{\top}\mathbf{Q}_{m}\mathbf{u}_{a}\mathbf{u}_{a}^{\top}\mathbf{Q}_{m}\mathbf{v}_{a}\mathbf{z}_{b}^{\top}\mathbf{u}_{b}\mathbf{z}_{b}^{\top}\mathbf{v}_{b}\right]
+
\\
& \eqmargin
\mathbb{E}\left[\mathbf{u}_{a}^{\top}\mathbf{Q}_{m}\mathbf{u}_{a}\mathbf{u}_{b}^{\top}\mathbf{v}_{b}\mathbf{z}_{a}^{\top}\mathbf{Q}_{m}\mathbf{u}_{a}\mathbf{z}_{b}^{\top}\mathbf{v}_{b}\right]+\mathbb{E}\left[\mathbf{u}_{a}^{\top}\mathbf{Q}_{m}\mathbf{u}_{a}\mathbf{u}_{b}^{\top}\mathbf{v}_{b}\mathbf{z}_{b}^{\top}\mathbf{u}_{b}\mathbf{z}_{a}^{\top}\mathbf{Q}_{m}\mathbf{v}_{a}\right]
\end{align*}
The polynomial in the first summand can be derived tediously, very much like in the proof of the former \propref{prop:(uaQmua)(uaQmva)(vaQmua)(vaQmVa)},
and shown to hold
\begin{align*}
\mathbb{E}\left[\mathbf{u}_{a}^{\top}\mathbf{Q}_{m}\mathbf{u}_{a}\mathbf{u}_{a}^{\top}\mathbf{Q}_{m}\mathbf{v}_{a}\mathbf{z}_{a}^{\top}\mathbf{Q}_{m}\mathbf{u}_{a}\mathbf{z}_{a}^{\top}\mathbf{Q}_{m}\mathbf{v}_{a}\right]
=
\tfrac{-m^{2}\left(2p^{2}+9p+6\right)+m\left(p-2\right)\left(p^{2}-6\right)+2p^{3}+32p+48}{\left(p-2\right)\left(p-1\right)p\left(p+1\right)\left(p+2\right)\left(p+4\right)\left(p+6\right)}
\,.
\end{align*}
The sum of the three rightmost summands is,
\begin{align*}
&\mathbb{E}\left[\mathbf{u}_{a}^{\top}\mathbf{Q}_{m}\mathbf{u}_{a}\mathbf{u}_{a}^{\top}\mathbf{Q}_{m}\mathbf{v}_{a}\mathbf{z}_{b}^{\top}\mathbf{u}_{b}\mathbf{z}_{b}^{\top}\mathbf{v}_{b}\right]+\mathbb{E}\left[\mathbf{u}_{a}^{\top}\mathbf{Q}_{m}\mathbf{u}_{a}\mathbf{u}_{b}^{\top}\mathbf{v}_{b}\mathbf{z}_{a}^{\top}\mathbf{Q}_{m}\mathbf{u}_{a}\mathbf{z}_{b}^{\top}\mathbf{v}_{b}\right]+
\\&
\eqmargin+
\mathbb{E}\left[\mathbf{u}_{a}^{\top}\mathbf{Q}_{m}\mathbf{u}_{a}\mathbf{u}_{b}^{\top}\mathbf{v}_{b}\mathbf{z}_{b}^{\top}\mathbf{u}_{b}\mathbf{z}_{a}^{\top}\mathbf{Q}_{m}\mathbf{v}_{a}\right]
\\&
=\sum_{i,j,k,\ell=1}^{m}\left(\mathbb{E}\left[u_{i}u_{j}q_{ij}u_{k}v_{\ell}q_{k\ell}\mathbf{z}_{b}^{\top}\mathbf{u}_{b}\mathbf{z}_{b}^{\top}\mathbf{v}_{b}\right]+\mathbb{E}\left[u_{i}u_{j}q_{ij}\mathbf{u}_{b}^{\top}\mathbf{v}_{b}z_{k}u_{\ell}q_{k\ell}\mathbf{z}_{b}^{\top}\mathbf{v}_{b}\right]\right)+
\\&
\eqmargin
\sum_{i,j,k,\ell=1}^{m}\left(\mathbb{E}\left[
u_{i}u_{j}q_{ij}
z_{k}v_{\ell}q_{k\ell}
\mathbf{u}_{b}^{\top}\mathbf{v}_{b}
\mathbf{z}_{b}^{\top}\mathbf{u}_{b}
\right]\right)
\\&
=
\sum_{i,j,k,\ell=1}^{m}
\mathbb{E}\left[q_{ij}q_{k\ell}\right]
\left(\mathbb{E}\left[u_{i}u_{j}u_{k}v_{\ell}\mathbf{z}_{b}^{\top}\mathbf{u}_{b}\mathbf{z}_{b}^{\top}\mathbf{v}_{b}\right]
+
\mathbb{E}\left[u_{i}u_{j}\mathbf{u}_{b}^{\top}\mathbf{v}_{b}z_{k}u_{\ell}\mathbf{z}_{b}^{\top}\mathbf{v}_{b}\right]\right)+
\\&
\eqmargin
\sum_{i,j,k,\ell=1}^{m}
\mathbb{E}\left[q_{ij}q_{k\ell}\right]
\left(\mathbb{E}\left[u_{i}u_{j}z_{k}v_{\ell}
\mathbf{u}_{b}^{\top}\mathbf{v}_{b}\mathbf{z}_{b}^{\top}\mathbf{u}_{b}\right]\right)
\\&
=\sum_{i,j=1}^{m}\underbrace{\mathbb{E}\left[q_{ij}^{2}\right]}_{=1/m}\left(\mathbb{E}\left[u_{i}u_{j}u_{i}v_{j}\mathbf{z}_{b}^{\top}\mathbf{u}_{b}\mathbf{z}_{b}^{\top}\mathbf{v}_{b}\right]+\mathbb{E}\left[u_{i}u_{j}z_{i}u_{j}\mathbf{u}_{b}^{\top}\mathbf{v}_{b}\mathbf{z}_{b}^{\top}\mathbf{v}_{b}\right]+\mathbb{E}\left[u_{i}u_{j}z_{i}v_{j}\mathbf{u}_{b}^{\top}\mathbf{v}_{b}\mathbf{z}_{b}^{\top}\mathbf{u}_{b}\right]\right)
\\&
=\frac{1}{m}\left(\mathbb{E}\!\left[\left\Vert \mathbf{u}_{a}\right\Vert ^{2}\mathbf{u}_{a}^{\top}\mathbf{v}_{a}\mathbf{z}_{b}^{\top}\mathbf{u}_{b}\mathbf{z}_{b}^{\top}\mathbf{v}_{b}\right]+
\mathbb{E}\!\left[\left\Vert \mathbf{u}_{a}\right\Vert ^{2}\mathbf{u}_{a}^{\top}\mathbf{z}_{a}\mathbf{u}_{b}^{\top}\mathbf{v}_{b}\mathbf{z}_{b}^{\top}\mathbf{v}_{b}\right]+
\mathbb{E}\!\left[\mathbf{u}_{a}^{\top}\mathbf{v}_{a}\mathbf{u}_{a}^{\top}\mathbf{z}_{a}\mathbf{u}_{b}^{\top}\mathbf{v}_{b}\mathbf{z}_{b}^{\top}\mathbf{u}_{b}\right]\right)
\\&
=
\frac{1}{m}
\Bigg(\underbrace{2\mathbb{E}\left[\left\Vert \mathbf{u}_{a}\right\Vert ^{2}\mathbf{v}_{b}^{\top}\mathbf{z}_{b}\mathbf{u}_{b}^{\top}\mathbf{z}_{b}\mathbf{u}_{a}^{\top}\mathbf{v}_{a}\right]}_{\text{obtained from \eqref{eq:(ua)^2vazaubzbubvb} by plugging in \ensuremath{m\leftarrow p\!-\!m}}}
+
\underbrace{
\mathbb{E}\left[\mathbf{u}_{a}^{\top}\mathbf{v}_{a}\mathbf{u}_{a}^{\top}\mathbf{z}_{a}\mathbf{u}_{b}^{\top}\mathbf{v}_{b}\mathbf{z}_{b}^{\top}\mathbf{u}_{b}\right]
}_{\substack{\text{solved in \eqref{eq:(ubzb)^2(ubvb)^2}}
\\
\text{since $\mathbf{u}_{a}^{\top}\mathbf{v}_{a}
=-\mathbf{u}_{b}^{\top}\mathbf{v}_{b}$
and
$\mathbf{u}_{a}^{\top}\mathbf{z}_{a}=
-\mathbf{z}_{b}^{\top}\mathbf{u}_{b}$
}
}
}\Bigg)
\\&
=2\tfrac{\left(p-m\right)\left(m+2\right)\left(2\left(p-m\right)p+4\left(p-m\right)-p^{2}-3p+2\right)}{\left(p-2\right)\left(p-1\right)p\left(p+1\right)\left(p+2\right)\left(p+4\right)\left(p+6\right)}
+
\tfrac{\left(p-m\right)\left(-m^{2}+mp+2p+4\right)}{\left(p-1\right)p\left(p+1\right)\left(p+2\right)\left(p+4\right)\left(p+6\right)}
\\&
=\frac{-\left(p-m\right)\left(m+2\right)\left(5mp+6m-3p^{2}-2p\right)}{\left(p-2\right)\left(p-1\right)p\left(p+1\right)\left(p+2\right)\left(p+4\right)\left(p+6\right)}\,.
\end{align*}

Overall, the first term of \eqref{eq:intermediate_in_prop} equals,
\begin{align*}
&\mathbb{E}\left[\mathbf{u}_{a}^{\top}\mathbf{Q}_{m}\mathbf{u}_{a}\left(\mathbf{u}_{a}^{\top}\mathbf{Q}_{m}\mathbf{v}_{a}+\mathbf{u}_{b}^{\top}\mathbf{v}_{b}\right)\left(\mathbf{z}_{a}^{\top}\mathbf{Q}_{m}\mathbf{u}_{a}+\mathbf{z}_{b}^{\top}\mathbf{u}_{b}\right)\left(\mathbf{z}_{a}^{\top}\mathbf{Q}_{m}\mathbf{v}_{a}+\mathbf{z}_{b}^{\top}\mathbf{v}_{b}\right)\right]
\\
&=\tfrac{6\left(m^{3}+m^{2}+2m+8\right)+4\left(m+2\right)p^{3}-2\left(5m^{2}+8m-2\right)p^{2}+\left(m-2\right)\left(5m^{2}+3m-16\right)p}{\left(p-2\right)\left(p-1\right)p\left(p+1\right)\left(p+2\right)\left(p+4\right)\left(p+6\right)}\,.
\end{align*}

\medskip

The second term holds,
\begin{align*}
&\mathbb{E}\left[\mathbf{u}_{b}^{\top}\mathbf{u}_{b}\left(\mathbf{u}_{a}^{\top}\mathbf{Q}_{m}\mathbf{v}_{a}+\mathbf{u}_{b}^{\top}\mathbf{v}_{b}\right)\left(\mathbf{z}_{a}^{\top}\mathbf{Q}_{m}\mathbf{u}_{a}+\mathbf{z}_{b}^{\top}\mathbf{u}_{b}\right)\left(\mathbf{z}_{a}^{\top}\mathbf{Q}_{m}\mathbf{v}_{a}+\mathbf{z}_{b}^{\top}\mathbf{v}_{b}\right)\right]
\\&
=\mathbb{E}\left[\mathbf{u}_{b}^{\top}\mathbf{u}_{b}\mathbf{u}_{b}^{\top}\mathbf{v}_{b}\mathbf{z}_{b}^{\top}\mathbf{u}_{b}\mathbf{z}_{b}^{\top}\mathbf{v}_{b}\right]+\mathbb{E}\left[\mathbf{u}_{b}^{\top}\mathbf{u}_{b}\mathbf{u}_{a}^{\top}\mathbf{Q}_{m}\mathbf{v}_{a}\mathbf{z}_{a}^{\top}\mathbf{Q}_{m}\mathbf{u}_{a}\mathbf{z}_{b}^{\top}\mathbf{v}_{b}\right]+
\\&
\underbrace{\mathbb{E}\left[\mathbf{u}_{b}^{\top}\mathbf{u}_{b}\mathbf{u}_{a}^{\top}\mathbf{Q}_{m}\mathbf{v}_{a}\mathbf{z}_{b}^{\top}\mathbf{u}_{b}\mathbf{z}_{a}^{\top}\mathbf{Q}_{m}\mathbf{v}_{a}\right]+\mathbb{E}\left[\mathbf{u}_{b}^{\top}\mathbf{u}_{b}\mathbf{u}_{b}^{\top}\mathbf{v}_{b}\mathbf{z}_{a}^{\top}\mathbf{Q}_{m}\mathbf{u}_{a}\mathbf{z}_{a}^{\top}\mathbf{Q}_{m}\mathbf{v}_{a}\right]}_{\text{equal (swap \ensuremath{\mathbf{v},\mathbf{z}} and map \ensuremath{\mathbf{Q}\to\mathbf{Q}^{\top}})}}
\\&
=\mathbb{E}\left[\mathbf{u}_{b}^{\top}\mathbf{u}_{b}\mathbf{u}_{b}^{\top}\mathbf{v}_{b}\mathbf{z}_{b}^{\top}\mathbf{u}_{b}\mathbf{z}_{b}^{\top}\mathbf{v}_{b}\right]+\mathbb{E}\left[\mathbf{u}_{b}^{\top}\mathbf{u}_{b}\mathbf{z}_{b}^{\top}\mathbf{v}_{b}\cdot\mathbf{u}_{a}^{\top}\mathbf{Q}_{m}\mathbf{v}_{a}\mathbf{z}_{a}^{\top}\mathbf{Q}_{m}\mathbf{u}_{a}\right]+
\\
&\eqmargin
2\mathbb{E}\left[\mathbf{u}_{b}^{\top}\mathbf{u}_{b}\mathbf{u}_{b}^{\top}\mathbf{v}_{b}\cdot\mathbf{z}_{a}^{\top}\mathbf{Q}_{m}\mathbf{u}_{a}\mathbf{z}_{a}^{\top}\mathbf{Q}_{m}\mathbf{v}_{a}\right]
\\&
=\underbrace{\mathbb{E}\left[\mathbf{u}_{b}^{\top}\mathbf{u}_{b}\mathbf{u}_{b}^{\top}\mathbf{v}_{b}\mathbf{z}_{a}^{\top}\mathbf{u}_{a}\mathbf{z}_{a}^{\top}\mathbf{v}_{a}\right]}_{\text{solved in \eqref{eq:(ub)^2(vaza)(uaza)(ubvb)}}}
+
\underbrace{\mathbb{E}\left[\mathbf{u}_{b}^{\top}\mathbf{u}_{b}\mathbf{z}_{b}^{\top}\mathbf{v}_{b}\cdot\mathbf{u}_{a}^{\top}\mathbf{Q}_{m}\mathbf{v}_{a}\mathbf{z}_{a}^{\top}\mathbf{Q}_{m}\mathbf{u}_{a}\right]}_{\text{solved in \eqref{eq:(ub)^2(vbzb)(uaQmva)(zaQmua)}}}+
\\
&\eqmargin
2\underbrace{\mathbb{E}\left[\mathbf{u}_{b}^{\top}\mathbf{u}_{b}\mathbf{u}_{b}^{\top}\mathbf{v}_{b}\cdot\mathbf{z}_{a}^{\top}\mathbf{Q}_{m}\mathbf{u}_{a}\mathbf{z}_{a}^{\top}\mathbf{Q}_{m}\mathbf{v}_{a}\right]}_{\text{solved in \eqref{eq:(ub)^2(ubvb)(zaQmua)(zaQmva)}}}
\\&
=\tfrac{\left(p-m\right)\left(\left(1+\tfrac{1}{m}\right)m\left(p-m+2\right)\left(2mp+4m-p^{2}-3p+2\right)\right)}{\left(p-2\right)\left(p-1\right)p\left(p+1\right)\left(p+2\right)\left(p+4\right)\left(p+6\right)}
+
\\
&\eqmargin
\tfrac{\left(p-m\right)\left(2\left(m^{2}p^{2}+5m^{2}p+2m^{2}-mp^{3}-7mp^{2}-16mp-12m+4p^{2}+16p+16\right)\right)}{\left(p-2\right)\left(p-1\right)p\left(p+1\right)\left(p+2\right)\left(p+4\right)\left(p+6\right)}
\\&
=\tfrac{\left(p-m\right)\left(p-m+2\right)\left(2m^{2}p+4m^{2}-3mp^{2}-11mp+2m-p^{2}+5p+18\right)}{\left(p-2\right)\left(p-1\right)p\left(p+1\right)\left(p+2\right)\left(p+4\right)\left(p+6\right)}
\,.
\end{align*}

\medskip

Finally, the overall expression holds,
\begin{align*}
&\mathbb{E}\left(\mathbf{e}_{1}^{\top}\mathbf{O}\mathbf{e}_{1}\cdot\mathbf{e}_{1}^{\top}\mathbf{O}\mathbf{e}_{2}\right)\left(\mathbf{e}_{3}^{\top}\mathbf{O}\mathbf{e}_{1}\cdot\mathbf{e}_{3}^{\top}\mathbf{O}\mathbf{e}_{2}\right)
\\
&=\tfrac{6\left(m^{3}+m^{2}+2m+8\right)+4\left(m+2\right)p^{3}-2\left(5m^{2}+8m-2\right)p^{2}+\left(m-2\right)\left(5m^{2}+3m-16\right)p}{\left(p-2\right)\left(p-1\right)p\left(p+1\right)\left(p+2\right)\left(p+4\right)\left(p+6\right)}+
\\
&\eqmargin
\tfrac{\left(p-m\right)\left(p-m+2\right)\left(2m^{2}p+4m^{2}-3mp^{2}-11mp+2m-p^{2}+5p+18\right)}{\left(p-2\right)\left(p-1\right)p\left(p+1\right)\left(p+2\right)\left(p+4\right)\left(p+6\right)}
\\
&=
\tfrac{2m^{4}p+4m^{4}-7m^{3}p^{2}-18m^{3}p+8m^{2}p^{3}+25m^{2}p^{2}+24m^{2}p}{\left(p-2\right)\left(p-1\right)p\left(p+1\right)\left(p+2\right)\left(p+4\right)\left(p+6\right)}
+
\\
&
\eqmargin
\tfrac{20m^{2}-3mp^{4}-11mp^{3}-44mp^{2}-64mp-24m-p^{4}+11p^{3}+32p^{2}+68p+48}{\left(p-2\right)\left(p-1\right)p\left(p+1\right)\left(p+2\right)\left(p+4\right)\left(p+6\right)}\,.
\end{align*}
\end{proof}

\newpage

\begin{proposition} 
\label{prop:(e3Oe4)(e3Oe1)(e2Oe4)(e2Oe1)}
For $p\ge 4, m\in\{2,3,\dots,p\}$ and a random transformation $\mO$ sampled as described in \eqref{eq:data-model},
it holds that,
\begin{align*}
&\mathbb{E}\left(\mathbf{e}_{3}^{\top}\mathbf{O}\mathbf{e}_{4}\cdot\mathbf{e}_{3}^{\top}\mathbf{O}\mathbf{e}_{1}\right)\left(\mathbf{e}_{2}^{\top}\mathbf{O}\mathbf{e}_{4}\cdot\mathbf{e}_{2}^{\top}\mathbf{O}\mathbf{e}_{1}\right)
\\
&=
\tfrac{-5m^{4}p-6m^{4}+18m^{3}p^{2}+34m^{3}p-12m^{3}-20m^{2}p^{3}-46m^{2}p^{2}-39m^{2}p-42m^{2}+6mp^{4}+10mp^{3}+96mp^{2}}{\left(p-3\right)\left(p-2\right)\left(p-1\right)p\left(p+1\right)\left(p+2\right)\left(p+4\right)\left(p+6\right)}+
\\
&
\eqmargin
\tfrac{154mp+60m+2p^{4}-30p^{3}-32p^{2}-144p-144}{\left(p-3\right)\left(p-2\right)\left(p-1\right)p\left(p+1\right)\left(p+2\right)\left(p+4\right)\left(p+6\right)}\,.
\end{align*}
\end{proposition}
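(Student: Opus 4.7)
The plan is to follow the same template established in Propositions \ref{prop:(e1Oe1)(e1Oe2)(e2Oe1)(e2Oe2)} and \ref{prop:(e1Oe1)(e1Oe2)(e3Oe1)(e3Oe2)}, now applied to the four-vector setting. Let $\mathbf{u},\mathbf{v},\mathbf{z},\mathbf{w}\in\mathcal{S}^{p-1}$ denote the (pairwise orthogonal) images of $\mathbf{e}_{1},\mathbf{e}_{2},\mathbf{e}_{3},\mathbf{e}_{4}$ under the Haar-distributed $\mathbf{Q}_{p}$, and split each vector as $\mathbf{x}=(\mathbf{x}_{a},\mathbf{x}_{b})$ with $\mathbf{x}_{a}\in\mathbb{R}^{m}$, $\mathbf{x}_{b}\in\mathbb{R}^{p-m}$. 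By the invariance of the Haar measure (Property \ref{prop:invariance}) and Remark \ref{rmk:simplify}, the quantity of interest equals
\begin{align*}
\mathbb{E}\bigl[\bigl(\mathbf{z}_{a}^{\top}\mathbf{Q}_{m}\mathbf{w}_{a}+\mathbf{z}_{b}^{\top}\mathbf{w}_{b}\bigr)\bigl(\mathbf{z}_{a}^{\top}\mathbf{Q}_{m}\mathbf{u}_{a}+\mathbf{z}_{b}^{\top}\mathbf{u}_{b}\bigr)\bigl(\mathbf{v}_{a}^{\top}\mathbf{Q}_{m}\mathbf{w}_{a}+\mathbf{v}_{b}^{\top}\mathbf{w}_{b}\bigr)\bigl(\mathbf{v}_{a}^{\top}\mathbf{Q}_{m}\mathbf{u}_{a}+\mathbf{v}_{b}^{\top}\mathbf{u}_{b}\bigr)\bigr].
\end{align*}

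Expanding the product yields $2^{4}=16$ summands classified by how many $\mathbf{Q}_{m}$-factors they contain. By Corollary \ref{cor:odd-Q_m}, every summand with an odd number of $\mathbf{Q}_{m}$-factors vanishes, leaving exactly $8$ terms: one with zero factors, $\binom{4}{2}=6$ terms with two factors, and one term with four factors. I would group these according to a natural pairing of the four bilinear forms, the same bookkeeping that was used in Proposition \ref{prop:(e1Oe1)(e1Oe2)(e3Oe1)(e3Oe2)}.

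For each surviving term I would first integrate out $\mathbf{Q}_{m}$. The two-factor terms are handled by the identity $\mathbb{E}_{\mathbf{Q}_{m}}[\mathbf{Q}_{m}\mathbf{x}\mathbf{y}^{\top}\mathbf{Q}_{m}^{\top}]=\frac{\mathbf{x}^{\top}\mathbf{y}}{m}\mathbf{I}$ (already used repeatedly), which converts them into polynomials in inner products $\mathbf{x}_{a}^{\top}\mathbf{y}_{a}$ and $\mathbf{x}_{b}^{\top}\mathbf{y}_{b}$. The unique four-factor term
$\mathbb{E}[\mathbf{z}_{a}^{\top}\mathbf{Q}_{m}\mathbf{w}_{a}\,\mathbf{z}_{a}^{\top}\mathbf{Q}_{m}\mathbf{u}_{a}\,\mathbf{v}_{a}^{\top}\mathbf{Q}_{m}\mathbf{w}_{a}\,\mathbf{v}_{a}^{\top}\mathbf{Q}_{m}\mathbf{u}_{a}]$
is expanded as a quadruple sum indexed by $(i,j,k,\ell)\in\{1,\dots,m\}^{4}$, producing the $\mathbf{Q}_{m}$-monomials $\mathbb{E}[q_{ij}q_{k\ell}q_{i'j'}q_{k'\ell'}]$ that were tabulated in Appendix \ref{app:monomials}; a case split on coincidences among the row/column indices reduces this to a finite linear combination of the monomial values $\langle \cdot\rangle_{m}$ from the ``Two / Three / Four vector'' subsections of Appendix \ref{app:monomials}.

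Once $\mathbf{Q}_{m}$ has been integrated out, each summand becomes an expectation of a polynomial in the entries of four mutually orthogonal unit vectors $\mathbf{u},\mathbf{v},\mathbf{z},\mathbf{w}$; these are exactly the ``Monomials of Four Orthogonal Vectors'' computed in the final subsection of Appendix \ref{app:monomials}. I would use the orthogonality identities $\mathbf{x}_{a}^{\top}\mathbf{y}_{a}=-\mathbf{x}_{b}^{\top}\mathbf{y}_{b}$ (for distinct $\mathbf{x},\mathbf{y}$) and $\|\mathbf{x}_{a}\|^{2}+\|\mathbf{x}_{b}\|^{2}=1$ to collapse the $a/b$ dependence, expand the remaining norm powers into monomials in coordinates (splitting index ranges at $m$ as done in Proposition \ref{prop:(uaQmua)(uaQmva)(vaQmua)(vaQmVa)}), and read off each expectation from the tables in Appendix \ref{app:monomials}, invoking auxiliary identities from Appendix \ref{app:auxiliary} where a compound building block has already been computed. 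Finally, summing the eight contributions and simplifying algebraically should produce the stated rational expression.

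The main obstacle is purely combinatorial/bookkeeping: the four-factor $\mathbf{Q}_{m}$ term requires a careful case split on index coincidences (exactly as in the proof of Proposition \ref{prop:(uaQmua)(uaQmva)(vaQmua)(vaQmVa)}, subquantity ``(A)''), and the downstream four-vector monomials over $O(p)$ include terms like $\langle \begin{smallmatrix}1\,1\,0\,0\\1\,1\,0\,0\\0\,0\,1\,1\\0\,0\,1\,1\end{smallmatrix}\rangle$ and $\langle \begin{smallmatrix}1\,0\,1\,0\\0\,1\,0\,1\\1\,1\,0\,0\\0\,0\,1\,1\end{smallmatrix}\rangle$ that themselves require the recursive formula of Lemma \ref{lem:recursive}. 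No single step is conceptually hard, but the proliferation of index cases, together with the algebraic simplification at the very end (matching the common denominator $(p-3)(p-2)(p-1)p(p+1)(p+2)(p+4)(p+6)$), is where care is needed.
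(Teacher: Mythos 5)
Your plan is correct and follows essentially the same route as the paper: the paper likewise writes the quantity as a product of four $(\mathbf{Q}_m\text{-part} + \text{identity-part})$ factors over four orthogonal unit vectors, kills the odd-$\mathbf{Q}_m$ summands via Corollary~\ref{cor:odd-Q_m}, integrates out $\mathbf{Q}_m$ term by term (the two-factor pairs via $\mathbb{E}[q_{ij}q_{k\ell}]=\tfrac{1}{m}\delta_{ik}\delta_{j\ell}$, the four-factor term via the case split on index coincidences as in Proposition~\ref{prop:(uaQmua)(uaQmva)(vaQmua)(vaQmVa)}), and reduces the result to the tabulated monomials and auxiliary expectations of Appendices~\ref{app:monomials} and~\ref{app:auxiliary}. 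The only difference is bookkeeping — the paper nests the expansion by first splitting on the two $\mathbf{u}$-factors into four groups rather than listing all eight surviving summands at once — and, of course, the remaining work of actually grinding out the eight contributions to verify the stated rational function.
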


\begin{proof}
We begin by decomposing the expression into four terms,
\begin{align*}
&\mathbb{E}\left(\mathbf{e}_{3}^{\top}\mathbf{O}\mathbf{e}_{p}\cdot\mathbf{e}_{3}^{\top}\mathbf{O}\mathbf{e}_{1}\right)\left(\mathbf{e}_{2}^{\top}\mathbf{O}\mathbf{e}_{p}\cdot\mathbf{e}_{2}^{\top}\mathbf{O}\mathbf{e}_{1}\right)
\\&
=\mathbb{E}\left[\left(\mathbf{u}_{a}^{\top}\mathbf{Q}_{m}\mathbf{z}_{a}+\mathbf{u}_{b}^{\top}\mathbf{z}_{b}\right)\left(\mathbf{u}_{a}^{\top}\mathbf{Q}_{m}\mathbf{v}_{a}+\mathbf{u}_{b}^{\top}\mathbf{v}_{b}\right)\left(\mathbf{x}_{a}^{\top}\mathbf{Q}_{m}\mathbf{z}_{a}+\mathbf{x}_{b}^{\top}\mathbf{z}_{b}\right)\left(\mathbf{x}_{a}^{\top}\mathbf{Q}_{m}\mathbf{v}_{a}+\mathbf{x}_{b}^{\top}\mathbf{v}_{b}\right)\right]
\\&
=\mathbb{E}\left[\mathbf{u}_{a}^{\top}\mathbf{Q}_{m}\mathbf{z}_{a}\cdot\mathbf{u}_{a}^{\top}\mathbf{Q}_{m}\mathbf{v}_{a}\left(\mathbf{x}_{a}^{\top}\mathbf{Q}_{m}\mathbf{z}_{a}+\mathbf{x}_{b}^{\top}\mathbf{z}_{b}\right)\left(\mathbf{x}_{a}^{\top}\mathbf{Q}_{m}\mathbf{v}_{a}+\mathbf{x}_{b}^{\top}\mathbf{v}_{b}\right)\right]+
\\&
\eqmargin
\mathbb{E}\left[\mathbf{u}_{b}^{\top}\mathbf{z}_{b}\cdot\mathbf{u}_{b}^{\top}\mathbf{v}_{b}\left(\mathbf{x}_{a}^{\top}\mathbf{Q}_{m}\mathbf{z}_{a}+\mathbf{x}_{b}^{\top}\mathbf{z}_{b}\right)\left(\mathbf{x}_{a}^{\top}\mathbf{Q}_{m}\mathbf{v}_{a}+\mathbf{x}_{b}^{\top}\mathbf{v}_{b}\right)\right]+
\\&
\eqmargin
\mathbb{E}\left[\mathbf{u}_{a}^{\top}\mathbf{Q}_{m}\mathbf{z}_{a}\cdot\mathbf{u}_{b}^{\top}\mathbf{v}_{b}\left(\mathbf{x}_{a}^{\top}\mathbf{Q}_{m}\mathbf{z}_{a}+\mathbf{x}_{b}^{\top}\mathbf{z}_{b}\right)\left(\mathbf{x}_{a}^{\top}\mathbf{Q}_{m}\mathbf{v}_{a}+\mathbf{x}_{b}^{\top}\mathbf{v}_{b}\right)\right]+
\\&
\eqmargin
\mathbb{E}\left[\mathbf{u}_{b}^{\top}\mathbf{z}_{b}\cdot\mathbf{u}_{a}^{\top}\mathbf{Q}_{m}\mathbf{v}_{a}\left(\mathbf{x}_{a}^{\top}\mathbf{Q}_{m}\mathbf{z}_{a}+\mathbf{x}_{b}^{\top}\mathbf{z}_{b}\right)\left(\mathbf{x}_{a}^{\top}\mathbf{Q}_{m}\mathbf{v}_{a}+\mathbf{x}_{b}^{\top}\mathbf{v}_{b}\right)\right]
\end{align*}

Below we compute each of these terms separately. The result in the proposition is given by summing these 4 terms.

\paragraph{Term 1.}
Employing \ref{cor:odd-Q_m} once again, the term decomposes as
\begin{align*}
&\mathbb{E}\left[\mathbf{u}_{a}^{\top}\mathbf{Q}_{m}\mathbf{z}_{a}\cdot\mathbf{u}_{a}^{\top}\mathbf{Q}_{m}\mathbf{v}_{a}\left(\mathbf{x}_{a}^{\top}\mathbf{Q}_{m}\mathbf{z}_{a}+\mathbf{x}_{b}^{\top}\mathbf{z}_{b}\right)\left(\mathbf{x}_{a}^{\top}\mathbf{Q}_{m}\mathbf{v}_{a}+\mathbf{x}_{b}^{\top}\mathbf{v}_{b}\right)\right]\\&=\mathbb{E}\left[\mathbf{u}_{a}^{\top}\mathbf{Q}_{m}\mathbf{z}_{a}\mathbf{u}_{a}^{\top}\mathbf{Q}_{m}\mathbf{v}_{a}\mathbf{x}_{a}^{\top}\mathbf{Q}_{m}\mathbf{z}_{a}\mathbf{x}_{a}^{\top}\mathbf{Q}_{m}\mathbf{v}_{a}\right]+\underbrace{\mathbb{E}\left[\mathbf{u}_{a}^{\top}\mathbf{Q}_{m}\mathbf{z}_{a}\mathbf{u}_{a}^{\top}\mathbf{Q}_{m}\mathbf{v}_{a}\cdot\mathbf{x}_{b}^{\top}\mathbf{z}_{b}\mathbf{x}_{b}^{\top}\mathbf{v}_{b}\right]}_{\text{solved in \eqref{eq:(uaQmza)(uaQmva)(xbzb)(xbvb)}}}\,.
\end{align*}
The polynomial in the left inner term (the first summand) can be derived tediously, very much like in the proof of \propref{prop:(uaQmua)(uaQmva)(vaQmua)(vaQmVa)},
and shown to hold
\begin{align*}
&
\mathbb{E}\left[\mathbf{u}_{a}^{\top}\mathbf{Q}_{m}\mathbf{z}_{a}\mathbf{u}_{a}^{\top}\mathbf{Q}_{m}\mathbf{v}_{a}\mathbf{x}_{a}^{\top}\mathbf{Q}_{m}\mathbf{z}_{a}\mathbf{x}_{a}^{\top}\mathbf{Q}_{m}\mathbf{v}_{a}\right]
\\
&=
\tfrac{-m^{3}\left(2p^{3}+11p^{2}+p-30\right)+m^{2}\left(p^{4}+p^{3}+2p^{2}+60p+72\right)+2m\left(p^{4}+p^{3}+18p^{2}+14p-60\right)-8\left(2p+3\right)\left(p^{2}+12\right)}{\left(m+2\right)\left(p-3\right)\left(p-2\right)\left(p-1\right)p\left(p+1\right)\left(p+2\right)\left(p+4\right)\left(p+6\right)}
\,.
\end{align*}

Overall, the first term is,
\begin{align*}
&\mathbb{E}\left[\mathbf{u}_{a}^{\top}\mathbf{Q}_{m}\mathbf{z}_{a}\cdot\mathbf{u}_{a}^{\top}\mathbf{Q}_{m}\mathbf{v}_{a}\left(\mathbf{x}_{a}^{\top}\mathbf{Q}_{m}\mathbf{z}_{a}+\mathbf{x}_{b}^{\top}\mathbf{z}_{b}\right)\left(\mathbf{x}_{a}^{\top}\mathbf{Q}_{m}\mathbf{v}_{a}+\mathbf{x}_{b}^{\top}\mathbf{v}_{b}\right)\right]
\\
&=\tfrac{-m^{3}\left(2p^{3}+11p^{2}+p-30\right)+m^{2}\left(p^{4}+p^{3}+2p^{2}+60p+72\right)+2m\left(p^{4}+p^{3}+18p^{2}+14p-60\right)-8\left(2p+3\right)\left(p^{2}+12\right)}{\left(m+2\right)\left(p-3\right)\left(p-2\right)\left(p-1\right)p\left(p+1\right)\left(p+2\right)\left(p+4\right)\left(p+6\right)}+
\\
&\tfrac{\left(p-m\right)\left(-2m^{2}p^{2}-8m^{2}p+mp^{3}+4mp^{2}+15mp+18m-6p^{2}-6p-12\right)}{\left(p-3\right)\left(p-2\right)\left(p-1\right)p\left(p+1\right)\left(p+2\right)\left(p+4\right)\left(p+6\right)}
\\
&=\tfrac{2m^{3}p^{2}+8m^{3}p-5m^{2}p^{3}-23m^{2}p^{2}-16m^{2}p+12m^{2}+2mp^{4}+9mp^{3}+45mp^{2}+86mp+24m-14p^{3}-18p^{2}-108p-144}{\left(p-3\right)\left(p-2\right)\left(p-1\right)p\left(p+1\right)\left(p+2\right)\left(p+4\right)\left(p+6\right)}.
\end{align*}

\paragraph{Term 2.} 
Notice that $\mathbf{u},\mathbf{v},\mathbf{z},\mathbf{x}$ are exchangeable in the sense that we can swap them freely (see \propref{prop:invariance}). Therefore,
\begin{align*}
&\mathbb{E}\left[\mathbf{u}_{b}^{\top}\mathbf{z}_{b}\cdot\mathbf{u}_{b}^{\top}\mathbf{v}_{b}\left(\mathbf{x}_{a}^{\top}\mathbf{Q}_{m}\mathbf{z}_{a}+\mathbf{x}_{b}^{\top}\mathbf{z}_{b}\right)\left(\mathbf{x}_{a}^{\top}\mathbf{Q}_{m}\mathbf{v}_{a}+\mathbf{x}_{b}^{\top}\mathbf{v}_{b}\right)\right]
\\
\explain{\text{\corref{cor:odd-Q_m}}}
&=
\mathbb{E}\left[\mathbf{u}_{b}^{\top}\mathbf{z}_{b}\cdot\mathbf{u}_{b}^{\top}\mathbf{v}_{b}\left(\mathbf{x}_{a}^{\top}\mathbf{Q}_{m}\mathbf{z}_{a}\cdot\mathbf{x}_{a}^{\top}\mathbf{Q}_{m}\mathbf{v}_{a}+\mathbf{x}_{b}^{\top}\mathbf{z}_{b}\cdot\mathbf{x}_{b}^{\top}\mathbf{v}_{b}\right)\right]
\\&
=\mathbb{E}\left[\mathbf{u}_{b}^{\top}\mathbf{z}_{b}\mathbf{u}_{b}^{\top}\mathbf{v}_{b}\mathbf{x}_{a}^{\top}\mathbf{Q}_{m}\mathbf{z}_{a}\mathbf{x}_{a}^{\top}\mathbf{Q}_{m}\mathbf{v}_{a}\right]+\mathbb{E}\left[\mathbf{u}_{b}^{\top}\mathbf{z}_{b}\mathbf{u}_{b}^{\top}\mathbf{v}_{b}\mathbf{x}_{b}^{\top}\mathbf{z}_{b}\mathbf{x}_{b}^{\top}\mathbf{v}_{b}\right]
\\
\explain{\text{swap}}
&=\underbrace{\mathbb{E}\left[\mathbf{u}_{a}^{\top}\mathbf{Q}_{m}\mathbf{z}_{a}\mathbf{u}_{a}^{\top}\mathbf{Q}_{m}\mathbf{v}_{a}\mathbf{x}_{b}^{\top}\mathbf{z}_{b}\mathbf{x}_{b}^{\top}\mathbf{v}_{b}\right]}_{\text{swapped \ensuremath{\mathbf{x},\mathbf{u}}; solved in \eqref{eq:(uaQmza)(uaQmva)(xbzb)(xbvb)}}}+
\underbrace{\mathbb{E}\left[\mathbf{u}_{b}^{\top}\mathbf{v}_{b}\mathbf{x}_{b}^{\top}\mathbf{z}_{b}\mathbf{u}_{b}^{\top}\mathbf{x}_{b}\mathbf{v}_{b}^{\top}\mathbf{z}_{b}\right]}_{\text{swapped \ensuremath{\mathbf{v},\mathbf{u}}; solved in \eqref{eq:(ubvb)(xbzb)(ubxb)(vbzb)}}}
\\&
=\frac{\left(p-m\right)\left(-2m^{2}p^{2}-8m^{2}p+mp^{3}+4mp^{2}+15mp+18m-6p^{2}-6p-12\right)}{\left(p-3\right)\left(p-2\right)\left(p-1\right)p\left(p+1\right)\left(p+2\right)\left(p+4\right)\left(p+6\right)}+
\\&
\eqmargin
\frac{m\left(p-m\right)\left(5m^{2}p+6m^{2}-5mp^{2}-6mp+p^{3}+p^{2}+2p\right)}{\left(p-3\right)\left(p-2\right)\left(p-1\right)p\left(p+1\right)\left(p+2\right)\left(p+4\right)\left(p+6\right)}
\\&
=\left(p-m\right)\left(\tfrac{5m^{3}p+6m^{3}-7m^{2}p^{2}-14m^{2}p+2mp^{3}+5mp^{2}+17mp+18m-6p^{2}-6p-12}{\left(p-3\right)\left(p-2\right)\left(p-1\right)p\left(p+1\right)\left(p+2\right)\left(p+4\right)\left(p+6\right)}\right)\,.
\end{align*}

\newpage

\paragraph{Terms 3 and 4.} 
First, we notice that subterms 3 and 4 are equivalent (we can swap $\mathbf{z},\mathbf{v}$ due to the invariance; \propref{prop:invariance}), that is
\begin{align*}
&\mathbb{E}\left[\mathbf{u}_{a}^{\top}\mathbf{Q}_{m}\mathbf{z}_{a}\cdot\mathbf{u}_{b}^{\top}\mathbf{v}_{b}\left(\mathbf{x}_{a}^{\top}\mathbf{Q}_{m}\mathbf{z}_{a}+\mathbf{x}_{b}^{\top}\mathbf{z}_{b}\right)\left(\mathbf{x}_{a}^{\top}\mathbf{Q}_{m}\mathbf{v}_{a}+\mathbf{x}_{b}^{\top}\mathbf{v}_{b}\right)\right]\\&=\mathbb{E}\left[\mathbf{u}_{b}^{\top}\mathbf{z}_{b}\cdot\mathbf{u}_{a}^{\top}\mathbf{Q}_{m}\mathbf{v}_{a}\left(\mathbf{x}_{a}^{\top}\mathbf{Q}_{m}\mathbf{z}_{a}+\mathbf{x}_{b}^{\top}\mathbf{z}_{b}\right)\left(\mathbf{x}_{a}^{\top}\mathbf{Q}_{m}\mathbf{v}_{a}+\mathbf{x}_{b}^{\top}\mathbf{v}_{b}\right)\right]
\\
\explain{\text{below}}
&=
\tfrac{\left(p-m\right)\left(-2m^{2}p^{2}-3m^{2}p+6m^{2}+mp^{3}-mp^{2}+9mp+18m+p^{3}-5p^{2}-4p-12\right)}{\left(p-3\right)\left(p-2\right)\left(p-1\right)p\left(p+1\right)\left(p+2\right)\left(p+4\right)\left(p+6\right)}
\,,
\end{align*}
and so we focus on just one of them.

By employing \corref{cor:odd-Q_m}, we see that
\begin{align*}
&\mathbb{E}\left[\mathbf{u}_{a}^{\top}\mathbf{Q}_{m}\mathbf{z}_{a}\cdot\mathbf{u}_{b}^{\top}\mathbf{v}_{b}\left(\mathbf{x}_{a}^{\top}\mathbf{Q}_{m}\mathbf{z}_{a}+\mathbf{x}_{b}^{\top}\mathbf{z}_{b}\right)\left(\mathbf{x}_{a}^{\top}\mathbf{Q}_{m}\mathbf{v}_{a}+\mathbf{x}_{b}^{\top}\mathbf{v}_{b}\right)\right]
\\&
=\mathbb{E}\left[\mathbf{u}_{a}^{\top}\mathbf{Q}_{m}\mathbf{z}_{a}\cdot\mathbf{u}_{b}^{\top}\mathbf{v}_{b}\left(\mathbf{x}_{a}^{\top}\mathbf{Q}_{m}\mathbf{z}_{a}\cdot\mathbf{x}_{b}^{\top}\mathbf{v}_{b}+\mathbf{x}_{b}^{\top}\mathbf{z}_{b}\cdot\mathbf{x}_{a}^{\top}\mathbf{Q}_{m}\mathbf{v}_{a}\right)\right]
\\
&=\tfrac{\left(p-m\right)\left(\left(5m^{2}p+6m^{2}-5mp^{2}-6mp+p^{3}+p^{2}+2p\right)+\left(-2m^{2}p^{2}-8m^{2}p+mp^{3}+4mp^{2}+15mp+18m-6p^{2}-6p-12\right)\right)}{\left(p-3\right)\left(p-2\right)\left(p-1\right)p\left(p+1\right)\left(p+2\right)\left(p+4\right)\left(p+6\right)}
\\&
=\tfrac{\left(p-m\right)\left(-2m^{2}p^{2}-3m^{2}p+6m^{2}+mp^{3}-mp^{2}+9mp+18m+p^{3}-5p^{2}-4p-12\right)}{\left(p-3\right)\left(p-2\right)\left(p-1\right)p\left(p+1\right)\left(p+2\right)\left(p+4\right)\left(p+6\right)}
\,,
\end{align*}
where we used the following two derivations, \ie
\begin{align*}
&\mathbb{E}\left[\mathbf{u}_{a}^{\top}\mathbf{Q}_{m}\mathbf{z}_{a}\cdot\mathbf{u}_{b}^{\top}\mathbf{v}_{b}\cdot\mathbf{x}_{b}^{\top}\mathbf{z}_{b}\cdot\mathbf{x}_{a}^{\top}\mathbf{Q}_{m}\mathbf{v}_{a}\right]=\mathbb{E}\left[\left(\mathbf{u}_{b}^{\top}\mathbf{v}_{b}\cdot\mathbf{x}_{b}^{\top}\mathbf{z}_{b}\right)\cdot\mathbf{u}_{a}^{\top}\mathbf{Q}_{m}\mathbf{z}_{a}\cdot\mathbf{x}_{a}^{\top}\mathbf{Q}_{m}\mathbf{v}_{a}\right]
\\&
=\mathbb{E}\left[\left(\mathbf{u}_{b}^{\top}\mathbf{v}_{b}\cdot\mathbf{x}_{b}^{\top}\mathbf{z}_{b}\right)\cdot\left(\sum_{i=1}^{m}\sum_{j=1}^{m}u_{i}q_{ij}z_{j}\right)\cdot\left(\sum_{k=1}^{m}\sum_{\ell=1}^{m}x_{k}q_{k\ell}v_{\ell}\right)\right]
\\&
=\sum_{i,j,k,\ell=1}^{m}\mathbb{E}\left[\left(\mathbf{u}_{b}^{\top}\mathbf{v}_{b}\cdot\mathbf{x}_{b}^{\top}\mathbf{z}_{b}\right)\cdot u_{i}v_{\ell}x_{k}z_{j}q_{ij}q_{k\ell}\right]
\\&
=
\sum_{i,j,k,\ell=1}^{m}\mathbb{E}\left[\left(\mathbf{u}_{b}^{\top}\mathbf{v}_{b}\cdot\mathbf{x}_{b}^{\top}\mathbf{z}_{b}\right)\cdot u_{i}v_{\ell}x_{k}z_{j}\mathbb{E}_{\mathbf{Q}_{m}}\left[q_{ij}q_{k\ell}\right]\right]
\\
\explain{\text{\propref{prop:odd}}}
&=\sum_{i,j=1}^{m}\underbrace{\mathbb{E}_{\mathbf{Q}_{m}}\left[q_{ij}^{2}\right]}_{=1/m}\mathbb{E}\left[\left(\mathbf{u}_{b}^{\top}\mathbf{v}_{b}\cdot\mathbf{x}_{b}^{\top}\mathbf{z}_{b}\right)\cdot u_{i}v_{j}x_{i}z_{j}\right]
\\&
=\frac{1}{m}\underbrace{\mathbb{E}\left[\mathbf{u}_{b}^{\top}\mathbf{v}_{b}\cdot\mathbf{x}_{b}^{\top}\mathbf{z}_{b}\cdot\mathbf{u}_{b}^{\top}\mathbf{x}_{b}\cdot\mathbf{v}_{b}^{\top}\mathbf{z}_{b}\right]}_{\text{solved in \eqref{eq:(ubvb)(xbzb)(ubxb)(vbzb)}}}
\\&
=\frac{\left(p-m\right)\left(5m^{2}p+6m^{2}-5mp^{2}-6mp+p^{3}+p^{2}+2p\right)}{\left(p-3\right)\left(p-2\right)\left(p-1\right)p\left(p+1\right)\left(p+2\right)\left(p+4\right)\left(p+6\right)}\,,
\end{align*}
and
\begin{align*}   
&\mathbb{E}\left[\mathbf{u}_{a}^{\top}\mathbf{Q}_{m}\mathbf{z}_{a}\cdot\mathbf{u}_{b}^{\top}\mathbf{v}_{b}\mathbf{x}_{a}^{\top}\mathbf{Q}_{m}\mathbf{z}_{a}\cdot\mathbf{x}_{b}^{\top}\mathbf{v}_{b}\right]
\\&
=\mathbb{E}\left[\left\Vert \mathbf{z}_{a}\right\Vert ^{2}\mathbb{E}_{\mathbf{r}\sim\mathcal{S}^{m-1}}\left(\mathbf{u}_{a}^{\top}\mathbf{r}\mathbf{r}^{\top}\mathbf{x}_{a}\right)\cdot\mathbf{u}_{b}^{\top}\mathbf{v}_{b}\mathbf{x}_{b}^{\top}\mathbf{v}_{b}\right]=\frac{1}{m}\underbrace{\mathbb{E}\left[\left\Vert \mathbf{z}_{a}\right\Vert ^{2}\mathbf{u}_{a}^{\top}\mathbf{x}_{a}\mathbf{u}_{b}^{\top}\mathbf{v}_{b}\mathbf{x}_{b}^{\top}\mathbf{v}_{b}\right]}_{\text{solved in \eqref{eq:(za)^2(uaxa)(ubvb)(xbvb)}}}
\\&
=\frac{\left(p-m\right)\left(-2m^{2}p^{2}-8m^{2}p+mp^{3}+4mp^{2}+15mp+18m-6p^{2}-6p-12\right)}{\left(p-3\right)\left(p-2\right)\left(p-1\right)p\left(p+1\right)\left(p+2\right)\left(p+4\right)\left(p+6\right)}\,.
\end{align*}
\end{proof}

\newpage

\newpage

\begin{proposition}
\label{prop:u_norms}
Let $\vu\sim\mathcal{S}^{p-1}$ and let
$\vu_a$ consists of its first $m$ coordinates.
The expected $n$\nth power of the squared norm is,
$$
\mathbb{E}_{\vu\sim \mathcal{S}^{p-1}}
\norm{\vu_a}^{2n}
=
\prod_{r=0}^{n-1}\frac{m+2r}{p+2r}\,.
$$

Specifically, it holds that
\begin{align*}
\begin{split}
\mathbb{E}_{\vu\sim \mathcal{S}^{p-1}}
\norm{\vu_a}^2
&
=\frac{m}{p}
\\
&
\\
\mathbb{E}_{\vu\sim \mathcal{S}^{p-1}}
\norm{\vu_a}^4
&
=\frac{m}{p}\cdot\frac{m+2}{p+2}
\\
&
\\
\mathbb{E}_{\vu\sim \mathcal{S}^{p-1}}
\norm{\vu_a}^6
&
=\frac{m}{p}\cdot\frac{m+2}{p+2}\cdot\frac{m+4}{p+4}
\\
&
\\
\mathbb{E}_{\vu\sim \mathcal{S}^{p-1}}
\norm{\vu_a}^8
&
=\frac{m}{p}\cdot\frac{m+2}{p+2}\cdot\frac{m+4}{p+4}\cdot\frac{m+6}{p+6}
\\
\Longrightarrow
\mathbb{E}_{\vu\sim \mathcal{S}^{p-1}}
\norm{\vu_b}^8
&
=
\frac{p-m}{p}\cdot\frac{p-m+2}{p+2}\cdot\frac{p-m+4}{p+4}\cdot\frac{p-m+6}{p+6}
\end{split}
\end{align*}
\end{proposition}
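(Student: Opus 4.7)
The plan is to recognize $\norm{\vu_a}^2$ as a Beta-distributed random variable and then invoke the standard moment formula for the Beta distribution. The whole argument is short and avoids touching the Gorin-style recursion from Lemma~\ref{lem:recursive} altogether.

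First I would use the Gaussian representation of the uniform distribution on the sphere: if $\vg \sim \mathcal{N}(\mathbf{0}, \mathbf{I}_p)$, then $\vu \triangleq \vg/\norm{\vg}$ is distributed uniformly on $\mathcal{S}^{p-1}$. Splitting $\vg = (\vg_a, \vg_b)$ with $\vg_a \in \reals^m$ and $\vg_b \in \reals^{p-m}$, the components $\vg_a$ and $\vg_b$ are independent, $\norm{\vg_a}^2 \sim \chi^2_m$, and $\norm{\vg_b}^2 \sim \chi^2_{p-m}$. Since $\norm{\vu_a}^2 = \norm{\vg_a}^2/(\norm{\vg_a}^2 + \norm{\vg_b}^2)$ is the standard ratio of independent chi-squares, it follows that $\norm{\vu_a}^2 \sim \mathrm{Beta}(m/2, (p-m)/2)$.

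Second, I would invoke the standard Beta moment identity: for $X \sim \mathrm{Beta}(\alpha, \beta)$ and any integer $n \ge 1$,
\[
\mathbb{E}[X^n] \,=\, \frac{B(\alpha+n,\beta)}{B(\alpha,\beta)} \,=\, \prod_{r=0}^{n-1}\frac{\alpha+r}{\alpha+\beta+r},
\]
which follows immediately from $\Gamma(z+n)/\Gamma(z) = \prod_{r=0}^{n-1}(z+r)$. Substituting $\alpha = m/2$, $\beta = (p-m)/2$, and $\alpha + \beta = p/2$ yields
\[
\mathbb{E}\,\norm{\vu_a}^{2n} \,=\, \prod_{r=0}^{n-1}\frac{m/2 + r}{p/2 + r} \,=\, \prod_{r=0}^{n-1}\frac{m+2r}{p+2r},
\]
as claimed. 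The special cases $n = 1, 2, 3, 4$ follow by direct expansion, and the formula for $\mathbb{E}\,\norm{\vu_b}^8$ is obtained by replacing $m$ with $p-m$, since $\norm{\vu_b}^2 = 1 - \norm{\vu_a}^2 \sim \mathrm{Beta}((p-m)/2, m/2)$ by the symmetry of the Beta distribution.

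There is essentially no hard step here: the only item meriting care is the justification of the Beta distribution of $\norm{\vu_a}^2$, which rests on the rotational invariance of the isotropic Gaussian and the standard Gamma-ratio representation of Beta variables. If a self-contained derivation is preferred (avoiding Beta/Gamma vocabulary), an equivalent route is induction on $n$: starting from $\mathbb{E}\norm{\vu_a}^{2n} = \mathbb{E}[\norm{\vu_a}^{2(n-1)} \cdot \norm{\vu_a}^2]$ and conditioning on $\norm{\vu_a}$, one obtains the one-step recursion $\mathbb{E}\norm{\vu_a}^{2n} = \tfrac{m+2(n-1)}{p+2(n-1)}\,\mathbb{E}\norm{\vu_a}^{2(n-1)}$ directly from the density of $\norm{\vu_a}^2$ (which one can compute via the co-area formula on $\mathcal{S}^{p-1}$), and the product formula follows by iteration.
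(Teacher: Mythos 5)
Your proposal is correct and follows essentially the same route as the paper: identify $\norm{\vu_a}^2$ as a ratio of independent chi-squares, hence $\mathrm{Beta}(m/2,(p-m)/2)$, and apply the standard Beta raw-moment formula (which the paper cites from Johnson et al.\ rather than deriving from the Gamma-ratio identity as you do). The added detail on the Gaussian representation and the $\vu_b$ symmetry is fine and consistent with the paper's argument.
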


\begin{proof}
Notice that the squared norm can be parameterized as
\hfill
$\norm{\vu_a}^2=\vu_a^\top \vu_a\triangleq \frac{X}{X+Y}\,,$
\linebreak
where $X\sim\chi_{m}^{2}, Y\sim\chi_{p-m}^{2}$.
Consequently, it is distributed as $\norm{\vu_a}^2 \sim B\left(\frac{m}{2},\frac{p-m}{2}\right)$.
Moreover, given $n\in\naturals$, the $n$\nth raw moment is given by 
(Chapter 25 in \citet{johnson1995continuous}),
$$
\mathbb{E}_{\vu\sim \mathcal{S}^{p-1}}
\norm{\vu_a}^{2n}
=
\prod_{r=0}^{n-1}\frac{\frac{m}{2}+r}{\frac{p}{2}+r}
=
\prod_{r=0}^{n-1}\frac{m+2r}{p+2r}\,.
$$
\end{proof}

\begin{align}
\label{eq:ua4(u1v1)^2}
\begin{split}
&\mathbb{E}\left[\left\Vert \mathbf{u}_{a}\right\Vert ^{4}u_{1}^{2}v_{1}^{2}\right]
=
\mathbb{E}\left[\left(\sum_{i=1}^{m}u_{i}^{2}\right)\left(\sum_{j=1}^{m}u_{j}^{2}\right)u_{1}^{2}v_{1}^{2}\right]
=
\sum_{i=1}^{m}\sum_{j=1}^{m}\mathbb{E}\left[u_{i}^{2}u_{j}^{2}u_{1}^{2}v_{1}^{2}\right]
\\&
=\underbrace{\mathbb{E}\left[u_{1}^{6}v_{1}^{2}\right]}_{i=j=1}
\!
+
2\left(m\!-\!1\right)\!\!
\underbrace{\mathbb{E}\left[u_{1}^{4}v_{1}^{2}u_{2}^{2}\right]}_{i=1\neq j\,\vee\,i\neq1=j}
+
\left(m\!-\!1\right)
\underbrace{\mathbb{E}\left[u_{1}^{2}u_{2}^{4}v_{1}^{2}\right]}_{i=j\ge2
}
+\left(m\!-\!1\right)\left(m\!-\!2\right)
\underbrace{\mathbb{E}\left[u_{1}^{2}u_{2}^{2}u_{3}^{2}v_{1}^{2}\right]}_{i\neq j\ge2}
\\&
=\left\langle \begin{smallmatrix}
6 & 2\\
\overrightarrow{0} & \overrightarrow{0}
\end{smallmatrix}\right\rangle +2\left(m-1\right)\left\langle \begin{smallmatrix}
4 & 2\\
2 & 0\\
\overrightarrow{0} & \overrightarrow{0}
\end{smallmatrix}\right\rangle +\left(m-1\right)\left\langle \begin{smallmatrix}
2 & 2\\
4 & 0\\
\overrightarrow{0} & \overrightarrow{0}
\end{smallmatrix}\right\rangle +\left(m-1\right)\left(m-2\right)\left\langle \begin{smallmatrix}
2 & 2\\
2 & 0\\
2 & 0\\
\overrightarrow{0} & \overrightarrow{0}
\end{smallmatrix}\right\rangle 
\\&
=\frac{15\left(p-1\right)+\left(m-1\right)\left(6\left(p+1\right)+3\left(p+3\right)+\left(m-2\right)\left(p+3\right)\right)}{\left(p-1\right)p\left(p+2\right)\left(p+4\right)\left(p+6\right)}
\\&
=\frac{m^{2}\left(p+3\right)+m\left(6p+6\right)+8p-24}{\left(p-1\right)p\left(p+2\right)\left(p+4\right)\left(p+6\right)}
\end{split}
\end{align}

\begin{align}
\label{eq:ua4(u1v2)^2}
\begin{split}
&\mathbb{E}\left[\left\Vert \mathbf{u}_{a}\right\Vert ^{4}u_{1}^{2}v_{2}^{2}\right]=\mathbb{E}\left[\left(\sum_{i=1}^{m}u_{i}^{2}\right)\left(\sum_{j=1}^{m}u_{j}^{2}\right)u_{1}^{2}v_{2}^{2}\right]=\sum_{i=1}^{m}\sum_{j=1}^{m}\mathbb{E}\left[u_{i}^{2}u_{j}^{2}u_{1}^{2}v_{2}^{2}\right]
\\&
=\underbrace{\mathbb{E}\left[u_{1}^{6}v_{2}^{2}\right]}_{i=j=1}+\underbrace{\mathbb{E}\left[u_{1}^{2}u_{2}^{4}v_{2}^{2}\right]}_{i=j=2}+2\underbrace{\mathbb{E}\left[u_{1}^{4}u_{2}^{2}v_{2}^{2}\right]}_{i=2,j=1\,\vee\,i=1,j=2}+2\left(m-2\right)\underbrace{\mathbb{E}\left[u_{1}^{4}v_{2}^{2}u_{3}^{2}\right]}_{i=1,j\ge3\,\vee\,j=1,i\ge3}
+
\\&
\eqmargin
2\left(m-2\right)
\underbrace{\mathbb{E}\left[u_{1}^{2}u_{2}^{2}v_{2}^{2}u_{3}^{2}\right]}_{i=2,j\ge3\,\vee\,j=2,i\ge3}+
\left(m-2\right)\left(m-3\right)\underbrace{\mathbb{E}\left[u_{1}^{2}v_{2}^{2}u_{3}^{2}u_{4}^{2}\right]}_{i\neq j\ge3}+\left(m-2\right)\underbrace{\mathbb{E}\left[u_{1}^{2}v_{2}^{2}u_{3}^{4}\right]}_{i=j\ge3}
\\&
=\left\langle \begin{smallmatrix}
6 & 0\\
0 & 2\\
\overrightarrow{0} & \overrightarrow{0}
\end{smallmatrix}\right\rangle +\left\langle \begin{smallmatrix}
2 & 0\\
4 & 2\\
\overrightarrow{0} & \overrightarrow{0}
\end{smallmatrix}\right\rangle +2\left\langle \begin{smallmatrix}
4 & 0\\
2 & 2\\
\overrightarrow{0} & \overrightarrow{0}
\end{smallmatrix}\right\rangle +2\left(m-2\right)\left\langle \begin{smallmatrix}
4 & 0\\
0 & 2\\
2 & 0\\
\overrightarrow{0} & \overrightarrow{0}
\end{smallmatrix}\right\rangle +2\left(m-2\right)\left\langle \begin{smallmatrix}
2 & 0\\
2 & 2\\
2 & 0\\
\overrightarrow{0} & \overrightarrow{0}
\end{smallmatrix}\right\rangle +
\\&
\eqmargin
\left(m-2\right)\left(m-3\right)\left\langle \begin{smallmatrix}
2 & 0\\
0 & 2\\
2 & 0\\
2 & 0\\
\overrightarrow{0} & \overrightarrow{0}
\end{smallmatrix}\right\rangle +\left(m-2\right)\left\langle \begin{smallmatrix}
2 & 0\\
0 & 2\\
4 & 0\\
\overrightarrow{0} & \overrightarrow{0}
\end{smallmatrix}\right\rangle 
\\&
=\left\langle \begin{smallmatrix}
6 & 0\\
0 & 2\\
\overrightarrow{0} & \overrightarrow{0}
\end{smallmatrix}\right\rangle +\left\langle \begin{smallmatrix}
2 & 0\\
4 & 2\\
\overrightarrow{0} & \overrightarrow{0}
\end{smallmatrix}\right\rangle +2\left\langle \begin{smallmatrix}
4 & 0\\
2 & 2\\
\overrightarrow{0} & \overrightarrow{0}
\end{smallmatrix}\right\rangle +3\left(m-2\right)\left\langle \begin{smallmatrix}
4 & 0\\
0 & 2\\
2 & 0\\
\overrightarrow{0} & \overrightarrow{0}
\end{smallmatrix}\right\rangle +2\left(m-2\right)\left\langle \begin{smallmatrix}
2 & 0\\
2 & 2\\
2 & 0\\
\overrightarrow{0} & \overrightarrow{0}
\end{smallmatrix}\right\rangle +
\\&
\eqmargin
\left(m-2\right)\left(m-3\right)\left\langle \begin{smallmatrix}
2 & 0\\
0 & 2\\
2 & 0\\
2 & 0\\
\overrightarrow{0} & \overrightarrow{0}
\end{smallmatrix}\right\rangle 
\\&
=\tfrac{15\left(p+5\right)+3\left(p+1\right)+2\left(3p+9\right)+\left(m-2\right)\left(9\left(p+5\right)+2\left(p+3\right)+\left(m-3\right)\left(p+5\right)\right)}{\left(p-1\right)p\left(p+2\right)\left(p+4\right)\left(p+6\right)}
\\&
=\frac{m^{2}\left(p+5\right)+2m\left(3p+13\right)+8\left(p+3\right)}{\left(p-1\right)p\left(p+2\right)\left(p+4\right)\left(p+6\right)}
\end{split}
\end{align}

\bigskip

We notice that the following derivation is symmetric in $i,j$. 
Thus, we assume $j\ge i$ and multiply everything by $2$.
\begin{align}
\label{eq:ua4(u1u2v1v2)}
\begin{split}
&\mathbb{E}\left[\left\Vert \mathbf{u}_{a}\right\Vert ^{4}u_{1}u_{2}v_{1}v_{2}\right]=\mathbb{E}\left[\left(\sum_{i=1}^{m}u_{i}^{2}\right)\left(\sum_{j=1}^{m}u_{j}^{2}\right)u_{1}u_{2}v_{1}v_{2}\right]=\sum_{i=1}^{m}\sum_{j=1}^{m}\mathbb{E}\left[u_{i}^{2}u_{j}^{2}u_{1}u_{2}v_{1}v_{2}\right]
\\&
=2\underbrace{\mathbb{E}\left[u_{1}^{5}u_{2}v_{1}v_{2}\right]}_{i=j=1\,\vee\,i=j=2}+2\underbrace{\mathbb{E}\left[u_{1}^{3}u_{2}^{3}v_{1}v_{2}\right]}_{i=1,j=2}+4\left(m-2\right)\underbrace{\mathbb{E}\left[u_{1}^{3}u_{2}u_{3}^{2}v_{1}v_{2}\right]}_{i=1,j\ge3\,\vee\,i=2,j\ge3}
+
\\&
\eqmargin
\left(m-2\right)\left(m-3\right)
\underbrace{\mathbb{E}\left[u_{1}u_{2}u_{3}^{2}u_{4}^{2}v_{1}v_{2}\right]}_{i\neq j\ge3}+\left(m-2\right)\underbrace{\mathbb{E}\left[u_{1}u_{2}u_{3}^{4}v_{1}v_{2}\right]}_{i=j\ge3}
\\&
=2\left\langle \begin{smallmatrix}
5 & 1\\
1 & 1\\
\overrightarrow{0} & \overrightarrow{0}
\end{smallmatrix}\right\rangle +2\left\langle \begin{smallmatrix}
3 & 1\\
3 & 1\\
\overrightarrow{0} & \overrightarrow{0}
\end{smallmatrix}\right\rangle +4\left(m-2\right)\left\langle \begin{smallmatrix}
3 & 1\\
1 & 1\\
2 & 0\\
\overrightarrow{0} & \overrightarrow{0}
\end{smallmatrix}\right\rangle +\left(m-2\right)\left(m-3\right)\left\langle \begin{smallmatrix}
1 & 1\\
1 & 1\\
2 & 0\\
2 & 0\\
\overrightarrow{0} & \overrightarrow{0}
\end{smallmatrix}\right\rangle +
\\&
\eqmargin
\left(m-2\right)\left\langle \begin{smallmatrix}
1 & 1\\
1 & 1\\
4 & 0\\
\overrightarrow{0} & \overrightarrow{0}
\end{smallmatrix}\right\rangle 
\\&
=\frac{-30-18+\left(m-2\right)\left(-12-\left(m-3\right)-3\right)}{\left(p-1\right)p\left(p+2\right)\left(p+4\right)\left(p+6\right)}
\\&
=\frac{-\left(m+6\right)\left(m+4\right)}{\left(p-1\right)p\left(p+2\right)\left(p+4\right)\left(p+6\right)}
\end{split}
\end{align}

\newpage

\subsection{Auxiliary derivations with two vectors}
Below we attached many auxiliary derivations of simple polynomials that we need in our main propositions and lemmas.

\begin{align} 
\label{eq:(uaQmua)2(uaQmva)2}
\begin{split}
&\mathbb{E}_{\mathbf{u}\perp\mathbf{v}, \Q_m}\left(\mathbf{u}_{a}^{\top}\mathbf{Q}_{m}\mathbf{u}_{a}\right)^{2}\left(\mathbf{u}_{a}^{\top}\mathbf{Q}_{m}\mathbf{v}_{a}\right)^{2}
\\&
=\mathbb{E}_{\mathbf{u}\perp\mathbf{v}}\left[\mathbb{E}_{\mathbf{r}\sim\mathcal{S}^{m-1}}\left[\left\Vert \mathbf{u}_{a}\right\Vert ^{4}\left(\tsum_{i=1}^{m}r_{i}u_{i}\right)^{2}\left(\tsum_{j=1}^{m}r_{j}v_{j}\right)^{2}\right]\right]
\\&
=\sum_{i=1}^{m}\sum_{j=1}^{m}\sum_{k=1}^{m}\sum_{\ell=1}^{m}\mathbb{E}_{\mathbf{u}\perp\mathbf{v}}\left[\mathbb{E}_{\mathbf{r}\sim\mathcal{S}^{m-1}}\left[\left\Vert \mathbf{u}_{a}\right\Vert ^{4}r_{i}u_{i}r_{j}u_{j}r_{k}v_{k}r_{\ell}v_{\ell}\right]\right]
\\&
=\sum_{i=1}^{m}\sum_{j=1}^{m}\sum_{k=1}^{m}\sum_{\ell=1}^{m}\mathbb{E}_{\mathbf{u}\perp\mathbf{v}}\left[\left\Vert \mathbf{u}_{a}\right\Vert ^{4}u_{i}u_{j}v_{k}v_{\ell}\mathbb{E}_{\mathbf{r}\sim\mathcal{S}^{m-1}}\left[r_{i}r_{j}r_{k}r_{\ell}\right]\right]
\\
\explain{
\text{\propref{prop:odd}}
}
&
=\underbrace{\sum_{i=1}^{m}\mathbb{E}_{\mathbf{u}\perp\mathbf{v}}\left[\left\Vert \mathbf{u}_{a}\right\Vert ^{4}u_{i}^{2}v_{i}^{2}\mathbb{E}_{\mathbf{r}}\left[r_{i}^{4}\right]\right]}_{i=j=k=\ell}+\underbrace{\sum_{i\neq k=1}^{m}\mathbb{E}_{\mathbf{u}\perp\mathbf{v}}\left[\left\Vert \mathbf{u}_{a}\right\Vert ^{4}u_{i}^{2}v_{k}^{2}\mathbb{E}_{\mathbf{r}}\left[r_{i}^{2}r_{k}^{2}\right]\right]}_{i=j\neq k=\ell}
+
\\
&
\eqmargin
\underbrace{2\sum_{i\neq j=1}^{m}\mathbb{E}_{\mathbf{u}\perp\mathbf{v}}\left[\left\Vert \mathbf{u}_{a}\right\Vert ^{4}u_{i}u_{j}v_{i}v_{j}\mathbb{E}_{\mathbf{r}}\left[r_{i}^{2}r_{j}^{2}\right]\right]}_{i=k\neq j=\ell\,\vee\,i=\ell\neq j=k}
\\&
=\left\langle \begin{smallmatrix}
4\\
\overrightarrow{0}
\end{smallmatrix}\right\rangle _{m}\sum_{i=1}^{m}\mathbb{E}\left[\left\Vert \mathbf{u}_{a}\right\Vert ^{4}u_{i}^{2}v_{i}^{2}\right]
+
\\&
\eqmargin
\left\langle \begin{smallmatrix}
2\\
2\\
\overrightarrow{0}
\end{smallmatrix}\right\rangle _{m}\left(\sum_{i\neq k=1}^{m}\mathbb{E}\left[\left\Vert \mathbf{u}_{a}\right\Vert ^{4}u_{i}^{2}v_{k}^{2}\right]+2\sum_{i\neq j=1}^{m}\mathbb{E}\left[\left\Vert \mathbf{u}_{a}\right\Vert ^{4}u_{i}u_{j}v_{i}v_{j}\right]\right)
\\&
=m\left\langle \begin{smallmatrix}
4\\
\overrightarrow{0}
\end{smallmatrix}\right\rangle _{m}\mathbb{E}\left[\left\Vert \mathbf{u}_{a}\right\Vert ^{4}u_{1}^{2}v_{1}^{2}\right]
+
\\&
\eqmargin
m
\left(m-1\right)\left\langle \begin{smallmatrix}
2\\
2\\
\overrightarrow{0}
\end{smallmatrix}\right\rangle _{m}\left(\mathbb{E}\left[\left\Vert \mathbf{u}_{a}\right\Vert ^{4}u_{1}^{2}v_{2}^{2}\right]+2\mathbb{E}\left[\left\Vert \mathbf{u}_{a}\right\Vert ^{4}u_{1}u_{2}v_{1}v_{2}\right]\right)
\\
&
=\frac{3}{m+2}
\underbrace{\mathbb{E}\left[\left\Vert \mathbf{u}_{a}\right\Vert ^{4}u_{1}^{2}v_{1}^{2}\right]}_{
\text{solved in \eqref{eq:ua4(u1v1)^2}}
}
+
\frac{m-1}{m+2}
\Bigprn{
    \underbrace{\mathbb{E}\left[\left\Vert \mathbf{u}_{a}\right\Vert ^{4}u_{1}^{2}v_{2}^{2}\right]}_{
    \text{solved in \eqref{eq:ua4(u1v2)^2}}
    }
    +2
    \underbrace{
    \mathbb{E}\left[\left\Vert \mathbf{u}_{a}\right\Vert ^{4}u_{1}u_{2}v_{1}v_{2}\right]
    }_{
    \text{solved in \eqref{eq:ua4(u1u2v1v2)}}
    }
}
\\
&
=
\tfrac{3\left(m^{2}\left(p+3\right)+m\left(6p+6\right)+8p-24\right)}{\left(m+2\right)\left(p-1\right)p\left(p+2\right)\left(p+4\right)\left(p+6\right)}
+
\tfrac{\left(m-1\right)
\left(m^{2}\left(p+5\right)+2m\left(3p+13\right)+8\left(p+3\right)\right)}{\left(m+2\right)\left(p-1\right)p\left(p+2\right)\left(p+4\right)\left(p+6\right)}
-
\\
&
\eqmargin
\tfrac{2\left(m-1\right)\cdot\left(m+6\right)\left(m+4\right)}{\left(m+2\right)\left(p-1\right)p\left(p+2\right)\left(p+4\right)\left(p+6\right)}
\\
&
=
\frac{\left(m+2\right)\left(m+4\right)\left(m\left(p+3\right)+2\left(p-3\right)\right)}{\left(m+2\right)\left(p-1\right)p\left(p+2\right)\left(p+4\right)\left(p+6\right)}
=\frac{\left(m+4\right)\left(m\left(p+3\right)+2\left(p-3\right)\right)}{\left(p-1\right)p\left(p+2\right)\left(p+4\right)\left(p+6\right)}
\end{split}
\end{align}

\newpage

\begin{align} %
\label{eq:(ua)4(ubvb)2}
\begin{split}
&
\mathbb{E}_{\mathbf{u}\perp\mathbf{v}}\left[\left\Vert \mathbf{u}_{a}\right\Vert ^{4}\left(\mathbf{u}_{b}^{\top}\mathbf{v}_{b}\right)^{2}\right]
=
\mathbb{E}\left[\left\Vert \mathbf{u}_{a}\right\Vert ^{4}\left(-\mathbf{u}_{a}^{\top}\mathbf{v}_{a}\right)^{2}\right]
=\sum_{i,j=1}^{m}\mathbb{E}\left[\left\Vert \mathbf{u}_{a}\right\Vert ^{4}u_{i}u_{j}v_{i}v_{j}\right]
\\&
=\underbrace{\sum_{i=1}^{m}\mathbb{E}\left[\left\Vert \mathbf{u}_{a}\right\Vert ^{4}u_{i}^{2}v_{i}^{2}\right]}_{i=j}+\underbrace{\sum_{i\neq j=1}^{m}\mathbb{E}\left[\left\Vert \mathbf{u}_{a}\right\Vert ^{4}u_{i}u_{j}v_{i}v_{j}\right]}_{i\neq j}
\\&
=
m\underbrace{\mathbb{E}\left[\left\Vert \mathbf{u}_{a}\right\Vert ^{4}u_{1}^{2}v_{1}^{2}\right]}_{\text{solved in \eqref{eq:ua4(u1v1)^2}}}
+
m\left(m-1\right)\underbrace{\mathbb{E}\left[\left\Vert \mathbf{u}_{a}\right\Vert ^{4}u_{1}u_{2}v_{1}v_{2}\right]}_{\text{solved in \eqref{eq:ua4(u1u2v1v2)}}}
\\&
=
m\prn{
\frac{m^{2}\left(p+3\right)+m\left(6p+6\right)+8p-24}{\left(p-1\right)p\left(p+2\right)\left(p+4\right)\left(p+6\right)}-\frac{\left(m-1\right)\cdot\left(m+4\right)\left(m+6\right)}{\left(p-1\right)p\left(p+2\right)\left(p+4\right)\left(p+6\right)}
}
\\&
=
m
\frac{\left(m+2\right)\left(m+4\right)\left(p-m\right)}{\left(p-1\right)p\left(p+2\right)\left(p+4\right)\left(p+6\right)}
\end{split}
\end{align}

\bigskip

\begin{align} 
\label{eq:(uaQmua)2(ubvb)2}
\begin{split}
&\mathbb{E}_{\mathbf{u}\perp\mathbf{v},\mathbf{Q}_{m}}\left[\left(\mathbf{u}_{a}^{\top}\mathbf{Q}_{m}\mathbf{u}_{a}\right)^{2}\left(\mathbf{u}_{b}^{\top}\mathbf{v}_{b}\right)^{2}\right]
=
\mathbb{E}_{\mathbf{u}\perp\mathbf{v}}\left[\mathbb{E}_{\mathbf{r}\sim\mathcal{S}^{m-1}}\left(\left\Vert \mathbf{u}_{a}\right\Vert \mathbf{r}^{\top}\mathbf{u}_{a}\right)^{2}\left(\mathbf{u}_{b}^{\top}\mathbf{v}_{b}\right)^{2}\right]
\\&
=\mathbb{E}_{\mathbf{u}\perp\mathbf{v}}\left[\left\Vert \mathbf{u}_{a}\right\Vert ^{2}\mathbb{E}_{\mathbf{r}\sim\mathcal{S}^{m-1}}\mathbf{u}_{a}^{\top}\mathbf{r}\mathbf{r}^{\top}\mathbf{u}_{a}\left(\mathbf{u}_{b}^{\top}\mathbf{v}_{b}\right)^{2}\right]
=
\frac{1}{m}
\underbrace{\mathbb{E}_{\mathbf{u}\perp\mathbf{v}}\left[\left\Vert \mathbf{u}_{a}\right\Vert ^{4}\left(\mathbf{u}_{b}^{\top}\mathbf{v}_{b}\right)^{2}\right]
}_{\text{solved in \eqref{eq:(ua)4(ubvb)2}}}
\\
&
=
\frac{\left(m+2\right)\left(m+4\right)\left(p-m\right)}{\left(p-1\right)p\left(p+2\right)\left(p+4\right)\left(p+6\right)}
\end{split}
\end{align}

\newpage

\begin{align}
\label{eq:ua2(ubvb)2}
\begin{split}
\mathbb{E}\left[\left\Vert \vu_{a}\right\Vert ^{2}\left(\vu_{b}^{\top}\mathbf{v}_{b}\right)^{2}\right]&=\mathbb{E}\left[\sum_{i=1}^{m}u_{i}^{2}\left(\sum_{j=m+1}^{p}u_{j}v_{j}\right)^{2}\right]
=
m\mathbb{E}\left[u_{1}^{2}\sum_{j=m+1}^{p}\sum_{k=m+1}^{p}u_{j}v_{j}u_{k}v_{k}\right]
\\&
=m\left(p-m\right)\sum_{k=m+1}^{p}\mathbb{E}\left[u_{1}^{2}u_{p}v_{p}u_{k}v_{k}\right]
\\&
=m\left(p-m\right)\left(\underbrace{\mathbb{E}\left[u_{1}^{2}u_{p}^{2}v_{p}^{2}\right]}_{k=p}+\underbrace{\left(p-m-1\right)\mathbb{E}\left[u_{1}^{2}u_{p-1}v_{p-1}u_{p}v_{p}\right]}_{m+1\le k\le p-1}\right)
\\&
=m\left(p-m\right)\left(\left\langle \begin{smallmatrix}
2 & 0\\
2 & 2\\
\overrightarrow{0} & \overrightarrow{0}
\end{smallmatrix}\right\rangle +\left(p-m-1\right)\left\langle \begin{smallmatrix}
2 & 0\\
1 & 1\\
1 & 1\\
\overrightarrow{0} & \overrightarrow{0}
\end{smallmatrix}\right\rangle \right)
\\&
=m\left(p-m\right)\left(\tfrac{p+1}{\left(p-1\right)p\left(p+2\right)\left(p+4\right)}+\tfrac{-\left(p-m-1\right)}{\left(p-1\right)p\left(p+2\right)\left(p+4\right)}\right)
\\&
=\frac{\left(p-m\right)m\left(m+2\right)}{\left(p-1\right)p\left(p+2\right)\left(p+4\right)}
\end{split}
\end{align}

Consequently,

\begin{align}
\label{eq:ub2(uava)2}
\begin{split}
\mathbb{E}\left[\left\Vert \vu_{b}\right\Vert ^{2}\left(\vu_{a}^{\top}\mathbf{v}_{a}\right)^{2}\right]
=\frac{\left(p-m\right)m\left(p-m+2\right)}{\left(p-1\right)p\left(p+2\right)\left(p+4\right)}
\end{split}
\end{align}

\bigskip

\begin{align}
\label{eq:ua2va2ub2}
\begin{split}
\mathbb{E}\left[\left\Vert \vu_{a}\right\Vert ^{2}\left\Vert \mathbf{v}_{a}\right\Vert ^{2}\left\Vert \vu_{b}\right\Vert ^{2}\right]
&=
\sum_{i=1}^{m}\sum_{j=1}^{m}\sum_{k=m+1}^{p}\mathbb{E}\left[u_{i}^{2}u_{k}^{2}v_{j}^{2}\right]
=
\left(p-m\right)\sum_{i=1}^{m}\sum_{j=1}^{m}\mathbb{E}\left[u_{i}^{2}u_{p}^{2}v_{j}^{2}\right]
\\&
=
m\left(p-m\right)\sum_{j=1}^{m}\mathbb{E}\left[u_{1}^{2}u_{p}^{2}v_{j}^{2}\right]
\\&
=
m\left(p-m\right)\left(\underbrace{\mathbb{E}\left[u_{1}^{2}u_{p}^{2}v_{1}^{2}\right]}_{j=1}+\underbrace{\left(m-1\right)\mathbb{E}\left[u_{1}^{2}u_{p}^{2}v_{2}^{2}\right]}_{j\ge2}\right)
\\&
=
m\left(p-m\right)\left(\left\langle \begin{smallmatrix}
2 & 2\\
2 & 0\\
\overrightarrow{0} & \overrightarrow{0}
\end{smallmatrix}\right\rangle +\left(m-1\right)\left\langle \begin{smallmatrix}
2 & 0\\
0 & 2\\
2 & 0\\
\overrightarrow{0} & \overrightarrow{0}
\end{smallmatrix}\right\rangle \right)
\\&
=
m\left(p-m\right)\left(\frac{p+1+\left(m-1\right)\left(p+3\right)}{\left(p-1\right)p\left(p+2\right)\left(p+4\right)}
\right)
\\&
=\frac{m\left(p-m\right)\left(m\left(p+3\right)-2\right)}{\left(p-1\right)p\left(p+2\right)\left(p+4\right)}
\end{split}
\end{align}

\newpage

\begin{align}
\label{eq:ua4va4}
\begin{split}
&
\mathbb{E}\left[\left\Vert \mathbf{u}_{a}\right\Vert ^{4}\left\Vert \mathbf{v}_{a}\right\Vert ^{4}\right]
=
\mathbb{E}\left[\left\Vert \mathbf{u}_{a}\right\Vert ^{4}\left\Vert \mathbf{v}_{a}\right\Vert ^{4}\right]=\mathbb{E}_{\mathbf{u}\perp\mathbf{v}}\left[\left(\sum_{i=1}^{m}u_{i}^{2}\right)^{2}\left(\sum_{i=1}^{m}v_{i}^{2}\right)^{2}\right]
\\
&
=
\mathbb{E}\left[\left(\sum_{i=1}^{m}u_{i}^{4}+\sum_{i\neq j}u_{i}^{2}u_{j}^{2}\right)\left(\sum_{k=1}^{m}v_{k}^{4}+\sum_{k\neq\ell}v_{k}^{2}v_{\ell}^{2}\right)\right]
\\&
=\sum_{i=1}^{m}\sum_{k=1}^{m}\mathbb{E}\left[u_{i}^{4}v_{k}^{4}\right]+\underbrace{\sum_{i=1}^{m}\sum_{k\neq\ell}\mathbb{E}\left[u_{i}^{4}v_{k}^{2}v_{\ell}^{2}\right]+\sum_{k=1}^{m}\sum_{i\neq j}\mathbb{E}\left[u_{i}^{2}u_{j}^{2}v_{k}^{4}\right]}_{\text{same, due to the identical distributions (see \propref{prop:invariance})}}+\sum_{i\neq j}\sum_{k\neq\ell}\mathbb{E}\left[u_{i}^{2}u_{j}^{2}v_{k}^{2}v_{\ell}^{2}\right]
\\&
=\sum_{i=1}^{m}\sum_{k=1}^{m}\mathbb{E}\left[u_{i}^{4}v_{k}^{4}\right]+2\sum_{i=1}^{m}\sum_{k\neq\ell}\mathbb{E}_{\mathbf{u}\perp\mathbf{v}}\left[u_{i}^{4}v_{k}^{2}v_{\ell}^{2}\right]+\sum_{i\neq j}\sum_{k\neq\ell}\mathbb{E}_{\mathbf{u}\perp\mathbf{v}}\left[u_{i}^{2}u_{j}^{2}v_{k}^{2}v_{\ell}^{2}\right]
\\&
=m\sum_{k=1}^{m}\mathbb{E}\left[u_{1}^{4}v_{k}^{4}\right]+2m\sum_{k\neq\ell}\mathbb{E}\left[u_{1}^{4}v_{k}^{2}v_{\ell}^{2}\right]+m\left(m-1\right)\sum_{k\neq\ell}\mathbb{E}\left[u_{1}^{2}u_{2}^{2}v_{k}^{2}v_{\ell}^{2}\right]
\\&
=m\left(\underbrace{\mathbb{E}\!\left[u_{1}^{4}v_{1}^{4}\right]}_{k=1}+\underbrace{\left(m\!-\!1\right)\mathbb{E}\!\left[u_{1}^{4}v_{2}^{4}\right]}_{k\ge2}\right)
\!+\!
\!2m\!\left(\underbrace{2\left(m\!-\!1\right)\mathbb{E}\!\left[u_{1}^{4}v_{1}^{2}v_{2}^{2}\right]}_{k=1<\ell\,\vee\,\ell=1<k}
\!+\!
\underbrace{\left(m\!-\!1\right)\left(m\!-\!2\right)\mathbb{E}\!\left[u_{1}^{4}v_{2}^{2}v_{3}^{2}\right]}_{k\neq\ell\ge2}\right)
\\&
\eqmargin
+
m\left(m-1\right)
\!\left(\underbrace{2\mathbb{E}\left[u_{1}^{2}u_{2}^{2}v_{1}^{2}v_{2}^{2}\right]}_{k=1,\ell=2\,\vee\,\ell=1,k=2}
\!\!+\!\underbrace{4\left(m-2\right)\mathbb{E}\left[u_{1}^{2}u_{2}^{2}v_{2}^{2}v_{3}^{2}\right]}_{k\le2,\ell\ge3\,\vee\,\ell\le2,k\ge3}+\underbrace{\left(m-2\right)\left(m-3\right)\mathbb{E}\left[u_{1}^{2}u_{2}^{2}v_{3}^{2}v_{4}^{2}\right]}_{k\neq\ell\ge3}\right)
\\&
=m\left(\left\langle \begin{smallmatrix}
4 & 4\\
\overrightarrow{0} & \overrightarrow{0}
\end{smallmatrix}\right\rangle +\left(m-1\right)\left\langle \begin{smallmatrix}
4 & 0\\
0 & 4\\
\overrightarrow{0} & \overrightarrow{0}
\end{smallmatrix}\right\rangle \right)+2m\left(2\left(m-1\right)\left\langle \begin{smallmatrix}
4 & 2\\
0 & 2\\
\overrightarrow{0} & \overrightarrow{0}
\end{smallmatrix}\right\rangle +\left(m-1\right)\left(m-2\right)\left\langle \begin{smallmatrix}
4 & 0\\
0 & 2\\
0 & 2\\
\overrightarrow{0} & \overrightarrow{0}
\end{smallmatrix}\right\rangle \right)+
\\&
\eqmargin
m\left(m-1\right)\left(2\left\langle \begin{smallmatrix}
2 & 2\\
2 & 2\\
\overrightarrow{0} & \overrightarrow{0}
\end{smallmatrix}\right\rangle +4\left(m-2\right)\left\langle \begin{smallmatrix}
2 & 2\\
2 & 0\\
0 & 2\\
\overrightarrow{0} & \overrightarrow{0}
\end{smallmatrix}\right\rangle +\left(m-2\right)\left(m-3\right)\left\langle \begin{smallmatrix}
2 & 0\\
2 & 0\\
0 & 2\\
0 & 2\\
\overrightarrow{0} & \overrightarrow{0}
\end{smallmatrix}\right\rangle \right)
\\&
\\&
=\tfrac{9m}{p\left(p+2\right)\left(p+4\right)\left(p+6\right)}
\!+\!
\tfrac{9m\left(m-1\right)\left(p+3\right)\left(p+5\right)}{\left(p-1\right)p\left(p+1\right)\left(p+2\right)\left(p+4\right)\left(p+6\right)}
\!+\!
2m\tfrac{6\left(m-1\right)\left(p+1\right)\left(p+3\right) 
+
3\left(m-1\right)\left(m-2\right)\left(p+3\right)\left(p+5\right)
}{\left(p-1\right)p\left(p+1\right)\left(p+2\right)\left(p+4\right)\left(p+6\right)}
+
\\&
\eqmargin
m\left(m-1\right)\left(\tfrac{2\left(p^{2}+4p+15\right)+4\left(m-2\right)\left(p+3\right)^{2}}{\left(p-1\right)p\left(p+1\right)\left(p+2\right)\left(p+4\right)\left(p+6\right)}
+
\tfrac{\left(m-2\right)\left(m-3\right)\left(p+3\right)\left(p+5\right)}{\left(p-1\right)p\left(p+1\right)\left(p+2\right)\left(p+4\right)\left(p+6\right)}\right)
\\&
\\&
=\tfrac{9m\left(p-1\right)\left(p+1\right)+9m\left(m-1\right)\left(p+3\right)\left(p+5\right)+2m\left(6\left(m-1\right)\left(p+1\right)\left(p+3\right)+3\left(m-1\right)\left(m-2\right)\left(p+3\right)\left(p+5\right)\right)}{\left(p-1\right)p\left(p+1\right)\left(p+2\right)\left(p+4\right)\left(p+6\right)}
+
\\
&
\eqmargin
\tfrac{m\left(m-1\right)\left(2\left(p^{2}+4p+15\right)+4\left(m-2\right)\left(p+3\right)^{2}+\left(m-2\right)\left(m-3\right)\left(p+3\right)\left(p+5\right)\right)}{\left(p-1\right)p\left(p+1\right)\left(p+2\right)\left(p+4\right)\left(p+6\right)}
\\&
=\frac{m\left(m+2\right)\left(m^{2}\left(p+3\right)\left(p+5\right)+2m\left(p+1\right)\left(p+3\right)-8\left(2p+3\right)\right)}{\left(p-1\right)p\left(p+1\right)\left(p+2\right)\left(p+4\right)\left(p+6\right)}
\end{split}
\end{align}


\newpage

\begin{align} 
\label{eq:(ua)^2(va)^2}
\begin{split}
\mathbb{E}
\left[\left\Vert \mathbf{u}_{a}\right\Vert ^{2}\left\Vert \mathbf{v}_{a}\right\Vert ^{2}\right]
&=
\sum_{i,j=1}^{m}\mathbb{E}\left[u_{i}^{2}v_{j}^{2}\right]=m\sum_{j=1}^{m}\mathbb{E}\left[u_{1}^{2}v_{j}^{2}\right]
=
m\underbrace{\mathbb{E}\left[u_{1}^{2}v_{1}^{2}\right]}_{j=1}+m\underbrace{\left(m-1\right)\mathbb{E}\left[u_{1}^{2}v_{2}^{2}\right]}_{j\ge2}
\\&
=m\left\langle \begin{smallmatrix}
2 & 2\\
\overrightarrow{0} & \overrightarrow{0}
\end{smallmatrix}\right\rangle +m\left(m-1\right)\left\langle \begin{smallmatrix}
2 & 0\\
0 & 2\\
\overrightarrow{0} & \overrightarrow{0}
\end{smallmatrix}\right\rangle =\frac{m}{p\left(p+2\right)}+\frac{m\left(m-1\right)\left(p+1\right)}{\left(p-1\right)p\left(p+2\right)}
\\&
=\frac{m\left(mp+m-2\right)}{\left(p-1\right)p\left(p+2\right)}
\end{split}
\end{align}

\bigskip

\begin{align} 
\label{eq:(uaQmva)^2}
\begin{split}
\mathbb{E}_{\mathbf{u}\perp\mathbf{v},\mathbf{Q}_{m}}\left(\mathbf{u}_{a}^{\top}\mathbf{Q}_{m}\mathbf{v}_{a}\right)^{2}&=\mathbb{E}_{\mathbf{u}\perp\mathbf{v},\mathbf{r}\sim\mathcal{S}^{m-1}}\left(\left\Vert \mathbf{u}_{a}\right\Vert \mathbf{r}^{\top}\mathbf{v}_{a}\right)^{2}
\\&
=
\mathbb{E}_{\mathbf{u}\perp\mathbf{v}}\left[\left\Vert \mathbf{u}_{a}\right\Vert ^{2}\mathbb{E}_{\mathbf{r}\sim\mathcal{S}^{m-1}}\left(\mathbf{v}_{a}^{\top}\mathbf{r}\mathbf{r}^{\top}\mathbf{v}_{a}\right)\right]
\\&
=\frac{1}{m}\underbrace{\mathbb{E}_{\mathbf{u}\perp\mathbf{v}}\left[\left\Vert \mathbf{u}_{a}\right\Vert ^{2}\left\Vert \mathbf{v}_{a}\right\Vert ^{2}\right]}_{\text{solved in \eqref{eq:(ua)^2(va)^2}}}
=\frac{mp+m-2}{\left(p-1\right)p\left(p+2\right)}
\end{split}
\end{align}

\bigskip

\begin{align}
\label{eq:(uava)2}
\begin{split}
\mathbb{E}
\left(\mathbf{u}_{a}^{\top}\mathbf{v}_{a}\right)^{2}&=\mathbb{E}_{\mathbf{u}\perp\mathbf{v}}\left(-\mathbf{u}_{b}^{\top}\mathbf{v}_{b}\right)^{2}=\mathbb{E}_{\mathbf{u}\perp\mathbf{v}}\left(\mathbf{u}_{b}^{\top}\mathbf{v}_{b}\right)^{2}
=\sum_{i,j=1}^{m}\mathbb{E}\left[u_{i}u_{j}v_{i}v_{j}\right]
\\&
=m\sum_{j=1}^{m}\mathbb{E}\left[u_{1}u_{j}v_{1}v_{j}\right]
=m\left(\mathbb{E}\left[u_{1}^{2}v_{1}^{2}\right]+\left(m-1\right)\mathbb{E}\left[u_{1}u_{2}v_{1}v_{2}\right]\right)
\\&
=m\left(\left\langle \begin{smallmatrix}
2 & 2\\
\overrightarrow{0} & \overrightarrow{0}
\end{smallmatrix}\right\rangle +
\left(m-1\right)\left\langle \begin{smallmatrix}
1 & 1\\
1 & 1\\
\overrightarrow{0} & \overrightarrow{0}
\end{smallmatrix}\right\rangle \right)=m\left(\frac{1}{p\left(p+2\right)}+\frac{-1\cdot\left(m-1\right)}{\left(p-1\right)p\left(p+2\right)}\right)
\\&
=\frac{m\left(p-1+1-m\right)}{\left(p-1\right)p\left(p+2\right)}=\frac{m\left(p-m\right)}{\left(p-1\right)p\left(p+2\right)}    
\end{split}
\end{align}

\newpage

\begin{align} 
\label{eq:(uava)4}
\begin{split}
&\mathbb{E}_{\mathbf{u}\perp\mathbf{v}}\left(\mathbf{u}_{a}^{\top}\mathbf{v}_{a}\right)^{4}
=
\mathbb{E}_{\mathbf{u}\perp\mathbf{v}}\left(-\mathbf{u}_{b}^{\top}\mathbf{v}_{b}\right)^{4}
=
\mathbb{E}\left(\mathbf{u}_{a}^{\top}\mathbf{v}_{a}\right)^{2}\left(\mathbf{u}_{b}^{\top}\mathbf{v}_{b}\right)^{2}
\\
&
=
\mathbb{E}\left(\sum_{i=1}^{m}u_{i}v_{i}\right)^{2}\left(\sum_{k=m+1}^{p}u_{k}v_{k}\right)^{2}
\\&
=\sum_{i,j=1}^{m}\sum_{k,\ell=m+1}^{p}\mathbb{E}\left[u_{i}v_{i}u_{j}v_{j}u_{k}v_{k}u_{\ell}v_{\ell}\right]
=m\!\left(p\!-\!m\right)\!
\sum_{i=1}^{m}\sum_{k=m+1}^{p}\mathbb{E}\left[u_{1}v_{1}u_{i}v_{i}u_{k}v_{k}u_{p}v_{p}\right]
\\&
=m\!\left(p\!-\!m\right)\!
\bigg(\underbrace{\mathbb{E}\left[u_{1}^{2}v_{1}^{2}u_{p}^{2}v_{p}^{2}\right]}_{i=1,k=p}
+\underbrace{\left(p-m-1\right)\mathbb{E}\left[u_{1}^{2}v_{1}^{2}u_{p-1}v_{p-1}u_{p}v_{p}\right]}_{i=1,m+1\le k\le p-1}
+
\\
&
\hspace{2.1cm}
\underbrace{\left(m\!-\!1\right)
\mathbb{E}\!\left[u_{1}v_{1}u_{2}v_{2}u_{p}^{2}v_{p}^{2}\right]}_{i\ge2,k=p}
\!+\!
\underbrace{\left(p\!-\!m\!-\!1\right)\!\left(m\!-\!1\right)
\mathbb{E}\!\left[u_{1}v_{1}u_{2}v_{2}u_{p-1}v_{p-1}u_{p}v_{p}\right]}_{i\ge2,m+1\le k\le p-1}
\!
\bigg)
\\&
=m\!\left(p\!-\!m\right)\!\left(\!\left\langle \begin{smallmatrix}
2 & 2\\
2 & 2\\
\overrightarrow{0} & \overrightarrow{0}
\end{smallmatrix}\right\rangle \!+\!
\Big(\!\left(p\!-\!m\!-\!1\right)+\left(m\!-\!1\right)\!\Big)
\!
\left\langle \begin{smallmatrix}
2 & 2\\
1 & 1\\
1 & 1\\
\overrightarrow{0} & \overrightarrow{0}
\end{smallmatrix}\right\rangle 
\!+\!
\left(p\!-\!m\!-\!1\right)
\!
\left(m\!-\!1\right)
\!
\left\langle \begin{smallmatrix}
1 & 1\\
1 & 1\\
1 & 1\\
1 & 1\\
\overrightarrow{0} & \overrightarrow{0}
\end{smallmatrix}\right\rangle \right)
\\&
=m\left(p-m\right)\left(\left\langle \begin{smallmatrix}
2 & 2\\
2 & 2\\
\overrightarrow{0} & \overrightarrow{0}
\end{smallmatrix}\right\rangle +\left(p-2\right)\left\langle \begin{smallmatrix}
2 & 2\\
1 & 1\\
1 & 1\\
\overrightarrow{0} & \overrightarrow{0}
\end{smallmatrix}\right\rangle +\left(p-m-1\right)\left(m-1\right)\left\langle \begin{smallmatrix}
1 & 1\\
1 & 1\\
1 & 1\\
1 & 1\\
\overrightarrow{0} & \overrightarrow{0}
\end{smallmatrix}\right\rangle \right)
\\&
=m\left(p-m\right)\left(\frac{\left(p^{2}+4p+15\right)+\left(p-2\right)\left(-p+3\right)+3\left(p-m-1\right)\left(m-1\right)}{\left(p-1\right)p\left(p+1\right)\left(p+2\right)\left(p+4\right)\left(p+6\right)}\right)
\\&
=\frac{3m^{4}-6m^{3}p+3m^{2}p^{2}-6m^{2}p-12m^{2}+6mp^{2}+12mp}{\left(p-1\right)p\left(p+1\right)\left(p+2\right)\left(p+4\right)\left(p+6\right)}
\end{split}
\end{align}

\bigskip

\begin{align}
\begin{split}
\label{eq:(uaQmva)4}
\mathbb{E}_{\mathbf{u}\perp\mathbf{v}}\left(\mathbf{u}_{a}^{\top}\mathbf{Q}_{m}\mathbf{v}_{a}\right)^{4}&=\mathbb{E}_{\mathbf{u}\perp\mathbf{v},\mathbf{r}\sim\mathcal{S}^{m-1}}\left(\left\Vert \mathbf{u}_{a}\right\Vert \mathbf{r}^{\top}\mathbf{v}_{a}\right)^{4}=\mathbb{E}_{\mathbf{u}\perp\mathbf{v}}\left[\left\Vert \mathbf{u}_{a}\right\Vert ^{4}\mathbb{E}_{\mathbf{r}\sim\mathcal{S}^{m-1}}\left(\mathbf{r}^{\top}\mathbf{v}_{a}\right)^{4}\right]
\\
\explain{
\substack{
\text{reparameterize $\vect{r}\mapsto \A^{\top} \vr$}
\\
\text{where $\A\vv_{a} = \vect{e}_1$}
}
}
&=
\mathbb{E}_{\mathbf{u}\perp\mathbf{v}}\left[\left\Vert \mathbf{u}_{a}\right\Vert ^{4}\mathbb{E}_{\mathbf{r}\sim\mathcal{S}^{m-1}}\left(\left\Vert \mathbf{v}_{a}\right\Vert \mathbf{r}^{\top}\ve_{1}\right)^{4}\right]
\\&
=
\mathbb{E}_{\mathbf{u}\perp\mathbf{v}}\left[\left\Vert \mathbf{u}_{a}\right\Vert ^{4}\left\Vert \mathbf{v}_{a}\right\Vert ^{4}\mathbb{E}_{\mathbf{r}\sim\mathcal{S}^{m-1}}r_{1}^{4}\right]
=\left\langle 
\begin{smallmatrix}
4\\
\overrightarrow{0}
\end{smallmatrix}\right\rangle _{m}\mathbb{E}\left[\left\Vert \mathbf{u}_{a}\right\Vert ^{4}\left\Vert \mathbf{v}_{a}\right\Vert ^{4}\right]
\\
&
=
\tfrac{3}{m\left(m+2\right)}\cdot\tfrac{m\left(m+2\right)\left(m^{2}\left(p+3\right)\left(p+5\right)+2m\left(p+1\right)\left(p+3\right)-8\left(2p+3\right)\right)}{\left(p-1\right)p\left(p+1\right)\left(p+2\right)\left(p+4\right)\left(p+6\right)}
\\
&
=\frac{3\left(m^{2}\left(p+3\right)\left(p+5\right)+2m\left(p+1\right)\left(p+3\right)-8\left(2p+3\right)\right)}{\left(p-1\right)p\left(p+1\right)\left(p+2\right)\left(p+4\right)\left(p+6\right)}\,.
\end{split}
\end{align}

\newpage

\begin{align}
\label{eq:(uaQmva)2(uv)2}
\begin{split}
&
\mathbb{E}\left(\mathbf{u}_{a}^{\top}\mathbf{Q}_{m}\mathbf{v}_{a}\right)^{2}\left(\mathbf{u}_{b}^{\top}\mathbf{v}_{b}\right)^{2}
=
\mathbb{E}\left(\mathbf{u}_{a}^{\top}\mathbf{Q}_{m}\mathbf{v}_{a}\right)^{2}\left(\mathbf{u}_{a}^{\top}\mathbf{v}_{a}\right)^{2}
\\&
=
\mathbb{E}_{\mathbf{u}\perp\mathbf{v}}\left[\left(\mathbf{u}_{a}^{\top}\mathbf{v}_{a}\right)^{2}\mathbb{E}_{\mathbf{r}\sim\mathcal{S}^{p-1}}\left(\left\Vert \mathbf{u}_{a}\right\Vert \mathbf{r}^{\top}\mathbf{v}_{a}\right)^{2}\right]
\\&
=
\frac{1}{m}\mathbb{E}_{\mathbf{u}\perp\mathbf{v}}\left[\left(\mathbf{u}_{a}^{\top}\mathbf{v}_{a}\right)^{2}\left\Vert \mathbf{u}_{a}\right\Vert ^{2}\left\Vert \mathbf{v}_{a}\right\Vert ^{2}\right]
=
\frac{1}{m}\mathbb{E}_{\mathbf{u}\perp\mathbf{v}}\left[\left(-\mathbf{u}_{b}^{\top}\mathbf{v}_{b}\right)^{2}\left\Vert \mathbf{u}_{a}\right\Vert ^{2}\left\Vert \mathbf{v}_{a}\right\Vert ^{2}\right]
\\&
=\frac{1}{m}\mathbb{E}\left[\left\Vert \mathbf{u}_{a}\right\Vert ^{2}\left\Vert \mathbf{v}_{a}\right\Vert ^{2}\left(\mathbf{u}_{b}^{\top}\mathbf{v}_{b}\right)^{2}\right]
=
\frac{1}{m}\sum_{i,j=1}^{m}\sum_{k,\ell=m+1}^{p}\mathbb{E}\left[u_{i}^{2}v_{j}^{2}u_{k}v_{k}u_{\ell}v_{\ell}\right]
\\&
=
\sum_{j=1}^{m}\sum_{k,\ell=m+1}^{p}\mathbb{E}\left[u_{1}^{2}v_{j}^{2}u_{k}v_{k}u_{\ell}v_{\ell}\right]
=
\left(p-m\right)\sum_{j=1}^{m}\sum_{k=m+1}^{p}\mathbb{E}\left[u_{1}^{2}v_{j}^{2}u_{k}v_{k}u_{p}v_{p}\right]
\\&
=\left(p-m\right)
\bigg(\underbrace{\mathbb{E}\left[u_{1}^{2}v_{1}^{2}u_{p}^{2}v_{p}^{2}\right]}_{j=1,k=p}+\underbrace{\left(p-m-1\right)\mathbb{E}\left[u_{1}^{2}v_{1}^{2}u_{p-1}v_{p-1}u_{p}v_{p}\right]}_{j=1,m+1\le k\le p-1}+
\\
&
\hspace{5.4em}
\underbrace{\left(m-1\right)\mathbb{E}\left[u_{1}^{2}v_{2}^{2}u_{p}^{2}v_{p}^{2}\right]}_{j\ge2,k=p}+
\underbrace{\left(p-m-1\right)\left(m-1\right)\mathbb{E}\left[u_{1}^{2}v_{2}^{2}u_{p-1}v_{p-1}u_{p}v_{p}\right]}_{j\ge2,m+1\le k\le p-1}\bigg)
\\&
=\left(p-m\right)\bigg(\left\langle \begin{smallmatrix}
2 & 2\\
2 & 2\\
\overrightarrow{0} & \overrightarrow{0}
\end{smallmatrix}\right\rangle 
+
\left(p-m-1\right)\left\langle \begin{smallmatrix}
2 & 2\\
1 & 1\\
1 & 1\\
\overrightarrow{0} & \overrightarrow{0}
\end{smallmatrix}\right\rangle +\left(m-1\right)\left\langle \begin{smallmatrix}
2 & 0\\
0 & 2\\
2 & 2\\
\overrightarrow{0} & \overrightarrow{0}
\end{smallmatrix}\right\rangle 
+
\\
&
\hspace{5.4em}
\left(p-m-1\right)\left(m-1\right)\left\langle \begin{smallmatrix}
2 & 0\\
0 & 2\\
1 & 1\\
1 & 1\\
\overrightarrow{0} & \overrightarrow{0}
\end{smallmatrix}\right\rangle
\bigg)
\\&
=\left(p-m\right)\left(\tfrac{p^{2}+4p+15+\left(p-m-1\right)\left(-p+3\right)+\left(m-1\right)\left(p+3\right)^{2}+\left(m-1\right)\left(p-m-1\right)\left(-p-3\right)}{\left(p-1\right)p\left(p+1\right)\left(p+2\right)\left(p+4\right)\left(p+6\right)}\right)
\\&
=\frac{\left(m+2\right)\left(mp+2p+3m\right)\left(p-m\right)}{\left(p-1\right)p\left(p+1\right)\left(p+2\right)\left(p+4\right)\left(p+6\right)}
\end{split}
\end{align}

\bigskip

\begin{align}
\label{eq:(ub)^2(vaQmua)(uaQmva)}
\begin{split}
&\mathbb{E}\left[\left\Vert \mathbf{u}_{b}\right\Vert ^{2}\mathbf{v}_{a}^{\top}\mathbf{Q}_{m}\mathbf{u}_{a}\mathbf{u}_{a}^{\top}\mathbf{Q}_{m}\mathbf{v}_{a}\right]
=
\mathbb{E}\left[\left\Vert \mathbf{u}_{b}\right\Vert ^{2}\left(\sum_{i,j=1}^{m}v_{i}q_{ij}u_{j}\right)\left(\sum_{k,\ell=1}^{m}u_{k}q_{k\ell}v_{\ell}\right)\right]
\\&
=\sum_{i,j=1}^{m}\sum_{k,\ell=1}^{m}\mathbb{E}\left[\left\Vert \mathbf{u}_{b}\right\Vert ^{2}v_{i}u_{j}u_{k}v_{\ell}\right]\mathbb{E}\left[q_{ij}q_{k\ell}\right]
\\
\explain{\text{\propref{prop:odd}}}
&=\sum_{i,j=1}^{m}\mathbb{E}\left[\left\Vert \mathbf{u}_{b}\right\Vert ^{2}u_{i}u_{j}v_{i}v_{j}\right]\mathbb{E}\left[q_{ij}^{2}\right]
\\&
=\frac{1}{m}\left(\underbrace{m\mathbb{E}\left[\left\Vert \mathbf{u}_{b}\right\Vert ^{2}u_{1}^{2}v_{1}^{2}\right]}_{i=j}+\underbrace{m\left(m-1\right)\mathbb{E}\left[\left\Vert \mathbf{u}_{b}\right\Vert ^{2}u_{1}u_{2}v_{1}v_{2}\right]}_{i\neq j}\right)
\\&
=\mathbb{E}\left[\left(\sum_{k=m+1}^{p}u_{k}^{2}\right)u_{1}^{2}v_{1}^{2}\right]+\left(m-1\right)\mathbb{E}\left[\left(\sum_{k=m+1}^{p}u_{k}^{2}\right)u_{1}u_{2}v_{1}v_{2}\right]
\\&
=\left(p-m\right)\mathbb{E}\left[u_{p}^{2}u_{1}^{2}v_{1}^{2}\right]+\left(m-1\right)\left(p-m\right)\mathbb{E}\left[u_{p}^{2}u_{1}u_{2}v_{1}v_{2}\right]
\\&
=\left(p-m\right)\left\langle \begin{smallmatrix}
2 & 2\\
2 & 0\\
\overrightarrow{0} & \overrightarrow{0}
\end{smallmatrix}\right\rangle +\left(m-1\right)\left(p-m\right)\left\langle \begin{smallmatrix}
1 & 1\\
1 & 1\\
2 & 0\\
\overrightarrow{0} & \overrightarrow{0}
\end{smallmatrix}\right\rangle 
\\&
=\frac{\left(p-m\right)\left(p+1\right)
-
\left(m-1\right)\left(p-m\right)}{\left(p-1\right)p\left(p+2\right)\left(p+4\right)}
%
=\frac{\left(p-m\right)\left(p-m+2\right)}{\left(p-1\right)p\left(p+2\right)\left(p+4\right)}
\end{split}
\end{align}


\begin{align}
\label{eq:(ub)4(uaQmva)2}
\begin{split}
&\mathbb{E}\left[\left\Vert \mathbf{u}_{b}\right\Vert ^{4}\left(\mathbf{u}_{a}^{\top}\mathbf{Q}_{m}\mathbf{v}_{a}\right)^{2}\right]=\mathbb{E}_{\mathbf{u}\perp\mathbf{v}}\left[\left\Vert \mathbf{u}_{b}\right\Vert ^{4}\mathbf{v}_{a}^{\top}\mathbb{E}_{\mathbf{Q}_{m}}\left(\mathbf{Q}_{m}\mathbf{u}_{a}\mathbf{u}_{a}^{\top}\mathbf{Q}_{m}\right)\mathbf{v}_{a}\right]
\\&
=\mathbb{E}_{\mathbf{u}\perp\mathbf{v}}\left[\left\Vert \mathbf{u}_{b}\right\Vert ^{4}\mathbf{v}_{a}^{\top}\mathbb{E}_{\mathbf{r}\sim\mathcal{S}^{m-1}}\left(\left\Vert \mathbf{u}_{a}\right\Vert ^{2}\mathbf{r}\mathbf{r}^{\top}\right)\mathbf{v}_{a}\right]=\frac{1}{m}\mathbb{E}_{\mathbf{u}\perp\mathbf{v}}\left[\left\Vert \mathbf{u}_{a}\right\Vert ^{2}\left\Vert \mathbf{u}_{b}\right\Vert ^{4}\mathbf{v}_{a}^{\top}\mathbf{v}_{a}\right]
\\&
=\frac{1}{m}\mathbb{E}\left[\left\Vert \mathbf{u}_{b}\right\Vert ^{4}\left\Vert \mathbf{u}_{a}\right\Vert ^{2}\left\Vert \mathbf{v}_{a}\right\Vert ^{2}\right]=\frac{1}{m}\mathbb{E}\left[\left(\sum_{i=m+1}^{p}u_{i}^{2}\right)\left(\sum_{j=m+1}^{p}u_{j}^{2}\right)\left(\sum_{k=1}^{m}u_{k}^{2}\right)\left(\sum_{\ell=1}^{m}v_{\ell}^{2}\right)\right]
\\&
=\frac{p-m}{m}\mathbb{E}\left[u_{p}^{2}\left(
\sum_{j=m+1}^{p}
\!
u_{j}^{2}\right)\left(\sum_{k=1}^{m}u_{k}^{2}\right)\left(\sum_{\ell=1}^{m}v_{\ell}^{2}\right)\right]
\\&
=
\frac{p-m}{m}\sum_{j=m+1}^{p}\mathbb{E}\left[u_{p}^{2}u_{j}^{2}\left(\sum_{k=1}^{m}u_{k}^{2}\right)\left(\sum_{i=1}^{m}v_{\ell}^{2}\right)\right]
=
\left(p-m\right)\!
\sum_{j=m+1}^{p}
\!
\mathbb{E}\left[u_{1}^{2}u_{p}^{2}u_{j}^{2}\left(\sum_{i=1}^{m}v_{\ell}^{2}\right)\right]
\\&
=\left(p-m\right)\left(\underbrace{\mathbb{E}\left[u_{1}^{2}u_{p}^{4}\left(\sum_{i=1}^{m}v_{\ell}^{2}\right)\right]}_{j=p}
+
\underbrace{\left(p-m-1\right)\mathbb{E}\left[u_{1}^{2}u_{p-1}^{2}u_{p}^{2}\left(\sum_{i=1}^{m}v_{\ell}^{2}\right)\right]}_{m+1\le j\le p-1}\right)
\\&
=\left(p-m\right)
\bigg(
\bigprn{\underbrace{\mathbb{E}\left[u_{1}^{2}u_{p}^{4}v_{1}^{2}\right]}_{\ell=1}+\underbrace{\left(m-1\right)\mathbb{E}\left[u_{1}^{2}u_{p}^{4}v_{2}^{2}\right]}_{2\le\ell\le m}}
+
\\&
\eqmargin
\left(p-m-1\right)
\bigprn{\underbrace{\mathbb{E}\left[u_{1}^{2}u_{p-1}^{2}u_{p}^{2}v_{1}^{2}\right]}_{\ell=1}
+
\underbrace{\left(m-1\right)\mathbb{E}\left[u_{1}^{2}u_{p-1}^{2}u_{p}^{2}v_{2}^{2}\right]}_{2\le\ell\le m}
}
\bigg)
\\&
=\left(p-m\right)
\bigg(
\mathbb{E}\left[u_{1}^{2}u_{2}^{4}v_{1}^{2}\right]+\left(m-1\right)\mathbb{E}\left[u_{1}^{2}u_{2}^{4}v_{3}^{2}\right]
+
\\
&
\eqmargin
\left(p-m-1\right)\left(\mathbb{E}\left[u_{1}^{2}u_{2}^{2}u_{3}^{2}v_{1}^{2}\right]+\left(m-1\right)\mathbb{E}\left[u_{1}^{2}u_{2}^{2}u_{3}^{2}v_{4}^{2}\right]\right)
\bigg)
\\&
=\left(p-m\right)
\Bigg(\left\langle \begin{smallmatrix}
2 & 2\\
4 & 0\\
\overrightarrow{0} & \overrightarrow{0}
\end{smallmatrix}\right\rangle +
\left(m-1\right)\left\langle \begin{smallmatrix}
2 & 0\\
4 & 0\\
0 & 2\\
\overrightarrow{0} & \overrightarrow{0}
\end{smallmatrix}\right\rangle 
+
\\
&\hspace{2cm}
\left(p-m-1\right)\bigg(\left\langle \begin{smallmatrix}
2 & 2\\
2 & 0\\
2 & 0\\
\overrightarrow{0} & \overrightarrow{0}
\end{smallmatrix}\right\rangle +
\left(m-1\right)\left\langle \begin{smallmatrix}
2 & 0\\
2 & 0\\
2 & 0\\
0 & 2\\
\overrightarrow{0} & \overrightarrow{0}
\end{smallmatrix}\right\rangle \bigg)\Bigg)
\\&
=\left(p-m\right)\left(\left(\tfrac{3\left(p+3\right)+3\left(m-1\right)\left(p+5\right)}{\left(p-1\right)p\left(p+2\right)\left(p+4\right)\left(p+6\right)}\right)
+
\left(p-m-1\right)\left(\tfrac{p+3+\left(m-1\right)\left(p+5\right)}{\left(p-1\right)p\left(p+2\right)\left(p+4\right)\left(p+6\right)}\right)\right)
\\&
=\frac{\left(p-m\right)\left(mp^{2}-m^{2}p-5m^{2}+7mp+12m-2p-4\right)}{\left(p-1\right)p\left(p+2\right)\left(p+4\right)\left(p+6\right)}
\end{split}
\end{align}

\newpage

\begin{align}
\label{eq:(ua)^2(uava)(ubvb)(vb)^2}
\begin{split}
&\mathbb{E}\left[\left\Vert \mathbf{u}_{a}\right\Vert ^{2}\mathbf{u}_{a}^{\top}\mathbf{v}_{a}\cdot\mathbf{v}_{b}^{\top}\mathbf{u}_{b}\left\Vert \mathbf{v}_{b}\right\Vert ^{2}\right]
=\sum_{i=1}^{m}\sum_{j=1}^{m}\sum_{k=m+1}^{p}\sum_{\ell=m+1}^{p}\mathbb{E}\left[u_{i}^{2}\cdot u_{j}v_{j}\cdot u_{k}v_{k}\cdot v_{\ell}^{2}\right]
\\&
=
m\left(p-m\right)\sum_{j=1}^{m}\sum_{k=m+1}^{p}\mathbb{E}\left[u_{1}^{2}\cdot u_{j}v_{j}\cdot u_{k}v_{k}\cdot v_{p}^{2}\right]
\\&
=m\left(p-m\right)\left(\sum_{k=m+1}^{p}\mathbb{E}\left[u_{1}^{3}v_{1}\cdot u_{k}v_{k}\cdot v_{p}^{2}\right]+\left(m-1\right)\sum_{k=m+1}^{p}\mathbb{E}\left[u_{1}^{2}\cdot u_{2}v_{2}\cdot u_{k}v_{k}\cdot v_{p}^{2}\right]\right)
\\&
=m\left(p-m\right)\Bigg(\mathbb{E}\left[u_{1}^{3}v_{1}u_{p}v_{p}^{3}\right]+\left(p-m-1\right)\mathbb{E}\left[u_{1}^{3}v_{1}u_{p-1}v_{p-1}v_{p}^{2}\right]
+
\\
& \eqmargin
\hspace{2cm}
\left(m-1\right)
\bigg(\mathbb{E}\left[u_{1}^{2}u_{2}v_{2}u_{p}v_{p}^{3}\right]
+\left(p-m-1\right)
\mathbb{E}\left[u_{1}^{2}u_{2}v_{2}u_{p-1}v_{p-1}v_{p}^{2}\right]\bigg)\Bigg)
\\&
=
m\left(p-m\right)\Bigg(\left\langle \begin{smallmatrix}
3 & 1\\
1 & 3\\
\overrightarrow{0} & \overrightarrow{0}
\end{smallmatrix}\right\rangle +
\left(p-m-1\right)\left\langle \begin{smallmatrix}
3 & 1\\
1 & 1\\
0 & 2\\
\overrightarrow{0} & \overrightarrow{0}
\end{smallmatrix}\right\rangle 
+
\\
& \eqmargin
\hspace{2cm}
\left(m-1\right)\left(\left\langle \begin{smallmatrix}
2 & 0\\
1 & 1\\
1 & 3\\
\overrightarrow{0} & \overrightarrow{0}
\end{smallmatrix}\right\rangle +\left(p-m-1\right)\left\langle \begin{smallmatrix}
2 & 0\\
1 & 1\\
1 & 1\\
0 & 2\\
\overrightarrow{0} & \overrightarrow{0}
\end{smallmatrix}\right\rangle \right)\Bigg)
\\&
=m\left(p-m\right)\left(\left\langle \begin{smallmatrix}
3 & 1\\
1 & 3\\
\overrightarrow{0} & \overrightarrow{0}
\end{smallmatrix}\right\rangle +\left(p-2\right)\left\langle \begin{smallmatrix}
3 & 1\\
1 & 1\\
0 & 2\\
\overrightarrow{0} & \overrightarrow{0}
\end{smallmatrix}\right\rangle +\left(m-1\right)\left(p-m-1\right)\left\langle \begin{smallmatrix}
2 & 0\\
1 & 1\\
1 & 1\\
0 & 2\\
\overrightarrow{0} & \overrightarrow{0}
\end{smallmatrix}\right\rangle \right)
\\&
=m\left(p-m\right)\Bigg(\tfrac{-9\left(p+3\right)}{\left(p-1\right)p\left(p+1\right)\left(p+2\right)\left(p+4\right)\left(p+6\right)}
\\
& \eqmargin
+
\tfrac{-3\left(p-2\right)\left(p+3\right)}{\left(p-1\right)p\left(p+1\right)\left(p+2\right)\left(p+4\right)\left(p+6\right)}
+
\tfrac{\left(m-1\right)\left(p-m-1\right)\left(-p-3\right)}{\left(p-1\right)p\left(p+1\right)\left(p+2\right)\left(p+4\right)\left(p+6\right)}\Bigg)
\\&
=\frac{m\left(p-m\right)\left(p+3\right)\left(m^{2}-mp-2\left(p+2\right)\right)}{\left(p-1\right)p\left(p+1\right)\left(p+2\right)\left(p+4\right)\left(p+6\right)}
\end{split}
\end{align}

\newpage

\begin{align}
\label{eq:(uaQmua)(vaQmva)(vbub)^2}
\begin{split}
    &\mathbb{E}\left[\mathbf{u}_{a}^{\top}\mathbf{Q}_{m}\mathbf{u}_{a}\mathbf{v}_{a}^{\top}\mathbf{Q}_{m}\mathbf{v}_{a}\left(\mathbf{v}_{b}^{\top}\mathbf{u}_{b}\right)^{2}\right]
\\&
=\mathbb{E}\left[\left(\mathbf{v}_{b}^{\top}\mathbf{u}_{b}\right)^{2}\sum_{i,j,k,\ell}u_{i}q_{ij}u_{j}v_{k}q_{k\ell}v_{\ell}\right]=\sum_{i,j,k,\ell=1}^{m}\mathbb{E}\left[\left(\mathbf{v}_{b}^{\top}\mathbf{u}_{b}\right)^{2}u_{i}u_{j}v_{k}v_{\ell}\right]\mathbb{E}\left[q_{ij}q_{k\ell}\right]
\\
&\eqmargin
\left[{\text{By \propref{prop:odd}, most summands are zero}}\right]
\\
&=\sum_{i,j=1}^{m}\mathbb{E}\left[\left(\mathbf{v}_{b}^{\top}\mathbf{u}_{b}\right)^{2}u_{i}u_{j}v_{i}v_{j}\right]\mathbb{E}\left[q_{ij}^{2}\right]=\frac{1}{m}\sum_{i,j=1}^{m}\mathbb{E}\left[\left(\sum_{k=m+1}^{p}u_{k}v_{k}\right)^{2}u_{i}u_{j}v_{i}v_{j}\right]
\\
&=\frac{1}{m}\sum_{i,j=1}^{m}\sum_{k,\ell=m+1}^{p}
\!\!
\mathbb{E}\left[u_{i}u_{j}u_{k}u_{\ell}v_{i}v_{j}v_{k}v_{\ell}\right]
=\frac{m\left(p-m\right)}{m}\sum_{j=1}^{m}
\sum_{\ell=m+1}^{p}
\!\!
\mathbb{E}\left[u_{1}u_{j}u_{p}u_{\ell}v_{1}v_{j}v_{p}v_{\ell}\right]
\\&
=
\left(p-m\right)\sum_{j=1}^{m}\sum_{\ell=m+1}^{p}\mathbb{E}\left[u_{1}u_{j}u_{p}u_{\ell}v_{1}v_{j}v_{p}v_{\ell}\right]
\\&
=\left(p-m\right)\left(\sum_{\ell=m+1}^{p}\mathbb{E}\left[u_{1}^{2}u_{p}u_{\ell}v_{1}^{2}v_{p}v_{\ell}\right]+\left(m-1\right)\sum_{\ell=m+1}^{p}\mathbb{E}\left[u_{1}u_{2}u_{p}u_{\ell}v_{1}v_{2}v_{p}v_{\ell}\right]\right)
\\&
=\left(p-m\right)\Bigg(\mathbb{E}\left[u_{1}^{2}u_{p}^{2}v_{1}^{2}v_{p}^{2}\right]+\left(p-m-1\right)\mathbb{E}\left[u_{1}^{2}u_{p-1}u_{p}v_{1}^{2}v_{p-1}v_{p}\right]
+
\\
& \eqmargin
\hspace{1.7cm}
\left(m-1\right)\!
\bigg(\mathbb{E}\!\left[u_{1}u_{2}u_{p}^{2}v_{1}v_{2}v_{p}^{2}\right]
+
\left(p\!-\!m\!-\!1\right)\mathbb{E}\left[u_{1}u_{2}u_{p-1}u_{p}v_{1}v_{2}v_{p-1}v_{p}\right]\bigg)\Bigg)
\\&
=\left(p-m\right)\Bigg(\left\langle \begin{smallmatrix}
2 & 2\\
2 & 2\\
\overrightarrow{0} & \overrightarrow{0}
\end{smallmatrix}\right\rangle +
\left(p-m-1\right)\left\langle \begin{smallmatrix}
2 & 2\\
1 & 1\\
1 & 1\\
\overrightarrow{0} & \overrightarrow{0}
\end{smallmatrix}\right\rangle 
+
\\
& \eqmargin
\hspace{1.7cm}
\left(m-1\right)\left(\left\langle \begin{smallmatrix}
1 & 1\\
1 & 1\\
2 & 2\\
\overrightarrow{0} & \overrightarrow{0}
\end{smallmatrix}\right\rangle +\left(p-m-1\right)\left\langle \begin{smallmatrix}
1 & 1\\
1 & 1\\
1 & 1\\
1 & 1\\
\overrightarrow{0} & \overrightarrow{0}
\end{smallmatrix}\right\rangle \right)\Bigg)
\\&
=
\left(p-m\right)\left(\left\langle \begin{smallmatrix}
2 & 2\\
2 & 2\\
\overrightarrow{0} & \overrightarrow{0}
\end{smallmatrix}\right\rangle +\left(p-2\right)\left\langle \begin{smallmatrix}
2 & 2\\
1 & 1\\
1 & 1\\
\overrightarrow{0} & \overrightarrow{0}
\end{smallmatrix}\right\rangle +\left(m-1\right)\left(p-m-1\right)\left\langle \begin{smallmatrix}
1 & 1\\
1 & 1\\
1 & 1\\
1 & 1\\
\overrightarrow{0} & \overrightarrow{0}
\end{smallmatrix}\right\rangle \right)
\\&
=
\left(p-m\right)\left(\tfrac{p^{2}+4p+15}{\left(p-1\right)p\left(p+1\right)\left(p+2\right)\left(p+4\right)\left(p+6\right)}+
\tfrac{\left(p-2\right)\left(-p+3\right)
+
\left(m-1\right)\left(p-m-1\right)\cdot 3
}{\left(p-1\right)p\left(p+1\right)\left(p+2\right)\left(p+4\right)\left(p+6\right)}\right)
%
%
\\&
=\frac{3\left(p-m\right)\left(-m^{2}+mp+2p+4\right)}{\left(p-1\right)p\left(p+1\right)\left(p+2\right)\left(p+4\right)\left(p+6\right)}
\end{split}
\end{align}

\bigskip

\begin{align}
\label{eq:(ub)^2(vb)^2(vaua)^2}
\begin{split}
\mathbb{E}\left[\mathbf{u}_{b}^{\top}\mathbf{u}_{b}\left(\mathbf{v}_{a}^{\top}\mathbf{u}_{a}\right)^{2}\mathbf{v}_{b}^{\top}\mathbf{v}_{b}\right]&=\mathbb{E}\left[\left\Vert \mathbf{u}_{b}\right\Vert ^{2}\left(\mathbf{v}_{a}^{\top}\mathbf{u}_{a}\right)^{2}\left\Vert \mathbf{v}_{b}\right\Vert ^{2}\right]
\\&
=\mathbb{E}\left[\left(1-\left\Vert \mathbf{u}_{a}\right\Vert ^{2}\right)\left(\mathbf{v}_{a}^{\top}\mathbf{u}_{a}\right)^{2}\left\Vert \mathbf{v}_{b}\right\Vert ^{2}\right]
\\&
=\mathbb{E}\left[\left\Vert \mathbf{v}_{b}\right\Vert ^{2}\left(\mathbf{v}_{a}^{\top}\mathbf{u}_{a}\right)^{2}\right]-\mathbb{E}\left[\left\Vert \mathbf{u}_{a}\right\Vert ^{2}\left\Vert \mathbf{v}_{b}\right\Vert ^{2}\left(\mathbf{v}_{a}^{\top}\mathbf{u}_{a}\right)^{2}\right]
\\&
=\underbrace{\mathbb{E}\left[\left\Vert \mathbf{v}_{b}\right\Vert ^{2}\left(\mathbf{v}_{a}^{\top}\mathbf{u}_{a}\right)^{2}\right]}_{\text{solved in \eqref{eq:ub2(uava)2}}}
+
\underbrace{\mathbb{E}\left[\left\Vert \mathbf{u}_{a}\right\Vert ^{2}\left\Vert \mathbf{v}_{b}\right\Vert ^{2}\cdot\mathbf{u}_{a}^{\top}\mathbf{v}_{a}\cdot\mathbf{v}_{b}^{\top}\mathbf{u}_{b}\right]}_{\text{solved in \eqref{eq:(ua)^2(uava)(ubvb)(vb)^2}}}
\\&
=\tfrac{\left(p-m\right)m\left(p-m+2\right)}{\left(p-1\right)p\left(p+2\right)\left(p+4\right)}+\tfrac{m\left(p-m\right)\left(p+3\right)\left(m^{2}-mp-2\left(p+2\right)\right)}{\left(p-1\right)p\left(p+1\right)\left(p+2\right)\left(p+4\right)\left(p+6\right)}
\\&
=\tfrac{\left(p-m\right)m\left(m^{2}\left(p+3\right)-2m\left(p^{2}+5p+3\right)+p\left(p^{2}+7p+10\right)\right)}{\left(p-1\right)p\left(p+1\right)\left(p+2\right)\left(p+4\right)\left(p+6\right)}
\end{split}
\end{align}

\bigskip

\begin{align}
\label{eq:(ub)^2(vb)^2(uaQmva)(vaQmua)}
\begin{split}
&\mathbb{E}\left[\mathbf{u}_{b}^{\top}\mathbf{u}_{b}\mathbf{u}_{a}^{\top}\mathbf{Q}_{m}\mathbf{v}_{a}\mathbf{v}_{a}^{\top}\mathbf{Q}_{m}\mathbf{u}_{a}\mathbf{v}_{b}^{\top}\mathbf{v}_{b}\right]=\sum_{i,j=1}^{m}\sum_{k,\ell=1}^{m}\mathbb{E}\left[\mathbf{u}_{b}^{\top}\mathbf{u}_{b}\mathbf{v}_{b}^{\top}\mathbf{v}_{b}u_{i}q_{ij}v_{j}v_{k}q_{k,\ell}u_{\ell}\right]
\\&
=\sum_{i,j=1}^{m}\sum_{k,\ell=1}^{m}\mathbb{E}\left[\mathbf{u}_{b}^{\top}\mathbf{u}_{b}\mathbf{v}_{b}^{\top}\mathbf{v}_{b}\cdot u_{i}v_{j}v_{k}u_{\ell}\right]\mathbb{E}\left[q_{ij}q_{k,\ell}\right]
\\
\explain{\text{\propref{prop:odd}}}
&=\sum_{i,j=1}^{m}\mathbb{E}\left[\left\Vert \mathbf{u}_{b}\right\Vert ^{2}\left\Vert \mathbf{v}_{b}\right\Vert ^{2}\cdot u_{i}u_{j}v_{i}v_{j}\right]\underbrace{\mathbb{E}\left[q_{ij}^{2}\right]}_{=1/m}
\\&
=\frac{1}{m}\left(\underbrace{m\mathbb{E}\left[\left\Vert \mathbf{u}_{b}\right\Vert ^{2}\left\Vert \mathbf{v}_{b}\right\Vert ^{2}\cdot u_{1}^{2}v_{1}^{2}\right]}_{i=j}+\underbrace{m\left(m-1\right)\mathbb{E}\left[\left\Vert \mathbf{u}_{b}\right\Vert ^{2}\left\Vert \mathbf{v}_{b}\right\Vert ^{2}\cdot u_{1}u_{2}v_{1}v_{2}\right]}_{i\neq j}\right)
\\&
=\sum_{i=m+1}^{p}\sum_{j=m+1}^{p}\mathbb{E}\left[u_{1}^{2}v_{1}^{2}\cdot u_{i}^{2}v_{j}^{2}\right]+\left(m-1\right)\sum_{i=m+1}^{p}\sum_{j=m+1}^{p}\mathbb{E}\left[u_{1}u_{2}v_{1}v_{2}\cdot u_{i}^{2}v_{j}^{2}\right]
\\&
=\left(p-m\right)\left(\sum_{j=m+1}^{p}\mathbb{E}\left[u_{1}^{2}v_{1}^{2}\cdot u_{p}^{2}v_{j}^{2}\right]+\left(m-1\right)\sum_{j=m+1}^{p}\mathbb{E}\left[u_{1}u_{2}v_{1}v_{2}\cdot u_{p}^{2}v_{j}^{2}\right]\right)
\\&
=\left(p-m\right)\bigg(\underbrace{\mathbb{E}\left[u_{1}^{2}v_{1}^{2}\cdot u_{p}^{2}v_{p}^{2}\right]}_{j=p}+\underbrace{\left(p-m-1\right)\mathbb{E}\left[u_{1}^{2}v_{1}^{2}\cdot u_{p}^{2}v_{p-1}^{2}\right]}_{m+1\le j\le p-1}+
\\
& \eqmargin
\left(m-1\right)
\bigg(\underbrace{\mathbb{E}\left[u_{1}u_{2}v_{1}v_{2}\cdot u_{p}^{2}v_{p}^{2}\right]}_{j=p}+\underbrace{\left(p-m-1\right)\mathbb{E}\left[u_{1}u_{2}v_{1}v_{2}\cdot u_{p}^{2}v_{p-1}^{2}\right]}_{m+1\le j\le p-1}\bigg)\bigg)
\\&
=
\left(p-m\right)\Bigg(\left\langle \begin{smallmatrix}
2 & 2\\
2 & 2\\
\overrightarrow{0} & \overrightarrow{0}
\end{smallmatrix}\right\rangle 
+
\left(p-m-1\right)\left\langle \begin{smallmatrix}
2 & 2\\
0 & 2\\
2 & 0\\
\overrightarrow{0} & \overrightarrow{0}
\end{smallmatrix}\right\rangle 
+
\\
&
\hspace{1.9cm}
\left(m-1\right)\left(\left\langle \begin{smallmatrix}
1 & 1\\
1 & 1\\
2 & 2\\
\overrightarrow{0} & \overrightarrow{0}
\end{smallmatrix}\right\rangle 
+
\left(p-m-1\right)\left\langle \begin{smallmatrix}
1 & 1\\
1 & 1\\
0 & 2\\
2 & 0\\
\overrightarrow{0} & \overrightarrow{0}
\end{smallmatrix}\right\rangle \right)\Bigg)
\\&
=
\left(p-m\right)\left(\tfrac{\left(p^{2}+4p+15\right)+\left(p-m-1\right)\left(p+3\right)^{2}}{\left(p-1\right)p\left(p+1\right)\left(p+2\right)\left(p+4\right)\left(p+6\right)}
+
\tfrac{\left(m-1\right)\left(-p+3-\left(p-m-1\right)\left(p+3\right)\right)}{\left(p-1\right)p\left(p+1\right)\left(p+2\right)\left(p+4\right)\left(p+6\right)}\right)
%
%
\\&
=\frac{\left(p-m\right)\left(m^{2}p+3m^{2}-2mp^{2}-10mp-6m+p^{3}+7p^{2}+10p\right)}{\left(p-1\right)p\left(p+1\right)\left(p+2\right)\left(p+4\right)\left(p+6\right)}
\end{split}
\end{align}

\bigskip

\begin{align}
\label{eq:(ub)^2(ubvb)(uaQmva)(vaQmva)}
\begin{split}
&\mathbb{E}\left[\mathbf{u}_{b}^{\top}\mathbf{u}_{b}\mathbf{u}_{b}^{\top}\mathbf{v}_{b}\cdot\mathbf{u}_{a}^{\top}\mathbf{Q}_{m}\mathbf{v}_{a}\mathbf{v}_{a}^{\top}\mathbf{Q}_{m}\mathbf{v}_{a}\right]=\mathbb{E}\left[\left\Vert \mathbf{v}_{a}\right\Vert ^{2}\mathbf{u}_{b}^{\top}\mathbf{u}_{b}\mathbf{u}_{b}^{\top}\mathbf{v}_{b}\cdot\mathbb{E}_{\mathbf{r}\sim\mathcal{S}^{m-1}}\left(\mathbf{u}_{a}^{\top}\mathbf{r}\mathbf{v}_{a}^{\top}\mathbf{r}\right)\right]
\\&
=\frac{1}{m}\mathbb{E}\left[\left\Vert \mathbf{v}_{a}\right\Vert ^{2}\mathbf{u}_{b}^{\top}\mathbf{u}_{b}\mathbf{u}_{b}^{\top}\mathbf{v}_{b}\mathbf{u}_{a}^{\top}\mathbf{v}_{a}\right]=-\frac{1}{m}\mathbb{E}\left[\left\Vert \mathbf{v}_{a}\right\Vert ^{2}\left\Vert \mathbf{u}_{b}\right\Vert ^{2}\left(\mathbf{u}_{a}^{\top}\mathbf{v}_{a}\right)^{2}\right]
\\&
=-\frac{1}{m}\mathbb{E}\left[\left(1-\left\Vert \mathbf{v}_{b}\right\Vert ^{2}\right)\left\Vert \mathbf{u}_{b}\right\Vert ^{2}\left(\mathbf{u}_{a}^{\top}\mathbf{v}_{a}\right)^{2}\right]
\\
&
=
-\frac{1}{m}\underbrace{\mathbb{E}\left[\left\Vert \mathbf{u}_{b}\right\Vert ^{2}\left(\mathbf{u}_{a}^{\top}\mathbf{v}_{a}\right)^{2}\right]}_{\text{solved in  \eqref{eq:ub2(uava)2}}}+\frac{1}{m}\underbrace{\mathbb{E}\left[\left\Vert \mathbf{u}_{b}\right\Vert ^{2}\left\Vert \mathbf{v}_{b}\right\Vert ^{2}\left(\mathbf{u}_{a}^{\top}\mathbf{v}_{a}\right)^{2}\right]}_{\text{solved in \eqref{eq:(ub)^2(vb)^2(vaua)^2}}}
\\&
=-\tfrac{\left(p-m\right)\left(p-m+2\right)}{\left(p-1\right)p\left(p+2\right)\left(p+4\right)}+\tfrac{\left(p-m\right)\left(m^{2}\left(p+3\right)-2m\left(p^{2}+5p+3\right)+p\left(p^{2}+7p+10\right)\right)}{\left(p-1\right)p\left(p+1\right)\left(p+2\right)\left(p+4\right)\left(p+6\right)}
\\&
=-\frac{\left(p-m\right)\left(p+3\right)\left(3\left(p+1\right)+\left(m-1\right)\left(p-m-1\right)\right)}{\left(p-1\right)p\left(p+1\right)\left(p+2\right)\left(p+4\right)\left(p+6\right)}
\end{split}
\end{align}



\subsection{Auxiliary derivations with three vectors}

\begin{align} 
\label{eq:(uaQmza)^2(uaQmva)^2}
\begin{split}
&\mathbb{E}_{\mathbf{u}\perp\mathbf{v}\perp\mathbf{z},\mathbf{Q}_{m}}\left[\left(\mathbf{u}_{a}^{\top}\mathbf{Q}_{m}\mathbf{z}_{a}\right)^{2}\left(\mathbf{u}_{a}^{\top}\mathbf{Q}_{m}\mathbf{v}_{a}\right)^{2}\right]
\\&
=\mathbb{E}_{\mathbf{u}\perp\mathbf{v}\perp\mathbf{z}}\left[\mathbb{E}_{\mathbf{r}\sim\mathcal{S}^{m-1}}\left[\left(\left\Vert \mathbf{u}_{a}\right\Vert \mathbf{r}^{\top}\mathbf{z}_{a}\right)^{2}\left(\left\Vert \mathbf{u}_{a}\right\Vert \mathbf{r}^{\top}\mathbf{v}_{a}\right)^{2}\right]\right]
\\&
=\sum_{i,j=1}^{m}\sum_{k,\ell=1}^{m}\mathbb{E}_{\mathbf{u}\perp\mathbf{v}\perp\mathbf{z}}\left[\left\Vert \mathbf{u}_{a}\right\Vert ^{4}\mathbb{E}_{\mathbf{r}\sim\mathcal{S}^{m-1}}\left[r_{i}z_{i}r_{j}z_{j}r_{k}v_{k}r_{\ell}v_{\ell}\right]\right]
\\&
=
\mathbb{E}_{\mathbf{u}\perp\mathbf{v}\perp\mathbf{z}}\left[\left\Vert \mathbf{u}_{a}\right\Vert ^{4}\sum_{i,j=1}^{m}\sum_{k,\ell=1}^{m}z_{i}z_{j}v_{k}v_{\ell}\mathbb{E}_{\mathbf{r}\sim\mathcal{S}^{m-1}}\left[r_{i}r_{j}r_{k}r_{\ell}\right]\right]
\\
&
\eqmargin
[\text{by \propref{prop:odd}, most terms become zero}]
\\
&=
\left\langle \begin{smallmatrix}
4\\
\overrightarrow{0}
\end{smallmatrix}\right\rangle _{m}\underbrace{\mathbb{E}\left[\left\Vert \mathbf{u}_{a}\right\Vert ^{4}\sum_{i=1}^{m}z_{i}^{2}v_{i}^{2}\right]}_{i=j=k=\ell}+
\\
&\eqmargin
\left\langle \begin{smallmatrix}
2\\
2\\
\overrightarrow{0}
\end{smallmatrix}\right\rangle _{m}\left(\underbrace{\mathbb{E}\left[\left\Vert \mathbf{u}_{a}\right\Vert ^{4}\sum_{i\neq k=1}^{m}z_{i}^{2}v_{k}^{2}\right]}_{i=j\neq k=\ell}+\underbrace{2\mathbb{E}\left[\left\Vert \mathbf{u}_{a}\right\Vert ^{4}\sum_{i\neq j=1}^{m}z_{i}z_{j}v_{i}v_{j}\right]}_{i=k\neq j=\ell\,\vee\,i=\ell\neq j=k}\right)
\\&
=\frac{3}{m\left(m+2\right)}\sum_{i=1}^{m}\mathbb{E}\left[\left\Vert \mathbf{u}_{a}\right\Vert ^{4}z_{i}^{2}v_{i}^{2}\right]
+
\\
&\eqmargin
\frac{1}{m\left(m+2\right)}\left(\sum_{i\neq k=1}^{m}\mathbb{E}\left[\left\Vert \mathbf{u}_{a}\right\Vert ^{4}z_{i}^{2}v_{k}^{2}\right]+2\sum_{i\neq j=1}^{m}\mathbb{E}\left[\left\Vert \mathbf{u}_{a}\right\Vert ^{4}z_{i}z_{j}v_{i}v_{j}\right]\right)
\\&
=\frac{3m}{m\left(m+2\right)}\mathbb{E}\left[\left\Vert \mathbf{u}_{a}\right\Vert ^{4}z_{1}^{2}v_{1}^{2}\right]+\frac{m\left(m-1\right)}{m\left(m+2\right)}\left(\mathbb{E}\left[\left\Vert \mathbf{u}_{a}\right\Vert ^{4}z_{1}^{2}v_{2}^{2}\right]+2\mathbb{E}\left[\left\Vert \mathbf{u}_{a}\right\Vert ^{4}z_{1}z_{2}v_{1}v_{2}\right]\right)
\\&
=\frac{3}{m+2}\underbrace{\mathbb{E}\left[\left\Vert \mathbf{u}_{a}\right\Vert ^{4}z_{1}^{2}v_{1}^{2}\right]}_{\text{solved in \eqref{eq:(ua^4)(z1v1)^2}}}
+
\frac{m-1}{m+2}\left(\underbrace{\mathbb{E}\left[\left\Vert \mathbf{u}_{a}\right\Vert ^{4}z_{1}^{2}v_{2}^{2}\right]}_{\text{solved in \eqref{eq:(ua^4)(z1v2)^2}}}
+
2\underbrace{\mathbb{E}\left[\left\Vert \mathbf{u}_{a}\right\Vert ^{4}z_{1}z_{2}v_{1}v_{2}\right]}_{\text{solved in \eqref{eq:(ua^4)(z1v1z2v2)}}}\right)
\\&
=\tfrac{3}{m+2}
\tfrac{\left(p+3\right)\left(m^{2}\left(p+5\right)+2m\left(p+1\right)\right)-16p-24}{\left(p-1\right)p\left(p+1\right)\left(p+2\right)\left(p+4\right)\left(p+6\right)}
+
\\
&
\eqmargin
\tfrac{m-1}{m+2}\tfrac{m^{2}\left(p^{3}+8p^{2}+13p-2\right)+2m\left(p^{3}+4p^{2}-7p-10\right)-8\left(2p^{2}+9p+6\right)}{\left(p-2\right)\left(p-1\right)p\left(p+1\right)\left(p+2\right)\left(p+4\right)\left(p+6\right)}
+
\\
&
\eqmargin
\tfrac{2\left(m-1\right)}{m+2}
\tfrac{16p-8p^{2}-\left(m-2\right)
\left(mp^{2}+7mp+14m+4p^{2}+12p+24\right)}{\left(p-2\right)\left(p-1\right)p\left(p+1\right)\left(p+2\right)\left(p+4\right)\left(p+6\right)}
%
%
%
\\&
=\tfrac{\left(m+2\right)\left(m\left(p+3\right)\left(\left(m+2\right)p+5m+2\right)-16p-24\right)}{\left(m+2\right)\left(p-1\right)p\left(p+1\right)\left(p+2\right)\left(p+4\right)\left(p+6\right)}=\frac{m\left(p+3\right)\left(\left(m+2\right)p+5m+2\right)-16p-24}{\left(p-1\right)p\left(p+1\right)\left(p+2\right)\left(p+4\right)\left(p+6\right)}
\end{split}
\end{align}

\newpage

\begin{align} 
\label{eq:(uaQmza)^2(ubvb)^2}
\begin{split}
&\mathbb{E}_{\mathbf{u}\perp\mathbf{v}\perp\mathbf{z},\mathbf{Q}_{m}}\left[\left(\mathbf{u}_{a}^{\top}\mathbf{Q}_{m}\mathbf{z}_{a}\right)^{2}\left(\mathbf{u}_{b}^{\top}\mathbf{v}_{b}\right)^{2}\right]=\mathbb{E}_{\mathbf{u}\perp\mathbf{v}\perp\mathbf{z}}\left[\mathbb{E}_{\mathbf{r}\sim\mathcal{S}^{m-1}}\left(\left\Vert \mathbf{u}_{a}\right\Vert \mathbf{r}^{\top}\mathbf{z}_{a}\right)^{2}\left(\mathbf{u}_{b}^{\top}\mathbf{v}_{b}\right)^{2}\right]
\\&
=\mathbb{E}_{\mathbf{u}\perp\mathbf{v}\perp\mathbf{z}}\left[\left\Vert \mathbf{u}_{a}\right\Vert ^{2}\mathbb{E}_{\mathbf{r}\sim\mathcal{S}^{m-1}}\left(\mathbf{z}_{a}^{\top}\mathbf{r}\mathbf{r}^{\top}\mathbf{z}_{a}\right)\left(\mathbf{u}_{b}^{\top}\mathbf{v}_{b}\right)^{2}\right]
\\&
=\frac{1}{m}\mathbb{E}_{\mathbf{u}\perp\mathbf{v}\perp\mathbf{z}}\left[\left\Vert \mathbf{u}_{a}\right\Vert ^{2}\left\Vert \mathbf{z}_{a}\right\Vert ^{2}\left(\mathbf{u}_{b}^{\top}\mathbf{v}_{b}\right)^{2}\right]=\frac{1}{m}\sum_{i,j=1}^{m}\sum_{k,\ell=m+1}^{p}\mathbb{E}\left[u_{i}^{2}z_{j}^{2}u_{k}u_{\ell}v_{k}v_{\ell}\right]
\\&
=\frac{m\left(p-m\right)}{m}\sum_{j=1}^{m}\sum_{k=m+1}^{p}\mathbb{E}\left[u_{1}^{2}z_{j}^{2}u_{k}u_{p}v_{k}v_{p}\right]
\\&
=\left(p-m\right)\left(\underbrace{\sum_{k=m+1}^{p}\mathbb{E}\left[u_{1}^{2}z_{1}^{2}u_{k}u_{p}v_{k}v_{p}\right]}_{j=1}+\underbrace{\left(m-1\right)\sum_{k=m+1}^{p}\mathbb{E}\left[u_{1}^{2}z_{2}^{2}u_{k}u_{p}v_{k}v_{p}\right]}_{j\ge2}\right)
\\&
=\left(p-m\right)\Bigg(\underbrace{\mathbb{E}\left[u_{1}^{2}z_{1}^{2}u_{p}^{2}v_{p}^{2}\right]}_{k=p}+\underbrace{\left(p-m-1\right)\mathbb{E}\left[u_{1}^{2}z_{1}^{2}u_{p-1}u_{p}v_{p-1}v_{p}\right]}_{m+1\le k\le p-1}
+
\\
& \hspace{2cm}
\left(m-1\right)
\bigg(\underbrace{\mathbb{E}\left[u_{1}^{2}z_{2}^{2}u_{p}^{2}v_{p}^{2}\right]}_{k=p}+\underbrace{\left(p-m-1\right)\mathbb{E}\left[u_{1}^{2}z_{2}^{2}u_{p-1}u_{p}v_{p-1}v_{p}\right]}_{m+1\le k\le p-1}\bigg)\Bigg)
\\&
=\left(p-m\right)\Bigg(\left\langle \begin{smallmatrix}
2 & 2 & 0\\
2 & 0 & 2\\
\overrightarrow{0} & \overrightarrow{0} & \overrightarrow{0}
\end{smallmatrix}\right\rangle +\left(p-m-1\right)\left\langle \begin{smallmatrix}
2 & 0 & 2\\
1 & 1 & 0\\
1 & 1 & 0\\
\overrightarrow{0} & \overrightarrow{0} & \overrightarrow{0}
\end{smallmatrix}\right\rangle 
+
\\
& \hspace{2cm}
\left(m-1\right)\bigg(\left\langle \begin{smallmatrix}
2 & 0 & 0\\
0 & 2 & 0\\
2 & 0 & 2\\
\overrightarrow{0} & \overrightarrow{0} & \overrightarrow{0}
\end{smallmatrix}\right\rangle +\left(p-m-1\right)\left\langle \begin{smallmatrix}
2 & 0 & 0\\
0 & 0 & 2\\
1 & 1 & 0\\
1 & 1 & 0\\
\overrightarrow{0} & \overrightarrow{0} & \overrightarrow{0}
\end{smallmatrix}\right\rangle \bigg)\Bigg)
\\&
=\left(p-m\right)\left(\tfrac{\left(p-2\right)\left(p+3\right)^{2}-\left(p-m-1\right)\left(p^{2}+3p+6\right)}{\left(p-2\right)\left(p-1\right)p\left(p+1\right)\left(p+2\right)\left(p+4\right)\left(p+6\right)}+
\tfrac{
\left(m-1\right)
\left(p^{3}+6p^{2}+3p-6-\left(p-m-1\right)\left(p^{2}+5p+2\right)
\right)
}{\left(p-2\right)\left(p-1\right)p\left(p+1\right)\left(p+2\right)\left(p+4\right)\left(p+6\right)}\right)
%
%
\\&
=\left(p-m\right)\left(\tfrac{mp^{2}+3mp+6m+2p^{2}-6p-12+\left(m-1\right)\left(mp^{2}+5mp+2m+2p^{2}+6p-4\right)}{\left(p-2\right)\left(p-1\right)p\left(p+1\right)\left(p+2\right)\left(p+4\right)\left(p+6\right)}\right)
\\&
=\frac{\left(p-m\right)\left(m+2\right)\left(m\left(p^{2}+5p+2\right)-6p-4\right)}{\left(p-2\right)\left(p-1\right)p\left(p+1\right)\left(p+2\right)\left(p+4\right)\left(p+6\right)}
\end{split}
\end{align}

\newpage

\begin{align}
\label{eq:(ua^4)(z1v1)^2}
\begin{split}
&\mathbb{E}\left[\left\Vert \mathbf{u}_{a}\right\Vert ^{4}z_{1}^{2}v_{1}^{2}\right]=\sum_{i=1}^{m}\sum_{j=1}^{m}\mathbb{E}\left[u_{i}^{2}u_{j}^{2}z_{1}^{2}v_{1}^{2}\right]
\\&
=\underbrace{\mathbb{E}\left[u_{1}^{4}z_{1}^{2}v_{1}^{2}\right]}_{i=j=1}+\underbrace{2\left(m-1\right)\mathbb{E}\left[u_{1}^{2}u_{2}^{2}z_{1}^{2}v_{1}^{2}\right]}_{i=1\neq j\,\vee\,i\neq1=j}+\underbrace{\left(m-1\right)\mathbb{E}\left[u_{2}^{4}z_{1}^{2}v_{1}^{2}\right]}_{i=j\ge2}+
\\
&\eqmargin
\underbrace{\left(m-1\right)\left(m-2\right)\mathbb{E}\left[u_{2}^{2}u_{3}^{2}z_{1}^{2}v_{1}^{2}\right]}_{i\neq j\ge2}
\\&
=\left\langle \begin{smallmatrix}
4 & 2 & 2\\
\overrightarrow{0} & \overrightarrow{0} & \overrightarrow{0}
\end{smallmatrix}\right\rangle +
2\left(m-1\right)\left\langle \begin{smallmatrix}
2 & 2 & 2\\
2 & 0 & 0\\
\overrightarrow{0} & \overrightarrow{0} & \overrightarrow{0}
\end{smallmatrix}\right\rangle +
\left(m-1\right)\left\langle \begin{smallmatrix}
0 & 2 & 2\\
4 & 0 & 0\\
\overrightarrow{0} & \overrightarrow{0} & \overrightarrow{0}
\end{smallmatrix}\right\rangle +
\\
&
\eqmargin
\left(m-1\right)\left(m-2\right)\left\langle \begin{smallmatrix}
0 & 2 & 2\\
2 & 0 & 0\\
2 & 0 & 0\\
\overrightarrow{0} & \overrightarrow{0} & \overrightarrow{0}
\end{smallmatrix}\right\rangle 
\\&
=\tfrac{3}{p\left(p+2\right)\left(p+4\right)\left(p+6\right)}+
\tfrac{2\left(m-1\right)\left(p+3\right)}{\left(p-1\right)p\left(p+2\right)\left(p+4\right)\left(p+6\right)}+
\tfrac{3\left(m-1\right)\left(p+3\right)\left(p+5\right)+\left(m-1\right)\left(m-2\right)\left(p+3\right)\left(p+5\right)}{\left(p-1\right)p\left(p+1\right)\left(p+2\right)\left(p+4\right)\left(p+6\right)}
\\&
=\frac{3\left(p-1\right)\left(p+1\right)+2\left(m-1\right)\left(p+1\right)\left(p+3\right)+\left(m^{2}-1\right)\left(p+3\right)\left(p+5\right)}{\left(p-1\right)p\left(p+1\right)\left(p+2\right)\left(p+4\right)\left(p+6\right)}
\\&
=\frac{\left(p+3\right)\left(m^{2}\left(p+5\right)+2m\left(p+1\right)\right)-16p-24}{\left(p-1\right)p\left(p+1\right)\left(p+2\right)\left(p+4\right)\left(p+6\right)}
\end{split}
\end{align} 

\bigskip

\begin{align}
\label{eq:(ua^4)(z1v2)^2}
\begin{split}
&\mathbb{E}\left[\left\Vert \mathbf{u}_{a}\right\Vert ^{4}z_{1}^{2}v_{2}^{2}\right]=\sum_{i=1}^{m}\sum_{j=1}^{m}\mathbb{E}\left[u_{i}^{2}u_{j}^{2}z_{1}^{2}v_{2}^{2}\right]
\\&
=2\underbrace{\mathbb{E}\!\left[u_{1}^{4}z_{1}^{2}v_{2}^{2}\right]}_{i=j\in\left\{ 1,2\right\} }
+
2\!\!\underbrace{\mathbb{E}\!\left[u_{1}^{2}u_{2}^{2}z_{1}^{2}v_{2}^{2}\right]}_{i=1,j=2\,\vee\,i=2,j=1}
\!\!\!
+
\underbrace{2\left(m-2\right)
\!\!
\underbrace{\mathbb{E}\!\left[u_{1}^{2}u_{3}^{2}z_{1}^{2}v_{2}^{2}\right]}_{i=1,j\ge3\,\vee\,j=1,i\ge3}
\!\!
+
2\left(m-2\right)\underbrace{\mathbb{E}\!\left[u_{2}^{2}u_{3}^{2}z_{1}^{2}v_{2}^{2}\right]}_{i=2<j\,\vee\,i>2=j}}_{\text{equal}}
+
\\
&\eqmargin
\left(m-2\right)\underbrace{\mathbb{E}\left[u_{3}^{4}z_{1}^{2}v_{2}^{2}\right]}_{i=j\ge3}+
\left(m-2\right)\left(m-3\right)\underbrace{\mathbb{E}\left[u_{3}^{2}u_{4}^{2}z_{1}^{2}v_{2}^{2}\right]}_{i\neq j\ge3}
\\&
=
2\left\langle \begin{smallmatrix}
4 & 2 & 0\\
0 & 0 & 2\\
\overrightarrow{0} & \overrightarrow{0} & \overrightarrow{0}
\end{smallmatrix}\right\rangle 
+
2\left\langle \begin{smallmatrix}
2 & 2 & 0\\
2 & 0 & 2\\
\overrightarrow{0} & \overrightarrow{0} & \overrightarrow{0}
\end{smallmatrix}\right\rangle 
+
\\
&\eqmargin
\left(m-2\right)
\left(
4\left\langle \begin{smallmatrix}
2 & 2 & 0\\
0 & 0 & 2\\
2 & 0 & 0\\
\overrightarrow{0} & \overrightarrow{0} & \overrightarrow{0}
\end{smallmatrix}\right\rangle +
\left\langle \begin{smallmatrix}
0 & 2 & 0\\
0 & 0 & 2\\
4 & 0 & 0\\
\overrightarrow{0} & \overrightarrow{0} & \overrightarrow{0}
\end{smallmatrix}\right\rangle 
+
\left(m-3\right)\left\langle \begin{smallmatrix}
0 & 2 & 0\\
0 & 0 & 2\\
2 & 0 & 0\\
2 & 0 & 0\\
\overrightarrow{0} & \overrightarrow{0} & \overrightarrow{0}
\end{smallmatrix}\right\rangle 
\right)
\\&
=\tfrac{6\left(p+1\right)\left(p+5\right)+2\left(p+3\right)^{2}}{\left(p-1\right)p\left(p+1\right)\left(p+2\right)\left(p+4\right)\left(p+6\right)}
+
\tfrac{4\left(m-2\right)\left(p^{3}+6p^{2}+3p-6\right)+3\left(m-2\right)\left(p^{3}+8p^{2}+13p-2\right)}{\left(p-2\right)\left(p-1\right)p\left(p+1\right)\left(p+2\right)\left(p+4\right)\left(p+6\right)}
+
\\
&
\eqmargin
\tfrac{\left(m-2\right)\left(m-3\right)\left(p^{3}+8p^{2}+13p-2\right)}{\left(p-2\right)\left(p-1\right)p\left(p+1\right)\left(p+2\right)\left(p+4\right)\left(p+6\right)}
%
%
\\&
=\frac{m^{2}\left(p^{3}+8p^{2}+13p-2\right)+2m\left(p^{3}+4p^{2}-7p-10\right)-8\left(2p^{2}+9p+6\right)}{\left(p-2\right)\left(p-1\right)p\left(p+1\right)\left(p+2\right)\left(p+4\right)\left(p+6\right)}
\end{split}
\end{align}

\newpage

\begin{align}
\label{eq:(ua^4)(z1v1z2v2)}
\begin{split}
&\mathbb{E}\left[\left\Vert \mathbf{u}_{a}\right\Vert ^{4}z_{1}z_{2}v_{1}v_{2}\right]=\sum_{i=1}^{m}\sum_{j=1}^{m}\mathbb{E}\left[u_{i}^{2}u_{j}^{2}z_{1}z_{2}v_{1}v_{2}\right]
\\&
=2\underbrace{\mathbb{E}\left[u_{1}^{4}z_{1}z_{2}v_{1}v_{2}\right]}_{i=j\in\left\{ 1,2\right\} }+2\underbrace{\mathbb{E}\left[u_{1}^{2}u_{2}^{2}z_{1}z_{2}v_{1}v_{2}\right]}_{i=1,j=2\,\vee\,i=2,j=1}
+
\\
&
\eqmargin
\underbrace{2\left(m-2\right)\underbrace{\mathbb{E}\left[u_{1}^{2}u_{3}^{2}z_{1}z_{2}v_{1}v_{2}\right]}_{i=1,j\ge3\,\vee\,j=1,i\ge3}+2\left(m-2\right)\underbrace{\mathbb{E}\left[u_{2}^{2}u_{3}^{2}z_{1}z_{2}v_{1}v_{2}\right]}_{i=2<j\,\vee\,i>2=j}}_{\text{equal}}
+
\\
&
\eqmargin
\left(m-2\right)\underbrace{\mathbb{E}\left[u_{3}^{4}z_{1}z_{2}v_{1}v_{2}\right]}_{i=j\ge3}+
\left(m-2\right)\left(m-3\right)\underbrace{\mathbb{E}\left[u_{3}^{2}u_{4}^{2}z_{1}z_{2}v_{1}v_{2}\right]}_{i\neq j\ge3}
\\&
=2\left\langle \begin{smallmatrix}
4 & 1 & 1\\
0 & 1 & 1\\
\overrightarrow{0} & \overrightarrow{0} & \overrightarrow{0}
\end{smallmatrix}\right\rangle 
+
2\left\langle \begin{smallmatrix}
2 & 1 & 1\\
2 & 1 & 1\\
\overrightarrow{0} & \overrightarrow{0} & \overrightarrow{0}
\end{smallmatrix}\right\rangle 
+
\\
&\eqmargin
\left(m-2\right)
\left(
4\left\langle \begin{smallmatrix}
2 & 1 & 1\\
0 & 1 & 1\\
2 & 0 & 0\\
\overrightarrow{0} & \overrightarrow{0} & \overrightarrow{0}
\end{smallmatrix}\right\rangle 
+
\left\langle \begin{smallmatrix}
0 & 1 & 1\\
0 & 1 & 1\\
4 & 0 & 0\\
\overrightarrow{0} & \overrightarrow{0} & \overrightarrow{0}
\end{smallmatrix}\right\rangle 
+
\left(m-3\right)\left\langle \begin{smallmatrix}
0 & 1 & 1\\
0 & 1 & 1\\
2 & 0 & 0\\
2 & 0 & 0\\
\overrightarrow{0} & \overrightarrow{0} & \overrightarrow{0}
\end{smallmatrix}\right\rangle 
\right)
\\&
=\tfrac{-6}{\left(p-1\right)p\left(p+2\right)\left(p+4\right)\left(p+6\right)}
+
\tfrac{-2\left(p-3\right)}{\left(p-1\right)p\left(p+1\right)\left(p+2\right)\left(p+4\right)\left(p+6\right)}
+
\\
&
\eqmargin
\tfrac{-4\left(m-2\right)\left(p^{2}+3p+6\right)-3\left(m-2\right)\left(p^{2}+7p+14\right)-\left(m-2\right)\left(m-3\right)\left(p^{2}+7p+14\right)}{\left(p-2\right)\left(p-1\right)p\left(p+1\right)\left(p+2\right)\left(p+4\right)\left(p+6\right)}
\\&
=\tfrac{-6\left(p-2\right)\left(p+1\right)-2\left(p-3\right)\left(p-2\right)-4\left(m-2\right)\left(p^{2}+3p+6\right)-3\left(m-2\right)\left(p^{2}+7p+14\right)-\left(m-2\right)\left(m-3\right)\left(p^{2}+7p+14\right)}{\left(p-2\right)\left(p-1\right)p\left(p+1\right)\left(p+2\right)\left(p+4\right)\left(p+6\right)}
\\&
=\frac{16p-8p^{2}-\left(m-2\right)\left(mp^{2}+7mp+14m+4p^{2}+12p+24\right)}{\left(p-2\right)\left(p-1\right)p\left(p+1\right)\left(p+2\right)\left(p+4\right)\left(p+6\right)}
\end{split}
\end{align}

\newpage

\begin{align}
\label{eq:(ua)^2vazaubzbubvb}
\begin{split}
&\mathbb{E}_{\mathbf{u}\perp\mathbf{v}\perp\mathbf{z}}\left[\left\Vert \mathbf{u}_{a}\right\Vert ^{2}\mathbf{v}_{a}^{\top}\mathbf{z}_{a}\mathbf{u}_{b}^{\top}\mathbf{z}_{b}\mathbf{u}_{b}^{\top}\mathbf{v}_{b}\right]=\sum_{i,j=1}^{m}\sum_{k,\ell=m+1}^{p}\mathbb{E}\left[u_{i}^{2}v_{j}z_{j}u_{k}z_{k}u_{\ell}v_{\ell}\right]
\\&
=m\left(p-m\right)\sum_{j=1}^{m}\sum_{k=m+1}^{p}\mathbb{E}\left[u_{1}^{2}v_{j}z_{j}u_{k}z_{k}u_{p}v_{p}\right]
\\&
=m\left(p-m\right)\left(\underbrace{\sum_{k=m+1}^{p}\mathbb{E}\left[u_{1}^{2}v_{1}z_{1}u_{k}z_{k}u_{p}v_{p}\right]}_{j=1}+\underbrace{\left(m-1\right)\sum_{k=m+1}^{p}\mathbb{E}\left[u_{1}^{2}v_{2}z_{2}u_{k}z_{k}u_{p}v_{p}\right]}_{j\ge2}\right)
\\&
=m\left(p-m\right)\Bigg(\underbrace{\mathbb{E}\left[u_{1}^{2}v_{1}z_{1}u_{p}^{2}v_{p}z_{p}\right]}_{k=p}
+
\underbrace{\left(p\!-\!m\!-\!1\right)
\mathbb{E}\left[u_{1}^{2}v_{1}z_{1}u_{p-1}z_{p-1}u_{p}v_{p}\right]}_{m+1\le k\le p-1}
+
\\
& \hspace{2.2cm}
\left(m-1\right)\bigg(\underbrace{\mathbb{E}\left[u_{1}^{2}v_{2}z_{2}u_{p}^{2}v_{p}z_{p}\right]}_{k=p}
+
\underbrace{\left(p\!-\!m\!-\!1\right)
\mathbb{E}\left[u_{1}^{2}v_{2}z_{2}u_{p-1}z_{p-1}u_{p}v_{p}\right]}_{m+1\le k\le p-1}\bigg)\Bigg)
\\&
=m\left(p-m\right)\Biggl(\left\langle \begin{smallmatrix}
2 & 1 & 1\\
2 & 1 & 1\\
\overrightarrow{0} & \overrightarrow{0} & \overrightarrow{0}
\end{smallmatrix}\right\rangle +\left(p-m-1\right)\left\langle \begin{smallmatrix}
2 & 1 & 1\\
1 & 1 & 0\\
1 & 0 & 1\\
\overrightarrow{0} & \overrightarrow{0} & \overrightarrow{0}
\end{smallmatrix}\right\rangle 
+
\\&
\hspace{2.2cm} 
\left(m-1\right)\bigg(\left\langle \begin{smallmatrix}
2 & 0 & 0\\
0 & 1 & 1\\
2 & 1 & 1\\
\overrightarrow{0} & \overrightarrow{0} & \overrightarrow{0}
\end{smallmatrix}\right\rangle +\left(p-m-1\right)\left\langle \begin{smallmatrix}
2 & 0 & 0\\
0 & 1 & 1\\
1 & 1 & 0\\
1 & 0 & 1\\
\overrightarrow{0} & \overrightarrow{0} & \overrightarrow{0}
\end{smallmatrix}\right\rangle \bigg)\Biggr)
\\&
=m\left(p-m\right)\left(
\tfrac{-\left(p-3\right)\left(p-2\right)+4p\left(p-m-1\right)}{\left(p-2\right)\left(p-1\right)p\left(p+1\right)\left(p+2\right)\left(p+4\right)\left(p+6\right)}
+
\tfrac{
\left(m-1\right)
\left(
-\left(p^{2}+3p+6\right)+2\left(p-m-1\right)\left(p+2\right)
\right)
}{\left(p-2\right)\left(p-1\right)p\left(p+1\right)\left(p+2\right)\left(p+4\right)\left(p+6\right)}
\right)
%
%
\\&
=\frac{m\left(p-m\right)\left(\left(m+2\right)p^{2}+\left(2-3m\right)p-2m^{2}\left(p+2\right)-6m+4\right)}{\left(p-2\right)\left(p-1\right)p\left(p+1\right)\left(p+2\right)\left(p+4\right)\left(p+6\right)}
\end{split}
\end{align}

\newpage

\begin{align} 
\label{eq:(ubzb)^2(ubvb)^2}
\begin{split}
&\mathbb{E}\left[\left(\mathbf{u}_{b}^{\top}\mathbf{z}_{b}\right)^{2}\left(\mathbf{u}_{b}^{\top}\mathbf{v}_{b}\right)^{2}\right]=\mathbb{E}\left[\left(-\mathbf{u}_{a}^{\top}\mathbf{z}_{a}\right)^{2}\left(\mathbf{u}_{b}^{\top}\mathbf{v}_{b}\right)^{2}\right]=\mathbb{E}\left[\left(\mathbf{u}_{a}^{\top}\mathbf{z}_{a}\right)^{2}\left(\mathbf{u}_{b}^{\top}\mathbf{v}_{b}\right)^{2}\right]
\\&
=\sum_{i,j=1}^{m}\sum_{k,\ell=m+1}^{p-m}\mathbb{E}\left[u_{i}u_{j}u_{k}u_{\ell}z_{i}z_{j}v_{k}v_{\ell}\right]=m\left(p-m\right)\sum_{j=1}^{m}\sum_{k=m+1}^{p-m}\mathbb{E}\left[u_{1}u_{j}u_{k}u_{p}z_{1}z_{j}v_{k}v_{p}\right]
\\&
=m\left(p-m\right)\Biggl(\underbrace{\mathbb{E}\left[u_{1}^{2}u_{p}^{2}z_{1}^{2}v_{p}^{2}\right]}_{j=1,\,k=p}+\underbrace{\left(p-m-1\right)\mathbb{E}\left[u_{1}^{2}u_{p-1}u_{p}z_{1}^{2}v_{p-1}v_{p}\right]}_{j=1,\,m+1\le k\le p-1}
+
\\
&
\hspace{2.3cm}
\underbrace{\left(m-1\right)\mathbb{E}\left[u_{1}u_{2}u_{p}^{2}z_{1}z_{2}v_{p}^{2}\right]}_{2\le j\le m,\,k=p}
+
\\
&
\hspace{2.3cm}
\underbrace{\left(m-1\right)\left(p-m-1\right)\mathbb{E}\left[u_{1}u_{2}u_{p-1}u_{p}z_{1}z_{2}v_{p-1}v_{p}\right]}_{2\le j\le m,\,m+1\le k\le p-1}\Biggr)
\\&
=m\left(p-m\right)
\Bigg(\left\langle \begin{smallmatrix}
2 & 0 & 2\\
2 & 2 & 0\\
\overrightarrow{0} & \overrightarrow{0} & \overrightarrow{0}
\end{smallmatrix}\right\rangle +\underbrace{\left(p-m-1\right)\left\langle \begin{smallmatrix}
2 & 0 & 2\\
1 & 1 & 0\\
1 & 1 & 0\\
\overrightarrow{0} & \overrightarrow{0} & \overrightarrow{0}
\end{smallmatrix}\right\rangle +\left(m-1\right)\left\langle \begin{smallmatrix}
1 & 0 & 1\\
1 & 0 & 1\\
2 & 2 & 0\\
\overrightarrow{0} & \overrightarrow{0} & \overrightarrow{0}
\end{smallmatrix}\right\rangle }_{
\text{equal due to invariance (\propref{prop:invariance})}}
+
\\
&
\hspace{2.3cm}
\left(m-1\right)\left(p-m-1\right)\left\langle \begin{smallmatrix}
1 & 0 & 1\\
1 & 0 & 1\\
1 & 1 & 0\\
1 & 1 & 0\\
\overrightarrow{0} & \overrightarrow{0} & \overrightarrow{0}
\end{smallmatrix}\right\rangle \Bigg)
\\&
=m\left(p-m\right)\left(\left\langle \begin{smallmatrix}
2 & 0 & 2\\
2 & 2 & 0\\
\overrightarrow{0} & \overrightarrow{0} & \overrightarrow{0}
\end{smallmatrix}\right\rangle 
+
\left(p-2\right)\left\langle \begin{smallmatrix}
2 & 0 & 2\\
1 & 1 & 0\\
1 & 1 & 0\\
\overrightarrow{0} & \overrightarrow{0} & \overrightarrow{0}
\end{smallmatrix}\right\rangle 
+
\left(m-1\right)\left(p-m-1\right)\left\langle \begin{smallmatrix}
1 & 0 & 1\\
1 & 0 & 1\\
1 & 1 & 0\\
1 & 1 & 0\\
\overrightarrow{0} & \overrightarrow{0} & \overrightarrow{0}
\end{smallmatrix}\right\rangle \right)
\\&
=m\left(p-m\right)\bigg(\tfrac{\left(p+3\right)^{2}}{\left(p-1\right)p\left(p+1\right)\left(p+2\right)\left(p+4\right)\left(p+6\right)}+
\tfrac{-\left(p-2\right)\left(p^{2}+3p+6\right)}{\left(p-2\right)\left(p-1\right)p\left(p+1\right)\left(p+2\right)\left(p+4\right)\left(p+6\right)}+
\\
&
\hspace{2.3cm}
\tfrac{\left(m-1\right)\left(p-m-1\right)}{\left(p-1\right)p\left(p+1\right)\left(p+2\right)\left(p+4\right)\left(p+6\right)}\bigg)
\\&
=m\left(p-m\right)\left(\frac{\left(p+3\right)^{2}-\left(p^{2}+3p+6\right)+\left(m-1\right)\left(p-m-1\right)}{\left(p-1\right)p\left(p+1\right)\left(p+2\right)\left(p+4\right)\left(p+6\right)}\right)
\\&
=\frac{m\left(p-m\right)\left(-m^{2}+mp+2p+4\right)}{\left(p-1\right)p\left(p+1\right)\left(p+2\right)\left(p+4\right)\left(p+6\right)}
\end{split}
\end{align}

\bigskip

\begin{align}
\label{eq:(ua)^2(vbzb)^2}
\begin{split}
&\mathbb{E}_{\mathbf{u}\perp\mathbf{v}\perp\mathbf{z}}\left[\left\Vert \mathbf{z}_{a}\right\Vert ^{2}\left(\mathbf{v}_{b}^{\top}\mathbf{u}_{b}\right)^{2}\right]=\mathbb{E}\left[\left\Vert \mathbf{u}_{a}\right\Vert ^{2}\left(\mathbf{v}_{b}^{\top}\mathbf{z}_{b}\right)^{2}\right]=\mathbb{E}\left[\sum_{i=1}^{m}u_{i}^{2}\left(\sum_{j=m+1}^{p}v_{j}z_{j}\right)^{2}\right]
\\&
=\sum_{i=1}^{m}\sum_{j,k=m+1}^{p}\mathbb{E}\left[u_{i}^{2}v_{j}z_{j}v_{k}z_{k}\right]=m\sum_{j,k=m+1}^{p}\mathbb{E}\left[u_{1}^{2}v_{j}z_{j}v_{k}z_{k}\right]
\\&
=m\underbrace{\left(p-m\right)\mathbb{E}\left[u_{1}^{2}v_{p}^{2}z_{p}^{2}\right]}_{j=k}+m\underbrace{\left(p-m\right)\left(p-m-1\right)\mathbb{E}\left[u_{1}^{2}v_{p-1}z_{p-1}v_{p}z_{p}\right]}_{j\neq k}
\\&
=m\left(p-m\right)\left(\left\langle \begin{smallmatrix}
2 & 2 & 0\\
0 & 0 & 2\\
\overrightarrow{0} & \overrightarrow{0} & \overrightarrow{0}
\end{smallmatrix}\right\rangle +\left(p-m-1\right)\left\langle \begin{smallmatrix}
2 & 0 & 0\\
0 & 1 & 1\\
0 & 1 & 1\\
\overrightarrow{0} & \overrightarrow{0} & \overrightarrow{0}
\end{smallmatrix}\right\rangle \right)
\\&
=m\left(p-m\right)\left(\tfrac{\left(p+3\right)}{\left(p-1\right)p\left(p+2\right)\left(p+4\right)}
-
\tfrac{\left(p-m-1\right)\left(p+2\right)}{\left(p-2\right)\left(p-1\right)p\left(p+2\right)\left(p+4\right)}\right)
\\&
=\frac{m\left(p-m\right)\left(mp+2m-4\right)}{\left(p-2\right)\left(p-1\right)p\left(p+2\right)\left(p+4\right)}
\end{split}
\end{align}


\begin{align} 
\label{eq:(ub)^2(vaza)(uaza)(ubvb)}
\begin{split}
&\mathbb{E}\left[\left\Vert \mathbf{u}_{b}\right\Vert ^{2}\mathbf{v}_{a}^{\top}\mathbf{z}_{a}\mathbf{u}_{a}^{\top}\mathbf{z}_{a}\mathbf{u}_{b}^{\top}\mathbf{v}_{b}\right]=\mathbb{E}\left[\mathbf{u}_{b}^{\top}\mathbf{u}_{b}\mathbf{u}_{b}^{\top}\mathbf{v}_{b}\mathbf{z}_{a}^{\top}\mathbf{u}_{a}\mathbf{z}_{a}^{\top}\mathbf{v}_{a}\right]
\\&
=\sum_{i,j=1}^{m}\sum_{k,\ell=m+1}^{p}\mathbb{E}\left[u_{\ell}^{2}u_{k}v_{k}u_{i}z_{i}z_{j}v_{j}\right]=\left(p-m\right)m\sum_{j=1}^{m}\sum_{k=m+1}^{p}\mathbb{E}\left[u_{1}u_{p}^{2}u_{k}v_{k}z_{1}z_{j}v_{j}\right]
\\&
=\left(p-m\right)m\sum_{k=m+1}^{p}\left(\mathbb{E}\left[u_{1}u_{p}^{2}v_{1}z_{1}^{2}\left(u_{k}v_{k}\right)\right]+\left(m-1\right)\mathbb{E}\left[u_{1}u_{p}^{2}v_{2}z_{1}z_{2}\left(u_{k}v_{k}\right)\right]\right)
\\&
=\left(p-m\right)m\left(\mathbb{E}\left[u_{1}u_{p}^{3}v_{1}v_{p}z_{1}^{2}\right]+\left(p-m-1\right)\mathbb{E}\left[u_{1}u_{p-1}u_{p}^{2}v_{1}v_{p-1}z_{1}^{2}\right]\right)+
\\&
\eqmargin
\left(p-m\right)m\left(m-1\right)
\left(\mathbb{E}\left[u_{1}u_{p}^{3}v_{2}v_{p}z_{1}z_{2}\right]+\left(p-m-1\right)\mathbb{E}\left[u_{1}u_{p-1}u_{p}^{2}v_{2}v_{p-1}z_{1}z_{2}\right]\right)
\\&
=\left(p-m\right)m\left(\left\langle \begin{smallmatrix}
1 & 1 & 2\\
3 & 1 & 0\\
\overrightarrow{0} & \overrightarrow{0} & \overrightarrow{0}
\end{smallmatrix}\right\rangle +\left(p-m-1\right)\left\langle \begin{smallmatrix}
1 & 1 & 2\\
1 & 1 & 0\\
2 & 0 & 0\\
\overrightarrow{0} & \overrightarrow{0} & \overrightarrow{0}
\end{smallmatrix}\right\rangle\right)
+
\\&
\eqmargin
\left(p-m\right)m\left(m-1\right)
\left(\left\langle \begin{smallmatrix}
1 & 0 & 1\\
0 & 1 & 1\\
3 & 1 & 0\\
\overrightarrow{0} & \overrightarrow{0} & \overrightarrow{0}
\end{smallmatrix}\right\rangle +\left(p-m-1\right)\left\langle \begin{smallmatrix}
1 & 0 & 1\\
0 & 1 & 1\\
1 & 1 & 0\\
2 & 0 & 0\\
\overrightarrow{0} & \overrightarrow{0} & \overrightarrow{0}
\end{smallmatrix}\right\rangle \right)
\\&
=\left(p-m\right)m\left(\tfrac{-3\left(p+3\right)-\left(p+3\right)\left(p-m-1\right)}{\left(p-1\right)p\left(p+1\right)\left(p+2\right)\left(p+4\right)\left(p+6\right)}+\left(m-1\right)\left(\tfrac{6\left(p+2\right)+2\left(p+2\right)\left(p-m-1\right)}{\left(p-2\right)\left(p-1\right)p\left(p+1\right)\left(p+2\right)\left(p+4\right)\left(p+6\right)}\right)\right)
\\&
=\frac{\left(p-m\right)m\left(p-m+2\right)\left(2mp+4m-p^{2}-3p+2\right)}{\left(p-2\right)\left(p-1\right)p\left(p+1\right)\left(p+2\right)\left(p+4\right)\left(p+6\right)}
\end{split}
\end{align}

\medskip

\begin{align} 
\label{eq:(ua)^2(vaza)(vbxb)(xbzb)}
\begin{split}
&\mathbb{E}\left[\left\Vert \mathbf{u}_{a}\right\Vert ^{2}\mathbf{v}_{a}^{\top}\mathbf{z}_{a}\cdot\mathbf{v}_{b}^{\top}\mathbf{x}_{b}\cdot\mathbf{x}_{b}^{\top}\mathbf{z}_{b}\right]=\sum_{i,j=1}^{m}\sum_{k,\ell=m+1}^{p}\mathbb{E}\left[u_{i}^{2}v_{j}z_{j}v_{k}x_{k}x_{\ell}z_{\ell}\right]
\\&
=m\left(p-m\right)\sum_{j=1}^{m}\sum_{k=m+1}^{p}\mathbb{E}\left[u_{1}^{2}v_{j}z_{j}v_{k}x_{k}x_{p}z_{p}\right]
\\&
=m\left(p-m\right)\left(\underbrace{\mathbb{E}\left[u_{1}^{2}v_{1}z_{1}v_{p}x_{p}^{2}z_{p}\right]}_{j=1,\,k=p}+\underbrace{\left(m-1\right)\mathbb{E}\left[u_{1}^{2}v_{2}z_{2}v_{p}x_{p}^{2}z_{p}\right]}_{2\le j\le m,\,k=p}\right)+
\\&
\eqmargin
m\left(p\!-\!m\right)\left(p\!-\!m\!-\!1\right)
\!
\left(\underbrace{\mathbb{E}\!\left[u_{1}^{2}v_{1}z_{1}v_{p-1}x_{p-1}x_{p}z_{p}\right]}_{j=1,\,m+1\le k\le p-1}+\underbrace{\left(m\!-\!1\right)\mathbb{E}\!\left[u_{1}^{2}v_{2}z_{2}v_{p-1}x_{p-1}x_{p}z_{p}\right]}_{2\le j\le m,\,m+1\le k\le p-1}\right)
\\&
=m\left(p-m\right)\left(\left\langle \begin{smallmatrix}
2 & 1 & 1 & 0\\
0 & 1 & 1 & 2\\
\overrightarrow{0} & \overrightarrow{0} & \overrightarrow{0} & \overrightarrow{0}
\end{smallmatrix}\right\rangle +\left(m-1\right)\left\langle \begin{smallmatrix}
2 & 0 & 0 & 0\\
0 & 1 & 1 & 0\\
0 & 1 & 1 & 2\\
\overrightarrow{0} & \overrightarrow{0} & \overrightarrow{0} & \overrightarrow{0}
\end{smallmatrix}\right\rangle \right)+
\\&
\eqmargin
m\left(p-m\right)\left(\left(p-m-1\right)\left\langle \begin{smallmatrix}
2 & 1 & 1 & 0\\
0 & 1 & 0 & 1\\
0 & 0 & 1 & 1\\
\overrightarrow{0} & \overrightarrow{0} & \overrightarrow{0} & \overrightarrow{0}
\end{smallmatrix}\right\rangle +\left(m-1\right)\left(p-m-1\right)\left\langle \begin{smallmatrix}
2 & 0 & 0 & 0\\
0 & 1 & 1 & 0\\
0 & 1 & 0 & 1\\
0 & 0 & 1 & 1\\
\overrightarrow{0} & \overrightarrow{0} & \overrightarrow{0} & \overrightarrow{0}
\end{smallmatrix}\right\rangle \right)
\\&
=m\left(p\!-\!m\right)
\!
\left(\!\tfrac{-\left(p-2\right)\left(p+3\right)+2\left(p-m-1\right)\left(p+2\right)}{\left(p-2\right)\left(p-1\right)p\left(p+1\right)\left(p+2\right)\left(p+4\right)\left(p+6\right)}
\!+\!
\tfrac{
\left(m-1\right)
\left(
2\left(p-m-1\right)p\left(p+4\right)-\left(p^{2}+5p+2\right)\left(p-3\right)
\right)}{\left(p-3\right)\left(p-2\right)\left(p-1\right)p\left(p+1\right)\left(p+2\right)\left(p+4\right)\left(p+6\right)}\!\right)
\\&
=\tfrac{m\left(p-m\right)\left(-2m^{2}p^{2}-8m^{2}p+mp^{3}+4mp^{2}+15mp+18m-6p^{2}-6p-12\right)}{\left(p-3\right)\left(p-2\right)\left(p-1\right)p\left(p+1\right)\left(p+2\right)\left(p+4\right)\left(p+6\right)}
\end{split}
\end{align}

\medskip

\begin{align} 
\label{eq:(uaQmza)(uaQmva)(xbzb)(xbvb)}
\begin{split}
&\mathbb{E}\left[\mathbf{u}_{a}^{\top}\mathbf{Q}_{m}\mathbf{z}_{a}\mathbf{u}_{a}^{\top}\mathbf{Q}_{m}\mathbf{v}_{a}\mathbf{x}_{b}^{\top}\mathbf{z}_{b}\mathbf{x}_{b}^{\top}\mathbf{v}_{b}\right]
\\&
=\mathbb{E}_{\mathbf{r}\sim\mathcal{S}^{m-1}}\left[\left\Vert \mathbf{u}_{a}\right\Vert ^{2}\left(\mathbf{z}_{a}^{\top}\mathbf{r}\mathbf{r}^{\top}\mathbf{v}_{a}\right)\left(\mathbf{x}_{b}^{\top}\mathbf{z}_{b}\cdot\mathbf{x}_{b}^{\top}\mathbf{v}_{b}\right)\right]
=\frac{1}{m}\underbrace{\mathbb{E}\left[\left\Vert \mathbf{u}_{a}\right\Vert ^{2}\mathbf{z}_{a}^{\top}\mathbf{v}_{a}\cdot\mathbf{x}_{b}^{\top}\mathbf{z}_{b}\cdot\mathbf{x}_{b}^{\top}\mathbf{v}_{b}\right]}_{\text{solved in \eqref{eq:(ua)^2(vaza)(vbxb)(xbzb)}}}
\\&
=\frac{\left(p-m\right)\left(-2m^{2}p^{2}-8m^{2}p+mp^{3}+4mp^{2}+15mp+18m-6p^{2}-6p-12\right)}{\left(p-3\right)\left(p-2\right)\left(p-1\right)p\left(p+1\right)\left(p+2\right)\left(p+4\right)\left(p+6\right)}
\end{split}
\end{align}

\bigskip

\begin{align} 
\label{eq:(ub)^2(vbzb)(uaQmva)(zaQmua)}
\begin{split}
&\mathbb{E}\left[\mathbf{u}_{b}^{\top}\mathbf{u}_{b}\mathbf{z}_{b}^{\top}\mathbf{v}_{b}\cdot\mathbf{u}_{a}^{\top}\mathbf{Q}_{m}\mathbf{v}_{a}\mathbf{z}_{a}^{\top}\mathbf{Q}_{m}\mathbf{u}_{a}\right]=\sum_{i,j=1}^{m}\sum_{k,\ell=1}^{m}\mathbb{E}\left[\mathbf{u}_{b}^{\top}\mathbf{u}_{b}\mathbf{z}_{b}^{\top}\mathbf{v}_{b}\cdot u_{i}q_{ij}v_{j}z_{k}q_{k\ell}u_{\ell}\right]
\\&
=\sum_{i,j=1}^{m}\sum_{k,\ell=1}^{m}\mathbb{E}\left[\mathbf{u}_{b}^{\top}\mathbf{u}_{b}\mathbf{z}_{b}^{\top}\mathbf{v}_{b}\cdot u_{i}v_{j}z_{k}u_{\ell}\right]\mathbb{E}\left[q_{ij}q_{k\ell}\right]\\
\explain{\text{\propref{prop:odd}}}
&=\frac{1}{m}\sum_{i,j=1}^{m}\mathbb{E}\left[\mathbf{u}_{b}^{\top}\mathbf{u}_{b}\mathbf{z}_{b}^{\top}\mathbf{v}_{b}\cdot u_{i}v_{j}z_{i}u_{j}\right]\underbrace{\mathbb{E}\left[q_{ij}^{2}\right]}_{=1/m}
\\&
=\frac{1}{m}\mathbb{E}\left[\mathbf{u}_{b}^{\top}\mathbf{u}_{b}\mathbf{z}_{b}^{\top}\mathbf{v}_{b}\mathbf{u}_{a}^{\top}\mathbf{v}_{a}\mathbf{u}_{a}^{\top}\mathbf{z}_{a}\right]=\frac{1}{m}\mathbb{E}\left[\left\Vert \mathbf{u}_{b}\right\Vert ^{2}\left(-\mathbf{z}_{a}^{\top}\mathbf{v}_{a}\right)\left(-\mathbf{u}_{b}^{\top}\mathbf{v}_{b}\right)\mathbf{u}_{a}^{\top}\mathbf{z}_{a}\right]
\\&
=\frac{1}{m}\underbrace{\mathbb{E}\left[\left\Vert \mathbf{u}_{b}\right\Vert ^{2}\mathbf{v}_{a}^{\top}\mathbf{z}_{a}\mathbf{u}_{a}^{\top}\mathbf{z}_{a}\mathbf{u}_{b}^{\top}\mathbf{v}_{b}\right]}_{\text{solved in \eqref{eq:(ub)^2(vaza)(uaza)(ubvb)}}}
\\&
=\frac{\left(p-m\right)\left(p-m+2\right)\left(2mp+4m-p^{2}-3p+2\right)}{\left(p-2\right)\left(p-1\right)p\left(p+1\right)\left(p+2\right)\left(p+4\right)\left(p+6\right)}
\end{split}
\end{align}

\bigskip

\begin{align} 
\label{eq:(ub)^2(ubvb)(zaQmua)(zaQmva)}
\begin{split}
&\mathbb{E}\left[\left\Vert \mathbf{u}_{b}\right\Vert ^{2}\mathbf{u}_{b}^{\top}\mathbf{v}_{b}\cdot\mathbf{z}_{a}^{\top}\mathbf{Q}_{m}\mathbf{u}_{a}\mathbf{z}_{a}^{\top}\mathbf{Q}_{m}\mathbf{v}_{a}\right]
\\&
=\mathbb{E}\left[\left\Vert \mathbf{z}_{a}\right\Vert ^{2}\left\Vert \mathbf{u}_{b}\right\Vert ^{2}\mathbf{u}_{b}^{\top}\mathbf{v}_{b}
\mathbf{u}_{a}^{\top}
\mathbb{E}_{\mathbf{r}\sim\mathcal{S}^{m-1}}\!
\left(\mathbf{r}^{\top}\mathbf{r}\right)\mathbf{v}_{a}\right]
=
\tfrac{1}{m}\mathbb{E}\left[\left\Vert \mathbf{z}_{a}\right\Vert ^{2}\left(\sum_{\ell=m+1}^{p}u_{\ell}^{2}\right)\mathbf{u}_{b}^{\top}\mathbf{v}_{b}\mathbf{u}_{a}^{\top}\mathbf{v}_{a}\right]
\\&
=\tfrac{\left(p-m\right)}{m}\mathbb{E}\left[u_{p}^{2}\left\Vert \mathbf{z}_{a}\right\Vert ^{2}\left(\sum_{i=m+1}^{p}u_{i}v_{i}\right)\cdot\left(\sum_{j=1}^{m}u_{j}v_{j}\right)\right]
\\&
=
\left(p-m\right)\mathbb{E}\left[u_{1}v_{1}u_{p}^{2}\left\Vert \mathbf{z}_{a}\right\Vert ^{2}\left(\sum_{i=m+1}^{p}u_{i}v_{i}\right)\right]
\\&
=\left(p-m\right)
\left(\mathbb{E}\!\left[u_{1}u_{p}^{3}v_{1}v_{p}\left\Vert \mathbf{z}_{a}\right\Vert ^{2}\right]+
\left(p\!-\!m\!-\!1\right)\mathbb{E}\left[u_{1}u_{p-1}u_{p}^{2}v_{1}v_{p-1}\left\Vert \mathbf{z}_{a}\right\Vert ^{2}\right]\right)
\\&
=\left(p-m\right)
\left(\mathbb{E}\!\left[u_{1}u_{p}^{3}v_{1}v_{p}
\!\left(\sum_{i=1}^{m}z_{i}^{2}\right)\right]+\left(p-m-1\right)\mathbb{E}\left[u_{1}u_{p-1}u_{p}^{2}v_{1}v_{p-1}\left(\sum_{i=1}^{m}z_{i}^{2}\right)\right]\right)
\\&
=\left(p-m\right)\left(\mathbb{E}\left[u_{1}u_{p}^{3}v_{1}v_{p}z_{1}^{2}\right]+\left(m-1\right)\mathbb{E}\left[u_{1}u_{p}^{3}v_{1}v_{p}z_{2}^{2}\right]\right)+
\\&
\eqmargin
\left(p-m\right)\left(p-m-1\right)\left(\mathbb{E}\left[u_{1}u_{p-1}u_{p}^{2}v_{1}v_{p-1}z_{1}^{2}\right]+\left(m-1\right)\mathbb{E}\left[u_{1}u_{p-1}u_{p}^{2}v_{1}v_{p-1}z_{2}^{2}\right]\right)
\\&
=\left(p-m\right)\left(\left\langle \begin{smallmatrix}
1 & 1 & 2\\
3 & 1 & 0\\
\overrightarrow{0} & \overrightarrow{0} & \overrightarrow{0}
\end{smallmatrix}\right\rangle +\left(m-1\right)\left\langle \begin{smallmatrix}
1 & 1 & 0\\
0 & 0 & 2\\
3 & 1 & 0\\
\overrightarrow{0} & \overrightarrow{0} & \overrightarrow{0}
\end{smallmatrix}\right\rangle\right)
+
\\
&
\eqmargin
\left(p-m\right)
\left(p-m-1\right)
\left(\left\langle \begin{smallmatrix}
1 & 1 & 2\\
1 & 1 & 0\\
2 & 0 & 0\\
\overrightarrow{0} & \overrightarrow{0} & \overrightarrow{0}
\end{smallmatrix}\right\rangle +\left(m-1\right)\left\langle \begin{smallmatrix}
1 & 1 & 0\\
0 & 0 & 2\\
1 & 1 & 0\\
2 & 0 & 0\\
\overrightarrow{0} & \overrightarrow{0} & \overrightarrow{0}
\end{smallmatrix}\right\rangle \right)
\\&
=\left(p-m\right)\tfrac{-3\left(p+3\right)}{\left(p-1\right)p\left(p+1\right)\left(p+2\right)\left(p+4\right)\left(p+6\right)}+
\\&
\eqmargin
\left(p-m\right)\tfrac{-3\left(m-1\right)\left(p^{2}+5p+2\right)+\left(p-m-1\right)\left(-\left(p-2\right)\left(p+3\right)-\left(m-1\right)\left(p^{2}+5p+2\right)\right)}{\left(p-2\right)\left(p-1\right)p\left(p+1\right)\left(p+2\right)\left(p+4\right)\left(p+6\right)}
\\&
=\left(p-m\right)\left(\tfrac{\left(p-2\right)\left(-3\left(p+3\right)\right)+m^{2}p^{2}+5m^{2}p+2m^{2}-mp^{3}-7mp^{2}-16mp-12m+7p^{2}+19p-2}{\left(p-2\right)\left(p-1\right)p\left(p+1\right)\left(p+2\right)\left(p+4\right)\left(p+6\right)}\right)
\\&
=\tfrac{\left(p-m\right)\left(m^{2}p^{2}+5m^{2}p+2m^{2}-mp^{3}-7mp^{2}-16mp-12m+4p^{2}+16p+16\right)}{\left(p-2\right)\left(p-1\right)p\left(p+1\right)\left(p+2\right)\left(p+4\right)\left(p+6\right)}
\end{split}
\end{align}

\newpage

\subsection{Auxiliary derivations with four vectors}

\begin{align}
\label{eq:(ubvb)(xbzb)(ubxb)(vbzb)}
\begin{split}
&\mathbb{E}\left[\mathbf{u}_{b}^{\top}\mathbf{v}_{b}\cdot\mathbf{x}_{b}^{\top}\mathbf{z}_{b}\cdot\mathbf{u}_{b}^{\top}\mathbf{x}_{b}\cdot\mathbf{v}_{b}^{\top}\mathbf{z}_{b}\right]
=
\mathbb{E}\left[\mathbf{u}_{a}^{\top}\mathbf{v}_{a}\cdot\mathbf{x}_{a}^{\top}\mathbf{z}_{a}\cdot\mathbf{u}_{b}^{\top}\mathbf{x}_{b}\cdot\mathbf{v}_{b}^{\top}\mathbf{z}_{b}\right]
\\&
=\sum_{i,j=1}^{m}\sum_{k,\ell=m+1}^{p}\mathbb{E}\left[u_{i}v_{i}x_{j}z_{j}u_{k}x_{k}v_{\ell}z_{\ell}\right]=m\left(p-m\right)\sum_{j=1}^{m}\sum_{k=m+1}^{p}\mathbb{E}\left[u_{1}v_{1}x_{j}z_{j}u_{k}x_{k}v_{p}z_{p}\right]
\\&
=m\left(p-m\right)\bigg(\underbrace{\mathbb{E}\left[u_{1}v_{1}x_{1}z_{1}u_{p}x_{p}v_{p}z_{p}\right]}_{j=1,\,k=p}+\underbrace{\left(m-1\right)\mathbb{E}\left[u_{1}v_{1}x_{2}z_{2}u_{p}x_{p}v_{p}z_{p}\right]}_{2\le j\le m,\,k=p}\bigg)+
\\&
\eqmargin
m\left(p-m\right)\bigg(\underbrace{\left(p-m-1\right)\mathbb{E}\left[u_{1}v_{1}x_{1}z_{1}u_{p-1}x_{p-1}v_{p}z_{p}\right]}_{j=1,\,m+1\le k\le p-1}+
\\
&
\eqmargin\hspace{2cm}
\underbrace{\left(m-1\right)\left(p-m-1\right)\mathbb{E}\left[u_{1}v_{1}x_{2}z_{2}u_{p-1}x_{p-1}v_{p}z_{p}\right]}_{2\le j\le m,\,m+1\le k\le p-1}\bigg)
\\&
=m\left(p-m\right)\bigg(\left\langle \begin{smallmatrix}
1 & 1 & 1 & 1\\
1 & 1 & 1 & 1\\
\overrightarrow{0} & \overrightarrow{0} & \overrightarrow{0} & \overrightarrow{0}
\end{smallmatrix}\right\rangle +\left(m-1\right)\left\langle \begin{smallmatrix}
1 & 1 & 1 & 1\\
1 & 0 & 1 & 0\\
0 & 1 & 0 & 1\\
\overrightarrow{0} & \overrightarrow{0} & \overrightarrow{0} & \overrightarrow{0}
\end{smallmatrix}\right\rangle +
\\
&
\eqmargin\hspace{2cm}
\left(p-m-1\right)\left\langle \begin{smallmatrix}
1 & 1 & 1 & 1\\
1 & 1 & 0 & 0\\
0 & 0 & 1 & 1\\
\overrightarrow{0} & \overrightarrow{0} & \overrightarrow{0} & \overrightarrow{0}
\end{smallmatrix}\right\rangle +\left(m-1\right)\left(p-m-1\right)\left\langle \begin{smallmatrix}
1 & 0 & 1 & 0\\
0 & 1 & 0 & 1\\
1 & 1 & 0 & 0\\
0 & 0 & 1 & 1\\
\overrightarrow{0} & \overrightarrow{0} & \overrightarrow{0} & \overrightarrow{0}
\end{smallmatrix}\right\rangle \bigg)
\\&
=m\left(p-m\right)\Bigg(\left\langle \begin{smallmatrix}
1 & 1 & 1 & 1\\
1 & 1 & 1 & 1\\
\overrightarrow{0} & \overrightarrow{0} & \overrightarrow{0} & \overrightarrow{0}
\end{smallmatrix}\right\rangle +\left(p-2\right)\left\langle \begin{smallmatrix}
1 & 1 & 1 & 1\\
1 & 1 & 0 & 0\\
0 & 0 & 1 & 1\\
\overrightarrow{0} & \overrightarrow{0} & \overrightarrow{0} & \overrightarrow{0}
\end{smallmatrix}\right\rangle 
+
\\
&
\hspace{2.3cm}
\left(m-1\right)\left(p-m-1\right)\left\langle \begin{smallmatrix}
1 & 0 & 1 & 0\\
0 & 1 & 0 & 1\\
1 & 1 & 0 & 0\\
0 & 0 & 1 & 1\\
\overrightarrow{0} & \overrightarrow{0} & \overrightarrow{0} & \overrightarrow{0}
\end{smallmatrix}\right\rangle \Bigg)
\\&
=m\left(p-m\right)\left(\tfrac{3\left(p-3\right)}{\left(p-3\right)\left(p-1\right)p\left(p+1\right)\left(p+2\right)\left(p+4\right)\left(p+6\right)}+\tfrac{\left(p-2\right)}{\left(p-1\right)p\left(p+1\right)\left(p+2\right)\left(p+4\right)\left(p+6\right)}\right)+
\\&
\eqmargin
m\left(p-m\right)\left(\tfrac{\left(m-1\right)\left(p-m-1\right)\left(-5p-6\right)}{\left(p-3\right)\left(p-2\right)\left(p-1\right)p\left(p+1\right)\left(p+2\right)\left(p+4\right)\left(p+6\right)}\right)
\\&
=\frac{m\left(p-m\right)\left(5m^{2}p+6m^{2}-5mp^{2}-6mp+p^{3}+p^{2}+2p\right)}{\left(p-3\right)\left(p-2\right)\left(p-1\right)p\left(p+1\right)\left(p+2\right)\left(p+4\right)\left(p+6\right)}
\end{split}
\end{align}

\newpage

\begin{align} 
\label{eq:(za)^2(uaxa)(ubvb)(xbvb)}
\begin{split}
&\mathbb{E}\left[\left\Vert \mathbf{z}_{a}\right\Vert ^{2}\mathbf{u}_{a}^{\top}\mathbf{x}_{a}\mathbf{u}_{b}^{\top}\mathbf{v}_{b}\mathbf{x}_{b}^{\top}\mathbf{v}_{b}\right]
\\&
=\sum_{i,j=1}^{m}\sum_{k,\ell=m+1}^{p}\mathbb{E}\left[z_{i}^{2}u_{j}x_{j}u_{k}v_{k}x_{\ell}v_{\ell}\right]=m\left(p-m\right)\sum_{j=1}^{m}\sum_{k=m+1}^{p}\mathbb{E}\left[z_{1}^{2}u_{j}x_{j}u_{k}v_{k}x_{p}v_{p}\right]
\\&
=m\left(p\!-\!m\right)
\bigg(\underbrace{\mathbb{E}\left[u_{1}x_{1}z_{1}^{2}u_{p}v_{p}^{2}x_{p}\right]}_{j=1,\,k=p}+\underbrace{\left(m-1\right)\mathbb{E}\left[u_{2}x_{2}z_{1}^{2}u_{p}v_{p}^{2}x_{p}\right]}_{2\le j\le m,\,k=p}\bigg)+
\\&
\eqmargin
m\left(p\!-\!m\right)\left(p\!-\!m\!-\!1\right)
\!
\left(\underbrace{\mathbb{E}\!\left[u_{1}x_{1}z_{1}^{2}u_{p-1}v_{p-1}x_{p}v_{p}\right]}_{j=1,\,m+1\le k\le p-1}+\underbrace{\left(m\!-\!1\right)\mathbb{E}\!\left[u_{2}x_{2}z_{1}^{2}u_{p-1}v_{p-1}x_{p}v_{p}\right]}_{2\le j\le m,\,m+1\le k\le p-1}\right)
\\&
=m\left(p-m\right)
\bigg(\left\langle \begin{smallmatrix}
1 & 0 & 1 & 2\\
1 & 2 & 1 & 0\\
\overrightarrow{0} & \overrightarrow{0} & \overrightarrow{0} & \overrightarrow{0}
\end{smallmatrix}\right\rangle +\left(m-1\right)\left\langle \begin{smallmatrix}
0 & 0 & 0 & 2\\
1 & 0 & 1 & 0\\
1 & 2 & 1 & 0\\
\overrightarrow{0} & \overrightarrow{0} & \overrightarrow{0} & \overrightarrow{0}
\end{smallmatrix}\right\rangle +
\\
&
\eqmargin\hspace{2cm}
\left(p-m-1\right)\left\langle \begin{smallmatrix}
1 & 0 & 1 & 2\\
1 & 1 & 0 & 0\\
0 & 1 & 1 & 0\\
\overrightarrow{0} & \overrightarrow{0} & \overrightarrow{0} & \overrightarrow{0}
\end{smallmatrix}\right\rangle +\left(m-1\right)\left(p-m-1\right)\left\langle \begin{smallmatrix}
0 & 0 & 0 & 2\\
1 & 0 & 1 & 0\\
1 & 1 & 0 & 0\\
0 & 1 & 1 & 0\\
\overrightarrow{0} & \overrightarrow{0} & \overrightarrow{0} & \overrightarrow{0}
\end{smallmatrix}\right\rangle \bigg)
\\&
=m\left(p-m\right)\Bigg(\frac{-\left(p-2\right)\left(p+3\right)+2\left(p-m-1\right)\left(p+2\right)}{\left(p-2\right)\left(p-1\right)p\left(p+1\right)\left(p+2\right)\left(p+4\right)\left(p+6\right)}
+
\\
&
\hspace{2.3cm}
\frac{
\left(m-1\right)
\left(
-\left(p-3\right)\left(p^{2}+5p+2\right)+2\left(p-m-1\right)p\left(p+4\right)\right)}{\left(p-3\right)\left(p-2\right)\left(p-1\right)p\left(p+1\right)\left(p+2\right)\left(p+4\right)\left(p+6\right)}\Bigg)
\\&
=\frac{m\left(p-m\right)\left(-2m^{2}p^{2}-8m^{2}p+mp^{3}+4mp^{2}+15mp+18m-6p^{2}-6p-12\right)}{\left(p-3\right)\left(p-2\right)\left(p-1\right)p\left(p+1\right)\left(p+2\right)\left(p+4\right)\left(p+6\right)}
\end{split}
\end{align}

\end{document}